\def\D{\mathcal{D}}
\def\R{\mathbb{R}}
\def\P{\mathcal{P}}
\def\H{\mathcal{C}}
\def\X{\mathcal{X}}
\def\byzantine{faulty}
\newcommand{\cva}{\textsc{NNA}}
\newcommand{\newalgorithm}{\textsc{MoNNA}}
\newcommand{\reduction}{reduction}
\newcommand{\expect}[1]{\mathop{{}\mathbb{E}}\left[{#1}\right]}
\newcommand{\condexpect}[2]{\mathbb{E}_{#1}\left[{#2}\right]}
\newcommand{\card}[1]{\left\lvert{#1}\right\rvert}
\providecommand{\iprod}[2]{\ensuremath{\left\langle #1,\,#2  \right\rangle}}
\providecommand{\norm}[1]{\ensuremath{\left\lVert#1\right\rVert }}
\newcommand{\ceil}[1]{\left\lceil{#1}\right\rceil}
\newcommand{\loss}{Q}
\newcommand{\lossperdatapoint}{q}
\newcommand{\localloss}[1]{\indexvar{#1}{}{\loss}}
\newcommand{\avgloss}{\loss^{(\H)}}
\newcommand{\localgrad}[1]{\nabla{} \indexvar{#1}{}{\loss}}
\newcommand{\indexvar}[3]{{#3}^{\ifthenelse{\equal{#1}{}}{}{\left({#1}\right)}}_{#2}}
\newcommand{\gradient}[2]{\indexvar{#1}{#2}{g}}
\newcommand{\learningrate}[1]{\gamma_{#1}}
\newcommand{\lipschitz}{L}
\newcommand{\nodenumber}{n}
\newcommand{\byzantinebound}{f}
\newcommand{\gradientstddev}{\sigma}
\newcommand{\honestnode}{i}
\newcommand{\datapoint}{x}
\newcommand{\iteration}{t}
\newcommand{\mmtcoefficient}{\beta}
\newcommand{\numbercvarounds}{K}
\newcommand{\cvaround}{k}
\newcommand{\cvavector}[2]{\indexvar{#1}{#2}{x}}
\newcommand{\cvaavg}[1]{\bar{x}_{#1}}
\newcommand{\model}[2]{\indexvar{#1}{#2}{\theta}}
\newcommand{\receive}[2]{\indexvar{#1}{#2}{\mathcal{R}}}
\newcommand{\filter}[2]{\indexvar{#1}{#2}{\mathcal{S}}}
\newcommand{\avgmodel}[1]{\indexvar{}{#1}{\overline \theta}}
\newcommand{\returnedmodel}[1]{\indexvar{#1}{}{\hat{\theta}}}
\newcommand{\avggrad}[1]{\indexvar{}{#1}{\overline{g}}}
\newcommand{\mmt}[2]{\indexvar{#1}{#2}{m}}
\newcommand{\diam}[2]{\Gamma\left( {#1}_{#2}\right)}
\newcommand{\dev}[1]{\delta_{#1}}
\newcommand{\AvgMmt}[1]{\overline{m}_{#1}}
\newcommand{\AvgGrad}[1]{\overline{\nabla Q}_{#1}}
\newcommand{\AvgNoisyGrad}[1]{\overline{g}_{#1}}
\newcommand{\effectgrad}[1]{G_{#1}}
\newcommand{\heter}{\zeta}
\newcommand{\cvabyzbound}{\delta}
\newcommand{\coeffalpha}{E(\alpha)}
\renewcommand{\paragraph}[1]{\textbf{#1}~}
\newtheorem{definition}{Definition}
\newtheorem{proposition}{Proposition}
\newtheorem{theorem}{Theorem}
\newtheorem{lemma}{Lemma}
\newtheorem{corollary}{Corollary}
\newtheorem{remark}{Remark}
\newtheorem{assumption}{Assumption}
\newtheorem{theorem**}{\bfseries Theorem}
\newtheorem*{theorem*}{\bfseries Theorem}
\newtheorem*{lemma*}{\bfseries Lemma}
\renewcommand{\paragraph}[1]{\textbf{#1}~}
\newtheorem*{rep@theorem}{\rep@title}
\newcommand{\newreptheorem}[2]{%
\newenvironment{rep#1}[1]{%
 \def\rep@title{#2 \ref{##1}}%
 \begin{rep@theorem}}%
 {\end{rep@theorem}}}
\newtheorem*{rep@assumption}{\rep@title}
\newcommand{\newrepassumption}[2]{%
\newenvironment{rep#1}[1]{%
 \def\rep@title{#2 \ref{##1}}%
 \begin{rep@assumption}}%
 {\end{rep@assumption}}}
\newtheorem*{rep@lemma}{\rep@title}
\newcommand{\newreplemma}[2]{%
\newenvironment{rep#1}[1]{%
 \def\rep@title{#2 \ref{##1}}%
 \begin{rep@lemma}}%
 {\end{rep@lemma}}}
\newtheorem*{rep@proposition}{\rep@title}
\newcommand{\newrepproposition}[2]{%
\newenvironment{rep#1}[1]{%
 \def\rep@title{#2 \ref{##1}}%
 \begin{rep@proposition}}%
 {\end{rep@proposition}}}
\newtheorem*{rep@definition}{\rep@title}
\newcommand{\newrepdefinition}[2]{%
\newenvironment{rep#1}[1]{%
 \def\rep@title{#2 \ref{##1}}%
 \begin{rep@definition}}%
 {\end{rep@definition}}}
\icmltitlerunning{Robust Collaborative Learning with Linear Gradient Overhead}
\begin{document}

\twocolumn[
\icmltitle{Robust Collaborative Learning with Linear Gradient Overhead}
\icmlsetsymbol{equal}{*}

\begin{icmlauthorlist}
\icmlauthor{Sadegh Farhadkhani}{comp}
\icmlauthor{Rachid Guerraoui}{comp}
\icmlauthor{Nirupam Gupta}{comp}
\\
\icmlauthor{Lê Nguyên Hoang}{sch,sss}
\icmlauthor{Rafael Pinot}{comp}
\icmlauthor{John Stephan}{comp}
\end{icmlauthorlist}

\icmlaffiliation{comp}{EPFL,}
\icmlaffiliation{sch}{Tournesol,}
\icmlaffiliation{sss}{Calicarpa}

\icmlcorrespondingauthor{Sadegh Farhadkhani}{sadegh.farhadkhani@epfl.ch}
\icmlkeywords{Robust collaborative learning, Byzantine learning}

\vskip 0.3in
]

\printAffiliationsAndNotice{}  %
\begin{abstract}
{\em Collaborative learning} algorithms, such as {\em distributed SGD} (or D-SGD), are prone to faulty machines that may deviate from their prescribed algorithm because of software or hardware bugs, poisoned data or malicious behaviors. While many solutions have been proposed to enhance the robustness of D-SGD to such machines, previous works
either resort to strong assumptions (\emph{trusted} server, \emph{homogeneous} data, specific noise model) or impose a gradient computational cost that is several orders of magnitude higher than that of D-SGD. 
We present~\newalgorithm{}, a new algorithm that (a) is provably robust under standard assumptions and (b) has a gradient computation overhead that is linear in the fraction of \byzantine{} machines, which is conjectured to be tight. Essentially, \newalgorithm{} uses {\em Polyak's momentum} of local gradients for {\em local updates} and {\em nearest-neighbor averaging (NNA)} for \emph{global mixing}, respectively.
While \newalgorithm{} is rather simple to implement, its analysis has been more challenging and relies on two key elements that may be of independent interest. Specifically, we introduce the mixing criterion of {\em $(\alpha, \lambda)$-reduction} to analyze the \emph{non-linear mixing} of non-faulty machines, and present a way to control the tension between the momentum and the model {\em drifts}.
We validate our theory by experiments on image classification and make our code  available at \url{https://github.com/LPD-EPFL/robust-collaborative-learning}.
\end{abstract}

\vspace{-0.5cm}
\section{Introduction}

Collaborative learning allows multiple machines (or \emph{nodes}), each with a local dataset, to learn local models that offer a high accuracy on the union of their local datasets~\cite{boyd2011distributed}. This paradigm facilitates the training of complex models over a large volume of data, while addressing concerns on data locality and ownership. The general task of collaborative learning can be formulated as follows.
Consider a {\em parameter space} $\R^d$, a {\em data space} $\X$ and a {\em loss function} $q: \R^d \times \X \to \R$. Given a parameter $\theta \in \R^d$, a data point $x \in \X$ incurs a loss of value $q(\theta, \, x)$. The system comprises $n$ nodes. Each node $i$ samples data from distribution $\D_i$, and thus has a {\em local loss function} $\loss^{(i)}\left( \model{}{} \right) \coloneqq \condexpect{x \sim \D_i}{q(\model{}{}, \, x)}$.
The goal for each node $i$ is to compute $\theta_*^{(i)}$ minimizing the {\em global average loss}, i.e., 
\begin{align}
    \theta_*^{(i)} \in \underset{\theta \in \R^d}{\arg\min} ~ \frac{1}{n} \sum_{j = 1}^n \loss^{(j)}\left( \model{}{} \right). \label{eqn:global_loss_all}
\end{align}
 {\bf Collaborative learning with D-SGD.} The most standard way of solving the optimization problem~\eqref{eqn:global_loss_all} is through the use of the celebrated distributed SGD (D-SGD) method~\cite{TangLYZL18, koloskova2020unified}. Each node maintains a local parameter, approximating a solution of the optimization problem~\eqref{eqn:global_loss_all}, which is updated iteratively in two phases. In the first phase, also called {\em the local phase}, each node updates its current parameter {\em partially} using a stochastic estimate of its local loss function's gradient.
 In the second phase, also called {\em the coordination phase}, the nodes exchange their partially updated parameters with each other over a network, and then each node replaces its current parameter by the {\em average} of all the partially updated parameters. While the former is essential for reducing the local loss functions, the latter yields reduction in the global average loss function. Alternately, as is the case in {\em federated learning}~\cite{KairouzOpenProblemsinFed2021}, the nodes may rely on a {\em trusted} coordinator (called the {\em server}) to execute the coordination phase involving the averaging operation.

 \setlength{\arrayrulewidth}{0.2mm}
\renewcommand{\arraystretch}{1.7}
\begin{table*}[!ht]
  \centering
  \caption{Comparison of \newalgorithm{} with other prominent schemes for robust collaborative learning including BRIDGE~\cite{yang2019bridge}, BTARD~\cite{Gorbunov21}, SCC~\cite{he22}, and LEARN~\cite{collaborativeElMhamdi21}. {\bf S} - Stochastic gradients, {\bf H} - Heterogeneous (a.k.a., non-iid) datasets, {\bf A} - Asynchronous communication, and $\mathbf{f/n}$ - tolerable fraction of \byzantine{} nodes.}
  \vspace{2mm}
  \begin{tabular}{c||ccccccc}
    {\bf Method} & {\bf Loss Function} & {\bf S} & {\bf H} & {\bf A} & {\bf Communication} & $\mathbf{f/n}$ & {\bf Gradient Complexity}\\ 
    \hline
    \hline
    BRIDGE & Locally strongly convex & $\times$ & $\times$ & $\times$ & Sparse &  $< \frac{1}{2}$ & $\times^{\textcolor{blue}{(*)}}$ \\
    \hline
    BTARD &  Non-convex & $\checkmark$ & $\times$ & $\times$ & Pairwise & $< \frac{1}{10}$ & $\mathcal{O}\left( \frac{1}{n\epsilon^2} \right)^{\textcolor{blue}{(**)}}$ \\
    \hline
    SCC & Non-convex & \checkmark & \checkmark & $\times$ & Sparse & $\leq \frac{1}{10240}$ & $\mathcal{O}\left( \left( \frac{1}{n} + \frac{f}{n} \right) \frac{1}{\epsilon^2} \right)$ \\
    \hline
    LEARN & Non-convex & \checkmark & \checkmark & \checkmark & Pairwise  & $< \frac{1}{3}$ \text{ or $< \frac{1}{6}$}  & $\mathcal{O}\left(\frac{1}{\epsilon^5}\right)$ \\
    \hline
    \rowcolor{green!20}
      &  &  &   &  &  & $\leq \frac{1}{11}$ & $\mathcal{O}\left( \left( \frac{1}{n} + \frac{f}{n} \right) \frac{1}{\epsilon^2} \right)$\\
        \rowcolor{green!20}
     \multirow{-2}{*}{\newalgorithm{}}& \multirow{-2}{*}{Non-convex}& \multirow{-2}{*}{\checkmark} & \multirow{-2}{*}{\checkmark} & \multirow{-2}{*}{\checkmark} &  \multirow{-2}{*}{Pairwise}  & $ < \frac{1}{5}$ & $\mathcal{O}\left(\frac{(1 + f)^2}{n \epsilon^2}\right)$\\
    \hline
    \multicolumn{7}{c}{Gradient complexity for non-convex losses with a trusted server~\cite{karimireddy2022byzantine}:}  &$\mathcal{O}\left( \left( \frac{1}{n} + \frac{f}{n} \right) \frac{1}{\epsilon^2} \right)$\\
    \hline
  \end{tabular}
  \vspace{0.2cm}

   \raggedright \small{ $^{\textcolor{blue}{(*)}}$ As of yet, no finite time convergence rate is known for BRIDGE.\\
   $^{\textcolor{blue}{(**)}}$ The leading term in the convergence rate of BTARD~\cite{Gorbunov21} is identical to that of D-SGD without faults and  does not introduce any overhead. The reason is that it considers a weaker adversarial model with public datasets, where each node can access the entire training data and validate the computations done by other nodes, and thereby check any faults.}
  \label{tab:1}
  \vspace{-3mm}
\end{table*}
 
{\bf Robustness issue.} D-SGD is not very robust: a handful of faulty nodes, deviating from their prescribed algorithm, may prevent the remaining non-faulty (or {\em correct}) nodes from computing a valid solution~\cite{su2016fault}.
Such behavior may result from software and hardware bugs, poisoned data, or malicious adversaries controlling part of the network. 
 We consider a setting where at most $f$ (out of $n$) nodes in the system are \byzantine{} and assume that these can behave arbitrarily\footnote{In distributed computing, such faulty nodes are also commonly referred to as Byzantine~\cite{lamport1982Byzantine}.} (either by accident or intent).
 In this case, the original optimization problem~\eqref{eqn:global_loss_all} is rendered vacuous. A more reasonable goal is 
to minimize the average loss function for the correct nodes~\cite{gupta2020fault}. However, there is a fundamental limit on achieving this goal,  because \byzantine{} nodes may behave as correct nodes with outlying local data distributions~\cite{liu2021approximate, karimireddy2022byzantine}. Thus, the ultimate goal of {\em robustness} reduces to designing an algorithm that enables all correct nodes to compute a {\em tight} approximation of a minimum
of the average correct loss~\cite{collaborativeElMhamdi21, he22}.

\subsection{Prior Work}

The problem of robustness in collaborative learning has received significant attention in recent years~\cite{yang2020adversary, liu2021survey, bouhata2022byzantine}. Most previous works focused on server-based coordination (i.e., the nodes have access to a server that is assumed fault-free)~\cite{brute_bulyan, kardam, chen2017distributed, yin2018byzantine, Karimireddy2021, karimireddy2022byzantine, farhadkhani2022byzantine}. This server constitutes a {\em single point of failure}, which greatly compromises the security of the learning procedure. It is therefore appealing to consider a scenario in which the nodes collaborate by communicating directly, without relying on a central server.

The absence of a central authority, 
combined with asynchronous communication~\cite{cachin2011introduction}
and  \byzantine{} nodes, lead to a non-trivial {\em drift} between the local parameters maintained by the correct nodes. Controlling this drift is key to learning an accurate model by the correct nodes, and constitutes a major challenge.
Prior attempts to address this issue, including~\cite{yang2019bridge,yang2019byrdie,collaborativeElMhamdi21,Guo21byzantine, he22, Gorbunov21}, rely on strong assumptions such as
{\em homogeneous} data~\cite{yang2019bridge, yang2019byrdie,Guo21byzantine, Gorbunov21}, {\em strong convexity}~\cite{gupta2021byzantine_thinh, yang2019byrdie}, a precise gradient noise modeling, and an extremely small fraction of \byzantine{} nodes \textcolor{black}{as in the parallel work of} ~\citet{he22}; or impose orders of magnitude larger gradient overhead compared to D-SGD~\cite{collaborativeElMhamdi21}. These shortcomings limit the practicality of the state-of-the-art methods.

\subsection{Contributions}
We take an important step towards making robust collaborative learning more realizable. Specifically, 
we present an adaptation of D-SGD, named~\newalgorithm{}, which to the best of our knowledge, is the first collaborative learning algorithm that is provably robust under assumptions that are standard in analyzing D-SGD~\cite{lian2017can,TangLYZL18}. Moreover, the gradient computational overhead imposed by~\newalgorithm{}, compared to D-SGD, only grows linearly in the fraction of \byzantine{} nodes, which is conjectured to be tight~\cite{Karimireddy2021}. We compare \newalgorithm{} with the most relevant related approaches in Table~\ref{tab:1}.

{\bf Overview of~\newalgorithm{}.} 
In the local phase, unlike D-SGD, each correct node uses the Polyak's momentum~\cite{POLYAK19641} of its local stochastic gradients to partially update its current local parameter. The use of local momentum amortizes the dependence on local variance in the error due to \byzantine{} nodes. In the coordination phase, instead of simply averaging the received partial updates, each correct node aggregates them using a {\em robust aggregation rule} we call nearest neighbor averaging (NNA). In NNA, as the name suggests, a node eliminates the $f$ parameters it receives that are the farthest from its own and then averages the rest. This filtering aims to reduce the drift between correct nodes' local parameters, by mitigating the influence of arbitrary parameters that may be sent by the \byzantine{} nodes. While~\newalgorithm{} has been rather simple to implement, its analysis has been more challenging, involving two elements that may be of independent interest to the distributed optimization community at large:
namely, (i) the mixing criterion of {\em $(\alpha, \, \lambda)$-\reduction{}}, and (ii) the control of local parameters' drift under {$(\alpha, \, \lambda)$-\reduction{}} mixing when incorporating Polyak's momentum. We discuss these elements below, after the summary of our theoretical results. 
{\bf Theoretical results.}
We assume at most $f$ out of $n$ nodes may be \byzantine{} and behave arbitrarily. We 
denote by 
$\H$ the set of correct nodes and $\loss^{(\H)} (\model{}{})$ their average loss, i.e., 
\begin{equation}
    \loss^{(\H)} (\model{}{}) \coloneqq\frac{1}{\card{\H}} \sum_{i \in \H} \loss^{(i)}\left( \model{}{} \right) . \label{eqn:global_loss}
\end{equation}
We consider the class of Lipschitz smooth non-convex loss functions, and assume local stochastic gradients (of correct nodes) to satisfy standard properties in the context of D-SGD~\cite{TangLYZL18}, i.e., bounded local variance of $\sigma^2$ and bounded global diversity of $\heter^2$. We show that if $n \geq 11 f$, then upon executing $T$ iterations of~\newalgorithm{}, each correct node $i$ returns a local parameter $\hat{\theta}^{(i)}$ such that $\expect{\norm{ \nabla \loss^{(\H)} \left(\hat{\theta}^{(i)} \right)}^2} \leq \epsilon$ where  
\begin{align}
\label{eq:errorintro}
    \epsilon \in \mathcal{O}\left(\sqrt{\frac{\sigma^2}{T}\left( \frac{1+f}{n} \right)} + \frac{f}{n} \heter^2\right).
\end{align}
Recall that the number of iterations $T$ equals the total number of gradients computed by each correct node. Hence, the gradient complexity of~\newalgorithm{} is $1+f$ times that of D-SGD, i.e., the gradient overhead is linear in the fraction of \byzantine{} nodes. 
Note that the non-vanishing error of $(f/n) \heter^2$ is a fundamental lower bound in the presence of \byzantine{} nodes due to diversity in local distributions~\cite{karimireddy2022byzantine}. We also show that, by increasing gradient complexity by a factor $f$, \newalgorithm{} is robust to $n/5$ \byzantine{} nodes. 
{\bf $(\alpha, \, \lambda)$-Reduction mixing.}
In the presence of \byzantine{} nodes, it is impossible to ensure {\em linear} mixing of correct updates in the coordination phase. We can no longer rely on the linear mixing criterion of double stochasticity with a bounded spectral gap, usually assumed in the case of D-SGD~\cite{tsitsiklis1986distributed, xiao2004fast, TangLYZL18, koloskova2020unified}. To circumvent this limitation, we introduce a new mixing criterion of {\em $(\alpha, \, \lambda)$-reduction} that extends the classical mixing criterion to analyze {\em robust mixing} schemes, such as NNA, that may be non-linear and even non-continuous. Parameters $\alpha$ and $ \lambda$ are positive real values quantifying the levels of contraction and centering, respectively, over the set of correct updates. 
We prove that the use of $(\alpha, \, \lambda)$-reduction, with $\alpha < 1$ and $\lambda < \infty$, in the coordination phase of D-SGD 
enables each correct node $i$ to return $\hat{\theta}^{(i)}$ such that $\expect{\norm{ \nabla \loss^{(\H)} \left(\hat{\theta}^{(i)} \right)}^2} \leq \epsilon$, where
\begin{align}
\epsilon \in \mathcal{O}\left(\sqrt{\frac{\sigma^2}{nT}} + \frac{\lambda}{(1-\alpha)^2} \, ( \heter^2 + \sigma^2 ) \right). \label{eqn:conv_reduction}
\end{align}

Assuming asynchronous pairwise communication between nodes and 
$f \leq \frac{n}{11}$, we prove that NNA satisfies $(\alpha, \, \lambda)$-reduction with $\alpha \leq 0.988 < 1$ and $\lambda \in \Theta\left( \nicefrac{f}{n}\right)$. The key to proving this result is that, unlike standard aggregation rules~\cite{farhadkhani2022byzantine}, NNA makes each correct node pivot the aggregation around their own local parameter. Substituting these values of $\alpha$ and $\lambda$ in~\eqref{eqn:conv_reduction} yields the error in~\eqref{eq:errorintro}, plus an additional non-vanishing term of $(f/n) \sigma^2$. We show that this term vanishes at the rate of $\sqrt{1/T}$ when using local momentum, as specified in \newalgorithm{}. Hence, reducing the error term to~\eqref{eq:errorintro}. Proving this reduction however requires a novel technique for controlling drift.
{\bf Controlling drift under momentum.}
The second key element underlying our analysis pertains to the use of Polyak's momentum for local updates in~\newalgorithm{}.
We prove that momentum eliminates the non-vanishing error due to the local variance $\sigma^2$ in the optimization error~\eqref{eqn:conv_reduction}, and thereby matches the lower bound. While such observation has been made in the case of server-based coordination~\cite{distributed-momentum, Karimireddy2021, farhadkhani2022byzantine}, it is not immediate in our setting because of the cross-coupling of the momentum drift and the drift between correct nodes' models. By carefully analyzing this coupling, we  obtain 
uniform bounds on both model and momentum drifts. We then adapt the {\em Lyapunov function} (a.k.a. potential function) to account for the model drift.

{\bf Empirical evaluation.}
We evaluate \newalgorithm{} on two benchmark image classification tasks. We consider a distributed asynchronous system including $n/5$ \byzantine{} nodes executing four different attacks.
\newalgorithm{} significantly outperforms state-of-the-art robust collaborative learning algorithms in all adversarial settings, and almost matches the performance of D-SGD in terms of learning accuracy.

\subsection{Paper Organization}
We formalize robust collaborative learning in Section~\ref{sec:problem}. 
Section~\ref{sec:algorithm} presents our algorithm as well as its convergence and robustness. Section~\ref{sec:analysis} discusses the key elements of our convergence analysis.  Section~\ref{sec:experiments} presents our empirical evaluation. We discuss future research directions in Section~\ref{sec:conclusion}. {Due to space limitations, full proofs and some auxiliary empirical results are deferred to the appendices.}

\section{Problem Statement}
\label{sec:problem}

We consider a set of $n$ nodes, $[\nodenumber] = \{1, \ldots, \nodenumber\}$, out of which at most $f < n/3$ may behave arbitrarily. We refer to such nodes as {\em \byzantine{}}. The identity of \byzantine{} nodes is a priori unknown to the remaining correct, i.e., non-\byzantine{} nodes. We assume that the nodes interact with each other using the following communication model.

{\bf Communication model.} We assume a \emph{pairwise} communication scheme where nodes exchange messages with each other over a network. The messages however need not arrive in a timely manner, i.e.,  communication is {\em asynchronous}. A correct node cannot wait to receive messages from all the other nodes since it can be indefinitely stalled by a single \byzantine{} node that chooses not to send any message. This amplifies the challenge of robustness. A simple adaption of server based solutions~\cite{brute_bulyan, kardam, chen2017distributed, yin2018byzantine, karimireddy2022byzantine, farhadkhani2022byzantine} cannot prevent the local models at the correct nodes from {\em drifting} apart, rending their local gradients useless for the others.

{\bf Robustness.} We consider an arbitrary set $\H$ comprising $n-f$ correct nodes. We denote by $\loss^{(\H)} \left( \model{}{} \right)$ the average loss function of the nodes in $\H$, defined in~\eqref{eqn:global_loss}. The ideal objective of {\em robust} learning is to design an algorithm that allows the correct nodes, under the above communication model, to minimize $\loss^{(\H)} \left( \model{}{} \right)$, despite the presence of \byzantine{} nodes. Solving this problem however is NP-hard in general, as the loss function need not be convex~\cite{boyd2004convex}. Thus, a more realizable goal is finding a {\em critical point} of $\loss^{(\H)} (\model{}{})$, assuming the point-wise loss function $\lossperdatapoint(\model{}{}, x)$ to be differentiable in $\model{}{}$. In our context, we formally define the problem of robustness through the notion of \emph{resilience}.

\begin{definition}
\label{def:Byz_learn}
An algorithm is said to be {\em $(f,\epsilon)$-resilient} if it enables each correct node $i \in \H$ to compute $\returnedmodel{\honestnode}$ such that
$$\expect{\norm{\nabla \loss^{(\H)} \left( \returnedmodel{\honestnode} \right) }^2} \leq \epsilon, $$
despite the presence of $f$ \byzantine{} nodes, where the expectation $\expect{\cdot}$ is taken over the randomness of the algorithm.
\end{definition}

We assume the gradients of the loss functions to be Lipschitz smooth and the variance of the local stochastic gradients to be bounded. These assumptions are classical to the analysis of stochastic first-order methods, and hold true for many learning problems~\cite{bottou2018optimization}. Note that, by definition of $\localloss{i}$, we have $\localgrad{i}(\model{}{}) = \condexpect{\datapoint \sim \mathcal{D}_i}{\nabla \lossperdatapoint(\model{}{}, \datapoint)}$.

\begin{assumption}[Lipschitz smoothness]
\label{asp:lip}
There exists $\lipschitz < \infty$ such that for all $i \in \H$ and all $\model{}{1}, \, \model{}{2} \in \R^d$,
\begin{equation*}
    \norm{\localgrad{i}(\model{}{1}) - \localgrad{i}(\model{}{2})}{} \leq \lipschitz \norm{\model{}{1} - \model{}{2}}{}.
\end{equation*}
\end{assumption}

\begin{assumption}[Bounded variance]
\label{asp:bnd_var}
There exists $\gradientstddev < \infty$ such that for all $i \in \H$, and all $\model{}{} \in \R^d$,
\begin{equation*}
    \condexpect{\datapoint \sim \mathcal{D}_i}{\norm{\nabla \lossperdatapoint(\model{}{}, \datapoint) - \localgrad{i}(\model{}{})}^2} \leq \gradientstddev^2.
\end{equation*}
\end{assumption}

Additionally, we assume that the diversity amongst the gradients of local loss functions is bounded, as stated below. We note that this assumption is standard in {\em heterogeneous settings}, i.e., when nodes have different data distributions~\cite{lian2018asynchronous, TangLYZL18}, especially when addressing the problem of resilience~\citep{data21}. 

\begin{assumption}[$\heter$-heterogeneous]
\label{asp:heter}
 There exists $\heter < \infty$ such that for all $\theta \in \R^d$,
 \begin{align*}
     \frac{1}{\card{\H}} \sum_{i \in \H} \norm{\nabla Q^{(i)} (\theta) - \nabla Q^{(\H)} (\theta) }^2 \leq \heter^2.
 \end{align*}
\end{assumption}
In particular, the heterogeneity bound $\heter$ can be derived based on the closeness of the underlying local data distributions at the nodes~\cite{FallahMO20}. 

{\bf Lower bound.} Under heterogeneity, it is generally impossible to achieve $(f, \, \epsilon)$-resilience for any arbitrary value of $\epsilon$~\citep{collaborativeElMhamdi21,karimireddy2022byzantine}. Specifically, we have the following lower bound.
\begin{lemma}[Theorem III,~\citet{karimireddy2022byzantine}]
\label{lem:lower_bound}
Suppose assumptions~\ref{asp:lip}, \ref{asp:bnd_var}, and \ref{asp:heter} hold true. If an algorithm is $(f,\epsilon)$-resilient, then $$\epsilon \in \Omega\left(\frac{f}{n} \,  \heter^2 \right).$$
\end{lemma}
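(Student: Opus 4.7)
The plan is a two-world indistinguishability reduction. I would construct two problem instances that are pointwise identical from the algorithm's perspective, but whose honest-node minimizers are $\Theta(fd/(n-f))$ apart for a free scale parameter $d$; tuning $d$ against the heterogeneity budget $\heter$ then delivers the bound.

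For the construction, I would work with one-dimensional quadratic local losses $\loss^{(i)}(\theta) = \tfrac12(\theta - a_i)^2$ and deterministic gradients, so that Assumptions~\ref{asp:lip} and~\ref{asp:bnd_var} hold trivially with $\lipschitz = 1$ and $\gradientstddev = 0$. Fix $d > 0$. In both \emph{World A} and \emph{World B}, the nodes indexed $1,\ldots,n-f$ appear to the algorithm as zero-centered participants, while the nodes indexed $n-f+1,\ldots,n$ appear as $d$-centered participants. In World A, all nodes in $\{1,\ldots,n-f\}$ are correct (with $a_i = 0$) and the remaining $f$ are Byzantine, faithfully simulating an honest participant with $a_i = d$. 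In World B, the first $n-2f$ and the last $f$ nodes are correct (with $a_i = 0$ and $a_i = d$, respectively), while nodes $n-2f+1,\ldots,n-f$ are Byzantine, faithfully simulating a node with $a_i = 0$. By symmetry, the algorithm receives the same joint distribution of messages in both worlds, so for any fixed index $i$ the output $\returnedmodel{i}$ is identically distributed across them.

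The unique minimizers are $\theta_A^\ast = 0$ and $\theta_B^\ast = fd/(n-f)$, and since the losses are quadratic, $\|\nabla \loss^{(\H_k)}(\theta)\| = |\theta - \theta_k^\ast|$ for $k \in \{A,B\}$. Picking any index $i \in \H_A \cap \H_B = \{1,\ldots,n-2f\}$, resilience in World A gives $\expect{|\returnedmodel{i}|^2} \leq \epsilon$ and in World B gives $\expect{|\returnedmodel{i} - fd/(n-f)|^2} \leq \epsilon$; combining via the triangle inequality yields $\epsilon \geq \tfrac{1}{4}\bigl(fd/(n-f)\bigr)^2$. A direct calculation in World B gives heterogeneity $\heter^2 = d^2 f(n-2f)/(n-f)^2 \in \Theta(fd^2/n)$, so choosing $d^2 \in \Theta(n\heter^2/f)$ saturates Assumption~\ref{asp:heter} and produces $\epsilon \in \Omega(\tfrac{f}{n}\heter^2)$.

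The main obstacle is making the indistinguishability step fully rigorous in the asynchronous pairwise communication model: one must verify that a Byzantine node can replicate every observable feature (messages, timing, replies to stochastic-gradient oracle queries) of an honest node executing the algorithm on an alternative local loss, so that the two worlds induce the same distribution on the algorithm's transcript. This is the textbook Byzantine-simulation reduction, but it requires care when the algorithm uses internal randomness or issues interactive queries.
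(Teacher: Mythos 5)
The paper never proves this lemma itself: it is imported verbatim as Theorem~III of Karimireddy et al.\ (2022), so there is no in-paper argument to compare against. Your two-world indistinguishability construction is correct and is essentially the standard argument behind that cited result --- Byzantine nodes faithfully simulating honest nodes with shifted quadratic losses yield identical transcripts, the two honest-average minimizers are $fd/(n-f)$ apart, a node correct in both worlds (which exists since $f<n/3$ gives $n>2f$) forces $\epsilon \geq \tfrac{1}{4}\bigl(fd/(n-f)\bigr)^2$, and tuning $d^2 \in \Theta(n\heter^2/f)$ to saturate the heterogeneity budget gives $\epsilon \in \Omega\bigl(\tfrac{f}{n}\heter^2\bigr)$.
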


\section{\newalgorithm{}}\label{sec:algorithm}
We describe below our algorithm, \newalgorithm{}, its (per step) computational costs and its robustness properties. 

\subsection{Description}
\newalgorithm{} enhances D-SGD~\cite{TangLYZL18} by incorporating a momentum (Mo) operation~\cite{POLYAK19641} as well as a new mixing scheme named nearest neighbor averaging (NNA). We summarize below the key elements of the local phase and the coordination phase in an iteration $t$ where each correct node $i$ maintains a local model $\model{i}{\iteration}$. { The initial models for the correct nodes are assumed identical, i.e., each correct node $i$ chooses an initial model $\model{i}{0}$ such that $\model{i}{0} \coloneqq \theta_0 \in \R^d$.} Complete execution of \newalgorithm{} is presented in Algorithm~\ref{algo}. 
    {\bf Local phase.} Each correct node $i$ samples a data point $\datapoint \sim \D_i$ and computes a local stochastic gradient 
    \begin{align}\label{eqn:local_grad}
        \gradient{i}{\iteration} = \nabla \lossperdatapoint \left( \model{i}{\iteration}, \datapoint \right).
    \end{align}
    Then node $i$ updates its current local momentum as follows
    \begin{align}
    \label{eqn:mmt_i}
        \mmt{\honestnode}{\iteration} = \mmtcoefficient \mmt{\honestnode}{\iteration-1} + (1-\mmtcoefficient) \gradient{\honestnode}{\iteration},
    \end{align}
    where $\beta \in [0, \, 1)$ is called the \emph{ momentum coefficient}, and $\mmt{i}{-1} = 0$ by convention. Lastly, node $i$ {\em partially} updates it current model $\model{i}{\iteration}$ by computing $\model{\honestnode}{\iteration + 1/2} := \model{\honestnode}{\iteration} - \learningrate{} \, \mmt{\honestnode}{\iteration}.$

    \textbf{Coordination phase.} Each correct node $i$ initializes a new vector $\cvavector{\honestnode}{0} = \model{\honestnode}{\iteration + \nicefrac{1}{2}}$. The coordination phase is composed of $K \geq 1$ rounds. In each round $k \in K$, the following interaction and mixing schemes are executed. 
    \vspace{-0.3cm}
    \begin{itemize}[leftmargin=7pt]
    \setlength{\itemsep}{0.1em}
        \item[] {\em (a) Interaction.} The nodes exchange their respective vectors $\{ \cvavector{\honestnode}{\cvaround-1}, \, i \in n\}$ with each other using {\em signed echo broadcast}~\cite{cachin2011introduction}.\footnote{For pedagogical reasons, we defer the implementation details of signed echo broadcast (SEB) to Appendix~\ref{app:cb}. Essentially, SEB prevents a \byzantine{} node from sending mismatching messages to different correct nodes. }
        Recall that a \byzantine{} node $j$ may choose to send either an arbitrary value for its vector $\cvavector{j}{\cvaround-1}$ or no message at all. Hence, to avoid getting stalled, a correct node $i$ only waits to receive $n-f-1$ messages before moving to the mixing step. 
        \item[] {\em (b) Mixing.} Each correct node $i$ updates its vector $\cvavector{\honestnode}{\cvaround-1}$ to $\cvavector{\honestnode}{\cvaround}$ by aggregating the $n-f-1$ vectors it receives with its own, using {\em nearest neighbor averaging} (NNA). For $n-f$ vectors $z^{(0)}, \ldots, \, z^{(n-f-1)}$ in $\R^d$, this aggregation is defined to be
    \begin{align*}
    \label{eq:nna}
        \cva{}\left( z^{(0)} ; \left \{z^{(i)} \right\}_{i=1}^{n-f-1} \right) \coloneqq \frac{1}{\nodenumber - 2\byzantinebound} \sum_{i = 0}^{ n-2f-1 } z^{(\tau(i))}, 
    \end{align*} 
    where $\tau$ is a permutation on $\{1, \ldots, n-f-1\}$ such that 
    $\norm{z^{(0)} - z^{(\tau(1))}} \leq \ldots \leq \norm{z^{(0)} - z^{(\tau(n-f-1))}}$.
    \end{itemize}

\begin{algorithm}[htb!]
\small
{\bf Initialization:} Initial model $\model{i}{0} \coloneqq \theta_0 \in \R^d$ , initial momentum $\mmt{\honestnode}{-1} = 0$, momentum coefficient $\mmtcoefficient \in [0, \, 1)$, total iterations $T$, learning rate $\learningrate{}$, number of coordination rounds $\numbercvarounds$, and threshold $\byzantinebound$ on the number of faulty nodes. \\

For each {\bf iteration} $t=1,\dots, \, T$, do the following: \\[-0.5cm]
    \begin{enumerate}[leftmargin=0.2cm]
    \item[] \textcolor{blue}{ \bf Local phase:} 
    \begin{enumerate}
    \item Update local momentum $\mmt{i}{t} = \beta \mmt{i}{t-1} + (1 - \beta) \gradient{i}{t}$ where $\gradient{i}{t}$ is defined as in~\eqref{eqn:local_grad}. 
    \item Partially update local model $\model{\honestnode}{\iteration + 1/2} := \model{\honestnode}{\iteration} - \learningrate{} \, \mmt{\honestnode}{\iteration}$. 
    \end{enumerate}
    \item[] \textcolor{blue}{ \bf Coordination phase:} Initialize vector $\cvavector{\honestnode}{0} := \model{\honestnode}{\iteration+1/2}$ and execute the following $K$ rounds.
    \begin{enumerate}
    \item In each {\bf round} $k=1,\dots, \, K$, do the following 
    
\fcolorbox{black}{gray!10}{
\parbox{0.39\textwidth}{
       \begin{enumerate}[leftmargin=0.5cm]
        \item Initialize $\receive{\honestnode}{\cvaround} = \emptyset$. 
        \item Broadcast vector $\cvavector{\honestnode}{\cvaround-1}$ to the other nodes.\\
        \textcolor{teal}{ \em (A \byzantine{} node $j$ may send an arbitrary value for $\cvavector{j}{\cvaround-1}$)}
        \item \textbf{While} $\vert \receive{\honestnode}{\cvaround} \vert < n-f-1$ \textbf{do}: \\
         \ \ \ \ \ Upon receiving a vector from node $j$, update $\receive{\honestnode}{\cvaround} = \receive{\honestnode}{\cvaround} \cup \{ j \}$. 
        \item Compute\\ $\cvavector{i}{k} = \textsc{NNA}\left( \cvavector{i}{k-1}   ; \left \{ \cvavector{j}{k-1} \mid j \in \receive{\honestnode}{\cvaround} \right \} \right)$.
    \end{enumerate} }}
    \item Update local model $\model{\honestnode}{\iteration+1} = \cvavector{\honestnode}{K}$.
    \end{enumerate}
    \end{enumerate}
\vspace{0.1cm}
{\bf Output:} $\returnedmodel{\honestnode} \sim \mathcal{U} \left\{ \model{\honestnode}{0}, \ldots, \, \model{\honestnode}{T-1}\right\}.$
\caption{\textbf{\newalgorithm{}} as executed by a correct node $\honestnode$}
\label{algo}
\end{algorithm}

\subsection{Computation \& Communication Costs}

Computing a local momentum as per~\eqref{eqn:mmt_i} is equivalent in terms of complexity to computing a single local gradient. Thus, the computational cost of the local phase in \newalgorithm{} is the same as that of D-SGD. Second, the coordination phase in \newalgorithm{} comprises $K$ rounds in which each correct node computes the output of NNA. This involves computing $n-f-1$ distances in $\mathbb{R}^d$ and sorting them to obtain $\tau$. The former is in $\mathcal{O}\left( n d\right)$ and the latter can be done using a sorting algorithm, e.g., {\em quicksort}~\citep{cormen2022introduction}, in $\mathcal{O}\left(n \log n \right)$. Hence, the total computation cost for the coordination phase of \newalgorithm{} is in $\mathcal{O}\left(n \left(d+\log n \right) K \right)$, compared to $\mathcal{O}\left(nd\right)$ for D-SGD. Similarly, the communication cost of \newalgorithm{} is in $\mathcal{O}\left(n K \right)$, which is a factor $K$ more than that of D-SGD. 

Constant $K$ is however usually relatively small compared to the standard costs of D-SGD. Indeed, in the main result of the paper (Theorem~\ref{thm:main_conv}) when $n \geq 11f$, we set $K = 1$. Therefore, in this case, the computational and communication complexity of \newalgorithm{} is $\mathcal{O}\left(n \left(d+\log n \right)\right)$ and $\mathcal{O}\left(n\right)$, respectively, which almost matches the $\mathcal{O}\left(nd\right)$ computational and $\mathcal{O}\left(n\right)$ communication complexity of D-SGD. Furthermore, to improve the robustness of \newalgorithm{} to $n > 5f$, we set $K = \mathcal{O} (\log n)$ which adds a $\log n$ overhead to the communication and computational costs (see Corollary~\ref{cor:fivef} in the Appendix), but remains reasonable compared to other existing solutions such as~\cite{collaborativeElMhamdi21}.

\subsection{Convergence \& Robustness}

We now present our main theoretical result demonstrating the finite time convergence of \newalgorithm{}.
Essentially, we analyze Algorithm~\ref{algo} under assumptions~\ref{asp:lip},~\ref{asp:bnd_var}, and~\ref{asp:heter}, and upon assuming a sufficiently small learning rate $\gamma$. When $n\geq 11f$, it suffices to perform one round per coordination phase, i.e., set $K = 1$.
We now state our main theorem\footnote{The dependence of $c_4$ on $n$ comes from the fact that we provide the convergence guarantee for any honest node $i$. This is stronger than the prior work \cite{koloskova2020unified,he22} where the convergence guarantee is often given for the average of the local models  $\avgmodel{t}$.}, upon introducing the following notation:
\begin{gather*}
Q^* = \min_{\theta \in \mathbb{R}^d} Q^{(\H)}\left( \model{}{} \right), \quad 
\avgmodel{t} := \frac{1}{\vert \H \vert}\sum_{i \in \H}\model{i}{t}.
\end{gather*}

\noindent \fcolorbox{black}{white}{
\parbox{0.45\textwidth}{\centering
\begin{theorem}
\label{thm:main_conv}
Suppose that assumptions~\ref{asp:lip},~\ref{asp:bnd_var} and~\ref{asp:heter} hold true, and that $n \geq 11f$. Let us denote 
\begin{gather*}
\alpha = \frac{9.88 f}{n-f} \leq  0.988, \quad \lambda = \frac{9f}{n-f},\\
c_0 := 12\left(\avgloss\left(\avgmodel{0}\right) - Q^*\right), \quad c_1 :=   \frac{18 \alpha (1 + \alpha)}{(1 - \alpha)^2},\\
c_2 := 72  L  \left (\frac{3}{n-f} + 2 c_1 + \frac{9\lambda}{2}\left(2c_1 +3\right) \right),\\ c_3:=6 \left(6 c_1 + \frac{9\lambda}{2}\left( 4c_1 + 9\right)\right) \text{and }     c_4 := \frac{9 n c_0 c_1}{c_2}.
\end{gather*}

Consider Algorithm~\ref{algo} with $K = 1$, $\gamma = \min \left\{\frac{1}{12L}, \frac{1}{L} \sqrt{\frac{2}{3c_1}}, \sqrt{\frac{c_0}{c_2LT\sigma^2}} \right\},$ and $\beta = \sqrt{ 1 - 12 \gamma L}$. Then, for all $T\geq1$ and $i \in \H$, we have
\begin{align*}
    &\expect{\norm{\nabla \avgloss \left( \returnedmodel{\honestnode} \right)}^2} \leq 2\sqrt{\frac{c_0c_2L\sigma^2}{T}} + \frac{12Lc_0}{T} \\
    &+\frac{Lc_0}{T}\sqrt{\frac{3c_1}{2}}  + \frac{36}{T} \left(\frac{\sigma^2}{n-f} \right) + \frac{c_4 L}{T} \left( 1+ \frac{\heter^2}{\sigma^2}\right) 
    + c_3 \heter^2.
\end{align*}  

\end{theorem}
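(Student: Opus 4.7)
The plan is to combine three ingredients: (i) a robust mixing guarantee for NNA phrased as $(\alpha,\lambda)$-reduction, (ii) coupled recurrences for the model drift and the averaged momentum error, and (iii) a descent-type argument on $Q^{(\H)}(\bar\theta_t)$ using a Lyapunov function $V_t = Q^{(\H)}(\bar\theta_t) - Q^* + A\Delta_t + B\mathcal{E}_t$ for appropriately chosen constants $A,B > 0$. The target error display then drops out by telescoping, substituting the prescribed $\gamma$, and transferring the bound from $\bar\theta_t$ to an individual correct node's model $\theta^{(i)}_t$ via the drift.

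First I would establish that, for $n\geq 11f$ and $K=1$, NNA satisfies $(\alpha,\lambda)$-reduction with $\alpha = 9.88f/(n-f)$ and $\lambda = 9f/(n-f)$. The key geometric point is that a correct node $i$ runs NNA pivoted on its own vector, so its output $\bar x^{(i)}$ is the uniform average of $n-2f$ points each at distance from $x^{(i)}$ at most $\max_{j\in\H}\|x^{(j)}-x^{(i)}\|$ (since any Byzantine vector kept must be closer to $x^{(i)}$ than some discarded correct vector, and at most $f$ Byzantine vectors can survive filtering). Writing $\bar x^{(i)} - \bar x$ as a convex combination of discrepancies $x^{(j)} - x^{(i)}$ for $j\in\H$ plus bounded Byzantine perturbations, applying Jensen's inequality, and using $\sum_{i,j\in\H}\|x^{(i)}-x^{(j)}\|^2 \leq 2|\H|\sum_{i\in\H}\|x^{(i)}-\bar x\|^2$ would produce the contraction constant $\alpha$. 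The centering constant $\lambda$ follows by averaging the Byzantine-induced perturbations over $i\in\H$.

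Next I would track two quantities: the average drift $\Delta_t := \frac{1}{|\H|}\sum_{i\in\H}\mathbb{E}\|\theta^{(i)}_t - \bar\theta_t\|^2$ and the momentum error $\mathcal{E}_t$, defined as (a suitable combination of) $\mathbb{E}\|\bar m_t - \nabla Q^{(\H)}(\bar\theta_t)\|^2$ and per-node momentum fluctuations. Applying the contraction from $(\alpha,\lambda)$-reduction to the partial updates $\theta^{(i)}_{t+1/2} = \theta^{(i)}_t - \gamma m^{(i)}_t$ and using Young's inequality with Lipschitz smoothness yields a recurrence roughly of the form $\Delta_{t+1} \leq \alpha(1+\eta)\Delta_t + C_1 \gamma^2(\mathcal{E}_t + \|\nabla Q^{(\H)}(\bar\theta_t)\|^2 + \zeta^2 + \sigma^2)$ for small $\eta>0$. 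Expanding $m^{(i)}_t = \beta m^{(i)}_{t-1} + (1-\beta)g^{(i)}_t$ and applying the variance bound gives $\mathcal{E}_{t+1} \leq \beta^2\mathcal{E}_t + (1-\beta)^2\sigma^2/|\H| + C_2 \gamma^2 L^2(\mathcal{E}_t + \|\nabla Q^{(\H)}(\bar\theta_t)\|^2) + C_3 L^2 \Delta_t$. The main obstacle is the bidirectional coupling: drift feeds momentum error through $\|\nabla Q^{(i)}(\theta^{(i)}_t) - \nabla Q^{(i)}(\bar\theta_t)\|^2 \leq L^2\Delta_t$, while momentum error drives drift growth. Stability of this linear system is precisely what forces the choice $\beta = \sqrt{1-12\gamma L}$, so $1-\beta^2 = 12\gamma L$ aligns the momentum decay rate with the drift growth induced by one gradient step.

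Finally, I would combine these with the descent lemma. Using $\bar\theta_{t+1} = \bar\theta_{t+1/2} + (\bar\theta_{t+1} - \bar\theta_{t+1/2})$ where the second term is the centering error bounded by $\lambda\Delta_t$, and smoothness, gives $\mathbb{E}[Q^{(\H)}(\bar\theta_{t+1})] \leq Q^{(\H)}(\bar\theta_t) - \tfrac{\gamma}{2}\|\nabla Q^{(\H)}(\bar\theta_t)\|^2 + O(\gamma^2 L \mathcal{E}_t) + O(\gamma L^2 \Delta_t)$. Plugging the drift and momentum recurrences into $V_t = Q^{(\H)}(\bar\theta_t) - Q^* + A\Delta_t + B\mathcal{E}_t$, I would choose $A,B$ so that the cross terms are dominated and obtain $\mathbb{E}[V_{t+1}] \leq V_t - \tfrac{\gamma}{4}\mathbb{E}\|\nabla Q^{(\H)}(\bar\theta_t)\|^2 + O(\gamma^2 L \sigma^2/(n-f)) + O(\gamma \lambda \zeta^2/(1-\alpha)^2)$. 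Telescoping over $t=0,\dots,T-1$, dividing by $\gamma T/4$, choosing $\gamma = \min\{1/(12L), (2/(3c_1))^{1/2}/L, (c_0/(c_2LT\sigma^2))^{1/2}\}$ to balance the $\sqrt{1/T}$ and $1/T$ pieces, and passing from $\|\nabla Q^{(\H)}(\bar\theta_t)\|^2$ to $\|\nabla Q^{(\H)}(\theta^{(i)}_t)\|^2$ by paying an extra $L^2 \Delta_t$ (which contributes the $c_4/T$ term carrying the $1+\zeta^2/\sigma^2$ factor) would yield exactly the stated bound, with the uniform sampling of $\hat\theta^{(i)}$ converting the average over $t$ into an expectation.
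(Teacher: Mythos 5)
Your skeleton matches the paper's: NNA cast as an $(\alpha,\lambda)$-reduction property, a coupled analysis of model drift and momentum, a smoothness/descent step combined into a Lyapunov function, telescoping with the prescribed $\gamma$, and a final transfer from $\bar\theta_t$ to $\theta^{(i)}_t$ costing $L^2 n\,\Delta_t$ (which indeed produces the $c_4$ term). The structural difference is that you carry $\Delta_t$ and $\mathcal{E}_t$ inside the Lyapunov function, whereas the paper proves a \emph{uniform-in-$t$} bound on both the model drift and the \emph{inter-node} momentum drift by a separate induction, and its Lyapunov function contains only $Q^{(\H)}(\bar\theta_t)-Q^*+\frac{1}{4L}\|\delta_t\|^2$ with $\delta_t=\bar m_t-\frac{1}{n-f}\sum_i\nabla Q^{(i)}(\theta^{(i)}_t)$. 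That alternative could in principle be made to work, but as written it has a genuine gap.

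The gap is in your drift recurrence, and it touches exactly what the theorem adds over the momentum-less Proposition: you write $\Delta_{t+1}\leq\alpha(1+\eta)\Delta_t+C_1\gamma^2(\mathcal{E}_t+\|\nabla Q^{(\H)}(\bar\theta_t)\|^2+\zeta^2+\sigma^2)$ with the noise \emph{undamped}. What actually drives drift growth is the spread of the momentums across correct nodes, and momentum averaging damps its noise component by a factor $\frac{1-\beta}{1+\beta}\approx 12\gamma L$ (the heterogeneity $\zeta^2$ part has no such damping); this is precisely the content of the paper's drift lemma, giving $\Delta_t=O\bigl(\gamma^2((1-\beta)\sigma^2+\zeta^2)\bigr)$. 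Since the drift necessarily enters the per-step error with weight of order $1/\gamma$ (through the centering error $\frac{1}{\gamma}\|\bar\theta_{t+1}-\bar\theta_{t+1/2}\|^2\leq\frac{\lambda}{\gamma}\Gamma\bigl(\model{}{}\bigr)_{t+\nicefrac{1}{2}}$ in the descent step and through the $\sim L/\gamma$ coefficient with which drift feeds the momentum-deviation recurrence), an undamped $\gamma^2\sigma^2$ source leaves an irreducible $\Theta\bigl((\alpha+\lambda)\sigma^2\bigr)$ residual, i.e.\ you would only recover the $\frac{f}{n}\sigma^2$ non-vanishing error of the $\beta=0$ analysis rather than the stated bound, in which every $\sigma^2$ term vanishes in $T$. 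Two further points you would need to repair to close the argument along your route: (i) the coupling is not a one-step linear system, because each momentum aggregates past gradients at past iterates, so the momentum spread involves $\sum_{s\leq t}\beta^{t-s}\Delta_s$; the paper resolves this by induction to a uniform bound under $\gamma\lesssim(1-\alpha)/(L\sqrt{\alpha(1+\alpha)})$, which is where the $\frac{1}{L}\sqrt{2/(3c_1)}$ clause of the learning rate comes from; and (ii) the averaged momentum deviation $\delta_t$ (decaying at rate $\beta^2$ with injection $(1-\beta)^2\sigma^2/(n-f)$) and the inter-node momentum spread (with a persistent $\zeta^2$ floor) play different roles and receive different dampings, so folding them into a single $\mathcal{E}_t$ as proposed obscures exactly the cancellation the proof needs.
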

}}

Using Theorem~\ref{thm:main_conv}, we can show that Algorithm~\ref{algo} guarantees $(f, \epsilon)$-resilience. Specifically, upon ignoring the higher-order terms in $T$, we obtain the following corollary.

\begin{corollary}
\label{cor:maincor}
Under the conditions stated in Theorem~\ref{thm:main_conv},
Algorithm~\ref{algo}
guarantees $(f,\epsilon)$-resilience where
\begin{align*}
\epsilon \in \mathcal{O}\left(\sqrt{\frac{\sigma^2}{T}\left( \frac{1+f}{n} \right)} + \frac{f}{n}\heter^2\right). \end{align*}
\end{corollary}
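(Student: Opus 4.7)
The plan is to obtain the corollary as a direct simplification of the finite-time bound stated in Theorem~\ref{thm:main_conv}, by (i) discarding terms that vanish faster than $1/\sqrt{T}$ in $T$ and (ii) extracting the dependence of the remaining coefficients on $f$ and $n$ through the explicit formulas for $\alpha$, $\lambda$, $c_1$, $c_2$, $c_3$. Since the corollary is an asymptotic (in $T$) statement, no additional algorithmic argument is needed: everything is bookkeeping from the closed-form constants defined in the theorem.

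\textbf{Step 1: isolate the leading terms in $T$.} Inspecting the right-hand side of the bound in Theorem~\ref{thm:main_conv}, the terms $12Lc_0/T$, $(Lc_0/T)\sqrt{3c_1/2}$, $36\sigma^2/((n-f)T)$, and $(c_4 L/T)(1+\heter^2/\sigma^2)$ are all $\mathcal{O}(1/T)$, hence higher-order compared to $\mathcal{O}(1/\sqrt{T})$. The two surviving terms are $2\sqrt{c_0 c_2 L \sigma^2/T}$ and $c_3 \heter^2$, and it suffices to bound these by the two summands in the claimed $\epsilon$.

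\textbf{Step 2: bound $c_1$ in terms of $f/n$.} The key observation is that the hypothesis $n\geq 11f$ yields $n-f \geq 10n/11 = \Theta(n)$, so $\alpha = 9.88f/(n-f) = \mathcal{O}(f/n)$ and the bound $\alpha \leq 0.988$ in Theorem~\ref{thm:main_conv} gives $(1-\alpha)^{-2} = \mathcal{O}(1)$. Plugging these into $c_1 = 18\alpha(1+\alpha)/(1-\alpha)^2$ then shows $c_1 = \mathcal{O}(\alpha) = \mathcal{O}(f/n)$. Since also $\lambda = 9f/(n-f) = \mathcal{O}(f/n)$, substituting these into the expressions for $c_2$ and $c_3$ yields, after straightforward algebra,
\begin{equation*}
c_2 = \mathcal{O}\!\left( L \cdot \tfrac{1+f}{n}\right), \qquad c_3 = \mathcal{O}\!\left( \tfrac{f}{n} \right).
\end{equation*}

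\textbf{Step 3: assemble and conclude.} Substituting the bounds from Step~2 into the two dominant terms from Step~1 gives
\begin{equation*}
2\sqrt{\tfrac{c_0 c_2 L \sigma^2}{T}} \in \mathcal{O}\!\left(\sqrt{\tfrac{\sigma^2}{T}\cdot\tfrac{1+f}{n}}\right), \qquad c_3 \heter^2 \in \mathcal{O}\!\left(\tfrac{f}{n}\heter^2\right),
\end{equation*}
which, together with Definition~\ref{def:Byz_learn} and the output rule $\returnedmodel{i} \sim \U\{\model{i}{0},\dots,\model{i}{T-1}\}$, yields the stated $(f,\epsilon)$-resilience. The only mildly non-routine step is the verification that $c_1$ scales as $f/n$ rather than being merely a bounded constant; once this is done, $c_2$ and $c_3$ inherit the correct $(1+f)/n$ and $f/n$ scaling and everything else is arithmetic.
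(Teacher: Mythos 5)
Your proposal is correct and follows essentially the same route as the paper's own proof: drop the $\mathcal{O}(1/T)$ terms, use $\alpha \leq 0.988$ to get $c_1 \in \mathcal{O}(\alpha)$, then substitute $\alpha, \lambda \in \mathcal{O}(f/n)$ (via $n-f = \Theta(n)$ under $n \geq 11f$) to obtain $c_2 \in \mathcal{O}\left(L \cdot \frac{1+f}{n}\right)$ and $c_3 \in \mathcal{O}\left(\frac{f}{n}\right)$. The paper merely keeps the intermediate form $\mathcal{O}\left(\sqrt{\frac{\sigma^2}{T}\left(\frac{1}{n}+\alpha+\lambda\right)} + (\alpha+\lambda)\heter^2\right)$ before substituting, which is the same bookkeeping you performed in one step.
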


\textbf{Linear gradient overhead.} In the fault-free setting, i.e., when $f=0$, the convergence result shown in Corollary~\ref{cor:maincor} reduces to that of the conventional D-SGD~\cite{TangLYZL18}. However, when $f >0$, \newalgorithm{} induces an overhead on the number of gradients computed per correct nodes compared to D-SGD. Specifically, correct nodes in \newalgorithm{} compute $(1+f)$ times more gradients than in the fault-free case (to obtain a comparable error), which is linear in $f$. We believe this gradient overhead to be tight, as conjectured in~\cite{Karimireddy2021}. While \newalgorithm{} only imposes a linear overhead under the assumption that $f \leq \nicefrac{n}{11}$,
it can tolerate a larger fraction of \byzantine{} nodes, i.e., arbitrarily close to $\nicefrac{n}{5}$, by imposing a quadratic gradient overhead in $f$ (see Corollary~\ref{cor:fivef} in Appendix~\ref{corr_5f}). 
\section{Convergence Analysis}\label{sec:analysis}
    We now explain the key elements that we build upon to prove the convergence guarantee stated in Theorem~\ref{thm:main_conv}. Essentially, we first introduce the mixing criterion of $(\alpha, \lambda)$-reduction and, then, show how to control the drift in local updates. %

\subsection{$(\alpha, \, \lambda)$-\reduction{}}

To handle the non-linear mixing of correct momentums, we introduce the notion of $(\alpha, \, \lambda)$-\reduction{}. This notion can be seen as a relaxation of the classical linear mixing criterion of double stochasticity with a bounded spectral gap. We show that $(\alpha, \, \lambda)$-\reduction{} is sufficient to maintain tight convergence guarantees, while it can be satisfied by non-linear and even non-continuous schemes such as NNA.

\begin{definition}[{\em $(\alpha,\lambda)$-\reduction{}}]
\label{def:reduction}
Consider a coordination phase $\Psi$. For correct nodes $i \in \H$ initiating the coordination phase with vectors $\left\{z^{(i)}, \, i \in \H\right\}$, we denote by $\left\{y^{(i)}, \, i \in \H\right\}$ the vectors obtained by these nodes upon the completion of $\Psi$. Then, $\Psi$ is said to guarantee {\em $(\alpha, \, \lambda)$-reduction} if, for any $\left\{z^{(i)}, \, i \in \H\right\}$, the following holds true:
  \begin{align*}
       &\text{i)}  &\frac{1}{\card{\H}} \sum_{i\in \H} \norm{y^{(i)} - \bar{y}}^2  \quad &\leq \quad \alpha \frac{1}{\card{\H}} \sum_{i\in \H} \norm{z^{(i)} - \bar{z}}^2 \\
      &\text{ii)}   &\norm{ \bar{y}- \bar{z}}^2 \quad &\leq \quad \lambda \frac{1}{\card{\H}} \sum_{i\in \H} \norm{z^{(i)} - \bar{z}}^2
  \end{align*}
  where $\bar{z}$ and $\bar{y}$ denote the vector averages of $\left\{z^{(i)}, \, i \in \H\right\}$  and $\left\{y^{(i)}, \, i \in \H\right\}$, respectively.
\end{definition}

In \newalgorithm{}, each correct node $i$ initializes the coordination phase with vector $z^{(i)} =\cvavector{i}{0}$ and outputs $y^{(i)} = \cvavector{i}{K}$ at its completion. In Appendix~\ref{app:reductionNNA}, we show that the coordination phase of Algorithm~\ref{algo} satisfies $(\alpha, \, \lambda)$-\reduction{} for $\lambda  \in \Theta\left( \nicefrac{f}{n}\right)$, $\alpha  <1$, and  $\alpha  \in \Theta\left( \nicefrac{f}{n}\right)$ when $n \geq 11f$. Additionally, when $n > 5f$, we have $\lambda  \in \Theta\left( \nicefrac{f^2}{n}\right)$, $\alpha <1$, and $\alpha  \in  \Theta\left( \nicefrac{f}{n}\right)$ (shown in Appendix~\ref{sec:app:fivef}).

\subsection{D-SGD with $(\alpha, \, \lambda)$-\reduction{}}
To better understand the utility of $(\alpha, \, \lambda)$-\reduction{}, we first provide a convergence guarantee for \newalgorithm{} {\em without} momentum (i.e., $\beta = 0$), while assuming the communication phase to satisfy $(\alpha, \, \lambda)$-\reduction{}. Alternately, this algorithm reduces to D-SGD with 
$(\alpha, \, \lambda)$-\reduction{} mixing.

\begin{proposition}
\label{prop:dsgd_main}
Consider Algorithm~\ref{algo} with $\beta=0$. 
Suppose that assumptions~\ref{asp:lip},~\ref{asp:bnd_var} and~\ref{asp:heter} hold true, and that the coordination phase satisfies $(\alpha,\lambda)$-\reduction{} for $\alpha < 1$. Then there exists a constant learning rate $\gamma$ for which each correct node $i \in \H$ returns $\returnedmodel{\honestnode}$ such that
\begin{align*}
\expect{\norm{\nabla \loss^{(\H)} \left( \returnedmodel{\honestnode} \right) }^2} 
&\in \mathcal{O}\left( \sqrt{\frac{\sigma^2}{nT}}+ \frac{\lambda \left( \sigma^2 + \heter^2 \right)}{(1-\alpha)^2}\right).
\end{align*}
\end{proposition}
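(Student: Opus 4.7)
Define the mean correct iterate $\avgmodel{t} := \frac{1}{\card{\H}}\sum_{i \in \H}\model{i}{t}$ and the model drift $D_t := \frac{1}{\card{\H}}\sum_{i \in \H}\norm{\model{i}{t} - \avgmodel{t}}^2$. With $\mmtcoefficient = 0$ the local phase reduces to $\model{i}{t+1/2} = \model{i}{t} - \gamma \gradvar{i}{t}$, so $\avgmodel{t+1/2} = \avgmodel{t} - \gamma \AvgNoisyGrad{t}$ where $\AvgNoisyGrad{t} := \frac{1}{\card{\H}}\sum_{i \in \H}\gradvar{i}{t}$. Let $\tilde D_t := \frac{1}{\card{\H}}\sum_{i \in \H}\norm{\model{i}{t+1/2} - \avgmodel{t+1/2}}^2$ and $b_t := \avgmodel{t+1} - \avgmodel{t+1/2}$. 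The $(\alpha,\lambda)$-\reduction{} property of the coordination phase yields the two black-box inequalities $D_{t+1} \leq \alpha \tilde D_t$ and $\norm{b_t}^2 \leq \lambda \tilde D_t$, which are the only mixing properties I would use. The plan is to couple a descent inequality for $\expect{\avgloss(\avgmodel{t})}$ with a geometric recurrence for $\expect{D_t}$.

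\textbf{Descent and drift recurrences.} Applying Assumption~\ref{asp:lip} (the gradient of $\avgloss$ is also $L$-Lipschitz) to the decomposition $\avgmodel{t+1} - \avgmodel{t} = -\gamma\AvgNoisyGrad{t} + b_t$, combined with the standard drift-to-gradient inequality $\norm{\frac{1}{\card{\H}}\sum_i \localgrad{i}(\model{i}{t}) - \nabla\avgloss(\avgmodel{t})}^2 \leq L^2 D_t$ (which holds by Assumption~\ref{asp:lip}), I would derive a descent inequality of the form
\begin{equation*}
\expect{\avgloss(\avgmodel{t+1})} \leq \expect{\avgloss(\avgmodel{t})} - \tfrac{\gamma}{2}\expect{\norm{\nabla\avgloss(\avgmodel{t})}^2} + c_1 L^2\gamma\,\expect{D_t} + c_2 L\lambda\,\expect{\tilde D_t} + c_3 \tfrac{L\gamma^2 \sigma^2}{\card{\H}},
\end{equation*}
where the $\lambda\tilde D_t$ term controls $\norm{b_t}^2$ after a Young's inequality and the last term comes from Assumption~\ref{asp:bnd_var} applied to $\AvgNoisyGrad{t}$. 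In parallel, from $\model{i}{t+1/2} - \avgmodel{t+1/2} = (\model{i}{t}-\avgmodel{t}) - \gamma(\gradvar{i}{t} - \AvgNoisyGrad{t})$ and Young's inequality with $\eta := \tfrac{1-\alpha}{2\alpha}$ (chosen so that $\alpha(1+\eta) = \tfrac{1+\alpha}{2} < 1$), decomposing $\gradvar{i}{t}$ around $\localgrad{i}(\avgmodel{t})$ and $\nabla\avgloss(\avgmodel{t})$ and invoking Assumptions~\ref{asp:bnd_var} and~\ref{asp:heter}, I obtain
\begin{equation*}
\expect{D_{t+1}} \leq \tfrac{1+\alpha}{2}\expect{D_t} + \tfrac{C \gamma^2}{1-\alpha}\bigl(\sigma^2 + \heter^2 + \expect{\norm{\nabla\avgloss(\avgmodel{t})}^2}\bigr),
\end{equation*}
after further requiring $\gamma$ small enough that the $L^2\gamma^2 D_t$ self-term is dominated by the contraction gap. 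Unrolling this geometric recurrence gives $\tfrac{1}{T}\sum_t\expect{D_t} \in \mathcal{O}\bigl(\tfrac{\gamma^2(\sigma^2+\heter^2)}{(1-\alpha)^2} + \tfrac{\gamma^2}{(1-\alpha)^2 T}\sum_t\expect{\norm{\nabla\avgloss(\avgmodel{t})}^2}\bigr)$.

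\textbf{Combining, tuning, and per-node bound.} Summing the descent inequality from $t=0$ to $T-1$, substituting the drift bound for both $\expect{D_t}$ and $\expect{\tilde D_t}$ (which differ only by a $(1+\eta)$ factor plus a $\gamma^2$ variance term already accounted for), and then choosing $\gamma$ small enough that the $\norm{\nabla\avgloss(\avgmodel{t})}^2$ feedback coming from the drift is absorbed into the $-\tfrac{\gamma}{2}\norm{\nabla\avgloss(\avgmodel{t})}^2$ descent term, I reach
\begin{equation*}
\tfrac{1}{T}\sum_{t=0}^{T-1}\expect{\norm{\nabla\avgloss(\avgmodel{t})}^2} \in \mathcal{O}\!\left(\tfrac{\avgloss(\avgmodel{0}) - Q^*}{\gamma T} + \tfrac{L\gamma\sigma^2}{\card{\H}} + \tfrac{\lambda(\sigma^2+\heter^2)}{(1-\alpha)^2}\right).
\end{equation*}
Balancing the first two terms with $\gamma = \Theta\bigl(\sqrt{\card{\H}/(LT\sigma^2)}\bigr)$ and using $\card{\H} = n - f = \Theta(n)$ yields the advertised $\sqrt{\sigma^2/(nT)}$ first term. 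Since $\returnedmodel{i}$ is sampled uniformly from $\{\model{i}{0},\dots,\model{i}{T-1}\}$, the final step is the transfer $\norm{\nabla\avgloss(\model{i}{t})}^2 \leq 2\norm{\nabla\avgloss(\avgmodel{t})}^2 + 2L^2\card{\H}D_t$, which passes the bound from $\avgmodel{t}$ to any fixed correct node $i$ at no cost in the leading order. The main obstacle is the circular coupling: the drift recurrence contains $\expect{\norm{\nabla\avgloss(\avgmodel{t})}^2}$, which is precisely the quantity the descent inequality is shrinking. Strict contraction $\alpha < 1$ is needed to close the geometric series on $D_t$, and a learning rate $\gamma$ of order $(1-\alpha)/L$ is needed to absorb the drift-feedback into the descent term; together these produce the $(1-\alpha)^{-2}$ amplification in the non-vanishing error.
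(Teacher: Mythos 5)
Your outline follows the paper's own route almost step for step: a smoothness-based descent inequality for $\avgloss(\avgmodel{t})$ in which the coordination bias $b_t=\avgmodel{t+1}-\avgmodel{t+\nicefrac{1}{2}}$ is controlled through the second reduction condition, a geometric recurrence for the model drift closed via the first reduction condition with Young parameter $\frac{1-\alpha}{2\alpha}$ and a step size of order $(1-\alpha)/L$, the tuning $\gamma=\Theta(\sqrt{n/(LT\sigma^2)})$, and the final transfer from $\avgmodel{t}$ to $\model{i}{t}$ at higher order (this is exactly Lemmas~\ref{lem:growth_SGD}, \ref{lem:SGD:drift} and \ref{lem:dsgd}). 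There is, however, one step that would fail as written: in your descent inequality the bias enters with coefficient $c_2 L\lambda\,\expect{\tilde D_t}$. The cross term $\iprod{b_t}{\nabla\avgloss(\avgmodel{t})}$ has to be Young-split with parameter of order $\gamma$ so that its gradient part can be absorbed into the $-\frac{\gamma}{2}\norm{\nabla\avgloss(\avgmodel{t})}^2$ term; this forces a coefficient of order $\lambda/\gamma$ on $\expect{\tilde D_t}$, not $L\lambda$ (the paper's Lemma~\ref{lem:growth_SGD} indeed carries $\frac{5\lambda}{\gamma}\expect{\diam{\model{}{}}{t+\nicefrac{1}{2}}}$). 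Taken literally, your inequality combined with $\expect{\tilde D_t}\in\mathcal{O}(\gamma^2(\sigma^2+\heter^2)/(1-\alpha)^2)$ would yield an asymptotic error $\mathcal{O}(L\gamma\lambda(\sigma^2+\heter^2)/(1-\alpha)^2)$ that vanishes as $\gamma\to 0$, contradicting the lower bound of Lemma~\ref{lem:lower_bound}; with the correct $\lambda/\gamma$ scaling one recovers exactly the non-vanishing $\lambda(\sigma^2+\heter^2)/(1-\alpha)^2$ term that you in fact state in your final bound, so this is a fixable slip in the intermediate inequality rather than a flaw in the strategy.

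A smaller remark: the gradient-norm feedback you place in the drift recurrence, and the resulting ``circular coupling'' you flag as the main obstacle, are avoidable. Bounding the gradient dispersion around its own mean by decomposing $\gradient{i}{t}-\localgrad{i}(\model{i}{t})$, $\localgrad{i}(\model{i}{t})-\localgrad{i}(\avgmodel{t})$ and $\localgrad{i}(\avgmodel{t})-\nabla\avgloss(\avgmodel{t})$ gives $\expect{\diam{\gradient{}{}}{t}}\lesssim \sigma^2+\heter^2+L^2\expect{\diam{\model{}{}}{t}}$ with no $\norm{\nabla\avgloss(\avgmodel{t})}^2$ term, which is what the paper does; the drift bound is then uniform in $t$ and no absorption of gradient feedback is needed. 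Your handling of the feedback with an additional smallness condition on $\gamma$ also works, it is just heavier than necessary.
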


Notably, when all the nodes are correct (i.e., $f = 0$), then the coordination phase of Algorithm~\ref{algo} simply computes the average and satisfies $(\alpha, \lambda)$-\reduction{} for $\alpha = \lambda = 0$. Then, we recover the classical convergence guarantee of D-SGD without \byzantine{} nodes, i.e., $\mathcal{O}\left( \sqrt{\nicefrac{\sigma^2}{nT}}\right)$. 
However, in the presence of \byzantine{} nodes (when $\lambda \in \Theta\left(\nicefrac{f}{n}\right)$, and $\alpha <1$), Proposition \ref{prop:dsgd_main} shows an asymptotic error of $\frac{f}{n} \sigma^2 + \frac{f}{n}\heter^2$. While the term depending on $\heter^2$ is a fundamental lower bound as per Lemma~\ref{lem:lower_bound}, the one depending on $\sigma^2$ can be alleviated through the use of momentum, as we show next.

\subsection{Polyak's Momentum}
\label{sec:momentum_aanl}
Although momentum has been shown to be beneficial in the particular case of server-based coordination~\cite{Karimireddy2021, farhadkhani2022byzantine}, extending the existing analyses to our setting is not straightforward. The main bottleneck is the non-trivial drift that occurs between the local parameters maintained by correct nodes, i.e., $ \sum_{i\in \H} \norm{\model{i}{t} - \bar{\model{}{t}}}^2$. When momentum is applied, this drift gets coupled with the drift between their momentum vectors, i.e.,  $\sum_{i \in \H} \norm{\mmt{i}{t} - \AvgMmt{t}}^2$.
Indeed, an elementary analysis of this coupling suggests the possibility of uncontrolled growth of the two drifts: a high model drift increases the momentum drift and vice versa. We devise a refined analysis, showing that an appropriate learning rate, which is a function of parameter $\alpha$ in $(\alpha, \, \lambda)$-\reduction{}, ensures uniform bounds proportional to $(1-\beta) \sigma^2$ for both model and momentum drifts (Lemma~\ref{lem:drift} in Appendix~\ref{app:proofs}). This suggests that we can diminish the dependence on $\sigma^2$ by choosing a large momentum parameter close to $1$. However, an arbitrarily large momentum parameter increases the bias, i.e., 
 \begin{align*}
     \dev{t} := \frac{1}{\card{\H}} \sum_{i \in \H} \left( \mmt{i}{t} - \nabla \loss^{(i)}\left(\model{i}{t}\right)\right),
 \end{align*}
 which in turn negatively impacts the convergence. To prove that the positive effect of momentum (i.e., reduction in drift) outweighs the negative effect (i.e., increase in bias), we 
 define our Lyapunov function (or {\em potential} function) to be 
\begin{align*}
    V_t \coloneqq \expect{\avgloss\left( \avgmodel{t} \right) - Q^* + \frac{1}{4L} \norm{\dev{t}}^2}.
\end{align*}
The above Lyapunov function is inspired from the existing momentum literature~\cite{mvr19,farhadkhani2022byzantine}, but adapted to 
address the model drift.
\section{Empirical Evaluation}\label{sec:experiments}
We compare the performance of \newalgorithm{} to D-SGD and several state-of-the-art algorithms from the literature. In particular, we consider three methods, namely BRIDGE~\cite{yang2019bridge}, SCC~\cite{he22}, and LEARN~\cite{collaborativeElMhamdi21}\footnote{We do not implement BTARD~\cite{Gorbunov21} due to its weaker adversarial model and assumption of a public data pool.}. We consider two classical image classification tasks, on which we test the robustness of \newalgorithm{} under four attacks. Every experiment is repeated five times using seeds 1 to 5 for reproducibility. Our code will be made available online.

\subsection{Experimental Setup}\label{sec:expSetup}

\paragraph{Datasets.} We use the MNIST~\cite{mnist} and CIFAR-10~\cite{cifar} datasets, pre-processed as in~\cite{little} and \cite{distributed-momentum}. Refer to Appendix~\ref{app:pre_process} for more details on pre-possessing.

\begin{figure*}[!ht]
    \centering
    \includegraphics[width=71mm]{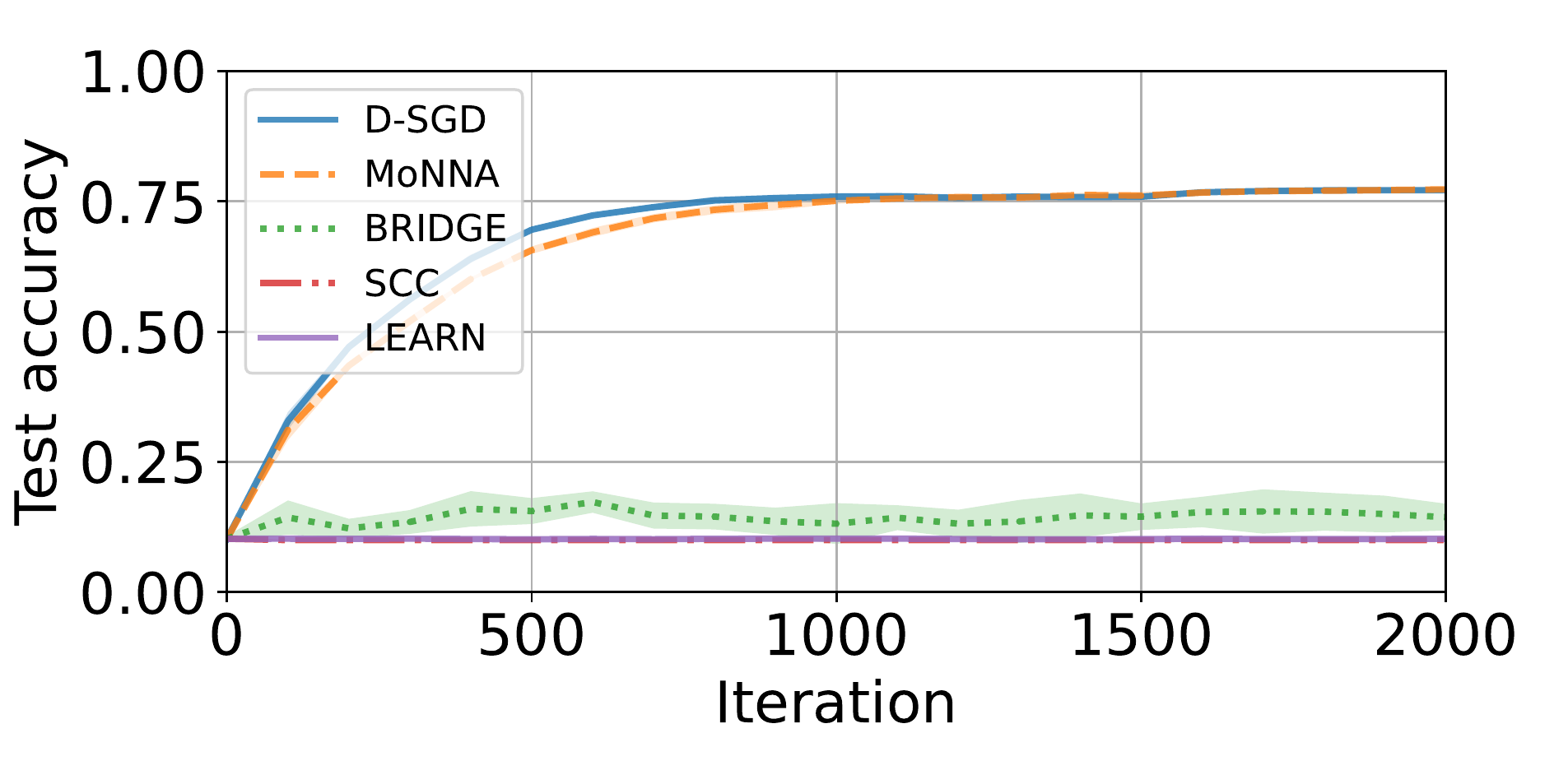}%
    \includegraphics[width=71mm]{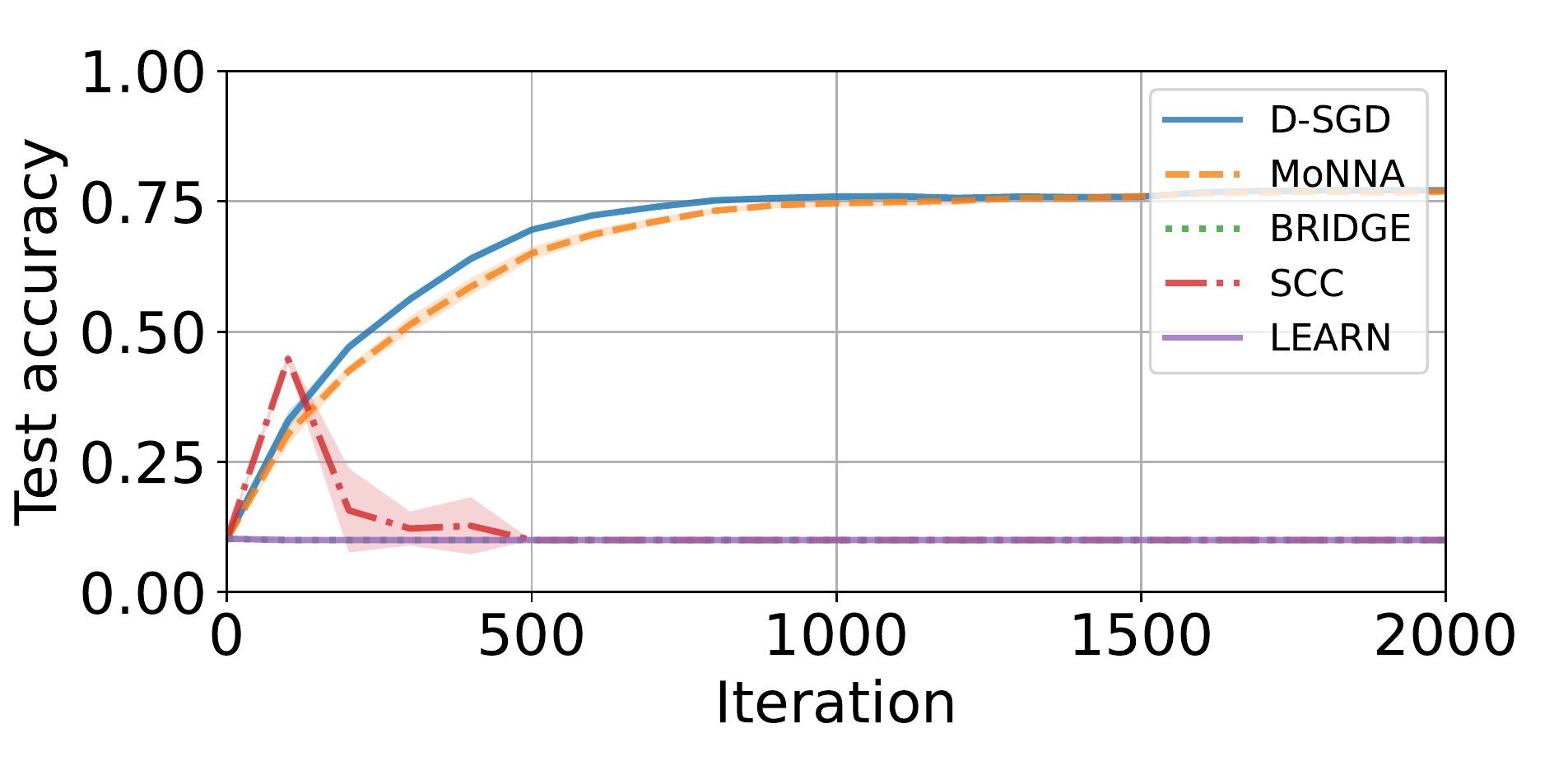}\\ \vspace{-0.2cm}    \includegraphics[width=71mm]{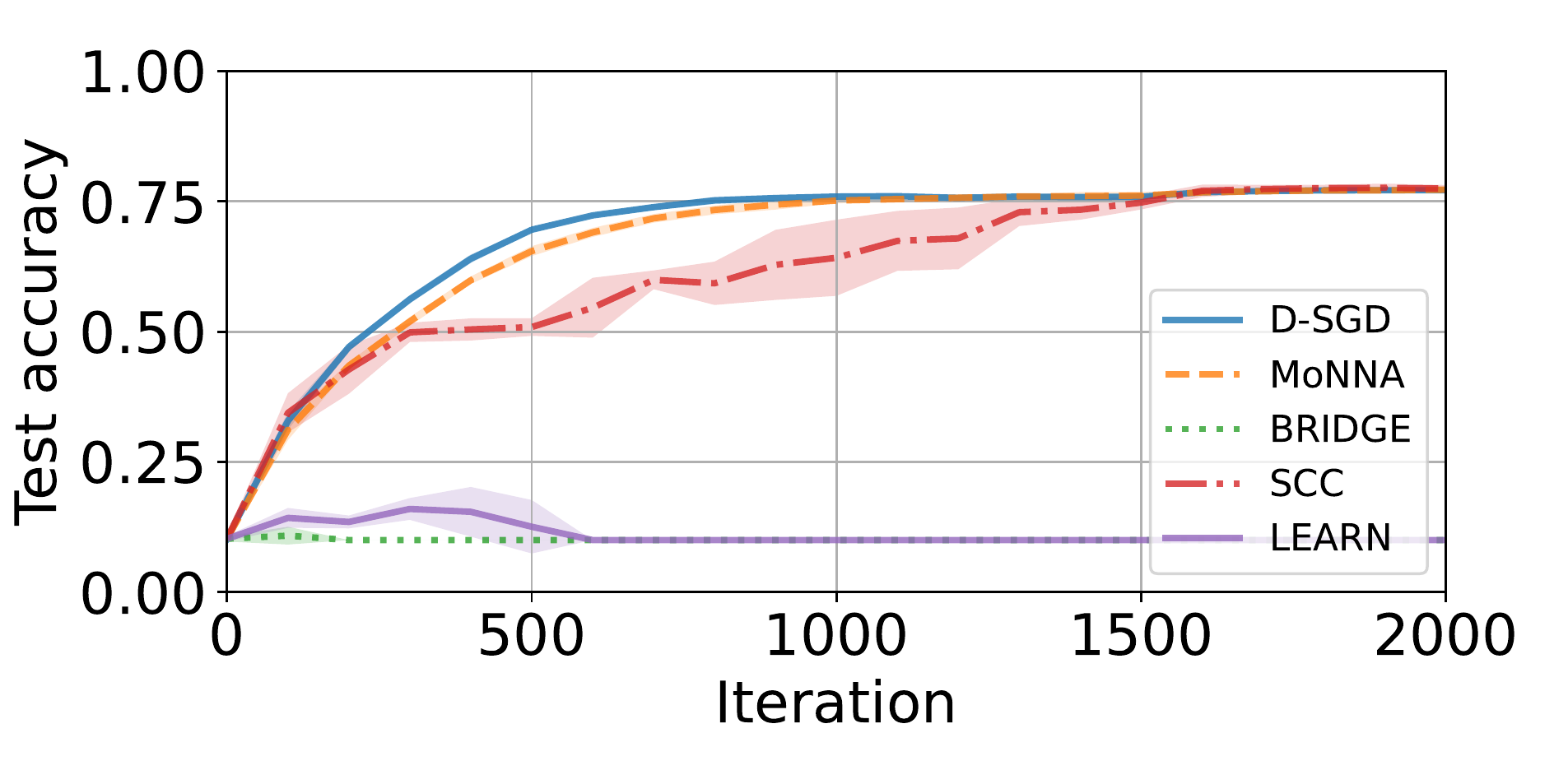}%
    \includegraphics[width=71mm]{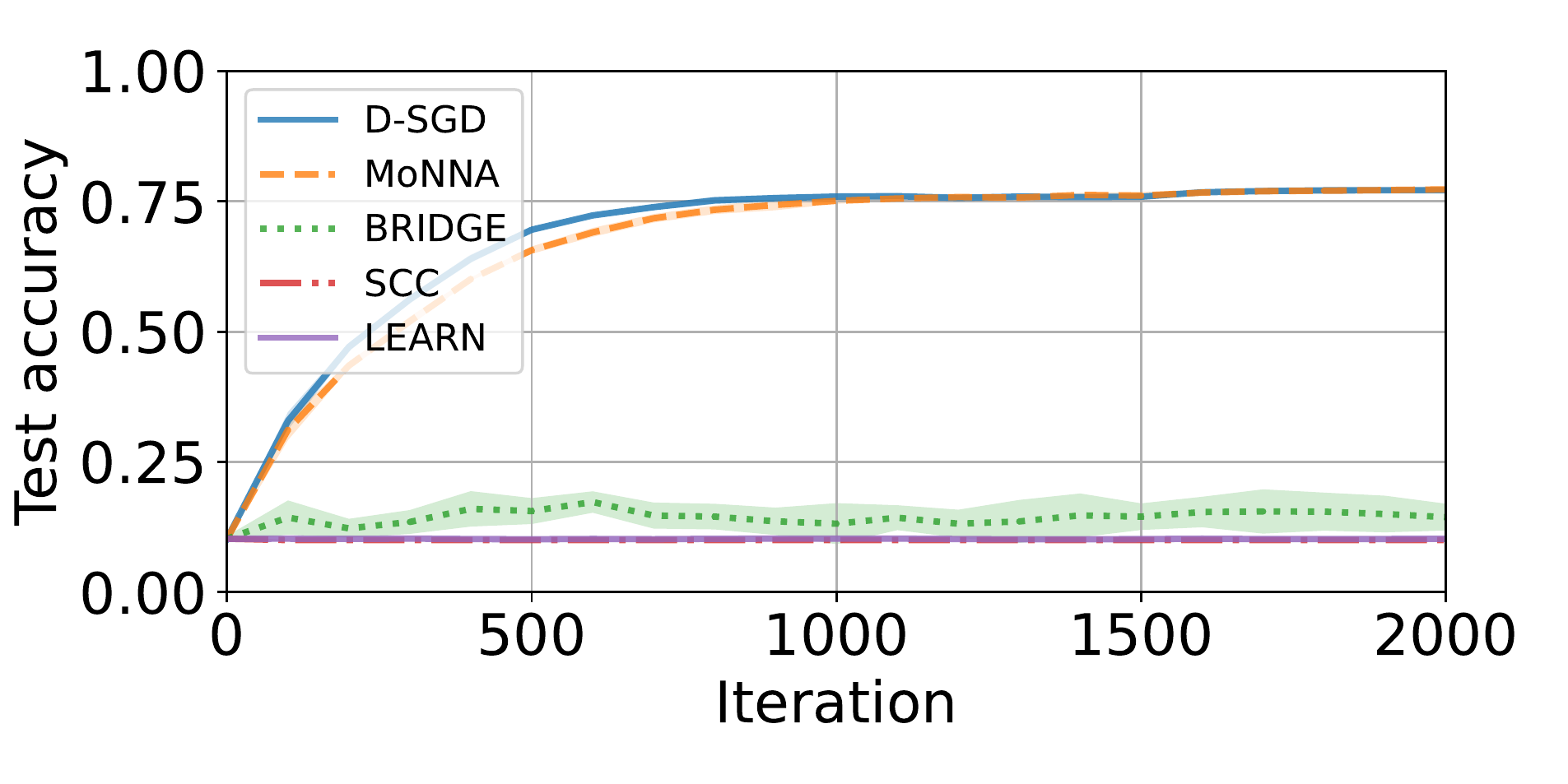}\\ \vspace{-0.2cm}
    \caption{Learning accuracies achieved on CIFAR-10 with $\alpha = 5$ by D-SGD, \newalgorithm{}, BRIDGE, SCC, and LEARN. There are $n = 16$ nodes out of which $f = 3$ are \byzantine{}. The \byzantine{} nodes execute the \textit{FOE} (row 1, left), \textit{ALIE} (row 1, right), \textit{LF} (row 2, left), and \textit{SF} (row 2, right) attacks. All algorithms except LEARN compute 100,000 gradients, while LEARN computes 2,001,000 gradients.}
    \label{fig:experiments-1}
\end{figure*}

\vspace{-0.2cm}
\paragraph{Model architecture and hyperparameters.}
For MNIST, we consider a convolutional neural network (CNN) with two convolutional layers followed by two fully-connected layers. The model is trained using a learning rate $\gamma = 0.75$ for $T = 600$ iterations. We use a total number of nodes $n = 26$, out of which $f = \nicefrac{(26-1)}{5} = 5$ are \byzantine{}. For CIFAR-10, we use a CNN with four convolutional layers and two fully-connected layers. Furthermore, we set $\gamma = 0.5$ and $T = 2000$ iterations. The distributed system in this case consists of $n = 16$ nodes, out of which $f = \nicefrac{(16-1)}{5} = 3$ are \byzantine{}. A detailed presentation of the entire experimental setup can be found in Appendix~\ref{app:model_arch}.

\vspace{-0.2cm}
\paragraph{Heterogeneity.} In order to simulate data heterogeneity, the correct nodes sample from the original datasets using a Dirichlet distribution of parameter $\alpha$, as done in~\cite{dirichlet}. We evaluate our algorithm on MNIST with $\alpha \in \{1, 5\}$, and on CIFAR-10 with $\alpha = 5$. A pictorial representation of the resulting heterogeneity as a function of $\alpha$ can be found in Appendix~\ref{app_exp_setup_distribution}.

\vspace{-0.2cm}
\paragraph{Attacks and asynchrony.}
We consider four state-of-the-art attacks performed by the \byzantine{} nodes, namely \textit{a little is enough (ALIE)}~\cite{little}, \textit{fall of empires (FOE)}~\cite{empire}, \textit{sign-flipping (SF)}~\cite{allen2020byzantine}, and \textit{label-flipping (LF)}~\cite{allen2020byzantine}. These attacks are explained in detail in Appendix~\ref{app:attacks}.
In order to emulate the ill-effects of asynchrony, we ensure that the correct nodes receive first the messages of the \byzantine{} nodes. Put differently, to construct any correct node $i$'s set of $n-f-1$ first received messages $\receive{\honestnode}{\cvaround}$, we first insert the messages sent by \byzantine{} nodes. We then complete the set by randomly sampling the remaining messages from the correct vectors.

\vspace{-0.2cm}
\paragraph{Evaluation details.}
To serve as a benchmark for \newalgorithm{}, we run D-SGD in a non-adversarial environment (i.e., without faults), and with momentum $\beta = 0.99$. We also execute \newalgorithm{} with $\beta = 0.99$ and $K = 1$ (i.e., one coordination round per iteration), and report on its performance in four adversarial settings. Furthermore, we execute SCC with $\beta = 0.9$ (fine-tuned as in~\cite{he22}), and LEARN without momentum as prescribed in~\cite{collaborativeElMhamdi21}. We use the same number of iterations $T$ for all algorithms and compare their learning accuracies and computational workloads per node.

\subsection{Experimental Results}\label{sec:experiment-results}

Figure~\ref{fig:experiments-1} showcases the performance of \newalgorithm{} compared to the other algorithms. For space limitations and better readability, we only show in Figure~\ref{fig:experiments-1} our results on CIFAR-10. The remaining results on MNIST are deferred to Appendix~\ref{app:exp_results}, and convey the same observations made hereafter.

Figure~\ref{fig:experiments-1} clearly shows the empirical superiority of \newalgorithm{} in four adversarial settings. Indeed, \newalgorithm{} is the only solution that performs consistently well under all the four attacks, almost matching the performance of D-SGD without faults. Its closest rival among the considered techniques is SCC. Even then, while SCC displays comparable performance to \newalgorithm{} under the LF attack, its learning capabilities drop significantly when tested against the remaining three attacks, especially FOE and SF that make the accuracy of SCC fall to 10\%. Moreover, BRIDGE and LEARN also present poor performances under all four attacks, their final accuracies stagnating at around 10\%.

Note also that the number of gradients each node computes when using LEARN is 20 times more than that in \newalgorithm{}. The inflated computational costs associated to LEARN are explained by the dynamic sampling technique the algorithm implements, whereby the batch-size is gradually augmented across iterations. \newalgorithm{}, BRIDGE, and SCC compute the same number of gradients per node as D-SGD since they all share a constant batch-size during the entire learning. In summary, our results show the empirical superiority of our algorithm in adversarial settings since \newalgorithm{} matches the performance of fault-free D-SGD, both in terms of learning accuracy and computational complexity.

\textbf{Remark.}
Although our empirical evaluation conveys a poor performance of BRIDGE and LEARN, it is important to note that we do not contradict the previous findings on these methods reported in~\cite{yang2019bridge} and~\cite{collaborativeElMhamdi21}, respectively, that consider very weak attack models. In short, \cite{yang2019bridge} report on an evaluation of BRIDGE assuming \byzantine{} nodes that only send random vectors, instead of executing state-of-the-art attacks. On the other hand, \cite{collaborativeElMhamdi21} report on an evaluation of  LEARN in a fault-less system, i.e., without any attack. Additionally, as opposed to \newalgorithm{} and SCC, these techniques do not use local momentum, which has been recently recognized as a key ingredient in the robustness of distributed learning algorithms~\cite{farhadkhani2022byzantine, Karimireddy2021}. 
We further comment on the necessity of momentum in Appendix~\ref{app_results_ablation_study}.

\vspace{-0.2cm}
\section{Concluding Remarks}
\label{sec:conclusion}

We present \newalgorithm{}, a novel collaborative learning algorithm that is provably robust under standard learning assumptions. We show that  \newalgorithm{} has a linear gradient computation overhead in the fraction of faulty machines.
One of our main contributions is the introduction of the new mixing criterion of $(\alpha,\lambda)$-reduction, allowing us to obtain tight convergence guarantees.
Following prior works on robust collaborative learning~\cite{collaborativeElMhamdi21, Gorbunov21}, we studied this criterion under the pair-wise communication scheme, which comes with a high communication overhead. We aim to study $(\alpha,\lambda)$-reduction under sparse communication networks or client sub-sampling schemes to reduce the communication overhead and further improve the practical applicability of our method. 

\section*{Acknowledgments}
This work has been supported in part by the Swiss National Science Foundation (SNSF) projects 200021-200477 and 200021-182542.

\bibliography{ref}
\bibliographystyle{icml2023}

\newpage
\appendix
\onecolumn

\begin{center}
    \LARGE \bf {Appendix}
\end{center}

\section*{Organization}

The appendices are organized as follows:
\begin{itemize}
    \item Appendix~\ref{app:proofs} proves the convergence of \newalgorithm{} for $n > 11f$ (Theorem~\ref{thm:main_conv}) and $n > 5f$ (Corollary~\ref{cor:fivef}).  
    \item Appendix~\ref{app_sec:dsgd_anal} analyzes  D-SGD ($\beta = 0$) under $(\alpha,\lambda)$-\reduction{} (proof of Proposition~\ref{prop:dsgd_main}).
    \item Appendix~\ref{app:cb} explains Signed Echo Broadcast which supports the reliability of our communication model.
    \item Appendix~\ref{app:exp_setup} provides  additional information on our experimental setup.
    \item Appendix~\ref{app:exp_results} provides some additional experimental results.
\end{itemize}

\section{Convergence Proof for \newalgorithm{}}
\label{app:proofs}

In this section, we derive convergence guarantees of \newalgorithm{}. 
First, we prove the main result of the paper, Theorem~\ref{thm:main_conv}, for $n> 11f$. Then, in Section~\ref{sec:app:fivef}, we show that~\newalgorithm{} can actually tolerate a larger fraction of \byzantine{} nodes (i.e., $n>5f$) but with a
slightly worse convergence rate.

We first present below the skeleton of our main proof.

\subsection{Skeleton of the Proof of Theorem~\ref{thm:main_conv}}
\label{app_sec:monna_proof}
Our proof comprises 4 key steps, listed as follows.
\begin{enumerate}[label= , leftmargin=0.7cm]
    \item \textbf{Step-I:} Demonstrating that the coordination phase of Algorithm~\ref{algo} satisfies $(\alpha,\lambda)$-\reduction{}.
    \item \textbf{Step-II:} Analyzing the \emph{parameter drift} and the \emph{momentum drift.}
    \item \textbf{Step-III:} Analyzing the {\em momentum deviation} from the true gradient.
    \item \textbf{Step-IV:} Studying the {\em growth} of loss function $\avgloss$.
\end{enumerate}
To present the technical details, we introduce the following notation.

{\bf Notation:} We denote by $\mathcal{P}_t$ the history of nodes from steps $0$ to $t$. Specifically, we define
\[\P_t \coloneqq \left\{\model{i}{0}, \ldots, \, \model{i}{t}; ~ \mmt{i}{0}, \ldots, \, \mmt{i}{t-1}; ~ i = 1, \ldots, \, n \right\}.\] 
By convention, $\P_0 = \{ \model{i}{0}; ~ i = 1, \ldots, \, n \}$. Furthermore, we denote by $\condexpect{t}{\cdot} := \expect{\cdot ~ \vline ~ \P_t}$ the conditional expectation given the history $\mathcal{P}_t$, and by $\expect{\cdot}$ the total expectation over the randomness of the algorithm; thus, $\expect{\cdot} := \condexpect{0}{ \cdots \condexpect{T}{\cdot}}$. %
We recall that $\H$ denotes the set of correct nodes, and that $\card{\H} = n-f$. For an arbitrary $t$, we denote by $\diam{*}{t}$ the variance of respective correct nodes' local values, denoted by $*_t$, i.e., $\diam{*}{t}=\frac{1}{n-f}\sum_{i \in \H} \norm{*^{(i)}_{t} - \bar{*}_{t}}^2$, where $\bar{*}_{t} = \frac{1}{n-f} \sum_{i \in \H} {*^{(i)}_{t}}$ is the average of correct values. For instance,
\begin{align*}
    & \diam{\model{}{}}{t} = \frac{1}{n-f} \sum_{i\in \H} \norm{\model{i}{t} - \bar{\model{}{t}}}^2, ~ \diam{\mmt{}{}}{t} = \frac{1}{n-f} \sum_{i \in \H} \norm{\mmt{i}{t} - \bar{\mmt{}{t}}}^2 \\
    & \text{ and } \diam{\gradient{}{}}{t} = \frac{1}{n-f} \sum_{i \in \H} \norm{\gradient{i}{t} - \bar{\gradient{}{t}}}^2.
\end{align*}
We present below technical summaries of the aforementioned $4$ steps.

\subsubsection*{ \bf Step-I: Coordination phase with \cva{} and satisfies $(\alpha,\lambda)$-\reduction{}} 
Recall the definition of $(\alpha,\lambda)$-\reduction{} from  Definition~\ref{def:reduction}. Here we show that this condition is satisfied by the  coordination phase  of Algorithm~\ref{algo}.
 Recall that in each iteration of Algorithm~\ref{algo}, each node $i$ initializes the coordination phase with input $z^{(i)} = \cvavector{i}{0}$ and obtains the output vector  $y^{(i)} = \cvavector{i}{K}$ at its completion. Therefore, using the $\diam{}{\cdot}$ notation, to obtain  $(\alpha,\lambda)$-\reduction{},  we need to prove the following two conditions
  \begin{equation*}
    \diam{\cvavector{}{\numbercvarounds}}{} \leq \alpha \diam{\cvavector{}{0}}{} \quad \textnormal{and} \quad \norm{\cvaavg{0}-\cvaavg{\numbercvarounds}}^2 \leq \lambda \diam{\cvavector{}{0}}{}.
  \end{equation*}

  Here, we prove that these conditions are satisfied when $n\geq11f$. In Section~\ref{sec:app:fivef}, Lemma~\ref{lemma:cvaboundfivef} proves that the conditions are also satisfied when $n > 5f$, but with different values for $\alpha$ and $\lambda$.

\noindent \fcolorbox{black}{white}{
\parbox{0.97\textwidth}{\centering
\begin{lemma}
\label{lemma:cva_bounds}
  Suppose that  $n \geq 11 f$. For any $\numbercvarounds \geq 1$,  the coordination phase of Algorithm~\ref{algo} guarantees $(\alpha,\lambda)$-\reduction{}
  for 
  \begin{equation*}
      \alpha = \left(\frac{9.88 f}{n-f} \right)^K \quad \textnormal{and} \quad \lambda = \frac{9f}{n-f} \cdot \min \left\{ \numbercvarounds , \frac{1}{(1-\sqrt{\alpha})^2} \right\}.
  \end{equation*}
\end{lemma}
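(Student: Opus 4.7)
The plan is to reduce the lemma to a \emph{per-round} $(\alpha_1, \lambda_1)$-reduction bound, with $\alpha_1 \coloneqq 9.88 f/(n-f)$ and $\lambda_1 \coloneqq 9f/(n-f)$, and then bootstrap it across the $\numbercvarounds$ rounds. Writing $\cvavector{i}{k}$ for correct node $i$'s vector after $k$ rounds of NNA, the first step is to establish
\begin{gather*}
\diam{\cvavector{}{k}}{} \leq \alpha_1 \diam{\cvavector{}{k-1}}{},
\qquad
\norm{\cvaavg{k} - \cvaavg{k-1}}^2 \leq \lambda_1 \diam{\cvavector{}{k-1}}{}.
\end{gather*}
Iterating the contraction immediately yields $\diam{\cvavector{}{\numbercvarounds}}{} \leq \alpha_1^\numbercvarounds \diam{\cvavector{}{0}}{}$, giving the claimed $\alpha = \alpha_1^\numbercvarounds$. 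The hypothesis $n \geq 11 f$ enters only here, to guarantee $\alpha_1 \leq 0.988 < 1$ so that the contraction is genuine.

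For the per-round step, I would pin down the set $S_i$ of $n - 2f$ indices that node $i$ averages: it contains $i$ itself together with the $n-2f-1$ received vectors closest to $\cvavector{i}{k-1}$, and at most $f$ of its members are Byzantine. Because the $f$ discarded indices are by construction the farthest from $\cvavector{i}{k-1}$, each kept Byzantine index $b$ admits a \emph{discarded} correct index $c_b$ with $\norm{z^{(b)} - \cvavector{i}{k-1}} \leq \norm{\cvavector{c_b}{k-1} - \cvavector{i}{k-1}}$. Substituting each Byzantine summand in the NNA average by $\cvavector{c_b}{k-1}$ writes $\cvavector{i}{k}$ as a convex combination of correct vectors plus an error controlled purely by pairwise distances among correct nodes. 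Expanding the squared deviation, averaging over $i \in \H$, and converting pairwise squared distances back into $\diam{\cvavector{}{k-1}}{}$ through the identity $\diam{\cvavector{}{k-1}}{} = (2(n-f)^2)^{-1}\sum_{i,j \in \H} \norm{\cvavector{i}{k-1} - \cvavector{j}{k-1}}^2$ produces bounds of order $f/(n-f)\cdot \diam{\cvavector{}{k-1}}{}$; carefully tracking the combinatorial weights determines the explicit constants $9.88$ and $9$.

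Given the per-round inequalities, the $\lambda$ claim follows by telescoping $\cvaavg{\numbercvarounds} - \cvaavg{0} = \sum_{k=1}^\numbercvarounds (\cvaavg{k} - \cvaavg{k-1})$ and controlling this sum in two complementary ways. A Cauchy--Schwarz application in its $\numbercvarounds$-term form, combined with $\diam{\cvavector{}{k-1}}{} \leq \diam{\cvavector{}{0}}{}$, yields the first alternative $\lambda \leq \lambda_1 \numbercvarounds$. A triangle-inequality argument combined with the geometric decay $\diam{\cvavector{}{k-1}}{} \leq \alpha_1^{k-1}\diam{\cvavector{}{0}}{}$, after summing the resulting geometric series and massaging it in terms of $\sqrt{\alpha} = \alpha_1^{\numbercvarounds/2}$, yields a bound proportional to $1/(1-\sqrt{\alpha})^2$. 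Taking the minimum of the two produces the stated $\lambda$.

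The main technical obstacle, in my view, is sharpening the per-round constants so that $\alpha_1 < 1$ precisely at the boundary $n = 11 f$. NNA is asymmetric: every correct node pivots the filtering around its own vector, so the replacement of Byzantine summands by discarded correct ones is $i$-dependent and cannot be handled by a naive, symmetric triangle-inequality expansion, which loses a multiplicative factor large enough to push $\alpha_1$ past one. Extracting the $9.88$ tightly will likely require squaring \emph{inside} the averaging over $i$, and exploiting that each discarded correct index can be charged at most once per pivot, so that the combinatorial weights aggregate favourably when summed across $\H$. Once the sharp per-round inequalities are in hand, the bootstrap over $\numbercvarounds$ rounds is essentially routine.
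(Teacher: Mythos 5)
Your plan mirrors the paper's own proof: a per-round contraction $\Gamma(x_k) \le \frac{9.88 f}{n-f}\,\Gamma(x_{k-1})$ obtained by charging each kept Byzantine vector to a discarded (farther) correct vector and converting pairwise distances into the variance, plus a per-round centering bound of order $\frac{8f}{n-2f} \le \frac{9f}{n-f}$, followed by telescoping the averages with Cauchy--Schwarz (giving the $\numbercvarounds$ term) and with the geometric series (giving the $1/(1-\sqrt{\alpha})^2$ term), then taking the minimum. The one step you leave implicit---extracting the constant $9.88$---is exactly where the paper argues pairwise over correct nodes $p,q$, using that the symmetric difference of their selected sets has size at most $2f$, a $4f$-term Cauchy--Schwarz, and the hypothesis $n \ge 11f$ to pass from $8f(n-f)/(n-2f)^2$ to $9.88 f/(n-f)$ (so $n\ge 11f$ is used for the constants, not only to make $\alpha_1 < 1$); your sketch of that step is consistent with this.
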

}
}

The proof of the lemma is provided in Section~\ref{app:reductionNNA}.

\subsubsection*{ \bf Step-II: Parameter drift and the momentum drift} Second, we note that, at any step $t$, neither the momentums $\mmt{i}{t}$ nor the parameters $\theta_t^{(i)}$ of the correct nodes are guaranteed to stay close to each other even when the stochastic gradients $\gradient{i}{t}$ come from a common gradient oracle. Yet, given our lemmas \ref{lemma:cva_bounds}, and \ref{lemma:cvaboundfivef}, we show in Lemma~\ref{lem:drift} below that the {\em drift} both between the correct nodes' momentums and between their parameters can be controlled by cleverly parametrizing the momentum coefficient $\beta$. Hence, we can guarantee approximate agreement on both the parameters and the momentums of the correct nodes. 

\noindent \fcolorbox{black}{white}{
\parbox{0.97\textwidth}{\centering
\begin{lemma}
\label{lem:drift}
Suppose that assumptions~\ref{asp:lip},~\ref{asp:bnd_var}, and \ref{asp:heter} hold true. Consider Algorithm~\ref{algo} with $\gamma \leq \frac{1 - \alpha}{L \sqrt{27 \alpha \left(1 + \alpha \right)}}$, and $\beta>0$. Suppose that the coordination phase satisfies $(\alpha,\lambda)$-\reduction{} for $\alpha < 1$. For each $t \in [T]$, we obtain that
\begin{align*}
    \expect{\diam{\model{}{}}{t}} &\leq \coeffalpha\gamma^2 \left( \sigma^2 \frac{1 - \beta}{1 + \beta} + 3 \heter^2 \right),
\end{align*}
and
\begin{align*}
    \expect{\diam{\mmt{}{}}{t}} \leq3 \sigma^2 \left(\frac{1 - \beta}{1 + \beta} \right) + 9 \heter^2 + 9 L^2 \gamma^2 \coeffalpha \left( \sigma^2 \frac{1 - \beta}{1 + \beta} + 3 \heter^2 \right),
\end{align*}
where 
\begin{align*}
    \coeffalpha:= \frac{18 \alpha (1 + \alpha)}{(1 - \alpha)^2}.
\end{align*}
\end{lemma}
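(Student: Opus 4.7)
The plan is to prove both inequalities jointly by induction on $t$ using two coupled recursions, one for the parameter drift $A_t := \expect{\diam{\model{}{}}{t}}$ and one for the momentum drift $B_t := \expect{\diam{\mmt{}{}}{t}}$. The base cases $A_0 = B_0 = 0$ follow from the common initialization $\model{i}{0} = \theta_0$ and $\mmt{i}{-1} = 0$; all the work is in the inductive step.

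For the parameter drift, since the coordination phase takes $\model{i}{t} - \gamma\mmt{i}{t}$ as input and produces $\model{i}{t+1}$ as output, condition (i) of $(\alpha,\lambda)$-reduction bounds $A_{t+1}$ by $\alpha$ times the expected variance of the inputs. Expanding that variance with a weighted Young's inequality of parameter $c$, and choosing $c$ so that $\alpha(1+c) = \sqrt{\alpha}$, yields the contraction
\begin{align*}
A_{t+1} \leq \sqrt{\alpha}\, A_t + \frac{\alpha\,\gamma^2}{1-\sqrt{\alpha}}\, B_t.
\end{align*}
Iterating this recursion shows that any uniform upper bound $B^{\star}$ on $B_t$ forces $A_t \leq \frac{\alpha\gamma^2}{(1-\sqrt{\alpha})^2}\,B^{\star}$, which is of the form $E(\alpha)\gamma^2 V$ with $V := \sigma^2\,\frac{1-\beta}{1+\beta} + 3\zeta^2$, consistent with the target bound.

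For the momentum drift, I would decompose the stochastic gradient as $\gradient{i}{s} = \localgrad{i}(\model{i}{s}) + \noises{i}{s}$ and split the unrolled momentum into $\mmt{i}{t} - \AvgMmt{t} = V_t^{(i)} + U_t^{(i)}$, where $V_t^{(i)}$ gathers the deterministic-gradient contributions and $U_t^{(i)} := (1-\beta)\sum_{s \leq t}\beta^{t-s}(\noises{i}{s} - \bar{\noise}_s)$ is the accumulated centered noise. Because $\noises{i}{s}$ is unbiased conditional on $\P_s$, the summands of $U_t^{(i)}$ are orthogonal in expectation, and summing the geometric series yields
\begin{align*}
\expect{\norm{U_t^{(i)}}^2} \leq (1-\beta)^2\sigma^2\sum_{s \leq t}\beta^{2(t-s)} \leq \sigma^2\,\frac{1-\beta}{1+\beta},
\end{align*}
which is precisely the tight factor in the statement. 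The deterministic part $V_t^{(i)}$ can then be controlled by applying Jensen's inequality on the momentum convex combination, together with the standard splitting of the centered gradient $\localgrad{i}(\model{i}{s}) - \frac{1}{\card{\H}}\sum_{j\in\H}\localgrad{j}(\model{j}{s})$ into a Lipschitz-smoothness term (bounded by $L^2 A_s$) and a $\zeta$-heterogeneity term, leading to the structural bound
\begin{align*}
B_t \leq 3\sigma^2\,\frac{1-\beta}{1+\beta} + 9\zeta^2 + 9L^2 \sup_{s \leq t} A_s.
\end{align*}

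To close the induction I would substitute the target bound on $A_s$ into the structural bound on $B_t$, then propagate the resulting bound back through the parameter-drift recursion. The stated condition $\gamma \leq \frac{1-\alpha}{L\sqrt{27\alpha(1+\alpha)}}$ is calibrated exactly so that the feedback coefficient $\frac{9L^2\alpha\gamma^2}{(1-\sqrt{\alpha})^2}$ stays a constant strictly less than one, which together with the $\sqrt{\alpha}$ contraction keeps $A_{t+1}$ under $E(\alpha)\gamma^2 V$ and closes the induction; the stated bound on $B_t$ then follows by substitution. I expect the main technical obstacle to be the careful separation of noise from history in the momentum decomposition: a naive Jensen applied to the full recursion $P_t = \beta P_{t-1} + (1-\beta)(Q_t + N_t)$ loses a factor of $1+\beta$ and only yields the weaker bound $(1-\beta)\sigma^2$. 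Isolating the accumulated-noise term $U_t$ and computing its variance directly via the orthogonality of the noise increments, while applying Jensen only to the history-measurable deterministic part, is what preserves the tight $\frac{1-\beta}{1+\beta}$ factor needed in the statement.
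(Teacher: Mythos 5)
Your proposal is correct and follows essentially the same route as the paper's proof: a Young-inequality recursion for the parameter drift combined with condition (i) of $(\alpha,\lambda)$-reduction, a momentum decomposition whose noise part is controlled by conditional unbiasedness and orthogonality of increments (yielding the tight $\tfrac{1-\beta}{1+\beta}$ factor) while the deterministic part is split via Lipschitz smoothness and $\zeta$-heterogeneity, and an induction closed by the learning-rate condition that keeps the drift feedback contractive. The only differences are cosmetic — you tune the Young parameter to get a $\sqrt{\alpha}$ contraction instead of the paper's $\tfrac{1+\alpha}{2}$, and replace the $\beta$-weighted average of past parameter drifts by a supremum — and one can check (using $(1+\sqrt{\alpha})^2 \leq 2(1+\alpha)$) that your fixed point $\tfrac{9\alpha\gamma^2}{(1-\sqrt{\alpha})^2}\bigl(\sigma^2\tfrac{1-\beta}{1+\beta}+3\zeta^2\bigr)$ is indeed at most $E(\alpha)\gamma^2\bigl(\sigma^2\tfrac{1-\beta}{1+\beta}+3\zeta^2\bigr)$ under the stated bound on $\gamma$, so the claimed constants close.
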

}}

\subsubsection*{\bf Step-III: Momentum deviation.} Next, we study the \emph{deviation} of the average correct momentum $\AvgMmt{t}$ from the average of the true gradients $\AvgGrad{t}$, at step $t$. Let us denote by 
\begin{equation*}
    \AvgGrad{t} \coloneqq \frac{1}{n-f}\sum_{i \in \H} \localgrad{i} (\model{i}{t}),
\end{equation*}
the average of the true local gradient vectors at nodes' local models.
We define 
\begin{equation*}
    \dev{t} \coloneqq \AvgMmt{t} - \AvgGrad{t}.
\end{equation*}
We now have the following lemma.

\noindent \fcolorbox{black}{white}{
\parbox{0.97\textwidth}{\centering
\begin{lemma}
\label{lem:dev}
Suppose that assumptions~\ref{asp:lip} and~\ref{asp:bnd_var} hold true. Consider Algorithm~\ref{algo}. For all $t \in [T]$, we obtain that
\begin{align*}
    \expect{\norm{\dev{t+1}}^2} & \leq 
      \beta^2 (1 + 4L \gamma)\left(1 + \frac{9}{8} L \gamma \right) \expect{\norm{\dev{t}}^2} + \frac{3}{4} \beta^2 L \gamma (1 + 4L \gamma) \expect{\norm{\nabla \avgloss \left( \avgmodel{t} \right)}^2}\\
    &+9\beta^2L^2 \left( 1 + \frac{1}{4\gamma L }\right) \left(\expect{\diam{\model{}{}}{t+1}} + \expect{\diam{\model{}{}}{t}}+\expect{\norm{\avgmodel{t+1}-\avgmodel{t+1/2}}^2}\right) \\
    &+ \frac{9}{4} \beta^2 L \gamma \left(1 + 4L \gamma \right)  L^2\expect{\diam{\model{}{}}{t}} + \frac{(1-\beta)^2 \sigma^2}{n-f}.
\end{align*}
\end{lemma}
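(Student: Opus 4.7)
The plan is to derive a one-step recursion for $\dev{t+1}$, isolate the fresh stochastic noise, and then successively apply Young's inequality together with $L$-smoothness to express the remaining deterministic drift in terms of the quantities that appear on the right-hand side of the lemma.

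First, I would unroll the momentum update to obtain the identity
\begin{align*}
    \dev{t+1} &= \AvgMmt{t+1} - \AvgGrad{t+1} \\
    &= \beta\,\dev{t} \;+\; \beta\bigl(\AvgGrad{t} - \AvgGrad{t+1}\bigr) \;+\; (1-\beta)\bigl(\AvgNoisyGrad{t+1} - \AvgGrad{t+1}\bigr),
\end{align*}
where $\AvgNoisyGrad{t+1} := \tfrac{1}{\card{\H}}\sum_{i\in\H} \gradient{i}{t+1}$. Given the history $\P_{t+1}$, the last summand has zero conditional mean and, by Assumption~\ref{asp:bnd_var} and independence across honest nodes, conditional second moment at most $\sigma^2/(n-f)$. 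Splitting the orthogonal noise off in $\condexpect{t+1}{\norm{\dev{t+1}}^2}$ immediately produces the final term $\tfrac{(1-\beta)^2\sigma^2}{n-f}$.

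Next, I would apply Young's inequality $\norm{a+b}^2 \leq (1+c)\norm{a}^2 + (1+c^{-1})\norm{b}^2$ with $c = 4L\gamma$ to the deterministic part, absorbing $\beta^2\norm{\dev{t}}^2$ into its target coefficient and isolating $\norm{\AvgGrad{t+1} - \AvgGrad{t}}^2$. By Assumption~\ref{asp:lip} and Jensen's inequality,
\begin{align*}
    \norm{\AvgGrad{t+1} - \AvgGrad{t}}^2 \;\leq\; \frac{L^2}{\card{\H}} \sum_{i\in\H} \norm{\model{i}{t+1} - \model{i}{t}}^2 .
\end{align*}
I would then decompose $\model{i}{t+1} - \model{i}{t}$ through the chain $\theta^{(i)}_{t+1} \to \avgmodel{t+1} \to \avgmodel{t+1/2} \to \avgmodel{t} \to \theta^{(i)}_{t}$, using the exact identity $\avgmodel{t+1/2} - \avgmodel{t} = -\gamma \AvgMmt{t}$. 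Averaging $\norm{\cdot}^2$ over $i$ converts the two endpoint differences into $\diam{\theta}{t+1}$ and $\diam{\theta}{t}$, yields the term $\norm{\avgmodel{t+1}-\avgmodel{t+1/2}}^2$ directly, and leaves a $\gamma^2 \norm{\AvgMmt{t}}^2$ contribution.

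The final step is to re-express $\AvgMmt{t}$ in terms of the two quantities that actually appear in the statement: writing $\AvgMmt{t} = \dev{t} + \AvgGrad{t}$, and then $\AvgGrad{t} = \nabla \avgloss(\avgmodel{t}) + (\AvgGrad{t} - \nabla \avgloss(\avgmodel{t}))$ where the latter bracket has squared norm at most $L^2\diam{\theta}{t}$ by smoothness, I would apply a second Young's inequality (with weight of order $L\gamma$) to peel off $\norm{\dev{t}}^2$ with coefficient $\tfrac{9}{8}L\gamma$ and $\norm{\nabla\avgloss(\avgmodel{t})}^2$ with coefficient $\tfrac{3}{4}L\gamma$. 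Combining the two Young applications produces the compound factor $(1+4L\gamma)(1+\tfrac{9}{8}L\gamma)$ in front of $\expect{\norm{\dev{t}}^2}$, while the residual $L^2\diam{\theta}{t}$ term feeds into the last summand $\tfrac{9}{4}\beta^2 L\gamma (1+4L\gamma) L^2 \expect{\diam{\theta}{t}}$. Taking total expectations closes the bookkeeping.

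The main obstacle is not conceptual but combinatorial: all constants $(4, \tfrac{9}{8}, \tfrac{9}{4}, 9, \tfrac{3}{4})$ and the split between the $(1+\tfrac{1}{4\gamma L})$-weighted drift terms and the $(1+4L\gamma)$-weighted gradient/dev terms must be produced by a consistent choice of Young parameters in the two inequalities above, so one must track the coefficients carefully rather than use loose Cauchy--Schwarz bounds. Apart from this bookkeeping, each ingredient (smoothness, zero-mean noise decomposition, averaging of squared norms) is elementary.
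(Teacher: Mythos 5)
Your proposal is correct and follows essentially the same route as the paper's proof: the same decomposition $\dev{t+1}=\beta\,\dev{t}+\beta(\AvgGrad{t}-\AvgGrad{t+1})+(1-\beta)(\AvgNoisyGrad{t+1}-\AvgGrad{t+1})$ with the zero-mean noise split giving $\tfrac{(1-\beta)^2\sigma^2}{n-f}$, the Young step with $c=4L\gamma$, the smoothness-plus-Jensen bound on $\norm{\AvgGrad{t}-\AvgGrad{t+1}}^2$, the chain decomposition through $\avgmodel{t+1}$, $\avgmodel{t+1/2}$, $\avgmodel{t}$ using $\avgmodel{t+1/2}-\avgmodel{t}=-\gamma\AvgMmt{t}$, and the final re-expression $\AvgMmt{t}=\dev{t}+\AvgGrad{t}$ with $\norm{\AvgGrad{t}-\nabla\avgloss(\avgmodel{t})}^2\leq L^2\,\diam{\model{}{}}{t}$. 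The only thing left implicit is the explicit constant bookkeeping, which you correctly flag and which indeed reproduces the stated coefficients.
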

}
}

\subsubsection*{\bf Step-IV: Growth function.}  Finally, we analyze the growth of loss function $\avgloss$ computed at the average parameter of the correct nodes $\avgmodel{t}$ along the trajectory of Algorithm~\ref{algo}. Let us denote by $\avgmodel{t} \coloneqq \nicefrac{1}{n-f} \sum_{i \in \H} \model{i}{t}$ the average parameter of the correct nodes at step $t$. Then we obtain the following lemma.  

\noindent \fcolorbox{black}{white}{
\parbox{0.97\textwidth}{\centering
\begin{lemma}
\label{lem:growth}
Suppose that assumptions~\ref{asp:lip} and~\ref{asp:bnd_var} hold true. Consider Algorithm~\ref{algo} with $\gamma \leq 1/L$. For each $t \in [T]$, we obtain that
\begin{align*}
     \expect{\avgloss(\avgmodel{t+1})- \avgloss(\avgmodel{t})} &\leq - \frac{\gamma}{2}\expect{\norm{\nabla \avgloss \left( \avgmodel{t} \right)}^2} + \frac{3\gamma}{2} \expect{\norm{\dev{t}}^2}\\
   &+ \frac{3}{2\gamma} \expect{\norm{\avgmodel{t+\nicefrac{1}{2}}-\avgmodel{t+1}}^2} 
    +   \frac{3\gamma}{2}  L^2\expect{\diam{\model{}{}}{t}}.
\end{align*}
\end{lemma}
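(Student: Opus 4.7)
The plan is to apply the descent lemma for $L$-smooth non-convex functions (which applies to $\avgloss$ because each $\nabla Q^{(i)}$ is $L$-Lipschitz by Assumption~\ref{asp:lip}, and $\nabla\avgloss$ is an average of them) after rewriting the actual one-step update as an idealised gradient step plus an error. Concretely, I would introduce
\begin{equation*}
    e_t \coloneqq (\avgmodel{t+1}-\avgmodel{t+1/2}) + \gamma\bigl(\nabla\avgloss(\avgmodel{t}) - \AvgMmt{t}\bigr),
\end{equation*}
so that $\avgmodel{t+1}-\avgmodel{t} = -\gamma\,\nabla\avgloss(\avgmodel{t}) + e_t$. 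The three positive terms in the lemma will come from bounding $\norm{e_t}^2$ via the triangle (power-mean) inequality, while the descent term $-\tfrac{\gamma}{2}\norm{\nabla\avgloss(\avgmodel{t})}^2$ will come from a carefully tuned Young's inequality.

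\textbf{Main computation.} Substituting the decomposition into the smoothness inequality and expanding both the linear and quadratic terms produces
\begin{equation*}
    \avgloss(\avgmodel{t+1}) - \avgloss(\avgmodel{t}) \leq -\gamma\Bigl(1 - \tfrac{L\gamma}{2}\Bigr)\norm{\nabla\avgloss(\avgmodel{t})}^2 + (1-L\gamma)\langle \nabla\avgloss(\avgmodel{t}), e_t\rangle + \tfrac{L}{2}\norm{e_t}^2.
\end{equation*}
Because $\gamma \leq 1/L$ the factor $1 - L\gamma$ is non-negative, so I would apply Young's inequality to the cross term with the precisely chosen parameter $\alpha=\gamma$, giving the upper bound $\tfrac{\gamma(1-L\gamma)}{2}\norm{\nabla\avgloss(\avgmodel{t})}^2 + \tfrac{1-L\gamma}{2\gamma}\norm{e_t}^2$. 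A direct cancellation $-\gamma(1-L\gamma/2) + \tfrac{\gamma(1-L\gamma)}{2} = -\tfrac{\gamma}{2}$ on the $\norm{\nabla\avgloss}^2$ coefficient, together with $\tfrac{L}{2} + \tfrac{1-L\gamma}{2\gamma} = \tfrac{1}{2\gamma}$ on the $\norm{e_t}^2$ coefficient, reduces the bound to the clean form
\begin{equation*}
    \avgloss(\avgmodel{t+1}) - \avgloss(\avgmodel{t}) \leq -\tfrac{\gamma}{2}\norm{\nabla\avgloss(\avgmodel{t})}^2 + \tfrac{1}{2\gamma}\norm{e_t}^2.
\end{equation*}

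\textbf{Bounding the error and concluding.} It remains to bound $\norm{e_t}^2$ by the three displayed terms of the lemma. I would split the gradient mismatch as $\nabla\avgloss(\avgmodel{t}) - \AvgMmt{t} = \bigl(\nabla\avgloss(\avgmodel{t}) - \AvgGrad{t}\bigr) - \dev{t}$, yielding the three-way decomposition $e_t = (\avgmodel{t+1}-\avgmodel{t+1/2}) + \gamma(\nabla\avgloss(\avgmodel{t}) - \AvgGrad{t}) - \gamma\,\dev{t}$. Jensen's inequality over the correct nodes together with the $L$-Lipschitzness of each $\nabla Q^{(i)}$ gives $\norm{\nabla\avgloss(\avgmodel{t}) - \AvgGrad{t}}^2 \leq L^2\,\diam{\model{}{}}{t}$. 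Applying $\norm{a+b+c}^2 \leq 3(\norm{a}^2+\norm{b}^2+\norm{c}^2)$ then yields $\norm{e_t}^2 \leq 3\norm{\avgmodel{t+1}-\avgmodel{t+1/2}}^2 + 3\gamma^2 L^2\,\diam{\model{}{}}{t} + 3\gamma^2\norm{\dev{t}}^2$. Multiplying by $\tfrac{1}{2\gamma}$ and taking total expectation produces exactly the three positive terms of the lemma. The main obstacle is the Young's calibration: any other choice of parameter would leave an $O(L\gamma)$ residue on the descent coefficient and force $\gamma$ strictly smaller than $1/L$; the specific choice $\alpha=\gamma$ is what makes all $L$-dependent pieces cancel and yields the stated coefficients.
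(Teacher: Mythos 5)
Your proposal is correct and follows essentially the same route as the paper: both rewrite the update as an idealised gradient step plus an error $e_t=-\gamma\left(G_t-\nabla\avgloss(\avgmodel{t})\right)$, use $L$-smoothness with $\gamma\le 1/L$ to reach $-\tfrac{\gamma}{2}\norm{\nabla\avgloss(\avgmodel{t})}^2+\tfrac{1}{2\gamma}\norm{e_t}^2$, then split $e_t$ into the coordination shift $\avgmodel{t+1}-\avgmodel{t+\nicefrac{1}{2}}$, the momentum deviation $\dev{t}$, and the gradient mismatch bounded by $L^2\diam{\model{}{}}{t}$ via Jensen and Lipschitzness. The only cosmetic difference is that you obtain the $-\tfrac{\gamma}{2}$ descent coefficient through a Young's inequality calibrated with parameter $\gamma$, whereas the paper uses the exact identity $-\iprod{x}{y}+\tfrac{\norm{x}^2}{2}=-\tfrac{\norm{y}^2}{2}+\tfrac{\norm{x-y}^2}{2}$; both yield identical constants.
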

}}

This means that Algorithm~\ref{algo} can actually be treated as DSGD with an additional error term which is proportional to the coupled drift of the momentums and the parameters at each step $t$.

\subsubsection*{\bf Combining steps I, II, III and IV} To obtain, our final convergence result, as stated in Theorem~\ref{thm:main_conv}, we combine these elements. Note however that the deviation term in Lemma~\ref{lem:growth} cannot be readily treated with a standard convergence analysis. To address this issue, we devise a new Lyapunov function
\begin{align}
    V_t \coloneqq \expect{\avgloss\left( \avgmodel{t} \right) - Q^* + \frac{1}{4L} \norm{\dev{t}}^2}. \label{eqn:def_lyap}
\end{align}
By analyzing the growth of $V_t$ along the steps of Algorithm~\ref{algo}, we prove Theorem~\ref{thm:main_conv} as follows.

\subsection{Proof for Theorem~\ref{thm:main_conv}}
\label{app:thm_main}

Recall that $\H$ denotes the set of correct nodes, and that $\card{\H} = n-f$. Consider the Lyapunov function $V_t$ defined in~\eqref{eqn:def_lyap}.
Consider an arbitrary $t \in [T]$. From Lemma~\ref{lem:growth} and Lemma~\ref{lem:dev} we obtain that
\begin{align*}
    V_{t+1} - V_t  &=  \expect{\avgloss\left( \avgmodel{t+1} \right) - \avgloss\left( \avgmodel{t} \right)} + \frac{1}{4L} \expect{\norm{\dev{t+1}}^2 - \norm{\dev{t}}^2} \\
    \leq &  - \frac{\gamma}{2}\expect{\norm{\nabla \avgloss \left( \avgmodel{t} \right)}^2} + \frac{3\gamma}{2} \expect{\norm{\dev{t}}^2} + \frac{3}{2\gamma} \expect{\norm{\avgmodel{t+\nicefrac{1}{2}}-\avgmodel{t+1}}^2} 
    +   \frac{3\gamma}{2}  L^2\expect{\diam{\model{}{}}{t}}\\
    &+\frac{1}{4L}\beta^2 (1 + 4L \gamma) \left(1 + \frac{9}{8} L \gamma \right) \expect{\norm{\dev{t}}^2} + \frac{3}{16}  \beta^2  \gamma (1 + 4L \gamma) \expect{\norm{\nabla \avgloss \left( \avgmodel{t} \right)}^2}\\
    &+\frac{9}{4}\beta^2L \left( 1 + \frac{1}{4\gamma L }\right) \left(\expect{\diam{\model{}{}}{t+1}} + \expect{\diam{\model{}{}}{t}}+\expect{\norm{\avgmodel{t+1}-\avgmodel{t+1/2}}^2}\right) \\
    &+ \frac{9}{16} \beta^2  \gamma \left(1 + 4L \gamma \right)  L^2\expect{\diam{\model{}{}}{t}} +\frac{1}{4L} \frac{(1-\beta)^2 \sigma^2}{n-f}  - \frac{1}{4L} \expect{\norm{\dev{t}}^2}.
\end{align*}
Upon re-arranging the terms on the R.H.S.~we obtain that
\begin{align} \label{eqn:before_abc}
    V_{t+1} - V_t  &\leq - \gamma \left(\frac{1}{2} - \frac{3}{16}  \beta^2 (1+4L\gamma) \right) \expect{\norm{\nabla \avgloss \left( \avgmodel{t} \right)}^2}\\
    &+ \left(\frac{3\gamma}{2} + \frac{1}{4L} \beta^2 (1 + 4L \gamma) \left(1 + \frac{9}{8} L \gamma \right) - \frac{1}{4L}\right) \expect{\norm{\dev{t}}^2} \nonumber \\ \nonumber
    & +\frac{9}{4}\beta^2L \left( 1 + \frac{1}{4\gamma L }\right) \left(\expect{\diam{\model{}{}}{t+1}} + \expect{\diam{\model{}{}}{t}}+\expect{\norm{\avgmodel{t+1}-\avgmodel{t+1/2}}^2}\right) \\ \nonumber
    &+ \frac{9}{16} \beta^2  \gamma \left(1 + 4L \gamma \right)  L^2\expect{\diam{\model{}{}}{t}} +\frac{1}{4L} \frac{(1-\beta)^2 \sigma^2}{n-f} \\\nonumber
    &+\frac{3}{2\gamma} \expect{\norm{\avgmodel{t+\nicefrac{1}{2}}-\avgmodel{t+1}}^2} 
    +   \frac{3\gamma}{2}  L^2\expect{\diam{\model{}{}}{t}}.
\end{align}
We denote,
\begin{align*}
    A &\coloneqq \frac{1}{2} - \frac{3}{16} \beta^2  (1+4L\gamma), \\
    B &\coloneqq \frac{3\gamma}{2} + \frac{1}{4L} \beta^2 (1 + 4L \gamma) \left(1 + \frac{9}{8} L \gamma \right) - \frac{1}{4L}, ~ \text{and} \\
    C &\coloneqq \frac{9}{4}\beta^2L \left( 1 + \frac{1}{4\gamma L }\right) \left(\expect{\diam{\model{}{}}{t+1}} + \expect{\diam{\model{}{}}{t}}+\expect{\norm{\avgmodel{t+1}-\avgmodel{t+1/2}}^2}\right) \\ \nonumber
    &+ \frac{9}{16} \beta^2 \gamma \left(1 + 4L \gamma \right)  L^2\expect{\diam{\model{}{}}{t}} +\frac{3}{2\gamma} \expect{\norm{\avgmodel{t+\nicefrac{1}{2}}-\avgmodel{t+1}}^2} 
    +   \frac{3\gamma}{2}  L^2\expect{\diam{\model{}{}}{t}}.
\end{align*}
Substituting from above in~\eqref{eqn:before_abc} we obtain that
\begin{align}
    V_{t+1} - V_t  \leq - \gamma A\expect{\norm{\nabla \avgloss \left( \avgmodel{t} \right)}^2} + B \expect{\norm{\dev{t}}^2} + C + \frac{1}{4L} (1 - \beta)^2 \frac{\sigma^2}{(n-f)}. \label{eqn:after_abc}
\end{align}
Now, we separately analyse the terms $A$, $B$ and $C$ below by using the following,
\begin{align}
    \gamma \leq \frac{1}{12L}, ~ \text{ and } ~ 1 - \beta^2 =  12 \gamma L. 
    \label{eq:conditions}
\end{align}

{\bf Term A.} Using the facts that $\gamma \leq 1/12L$ and that $\beta^2 < 1$, we obtain that
\begin{align}
    A = \frac{1}{2} - \frac{3}{16}  \beta^2 (1+4L\gamma) \geq \frac{1}{2} - \frac{3}{16} \left( 1+\frac{4}{12} \right) = \frac{1}{4}.
    \label{eqn:analyse_A}
\end{align}

{\bf Term B.} As $1 - \beta^2 =  12 \gamma L$ and $\beta^2 < 1$, we obtain that
\begin{align*}
    B = \frac{3\gamma}{2} - \frac{1}{4L} \left(1 - \beta^2 \right) + \frac{1}{4L} \beta^2 \left( \frac{41}{8} L\gamma +\frac{9}{2}L^2\gamma^2  \right) \leq 
    \frac{3\gamma}{2} - 3 \gamma   + \frac{1}{4L}\left( \frac{41}{8} L\gamma +\frac{9}{2}L^2\gamma^2  \right).
\end{align*}
As $\gamma \leq 1/12L$, from above we obtain that
\begin{align}
\label{eqn:analyse_B}
    B \leq \frac{3\gamma}{2} - 3 \gamma   + \frac{\gamma}{4}\left( \frac{41}{8} + \frac{9}{2}L \gamma  \right) \leq - \frac{3\gamma}{2} + \frac{\gamma}{4}\left( \frac{41}{8} + \frac{9}{24}  \right) \leq 0.
\end{align}

{\bf Term C.} Using the fact that $\beta^2 \leq 1$, we obtain that
\begin{align*}
    C  &= \frac{9}{4} \beta^2L \left( 1 + \frac{1}{4\gamma L }\right) \left(\expect{\diam{\model{}{}}{t+1}} + \expect{\diam{\model{}{}}{t}}+\expect{\norm{\avgmodel{t+1}-\avgmodel{t+1/2}}^2}\right) \\ 
    &+ \frac{9}{16} \beta^2  \gamma \left(1 + 4L \gamma \right)  L^2{\expect{\diam{\model{}{}}{t}}} +\frac{3}{2\gamma} \expect{\norm{\avgmodel{t+\nicefrac{1}{2}}-\avgmodel{t+1}}^2} 
    +   \frac{3\gamma}{2}  L^2\expect{\diam{\model{}{}}{t}}\\
    &\leq \expect{\diam{\model{}{}}{t}}  \left(\frac{9L}{16\gamma L}(1+4\gamma L) + \frac{9}{16} \gamma L^2 \left(1 + 4L \gamma \right) + \frac{3\gamma}{2}  L^2  \right) \\
    &+ \expect{\diam{\model{}{}}{t+1}} \frac{9L}{16\gamma L}(1+4\gamma L) +\left(\frac{3}{2\gamma}+\frac{9L}{16\gamma L}(1+4\gamma L) \right) \expect{\norm{\avgmodel{t+\nicefrac{1}{2}}-\avgmodel{t+1}}^2}.
\end{align*}
Using the fact $\gamma \leq 1/12L$ we then have
\begin{align}\nonumber
    C &\leq \expect{\diam{\model{}{}}{t}}  \left(\frac{9}{16\gamma } \left( \frac{4}{3} \right) + \frac{9}{16} \left( \frac{1}{144 \gamma} \right) \left( \frac{4}{3} \right) + \frac{3}{288 \gamma}  \right) \\\nonumber
    &+ \expect{\diam{\model{}{}}{t+1}} \frac{9}{16\gamma } \left( \frac{4}{3} \right) +\left(\frac{3}{2\gamma}+\frac{9}{16\gamma } \left( \frac{4}{3} \right) \right) \expect{\norm{\avgmodel{t+\nicefrac{1}{2}}-\avgmodel{t+1}}^2}\\
    &\leq \frac{1}{\gamma} \expect{\diam{\model{}{}}{t}} + \frac{1}{\gamma} \expect{\diam{\model{}{}}{t+1}}
    +\frac{9}{4\gamma} \expect{\norm{\avgmodel{t+\nicefrac{1}{2}}-\avgmodel{t+1}}^2}.
    \label{eq:c_bound_temp}
\end{align}
From Lemma \ref{lemma:cva_bounds}, we have
\begin{align*}
    \expect{\norm{\avgmodel{t + \nicefrac{1}{2}}-\avgmodel{t+1}}^2} \leq \lambda \expect{\diam{\model{}{}}{t+\nicefrac{1}{2}}}.
\end{align*}
From Algorithm \ref{algo}, we have for all $i \in \H$, $\model{i}{t+\nicefrac{1}{2}} = \model{i}{t} - \gamma \mmt{i}{t}$. Therefore, by definition of $\Gamma(\cdot)$, $\diam{\model{}{}}{t+\nicefrac{1}{2}} \leq 2 \diam{\model{}{}}{t} + 2 \gamma^2 \diam{\mmt{}{}}{t}$. Thus, from above we obtain that
\begin{align*}
    \expect{\norm{\avgmodel{t+\nicefrac{1}{2}}-\avgmodel{t+1}}^2} \leq \lambda \left(2\expect{\diam{\model{}{}}{t}} + 2 \gamma^2 \expect{\diam{\mmt{}{}}{t}} \right)
\end{align*}
Substituting from above in~\eqref{eq:c_bound_temp} we obtain that
\begin{align*}
    C \leq  \frac{1}{\gamma}\left( 1+\frac{9\lambda}{2} \right) \expect{\diam{\model{}{}}{t}} + \frac{1}{\gamma} \expect{\diam{\model{}{}}{t+1}} 
    +\frac{9\lambda \gamma}{2}  \expect{\diam{\mmt{}{}}{t}}.
\end{align*}
By invoking Lemma \ref{lem:drift}, we obtain from above that
\begin{align*}
    C &\leq \left( 2+\frac{9\lambda}{2} \right) \coeffalpha \gamma \left( \sigma^2 \left( \frac{1 - \beta}{1 + \beta} \right) + 3 \heter^2 \right) \\
    &+ \frac{9\lambda \gamma}{2}  \left(3 \sigma^2 \left(\frac{1 - \beta}{1 + \beta} \right) + 9 \heter^2 + 9 L^2 \gamma^2 \coeffalpha  \left( \sigma^2 \left( \frac{1 - \beta}{1 + \beta} \right) + 3 \heter^2 \right) \right).
\end{align*}
Upon re-arranging the terms, and using the facts that $\gamma \leq 1/12L$, we obtain that
\begin{align}
    C &\leq \gamma  \heter^2 \left(6\coeffalpha + \frac{9\lambda}{2}\left( 3\coeffalpha +9+\frac{3\coeffalpha }{16}\right)\right) +\gamma \sigma^2 \left( \frac{1 - \beta}{1 + \beta} \right) \left( 2\coeffalpha + \frac{9\lambda}{2}\left( \coeffalpha +3+\frac{ \coeffalpha }{16}\right) \right) \nonumber \\
    &\leq \gamma \heter^2 \left(6\coeffalpha + \frac{9\lambda}{2}\left( 4\coeffalpha + 9\right)\right) + \gamma \sigma^2 \left( \frac{1 - \beta}{1 + \beta} \right) \left( 2\coeffalpha + \frac{9\lambda}{2}\left(2\coeffalpha +3\right) \right).
    \label{eq:c_bound_0}
\end{align}
Note that 
\begin{align*}
    \frac{1-\beta}{1+\beta} = \frac{1-\beta^2}{(1+\beta)^2} \leq 1 - \beta^2 = 12 \gamma L.
\end{align*}
Substituting from above in \eqref{eq:c_bound_0} we obtain that
\begin{align}
    C \leq  \gamma \heter^2 \left(6 \coeffalpha + \frac{9\lambda}{2}\left( 4\coeffalpha + 9\right)\right)
    +12 \gamma^2 \sigma^2L\left( 2\coeffalpha + \frac{9\lambda}{2}\left(2\coeffalpha +3\right) \right).
    \label{eq:c_bound}
\end{align}

{\bf Combining A, B and C.} Substituting from~\eqref{eqn:analyse_A} and ~\eqref{eqn:analyse_B} in~\eqref{eqn:after_abc},  we obtain that
\begin{align*}
     V_{t+1} - V_t  &\leq - \gamma A \expect{\norm{\nabla \avgloss \left( \avgmodel{t} \right)}^2} + B \expect{\norm{\dev{t}}^2} + C + \frac{1}{4L} (1 - \beta)^2 \frac{\sigma^2}{(n-f)} \\
    & \leq - \frac{\gamma}{4} \expect{\norm{\nabla \avgloss \left( \avgmodel{t} \right)}^2}+ C + (1 - \beta)^2 \frac{\sigma^2}{4 L (n-f)}.
\end{align*}
Note that, as $\beta \in (0, \, 1)$,  $1 - \beta = ( 1 - \beta^2)/(1 + \beta) \leq 1 - \beta^2$. Using this above we obtain that
\begin{align*}
     V_{t+1} - V_t  & \leq - \frac{\gamma}{4} \expect{\norm{\nabla \avgloss \left( \avgmodel{t} \right)}^2} + C + (1 - \beta^2)^2 \frac{\sigma^2}{4 L (n-f)}.
\end{align*}
Recall that $1 - \beta^2 = 12 \gamma L$. Therefore,
\begin{align*}
    V_{t+1} - V_t \leq - \frac{\gamma}{4} \expect{\norm{\nabla \avgloss \left( \avgmodel{t} \right)}^2}+ C + 36 \gamma^2 L \frac{\sigma^2}{n-f}.
\end{align*}
This implies that
\begin{align}
   \expect{\norm{\nabla \avgloss \left( \avgmodel{t} \right)}^2} \leq \left(V_{t} - V_{t+1} \right) \frac{4}{\gamma}+ \frac{4}{\gamma}C + 144 \gamma L \frac{\sigma^2}{n-f}. \label{eqn:before_sum}
\end{align}

By taking the average on both sides from $t = 0$ to $T-1$, we obtain that
\begin{align}
   \frac{1}{T}\sum_{t=0}^{T-1}\expect{\norm{\nabla \avgloss \left( \avgmodel{t} \right)}^2} \leq \left(V_{0} - V_{T} \right) \frac{4}{\gamma T}+ \frac{4}{\gamma}C + 144 \gamma L \frac{\sigma^2}{n-f}. 
   \label{eq:total_decrease}
\end{align}

{\bf Analysis on $V_t$.} Recall that $Q^* = \inf_{\model{}{}} \avgloss(\model{}{})$. Note that for any $t$, 
\[V_t= \expect{\avgloss(\avgmodel{t}) - Q^*} + \frac{1}{4L} \expect{\norm{\dev{t}}^2} \geq \expect{\avgloss(\avgmodel{t}) - Q^*} \geq 0.\] 
Thus, $V_{T} \geq 0$. Using this in~\eqref{eq:total_decrease} we obtain that
\begin{align}
   \frac{1}{T}\sum_{t=0}^{T-1}\expect{\norm{\nabla \avgloss \left( \avgmodel{t} \right)}^2} \leq \left(V_{0} \right) \frac{4}{\gamma T}+ \frac{4}{\gamma}C + 144 \gamma L \frac{\sigma^2}{n-f}. 
   \label{eq:before_V1}
\end{align}
Recall that 
$$V_0 = \expect{\avgloss\left(\avgmodel{0}\right)-Q^* + \frac{1}{4L} \norm{\dev{0}}^2}.$$ 
Recall that, by Definition~\eqref{eq:deviation_definition} of $\dev{t}$, we have $\dev{0} = \AvgMmt{0} - \AvgGrad{0}$. Thus, under Assumption~\ref{asp:bnd_var}, we have
\begin{align*}
    \expect{\norm{\dev{0}}^2} &= \expect{\norm{ (1 - \beta) \AvgNoisyGrad{0} - \AvgGrad{0}}^2} \leq 2 (1 - \beta)^2 \expect{\norm{ \AvgNoisyGrad{0} - \AvgGrad{0}}^2} + 2 \beta^2 \norm{\AvgGrad{0}}^2\\
    &\leq 2 (1 - \beta)^2 \left(\frac{\sigma^2}{n-f} \right) + 2 \beta^2 \norm{\AvgGrad{0}}^2.
\end{align*}
{Recall, from Algorithm~\ref{algo}, that for each correct node $i$, the initial model $\model{i}{0}$ is identical, denoted by $\theta_0$. Therefore, we have $\AvgGrad{0} = \nabla \avgloss \left( \avgmodel{0} \right) $. Substituting this in the above we obtain that
\begin{align}
    \expect{\norm{\dev{0}}^2} \leq 2 (1 - \beta)^2 \left(\frac{\sigma^2}{n-f} \right) + 2 \beta^2 \norm{\nabla \avgloss \left( \avgmodel{0} \right)}^2 . \label{eqn:after_same_init_model}
\end{align}}
Recall that $ 1 - \beta^2 = 12 \gamma L$. Thus, $(1 - \beta)^2 \leq (1 - \beta^2)^2/(1 + \beta)^2 \leq (1 - \beta^2)^2 = 144 \gamma^2 L^2$. Substituting this in~\eqref{eqn:after_same_init_model}, and using the fact that $\beta^2 < 1$, we obtain that
\begin{align*}
    \expect{\norm{\dev{0}}^2} \leq 288 \gamma^2 L^2 \left(\frac{\sigma^2}{n-f} \right) +  2 \norm{\nabla \avgloss \left( \avgmodel{0} \right)}^2.
\end{align*}
Therefore, 
\begin{align*}
    V_0 &\leq \avgloss\left(\avgmodel{0}\right)-Q^* + \frac{1}{4L} \left( 288 \gamma^2 L^2 \left(\frac{\sigma^2}{n-f} \right) + 2 \norm{\nabla \avgloss \left( \avgmodel{0} \right)}^2 \right) \\ &=   \avgloss\left(\avgmodel{0}\right) - Q^* +  72 \gamma^2 L \left(\frac{\sigma^2}{n-f} \right) + \frac{1}{2L} \norm{\nabla \avgloss \left( \avgmodel{0} \right)}^2 .
\end{align*}
{Note that, as $\avgloss$ is $L$-smooth (see~Remark~\ref{remark:initial_gradient}), $\norm{\nabla \avgloss \left( \avgmodel{0} \right)}^2 \leq 2L \left( \avgloss\left(\avgmodel{0}\right) - Q^* \right)$.} Using this in the above yields
\begin{align}
\label{eq:V_zero_bound}
    V_0 \leq 2(\avgloss\left(\avgmodel{0}\right) - Q^*) +  72 \gamma^2 L \left(\frac{\sigma^2}{n-f} \right).
\end{align}
where in the last inequality we used Remark~\ref{remark:initial_gradient} below.
Substituting from above in~\eqref{eq:before_V1} we obtain that
\begin{align}
    & \frac{1}{T}\sum_{t=0}^{T-1}\expect{\norm{\nabla \avgloss \left( \avgmodel{t} \right)}^2} \leq \frac{8\left(\avgloss\left(\avgmodel{0}\right) - Q^*\right)}{\gamma T} + \frac{288 \gamma L}{T} \left(\frac{\sigma^2}{n-f} \right)  + \frac{4}{\gamma}C + 144 \gamma L \frac{\sigma^2}{n-f}. \label{eqn:after_V2}
\end{align}

Now, note that for any correct node $i \in \H$, we have
\begin{align*}
    \expect{\norm{\nabla \avgloss \left( \model{i}{t} \right)}^2} &\leq \frac{3}{2}\expect{\norm{\nabla \avgloss \left( \avgmodel{t} \right)}^2} +  3\expect{\norm{\nabla \avgloss \left( \avgmodel{t} \right)-\nabla \avgloss \left( \model{i}{t} \right)}^2}\\
    &\leq \frac{3}{2}\expect{\norm{\nabla \avgloss \left( \avgmodel{t} \right)}^2} +  3L^2\expect{\norm{ \avgmodel{t} - \model{i}{t}}^2}\\
     &\leq \frac{3}{2}\expect{\norm{\nabla \avgloss \left( \avgmodel{t} \right)}^2} +  3L^2\sum_{j \in \H}\expect{\norm{ \avgmodel{t} - \model{j}{t}}^2}\\
     &\leq \frac{3}{2}\expect{\norm{\nabla \avgloss \left( \avgmodel{t} \right)}^2} +  3L^2 n \expect{\diam{\model{}{t}}{}}.
\end{align*}
where the second inequality follows from Assumption~\ref{asp:lip}.
Combining this with~\eqref{eqn:after_V2}, we obtain that
\begin{align*}
    & \frac{1}{T}\sum_{t=0}^{T-1} \expect{\norm{\nabla \avgloss \left( \model{i}{t} \right)}^2} \leq \frac{12\left(\avgloss\left(\avgmodel{0}\right) - Q^*\right)}{\gamma T} + \frac{432 \gamma L}{T} \left(\frac{\sigma^2}{n-f} \right)  + \frac{6}{\gamma}C + 216 \gamma L \frac{\sigma^2}{n-f}+3L^2 n \expect{\diam{\model{}{t}}{}}. 
\end{align*}

Substituting $C$ from \eqref{eq:c_bound} in above and using the bound on $\expect{\diam{\model{}{t}}{}}$ from Lemma~\ref{lem:drift},  we obtain that
\begin{align*}
     \frac{1}{T}\sum_{t=0}^{T-1} \expect{\norm{\nabla \avgloss \left( \model{i}{t} \right)}^2} &\leq \frac{12\left(\avgloss\left(\avgmodel{0}\right) - Q^*\right)}{\gamma T} + \frac{432 \gamma L}{T} \left(\frac{\sigma^2}{n-f} \right) + 216 \gamma L \frac{\sigma^2}{n-f}\\
    &+ 6\heter^2 \left(6 \coeffalpha + \frac{9\lambda}{2}\left( 4\coeffalpha + 9\right)\right) + 72 \gamma \sigma^2L\left( 2\coeffalpha + \frac{9\lambda}{2}\left(2\coeffalpha +3\right) \right)\\
    &+3L^2n\coeffalpha\gamma^2 \left(2 \sigma^2 \left( \frac{1 - \beta}{1 + \beta} \right) + 3 \heter^2 \right)\\
    &\leq \frac{12\left(\avgloss\left(\avgmodel{0}\right) - Q^*\right)}{\gamma T} + \frac{432 \gamma L}{T} \left(\frac{\sigma^2}{n-f} \right) \\
    &+ 72 \gamma L \sigma^2 \left (\frac{3}{n-f} + 2 \coeffalpha + \frac{9\lambda}{2}\left(2\coeffalpha +3\right) \right)\\
    &+ 6 \heter^2 \left(6 \coeffalpha + \frac{9\lambda}{2}\left( 4\coeffalpha + 9\right)\right)\\
    &+3L^2n\coeffalpha\gamma^2 \left(2 \sigma^2  + 3 \heter^2 \right)
\end{align*}

We now define
\begin{equation*}
    c_0 := 12\left(\avgloss\left(\avgmodel{0}\right) - Q^*\right), \quad c_1 := \coeffalpha =  \frac{18 \alpha (1 + \alpha)}{(1 - \alpha)^2}, 
\end{equation*}
\begin{equation*}
c_2 := 72  L  \left (\frac{3}{n-f} + 2 c_1 + \frac{9\lambda}{2}\left(2c_1 +3\right) \right) \text{, and } c_3:=6 \left(6 c_1 + \frac{9\lambda}{2}\left( 4c_1 + 9\right)\right).
\end{equation*}

Then
\begin{align*}
    \frac{1}{T}\sum_{t=0}^{T-1} \expect{\norm{\nabla \avgloss \left( \model{i}{t} \right)}^2}  &\leq \frac{c_0}{\gamma T} + c_2 \gamma L \sigma^2  + \frac{432 \gamma L}{T} \left(\frac{\sigma^2}{n-f} \right) 
    + c_3 \heter^2 + 9c_1n\gamma^2L^2(\sigma^2 + \heter^2).
\end{align*}

Now recall that 
\begin{equation*}
    \gamma = \min \left\{\frac{1}{12L}, \frac{1}{L} \sqrt{\frac{2}{3c_1}}, \sqrt{\frac{c_0}{c_2LT\sigma^2}} \right\},
\end{equation*}
and thus
\begin{align*}
    \frac{1}{\gamma} = \max \left\{{12L},  L\sqrt{\frac{3c_1}{2}}, \sqrt{\frac{c_2LT\sigma^2}{c_0}} \right\} \leq {12L}+L\sqrt{\frac{3c_1}{2}}+\sqrt{\frac{c_2LT\sigma^2}{c_0}}.
\end{align*}
Therefore,
\begin{align*}
    \frac{1}{T}\sum_{t=0}^{T-1} \expect{\norm{\nabla \avgloss \left( \model{i}{t} \right)}^2}  &\leq 2\sqrt{\frac{c_0c_2L\sigma^2}{T}} + \frac{12Lc_0}{T}+\frac{Lc_0}{T}\sqrt{\frac{3c_1}{2}}  + \frac{36}{T} \left(\frac{\sigma^2}{n-f} \right) 
    + c_3 \heter^2 +  9c_1nL( 1+ \frac{\heter^2}{\sigma^2}) \frac{c_0}{c_2T}
\end{align*}
Denoting $c_4 := 9c_1nc_0/c_2$, we have
\begin{align}\label{eq:bound_on_local_model}
    \frac{1}{T}\sum_{t=0}^{T-1} \expect{\norm{\nabla \avgloss \left( \model{i}{t} \right)}^2}  &\leq 2\sqrt{\frac{c_0c_2L\sigma^2}{T}} + \frac{12Lc_0}{T}+\frac{Lc_0}{T}\sqrt{\frac{3c_1}{2}}  + \frac{36}{T} \left(\frac{\sigma^2}{n-f} \right) 
     +  ( 1+ \frac{\heter^2}{\sigma^2}) \frac{c_4L}{T} + c_3 \heter^2.
\end{align}
As $ \returnedmodel{\honestnode} \sim \mathcal{U}\{ \model{i}{0}, \dots, \model{i}{T-1} \}$, we get
\begin{align*}
    \expect{\norm{\nabla \avgloss \left( \returnedmodel{\honestnode} \right)}^2} = \frac{1}{T}\sum_{t=0}^{T-1}\expect{\norm{\nabla \avgloss \left( \model{i}{t} \right)}^2}.
\end{align*}
Substituting from above in~\eqref{eq:bound_on_local_model} concludes the proof.

\begin{remark}
\label{remark:initial_gradient}
If a function $Q$ is Lipschitz smooth, with coefficient $L$, then for all $x \in \mathbb{R}^d$, $\norm{\nabla Q(x) }^2 \leq 2L \left( Q(x)- Q^* \right)$ where $Q^*$ denotes the minimum value of $Q$. Proof of this fact is as follows.
\end{remark}

\begin{proof}
By the Lipschitzness of $\nabla Q$, for any $x,y \in \mathbb{R}^d$, we have (see~Lemma~1.2.3~\cite{nesterov2018lectures})
\begin{align*}
    Q(y) \leq Q(x) + \langle \nabla Q(x), y-x\rangle+\frac{L}{2} \Vert y- x \Vert^2.
\end{align*}
Let $x$ be an arbitrary vector in $\R^d$, and $y = x - \frac{1}{L} \nabla Q(x)$. Thus, from the above we obtain that
\begin{align*}
    Q\left(x - \frac{1}{L} \nabla Q(x)\right) \leq Q(x)   -  \frac{1}{L}  \Vert\nabla Q(x) \Vert^2  +  \frac{1}{2L}  \Vert\nabla Q(x) \Vert^2 = Q(x) -  \frac{1}{2L}  \Vert\nabla Q(x) \Vert^2.
\end{align*}
As $Q^*$ is the minimum value of $Q$, we have
\begin{align*}
    Q^* \leq Q\left(x - \frac{1}{L} \nabla Q(x)\right) \leq Q(x) -  \frac{1}{2L}  \Vert\nabla Q(x) \Vert^2.
\end{align*}
Rearranging the terms we obtain that
\begin{align*}
     \Vert\nabla Q(x) \Vert^2 \leq 2L (Q(x)- Q^*).
\end{align*}
Recall that $x$ in the above can be any vector in $\R^d$. The above completes the proof.
\end{proof} 

\subsection{Proof of Corollary \ref{cor:maincor}}
\label{sec:coro_proof}
\begin{proof}
Note that ignoring the higher order terms in the bound of Theorem~\ref{thm:main_conv}, we have 
\begin{equation*}
    \frac{1}{T}\sum_{t=1}^T\expect{\norm{\nabla \avgloss \left( \model{i}{t} \right)}^2} \in \mathcal{O}\left( \sqrt{\frac{c_0 c_2 \sigma^2}{T}} +  c_3 \heter^2  \right).
\end{equation*}
Now note also that in Theorem~\ref{thm:main_conv} for $n\geq11f$, we have $\alpha \leq 0.988 < 1$. This implies that $\frac{ 1 + \alpha}{(1 - \alpha)^2} \in \mathcal{O}(1)$. Therefore,
\begin{align*}
    c_1 = \frac{18 \alpha (1 + \alpha)}{(1 - \alpha)^2} \in \mathcal{O}(\alpha).
\end{align*}
Next, by noting that $n\geq 2f$, we obtain that
\begin{align*}
    c_2 = 72  L  \left (\frac{3}{n-f} + 3 c_1 + \frac{9\lambda}{2}\left(2c_1 +3\right) \right) \in \mathcal{O}\left(\frac{1}{n} + \alpha + \lambda \right),
\end{align*}
and 
\begin{align*}
     c_3 =7 \left(6 c_1 + \frac{9\lambda}{2}\left( 4c_1 + 9\right)\right) \in \mathcal{O}\left(\alpha + \lambda \right).
\end{align*}
Finally, note that $c_0$ is a constant depending on the initial model and thus $c_0 \in \mathcal{O}(1)$. Therefore,
\begin{align*}
    \frac{1}{T}\sum_{t=1}^T\expect{\norm{\nabla \avgloss \left( \model{i}{t} \right)}^2} &\in \mathcal{O}\left(\sqrt{\frac{\sigma^2}{T}\left( \frac{1}{n}+\alpha + \lambda \right)} + (\alpha + \lambda)\heter^2\right).
\end{align*} 
Now note that, we have $\alpha = \frac{9.88f}{n-f} \in \mathcal{O}(\frac{f}{n})$, and $\lambda = \frac{9f}{n-f}\in \mathcal{O}(\frac{f}{n})$. Therefore, 
\begin{equation*}
    \frac{1}{T}\sum_{t=1}^T\expect{\norm{\nabla \avgloss \left( \model{i}{t} \right)}^2} \in \mathcal{O}\left(\sqrt{\frac{\sigma^2}{T}\left( \frac{1+f}{n} \right)} + \frac{f}{n} \heter^2\right).
\end{equation*}
This completes the proof.
\end{proof}

\subsection{Convergence of \newalgorithm{} for $n > 5f$}\label{corr_5f}
\label{sec:app:fivef}
Note that Theorem~\ref{thm:main_conv}  is stated for the case where $n\geq11f$. This comes from the fact that we need the number of correct nodes to be sufficiently large to guarantee  $(\alpha,\lambda)$-\reduction{} as stated in Lemma~\ref{lemma:cva_bounds}. However, by setting $\numbercvarounds \in \mathcal{O}(\log(n))$ we can still guarantee  $(\alpha,\lambda)$-\reduction{} for $n>5f$ as stated in the following Lemma. The proof of this Lemma is given in Section~\ref{sec:proofoffivef}.

\noindent \fcolorbox{black}{white}{
\parbox{0.97\textwidth}{\centering
\begin{lemma}
\label{lemma:cvaboundfivef}
Suppose that there exists $\cvabyzbound>0$ such that $n \geq (5+\cvabyzbound)f$. For  $\numbercvarounds = \frac{\log(8(n-f))}{2\log(\frac{3+\cvabyzbound}{\cvabyzbound})} \in \mathcal{O}(\log(n))$, the coordination phase of Algorithm~\ref{algo} guarantees $(\alpha,\lambda)$-\reduction{} for 
  \begin{align*}
     \alpha = \frac{2f}{n-f} \leq \frac{1}{2} \quad\text{and}\quad\lambda = \left( \frac{3+\cvabyzbound}{\cvabyzbound} \right)^2 \frac{(8f)^2}{n-f}.
  \end{align*}
\end{lemma}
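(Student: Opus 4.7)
My plan is to follow the same template as the proof of Lemma~\ref{lemma:cva_bounds}---establish a per-round NNA contraction, iterate over the $K$ rounds, and telescope the drift of the mean---but with a sharper single-round analysis that remains productive when the fault fraction is only bounded away from $1/5$.

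\textbf{Per-round NNA analysis.} Fix a round. Each correct node $i$ produces $y^{(i)}$ by averaging $n-2f$ vectors, always including its own $z^{(i)}$, of which at most $f$ are sent by Byzantine senders. The key structural property is that NNA must discard at least $f$ correct vectors as ``too far''; hence for each selected Byzantine input $z^{(b)}$ one can injectively associate an unselected correct witness $z^{(c_b)}$ satisfying $\|z^{(b)}-z^{(i)}\| \leq \|z^{(c_b)}-z^{(i)}\|$. Substituting each Byzantine vector by its witness gives a virtual aggregate $\tilde{y}^{(i)}$ that averages $n-2f$ correct vectors, and the true output satisfies $\|y^{(i)} - \tilde{y}^{(i)}\| \leq \frac{2}{n-2f}\sum_b \|z^{(c_b)}-z^{(i)}\|$. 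Since $n\geq(5+\delta)f$, the Byzantine weight $f/(n-2f)$ is bounded by $1/(3+\delta)$, which is the quantitative source of contraction. From this decomposition I aim to derive a per-round contraction $\Gamma(y)\leq\rho\,\Gamma(z)$ with $\rho<1$ (of order $(\delta/(3+\delta))^2$), together with a per-round drift bound $\|\bar{y}-\bar{z}\|^2 \leq \mu\,\Gamma(z)$ for an explicit $\mu$ scaling polynomially in $f/(n-f)$ and $\delta^{-1}$.

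\textbf{Iteration and drift accumulation.} Iterating the contraction across the $K$ rounds yields $\Gamma(x^K)\leq \rho^K\,\Gamma(x^0)$. Substituting the prescribed $K = \log(8(n-f))/(2\log((3+\delta)/\delta))$ gives $\rho^K = 1/(8(n-f))\leq 2f/(n-f) = \alpha$ whenever $f\geq 1$, which establishes condition~(i) of Definition~\ref{def:reduction}. For condition~(ii), the triangle inequality together with the geometric decay of $\Gamma(x^k)$ gives
\begin{align*}
\|\bar{x}^K - \bar{x}^0\| \leq \sum_{k=1}^{K}\|\bar{x}^k - \bar{x}^{k-1}\| \leq \sqrt{\mu\,\Gamma(x^0)}\sum_{k=0}^{K-1}\rho^{k/2},
\end{align*}
and summing the geometric series using $(1-\sqrt{\rho})^{-1} = (3+\delta)/3$ produces $\|\bar{x}^K - \bar{x}^0\|^2 \leq \mu\,(3+\delta)^2/9\,\Gamma(x^0)$. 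Matching this with the target $\lambda = ((3+\delta)/\delta)^2 (8f)^2/(n-f)$ dictates the quantitative form of $\mu$ that the per-round analysis must deliver.

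\textbf{Main obstacle.} The delicate step is the per-round contraction itself: extracting an explicit $\rho$ strictly below one from a single NNA round when $n$ is only slightly above $5f$ requires exploiting the pivoting property of NNA (each correct node retains its own vector as a reference point) together with the injective Byzantine-to-witness pairing, and carefully controlling the cross-terms $\langle y^{(i)}-\tilde{y}^{(i)},\,\tilde{y}^{(i)}-\bar{y}\rangle$ that arise when unfolding $\Gamma(y)$. A secondary challenge is orchestrating the pairings consistently across correct nodes so that the same witnesses control both the contraction and the drift estimates; otherwise the constants in $\mu$ and $\rho$ can inflate with $n$ and wipe out the geometric gain accumulated over the $K$ rounds.
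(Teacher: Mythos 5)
Your high-level architecture (witness pairing of selected Byzantine vectors with discarded correct ones, per-round contraction iterated over $K$ rounds, telescoping the mean drift) is the right family of ideas, but the quantitative core of your plan does not hold, and it is exactly the point where the paper takes a different potential function. You need a \emph{per-round contraction of the variance} $\frac{1}{\card{\H}}\sum_{i\in\H}\norm{y^{(i)}-\bar y}^2 \leq \rho \cdot \frac{1}{\card{\H}}\sum_{i\in\H}\norm{z^{(i)}-\bar z}^2$ with $\rho$ such that $\rho^K = \nicefrac{1}{8(n-f)}$ for $K=\frac{\log(8(n-f))}{2\log((3+\delta)/\delta)}$, i.e.\ $\rho=\left(\frac{\delta}{3+\delta}\right)^2$ (your geometric-series evaluation $(1-\sqrt{\rho})^{-1}=\frac{3+\delta}{3}$ confirms this is the $\rho$ you intend). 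This is false: as $\delta\to 0$ it would require a single NNA round to collapse the spread of correct vectors almost entirely, whereas Byzantine nodes can simply impersonate correct nodes. Concretely, split the correct nodes into two clusters at distance $D$ and let the Byzantine nodes send vectors co-located with each receiver's own cluster; with $n=(5+\delta)f$ each correct node's selection contains only about $\nicefrac{\delta f}{2}$ vectors from the opposite cluster out of $n-2f=(3+\delta)f$, so the inter-cluster distance after one round is still $\frac{3}{3+\delta}D$ and the variance contracts by no better than $\left(\frac{3}{3+\delta}\right)^2$, which tends to $1$, not to $0$. Worse, even a per-round variance contraction with \emph{any} factor strictly below $1$ is not available in the regime $5f<n<11f$: the paper's own single-round variance bound (Lemma~\ref{lemma:cva_diameter}, factor $\frac{8f(n-f)}{(n-2f)^2}$) exceeds $1$ there, which is precisely why Theorem~\ref{thm:main_conv} is restricted to $n\geq 11f$ and why the $n>5f$ case needs a different argument. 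Your proposal flags this as the ``main obstacle'' but supplies no mechanism to overcome it, and the per-round drift constant $\mu$ is never derived (it is defined by ``matching the target $\lambda$'', which is circular), so neither condition of Definition~\ref{def:reduction} is actually established.

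The paper's proof avoids the problem by changing the contracted quantity: Lemma~\ref{lemma:max_contraction} shows that the \emph{maximum pairwise distance} among correct vectors, $\max_{i,j\in\H}\norm{\cvavector{i}{k}-\cvavector{j}{k}}$, contracts per round by $\frac{3f}{n-2f}\leq\frac{3}{3+\delta}<1$ — this is where the witness pairing you describe is used, bounding the at most $3f$ differing terms between two correct nodes' selections by the max correct distance. The conversion between this max quantity and the variance (costing a factor $\Theta(n-f)$, via $\max^2\leq 4(n-f)\,\Gamma$) is then paid only once, at the endpoints, and $K\in\Theta(\log n)$ rounds are chosen exactly to absorb that $8(n-f)$ factor, yielding condition~(i) with $\alpha=\frac{2f}{n-f}$ (Lemma~\ref{lemma:contraction5f}). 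For condition~(ii), the per-round mean drift is bounded by $\norm{\cvaavg{k}-\cvaavg{k-1}}\leq\frac{4f}{n-f}\max_{p,q\in\H}\norm{\cvavector{p}{k-1}-\cvavector{q}{k-1}}$ (again via the max, not via $\sqrt{\mu\,\Gamma}$), and the geometric decay of the max distance makes the telescoping sum converge to $\frac{3+\delta}{\delta}\cdot\frac{4f}{n-f}$ times the initial max, giving $\lambda=\left(\frac{3+\delta}{\delta}\right)^2\frac{(8f)^2}{n-f}$ after the one-time max-to-variance conversion. To repair your proof you would need to replace your variance-contraction step by such a diameter-type contraction (or prove a genuinely new variance contraction below $1$ for $n$ slightly above $5f$, which your current estimates do not provide).
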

}}

Replacing Lemma~\ref{lemma:cva_bounds} by Lemma \ref{lemma:cvaboundfivef}, and following the same steps as the proof of Theorem~\ref{thm:main_conv} we can show the following result which is essentially a convergence proof for \newalgorithm{} while tolerating a larger fraction of \byzantine{} nodes ($n>5f$).

\noindent \fcolorbox{black}{white}{
\parbox{0.97\textwidth}{\centering
\begin{corollary}
\label{cor:fivef}
Suppose that assumptions~\ref{asp:lip},~\ref{asp:bnd_var} and~\ref{asp:heter} hold true. Suppose also that there exists $\cvabyzbound > 0$ such that $n \geq (5 + \cvabyzbound )f$. Denote

\begin{gather*}
\alpha =  \frac{2f}{n-f}, \quad \lambda =  \left( \frac{3+\cvabyzbound}{\cvabyzbound} \right)^2 \frac{(8f)^2}{n-f} \quad
c_0 := 12\left(\avgloss\left(\avgmodel{0}\right) - Q^*\right),\\ c_1 :=   \frac{18 \alpha (1 + \alpha)}{(1 - \alpha)^2}, \quad
c_2 := 72  L  \left (\frac{3}{n-f} + 3 c_1 + \frac{9\lambda}{2}\left(2c_1 +3\right) \right),\text{and } c_3:=7 \left(6 c_1 + \frac{9\lambda}{2}\left( 4c_1 + 9\right)\right).
\end{gather*}

Consider Algorithm~\ref{algo} with $K = \frac{\log(8(n-f))}{2\log(\frac{3+\cvabyzbound}{\cvabyzbound})} \in \mathcal{O}(\log(n))$, $\gamma = \min \left\{\frac{1}{12L}, \frac{1}{L} \sqrt{\frac{2}{3c_1}}, \sqrt{\frac{c_0}{c_2LT\sigma^2}} \right\},$ and $\beta = \sqrt{ 1 - 12 \gamma L}$. Then, for all $T\geq1$, we obtain that 
\begin{align*}
    &\expect{\norm{\nabla \avgloss \left( \returnedmodel{\honestnode} \right)}^2} \leq 2\sqrt{\frac{c_0c_2L\sigma^2}{T}} 
    + \frac{12Lc_0}{T}+\frac{Lc_0}{T}\sqrt{\frac{3c_1}{2}}  + \frac{36}{T} \left(\frac{\sigma^2}{n-f} \right) 
    + c_3 \heter^2.
\end{align*}

\end{corollary}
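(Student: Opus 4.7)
The proof of this corollary proceeds by exactly the same skeleton as the proof of Theorem~\ref{thm:main_conv} laid out in Section~\ref{app_sec:monna_proof}, with one crucial substitution: Lemma~\ref{lemma:cva_bounds} is replaced by Lemma~\ref{lemma:cvaboundfivef}. The key structural observation is that Steps~II, III, and~IV of the Theorem~\ref{thm:main_conv} proof---Lemmas~\ref{lem:drift}, \ref{lem:dev}, and~\ref{lem:growth}---are all stated abstractly: they only require that the coordination phase satisfy $(\alpha,\lambda)$-\reduction{} with $\alpha<1$. Since Lemma~\ref{lemma:cvaboundfivef} delivers exactly this with $\alpha = 2f/(n-f) \leq 1/2$ and $\lambda = \left( \frac{3+\delta}{\delta} \right)^2 \frac{(8f)^2}{n-f}$ (at the cost of $K = \Theta(\log n)$ rounds rather than $K=1$), all the downstream machinery goes through unchanged.

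The plan is therefore the following. First, I would apply Lemma~\ref{lemma:cvaboundfivef} to obtain the specified $(\alpha,\lambda)$ pair, noting that $\alpha \leq 1/2 < 1$ so that $E(\alpha) = 18\alpha(1+\alpha)/(1-\alpha)^2$ is bounded and Lemma~\ref{lem:drift} applies (under the learning rate constraint $\gamma \leq (1-\alpha)/(L\sqrt{27\alpha(1+\alpha)})$, which is automatically enforced by $\gamma \leq \tfrac{1}{L}\sqrt{2/(3c_1)}$ with $c_1 = E(\alpha)$). Next, I would introduce the same Lyapunov function $V_t = \mathbb{E}[\avgloss(\avgmodel{t}) - Q^* + \tfrac{1}{4L}\norm{\dev{t}}^2]$ and combine Lemmas~\ref{lem:growth} and~\ref{lem:dev} to derive the one-step inequality~\eqref{eqn:after_abc} with coefficients $A$, $B$, $C$. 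The bounds $A \geq 1/4$ and $B \leq 0$ derived in~\eqref{eqn:analyse_A}--\eqref{eqn:analyse_B} depend only on the coupling $1 - \beta^2 = 12\gamma L$ and on $\gamma \leq 1/(12L)$, so they are unaffected by the change of $\alpha$ and $\lambda$.

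The term $C$ is where the new values enter, and it is bounded exactly as in~\eqref{eq:c_bound} via Lemma~\ref{lem:drift}, producing the same expression but with the updated $c_1$ (function of the new $\alpha$) and the updated $\lambda$. Telescoping $V_{t+1} - V_t$ from $t=0$ to $T-1$, bounding $V_0$ using the Lipschitz smoothness of $\avgloss$ and the variance of $\dev{0}$ as in~\eqref{eq:V_zero_bound}, and finally passing from $\avgmodel{t}$ to an arbitrary local parameter $\model{i}{t}$ via $\norm{\nabla \avgloss(\model{i}{t})}^2 \leq \tfrac{3}{2}\norm{\nabla \avgloss(\avgmodel{t})}^2 + 3L^2 n \, \diam{\model{}{}}{t}$, the same reasoning as in Section~\ref{app:thm_main} yields the stated bound once the learning rate is set to $\gamma = \min\{\tfrac{1}{12L}, \tfrac{1}{L}\sqrt{2/(3c_1)}, \sqrt{c_0/(c_2 L T \sigma^2)}\}$.

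The main---essentially the only---obstacle in this corollary is Lemma~\ref{lemma:cvaboundfivef} itself. Establishing that $K = \Theta(\log n)$ rounds of \textsc{NNA} suffice to drive the contraction factor from its one-round value (which need not be below $1$ when $n$ is only slightly larger than $5f$) down to $\alpha \leq 1/2$, while simultaneously controlling the accumulated drift of the mean across the $K$ rounds so that $\lambda$ remains of order $f^2/n$ (rather than blowing up geometrically with $K$), is the technically delicate part. Once that is in hand, the remainder of the argument is a mechanical re-run of the Theorem~\ref{thm:main_conv} computation with the constants $c_1$, $c_2$, $c_3$ instantiated at the new $(\alpha,\lambda)$.
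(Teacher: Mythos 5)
Your proposal matches the paper's own argument: the paper proves Corollary~\ref{cor:fivef} precisely by substituting Lemma~\ref{lemma:cvaboundfivef} for Lemma~\ref{lemma:cva_bounds} and rerunning the proof of Theorem~\ref{thm:main_conv} verbatim, since Lemmas~\ref{lem:drift}, \ref{lem:dev} and~\ref{lem:growth} only require $(\alpha,\lambda)$-\reduction{} with $\alpha<1$. You also correctly identify that the only genuinely new ingredient is Lemma~\ref{lemma:cvaboundfivef} itself (proved separately in the paper), and that the stepsize condition of Lemma~\ref{lem:drift} is enforced by $\gamma \leq \frac{1}{L}\sqrt{2/(3c_1)}$, so no gap remains.
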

}}

\begin{remark}
Ignoring higher order terms and following the same reasoning as that of proof of Corollary~\ref{cor:maincor}, we can show that for $n>5f$, \newalgorithm{} guarantees $(f,\epsilon)$-resilience for 
\begin{align*}
\epsilon \in \mathcal{O}\left(\sqrt{\frac{(1+f)^2}{n T} } + \frac{f^2}{n} \heter^2 \right). 
\end{align*}
\end{remark}

\subsection{Proof of Lemma \ref{lemma:cva_bounds}}
\label{app:reductionNNA}

Throughout the proof, we make use of the following notation.

{\bf Notation:} Recall that $\receive{i}{k}$ is the set of indices received by node $i$ at coordination round $k$. 
Let $\psi_k^{(i)}: [n-f-1]\rightarrow \receive{i}{k}$ be a bijection that sorts the elements in $\receive{i}{k}$ based on the distance of their corresponding vector to $\cvavector{i}{k-1}$, i.e., $$\norm{\cvavector{\psi_k^{(i)}(1)}{k-1}-\cvavector{i}{k-1}} \leq \ldots \leq \norm{\cvavector{\psi_k^{(i)}(n-f-1)}{k-1}-\cvavector{i}{k-1}}.$$
We then denote by 
\begin{equation}
\label{eq:filter}
    \filter{i}{k}:= \left\{ \psi_k^{(i)}(j): j \in [n-2f-1] \right\}\cup \{i\},
\end{equation}
the set of indices of the vectors selected by the \cva{} function. From \eqref{eq:nna}, we then have 
\begin{align}
    \cvavector{i}{k} = \textsc{NNA}\left( \cvavector{i}{k-1}   ; \left \{ \cvavector{j}{k-1} \mid j \in \receive{\honestnode}{\cvaround} \right \} \right) = \frac{1}{\nodenumber - 2\byzantinebound} \sum_{j\in \filter{i}{k}} \cvavector{j}{k-1}. \label{eqn:app_def_nna}
\end{align}

We first prove the following few useful lemmas.

\begin{lemma}
\label{lem:diam_equal}
    For any set $\{x^{(i)}\}_{i \in S}$ of $\card{S}$ vectors, we have $$\diam{x}{} = \frac{1}{\card{S}} \sum_{i\in S} \norm{ x^{(i)}- \bar{x}}^2 = \frac{1}{2} \cdot \frac{1}{\card{S}^2} \sum_{i,j \in S} \norm{ x^{(i)}- x^{(j)}}^2 .$$
\end{lemma}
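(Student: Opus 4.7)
The first equality is just the notation $\diam{x}{}$ introduced earlier for the variance of a set of vectors around its mean, so nothing needs to be shown there. The substance is proving the second equality, which is the classical variance-as-pairwise-squared-distances identity.

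The plan is to expand both sides by using $\norm{u-v}^2 = \norm{u}^2 + \norm{v}^2 - 2\iprod{u}{v}$ and then simplify using the definition $\bar{x} = \frac{1}{\card{S}} \sum_{i \in S} x^{(i)}$. Concretely, I would first compute the left-hand side:
\begin{align*}
\frac{1}{\card{S}} \sum_{i \in S} \norm{x^{(i)} - \bar{x}}^2 = \frac{1}{\card{S}} \sum_{i \in S} \norm{x^{(i)}}^2 - 2 \iprod{\bar{x}}{\bar{x}} + \norm{\bar{x}}^2 = \frac{1}{\card{S}} \sum_{i \in S} \norm{x^{(i)}}^2 - \norm{\bar{x}}^2,
\end{align*}
where I used $\frac{1}{\card{S}} \sum_{i} x^{(i)} = \bar{x}$ to collapse the cross term.

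Next I would expand the right-hand side similarly, splitting the double sum over $(i,j)$:
\begin{align*}
\frac{1}{2 \card{S}^2} \sum_{i,j \in S} \norm{x^{(i)} - x^{(j)}}^2 &= \frac{1}{2 \card{S}^2} \sum_{i,j \in S} \Bigl( \norm{x^{(i)}}^2 + \norm{x^{(j)}}^2 - 2 \iprod{x^{(i)}}{x^{(j)}} \Bigr) \\
&= \frac{1}{\card{S}} \sum_{i \in S} \norm{x^{(i)}}^2 - \frac{1}{\card{S}^2} \Bigl\| \sum_{i \in S} x^{(i)} \Bigr\|^2 \\
&= \frac{1}{\card{S}} \sum_{i \in S} \norm{x^{(i)}}^2 - \norm{\bar{x}}^2,
\end{align*}
where the symmetry in $(i,j)$ collapses the two quadratic sums into one, and the cross-term factorizes into the squared norm of the sum. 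Comparing the two expressions yields the claimed identity.

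There is no real obstacle here; this is a textbook identity and the only thing to watch is bookkeeping of the factor $\frac{1}{2}$ that arises because each unordered pair $\{i,j\}$ (including $i=j$, which contributes zero) is counted twice in the double sum.
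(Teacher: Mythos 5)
Your proof is correct and is essentially the same elementary computation as the paper's: both amount to expanding squared norms and letting the cross terms collapse via the definition of $\bar{x}$. The only cosmetic difference is that you reduce both sides to the common form $\frac{1}{\card{S}}\sum_{i\in S}\norm{x^{(i)}}^2-\norm{\bar{x}}^2$, whereas the paper rewrites $\norm{x^{(i)}-x^{(j)}}^2$ as $\norm{(x^{(i)}-\bar{x})-(x^{(j)}-\bar{x})}^2$ and uses $\sum_{j\in S}(x^{(j)}-\bar{x})=0$ to kill the cross term directly.
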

\begin{proof}
\begin{align*}
     \frac{1}{\card{S}^2} \sum_{i,j \in S} \norm{ x^{(i)}- x^{(j)}}^2 &= \frac{1}{\card{S}^2} \sum_{i,j \in S} \norm{ (x^{(i)} -\bar{x}) - (x^{(j)} - \bar{x})}^2\\ &= \frac{1}{\card{S}^2}\sum_{i,j \in S}\left[ \norm{x^{(i)} -\bar{x} }^2 + \norm{x^{(j)} -\bar{x} }^2 + 2 \iprod{x^{(i)} -\bar{x}}{x^{(j)} -\bar{x}}\right] \\
     &= \frac{2}{\card{S}} \sum_{i,j \in S} \norm{x^{(i)} -\bar{x} }^2  + \frac{2}{\card{S}^2}  \sum_{i \in S} \iprod{x^{(i)} - \bar{x}}{ \sum_{j \in S} (x^{(j)} -\bar{x})}.
\end{align*}

Now as $\sum_{j \in S} (x^{(j)} -\bar{x}) = 0$, we have
\begin{align*}
    \frac{1}{\card{S}^2} \sum_{i,j \in S} \norm{ x^{(i)}- x^{(j)}}^2 = \frac{2}{\card{S}} \sum_{i,j \in S} \norm{x^{(i)} -\bar{x} }^2.
\end{align*}
\end{proof}

\begin{lemma}
\label{lemma:set_size}
  For any pair of correct nodes $p, \, q$ and coordination round $k \in [\numbercvarounds]$ we obtain that
  \begin{align}
      \card{\filter{q}{k} \setminus \filter{p}{k}} = \card{\filter{p}{k} \setminus \filter{q}{k}} \leq 2f. 
  \end{align}
\end{lemma}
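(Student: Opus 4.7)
\textbf{Proof plan for Lemma \ref{lemma:set_size}.} My plan is to derive both claims directly from a size argument on the filter sets $\filter{p}{k}, \filter{q}{k} \subseteq [n]$, using only the definition in \eqref{eq:filter} and elementary set identities.

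First I would record the cardinality of $\filter{i}{k}$ for any correct node $i$. By \eqref{eq:filter}, $\filter{i}{k}$ is the disjoint union of $\psi_k^{(i)}\!\left([n-2f-1]\right) \subseteq \receive{i}{k}$ and $\{i\}$; disjointness holds because $\receive{i}{k}$ only contains indices of nodes that sent a message to $i$ in round $k$, and a node does not broadcast to itself. Moreover $\psi_k^{(i)}$ is a bijection onto a subset of $\receive{i}{k}$, so $|\psi_k^{(i)}([n-2f-1])| = n-2f-1$. Therefore $|\filter{i}{k}| = n-2f$, and in particular $|\filter{p}{k}| = |\filter{q}{k}| = n-2f$.

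Given this, the first equality is a one-line consequence of the standard identity $|A \setminus B| = |A| - |A \cap B|$: applied to $A = \filter{q}{k}, B = \filter{p}{k}$ and then with the roles swapped, equal cardinalities force $|\filter{q}{k}\setminus \filter{p}{k}| = |\filter{p}{k}\setminus\filter{q}{k}|$. For the upper bound I would observe that $\filter{q}{k}\setminus \filter{p}{k} \subseteq [n]\setminus \filter{p}{k}$, whence
\begin{equation*}
|\filter{q}{k}\setminus \filter{p}{k}| \;\leq\; |[n]\setminus \filter{p}{k}| \;=\; n - (n-2f) \;=\; 2f,
\end{equation*}
which closes the argument.

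There is no real obstacle here: the lemma is essentially a bookkeeping statement, and the only subtlety is verifying that $i \notin \psi_k^{(i)}([n-2f-1])$ so that the cardinality count $n-2f$ is exact (rather than $\leq n-2f$). Once that is in place, both conclusions are immediate from the pigeonhole-style bound on the complement of $\filter{p}{k}$ inside $[n]$.
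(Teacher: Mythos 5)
Your proof is correct and follows essentially the same counting argument as the paper: both rest on $\card{\filter{p}{k}} = \card{\filter{q}{k}} = n-2f$ together with the fact that all filter sets live inside $[n]$, differing only in the elementary identity used ($\card{A\setminus B} = \card{A} - \card{A\cap B}$ versus the paper's $\card{A\setminus B} = \card{A\cup B} - \card{B} \leq n - (n-2f)$). If anything, you are slightly more explicit than the paper, which leaves the equality of the two set differences and the exact cardinality $n-2f$ (including $i \notin \receive{i}{k}$) implicit.
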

\begin{proof}
Consider an arbitrary pair of correct nodes $p, \, q$, and coordination round $k$. By definition of set $\filter{i}{k}$ for all $i \in \H$ in~\eqref{eq:filter} we obtain that 
  \begin{align*}
      \card{\filter{q}{k} \setminus \filter{p}{k}} = \card{\filter{q}{k} \cup \filter{p}{k}} - \card{\filter{p}{k}} \leq n - (n-2f) = 2f.
  \end{align*}
\end{proof}

\begin{lemma}
\label{lemma:cva_diameter}
  If $n \geq 11f$ then for each coordination round $k \in [\numbercvarounds]$ we obtain that
  \begin{align*}
    \diam{\cvavector{}{k}}{} \leq \alpha \diam{\cvavector{}{k-1}}{} \quad \textnormal{for} \quad \alpha = \frac{9.88 f}{n-f}.
\end{align*}
\end{lemma}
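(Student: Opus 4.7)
Fix a correct node $p \in \H$ and round $k \in [\numbercvarounds]$. Write $B_p = \filter{p}{k} \setminus \H$ (the Byzantine indices kept by $p$), $b_p = |B_p| \le f$, and $\bar{z} = \frac{1}{n-f}\sum_{j \in \H}\cvavector{j}{k-1}$. Since $\diam{\cvavector{}{k}}{} \le \frac{1}{n-f}\sum_{p \in \H}\|\cvavector{p}{k} - \bar{z}\|^2$, it suffices to bound the latter by $\alpha (n-f)\diam{\cvavector{}{k-1}}{}$.

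The first step is to build, for each correct $p$, an injection $\phi : B_p \to \H \setminus \filter{p}{k}$ with $\|\cvavector{j}{k-1} - \cvavector{p}{k-1}\| \le \|\cvavector{\phi(j)}{k-1} - \cvavector{p}{k-1}\|$ for every $j \in B_p$. Such a $\phi$ exists because $\receive{p}{k} \setminus \filter{p}{k}$ contains at most $f-b_p$ Byzantine indices, so at least $b_p$ of the $f$ vectors discarded by \cva{} are correct, and each was discarded precisely because it is no closer to $\cvavector{p}{k-1}$ than any vector kept inside $\filter{p}{k}$ (by definition of \cva{}, see Lemma~\ref{lemma:set_size}). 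Set $L_p = \phi(B_p)$ and $L_p^c = (\H \setminus \filter{p}{k}) \setminus L_p$, so $|L_p| = b_p$ and $|L_p^c| = f$.

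Using $\sum_{j \in \H}(\cvavector{j}{k-1} - \bar{z}) = 0$ together with the pairing $\phi$, the definition of \cva{} in \eqref{eqn:app_def_nna} yields the central identity
\begin{equation*}
(n-2f)\bigl(\cvavector{p}{k} - \bar{z}\bigr) = \sum_{j \in B_p} \bigl(\cvavector{j}{k-1} - \cvavector{\phi(j)}{k-1}\bigr) \;-\; \sum_{j \in L_p^c} \bigl(\cvavector{j}{k-1} - \bar{z}\bigr).
\end{equation*}
The first sum bundles the Byzantine ``noise'' that \cva{} can actually control, and the second sum involves only $f$ correct terms, which is the mechanism that will produce a factor of order $f/(n-f)$ in the final contraction.

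I would then bound each piece via Young's inequality $\|u-v\|^2 \le (1+\eta)\|u\|^2 + (1+\eta^{-1})\|v\|^2$ with a free parameter $\eta>0$, combined with Cauchy--Schwarz. For the Byzantine block, $\|\cvavector{j}{k-1} - \cvavector{\phi(j)}{k-1}\|^2 \le 4\|\cvavector{\phi(j)}{k-1} - \cvavector{p}{k-1}\|^2$ by the triangle inequality and the defining property of $\phi$; a second Young decomposition with parameter $\mu$ rewrites $\|\cvavector{\ell}{k-1} - \cvavector{p}{k-1}\|^2$ in terms of $\|\cvavector{\ell}{k-1} - \bar{z}\|^2$ and the pivot term $\|\cvavector{p}{k-1} - \bar{z}\|^2$. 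This leads to a pointwise estimate of the form
\begin{equation*}
(n-2f)^2\,\|\cvavector{p}{k} - \bar{z}\|^2 \;\le\; A(\eta,\mu)\, b_p\!\!\sum_{\ell \in L_p}\!\|\cvavector{\ell}{k-1} - \bar{z}\|^2 \;+\; B(\eta,\mu)\, b_p^2\, \|\cvavector{p}{k-1} - \bar{z}\|^2 \;+\; C(\eta)\, f\!\!\sum_{j \in L_p^c}\!\|\cvavector{j}{k-1} - \bar{z}\|^2 ,
\end{equation*}
with explicit coefficients $A,B,C$. Summing over $p \in \H$ and using $L_p, L_p^c \subseteq \H \setminus \filter{p}{k}$, the double-counting bound $\sum_{p \in \H}|\H \setminus \filter{p}{k}| \le 2f(n-f)$, $b_p \le f$, and $\sum_{p \in \H}\|\cvavector{p}{k-1} - \bar{z}\|^2 = (n-f)\diam{\cvavector{}{k-1}}{}$, one obtains $\diam{\cvavector{}{k}}{} \le \bigl[\,F(\eta,\mu)\,f\,\bigr] / (n-2f)^2 \cdot \diam{\cvavector{}{k-1}}{}$ for an explicit $F$.

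The hard part is turning this into the sharp statement $\alpha = 9.88\,f/(n-f)$ at the threshold $n = 11f$. With naive parameter choices (e.g.\ $\eta = \mu = 1$) one already gets a contraction factor of order $f/n$, but the leading constant exceeds $1$ for $n = 11f$. The precise value $9.88$ comes from optimizing $\eta$ and $\mu$ jointly so as to balance the three coefficients in $F$ against the $f/(n-2f)^2$ prefactor, and from exploiting the fact that the ``correction'' sum indexed by $L_p^c$ has exactly $f$ terms (not $2f$); carrying out this optimization carefully is where the $11f$ threshold becomes visible and is, in my view, the main technical obstacle of the lemma.
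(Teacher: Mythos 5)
Your setup is sound up to a point: the injection $\phi$ exists for the reason you give (this mirrors the paper's key filtering argument), and your central identity is correct — using $\sum_{j\in\H}(\cvavector{j}{k-1}-\bar z)=0$ and $\card{L_p^c}=f$ one indeed gets $(n-2f)(\cvavector{p}{k}-\bar z)=\sum_{j\in B_p}(\cvavector{j}{k-1}-\cvavector{\phi(j)}{k-1})-\sum_{j\in L_p^c}(\cvavector{j}{k-1}-\bar z)$. The problem is that everything after this is a plan, not a proof: the "pointwise estimate" with coefficients $A(\eta,\mu),B(\eta,\mu),C(\eta)$ is never derived, and you yourself defer the decisive step — showing the summed bound is below $1$, let alone equal to $9.88f/(n-f)$, at $n=11f$ — as "the main technical obstacle." Worse, the route as outlined does not appear to reach the stated constant with the ingredients you list. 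After summing over $p$, the sums $\sum_{\ell\in L_p}\norm{\cvavector{\ell}{k-1}-\bar z}^2$ and $\sum_{j\in L_p^c}\norm{\cvavector{j}{k-1}-\bar z}^2$ cannot be controlled by cardinality alone (a single far-out correct node can lie in $\H\setminus\filter{p}{k}$ for every $p$), so the only uniform bound available in your scheme is the full variance $(n-f)\diam{\cvavector{}{k-1}}{}$ per node $p$. Carrying this through your own decomposition (Cauchy--Schwarz over the $b_p+f\le 2f$ terms, the triangle/injection bound $\norm{\cvavector{j}{k-1}-\cvavector{\phi(j)}{k-1}}^2\le 4\norm{\cvavector{\phi(j)}{k-1}-\cvavector{p}{k-1}}^2$, and Young with parameter $\mu$) gives at $n=11f$ a factor of the form $\tfrac{2}{81}\left(54+40\mu+4/\mu\right)$, which is minimized around $1.96>1$. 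So with the stated tools there is no contraction at the claimed threshold; some genuinely new idea (beyond tuning $\eta,\mu$) would be needed to rescue the mean-centered decomposition.

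The paper avoids this difficulty by never centering at $\bar z$. It bounds the pairwise distances $\norm{\cvavector{p}{k}-\cvavector{q}{k}}$ directly, pairing the symmetric differences $\filter{p}{k}\setminus\filter{q}{k}$ and $\filter{q}{k}\setminus\filter{p}{k}$ (padded with correct nodes to exactly $2f$ elements each), pivoting every term on $\cvavector{p}{k-1}$ or $\cvavector{q}{k-1}$ — where the NNA selection rule gives direct distance control — and applying Cauchy--Schwarz over the $4f$ terms. This yields $\norm{\cvavector{p}{k}-\cvavector{q}{k}}^2\le\frac{4f}{(n-2f)^2}\bigl(\sum_{j\in\H}\norm{\cvavector{j}{k-1}-\cvavector{p}{k-1}}^2+\sum_{j\in\H}\norm{\cvavector{j}{k-1}-\cvavector{q}{k-1}}^2\bigr)$, and averaging over all pairs together with the identity $\diam{\cvavector{}{k-1}}{}=\frac{1}{2(n-f)^2}\sum_{p,q\in\H}\norm{\cvavector{p}{k-1}-\cvavector{q}{k-1}}^2$ (Lemma~\ref{lem:diam_equal}) converts those full sums \emph{exactly} into the variance, with no lossy "partial sum $\le$ total variance" step and no free parameters: the constant $\frac{8f(n-f)}{(n-2f)^2}\le\frac{9.88f}{n-f}$ for $n\ge 11f$ falls out directly. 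You would need either to adopt this pairwise strategy or to supply a substantially sharper argument for your mean-centered one; as written, the proposal has a genuine gap at the quantitative heart of the lemma.
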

\begin{proof}
Consider two arbitrary correct nodes $p$ and $q$ in $\H$, and an arbitrary $k \in [\numbercvarounds]$. We first introduce some sets that will be used later in the proof. We denote by \textbf{$F_p$} the set of \byzantine{} nodes whose local parameters are selected by $p$ (using the NNA rule) but not by $q$ in the $k$-th coordination round, i.e., 
\begin{equation*}
    F_p \coloneqq \left\{i \in [n] \setminus \H ~ \vline ~ i \in \filter{p}{k} \setminus \filter{q}{k} \right\}.
\end{equation*}
Similarly, $F_q \coloneqq \left\{i \in [n] \setminus \H ~ \vline ~ i \in \filter{q}{k} \setminus \filter{p}{k} \right\}$.
Recall that by Lemma \ref{lemma:set_size} we have $\card{\filter{p}{k} \setminus \filter{q}{k}} \leq 2f$. We consider an arbitrary subset $\H_p$ comprising correct nodes selected by node $p$ in round $k$ such that $\card{\H_p} + \card{F_p} = 2f$ and $\filter{p}{k} \setminus \filter{q}{k} \subseteq \H_p$, i.e., 
\begin{equation*}
    \H_p:= \left\{i \in \H \cap \filter{p}{k} ~ \vline ~ \card{\H_p} + \card{F_p} = 2f, \, \filter{p}{k} \setminus \filter{q}{k} \subseteq \H_p \right\}.
\end{equation*}
Similarly, $\H_q:= \left\{i \in \H \cap \filter{q}{k} ~ \vline ~ \card{\H_q} + \card{F_q} = 2f, \, \filter{q}{k} \setminus \filter{p}{k} \subseteq \H_q \right\}$.
We let $f_p := \card{F_p}$ and $f_q := \card{F_q}$. Note that $f_p + f_q \leq f$. 
We sort the nodes in $\H_p$ based on the distance of their vectors to $\cvavector{q}{k-1}$ (with ties broken arbitrarily). Let $\H_p[i]$ denote the $i$-th element in $\H_p$ after the sorting. Thus, we have $\norm{\cvavector{q}{k-1}-\cvavector{\H_p[i]}{k-1}}\leq \norm{\cvavector{q}{k-1}-\cvavector{\H_p[i+1]}{k-1}}$. We do the similar operation on $\H_q$. 

By definition of NNA~\eqref{eqn:app_def_nna}, we obtain that
\begin{align*}
    \norm{\cvavector{p}{k}-\cvavector{q}{k}} &= \norm{\frac{1}{n-2f} \sum_{j \in \filter{p}{k}} \cvavector{j}{k-1}-\frac{1}{n-2f} \sum_{j \in \filter{q}{k}} \cvavector{j}{k-1}} \\ 
    & = \frac{1}{n-2f} \norm{ \sum_{j \in F_p} \cvavector{j}{k-1} +\sum_{j \in \H_p} \cvavector{j}{k-1} - \sum_{j \in F_q} \cvavector{j}{k-1} -\sum_{j \in \H_q} \cvavector{j}{k-1}}\\
    &= \frac{1}{n-2f} \left\lVert \left(\sum_{j \in F_p} \cvavector{j}{k-1} - \sum_{j \in [f_p]} \cvavector{\H_q[j]}{k-1}\right) + \left( \sum_{j \in [f+1,2f-f_p]} \cvavector{\H_p[j]}{k-1} - \sum_{j \in [f_p+1,f]} \cvavector{\H_q[j]}{k-1} \right) \right. \\
    &- \left.  \left (\sum_{j \in F_q} \cvavector{j}{k-1} - \sum_{j \in [f_q]} \cvavector{\H_p[j]}{k-1} \right) - \left( \sum_{j \in [f+1,2f-f_q]} \cvavector{\H_q[j]}{k-1} - \sum_{j \in [f_q+1,f]} \cvavector{\H_p[j]}{k-1} \right) \right\lVert.
\end{align*}
Therefore,
\begin{align*}
    &\norm{\cvavector{p}{k}-\cvavector{q}{k}} = \frac{1}{n-2f} \left\lVert \left(\sum_{j \in F_p} (\cvavector{j}{k-1} - \cvavector{p}{k-1}) -\sum_{j \in [f_p]} (\cvavector{\H_q[j]}{k-1}- \cvavector{p}{k-1})\right) \right. \\
    &+ \left( \sum_{j \in [f+1,2f-f_p]} (\cvavector{\H_p[j]}{k-1} - \cvavector{p}{k-1})-\sum_{j \in [f_p+1,f]} (\cvavector{\H_q[j]}{k-1}- \cvavector{p}{k-1}) \right) \\
    &-  \left(\sum_{j \in F_q} (\cvavector{j}{k-1}- \cvavector{q}{k-1}) - \sum_{j \in [f_q]} (\cvavector{\H_p[j]}{k-1} - \cvavector{q}{k-1})  \right) \\
    &- \left. \left( \sum_{j \in [f+1,2f-f_q]} (\cvavector{\H_q[j]}{k-1}  - \cvavector{q}{k-1}) - \sum_{j \in [f_q+1,f]} (\cvavector{\H_p[j]}{k-1} - \cvavector{q}{k-1}) \right) \right\lVert.
\end{align*}
Using triangle inequality above we obtain that
\begin{align}\nonumber
    &\norm{\cvavector{p}{k}-\cvavector{q}{k}}
    \leq \frac{1}{n-2f} \left[ \left(\sum_{j \in F_p} \norm{\cvavector{j}{k-1} - \cvavector{p}{k-1}} + \sum_{j \in [f_p]} \norm{\cvavector{\H_q[j]}{k-1}- \cvavector{p}{k-1}}\right) \right. \\\nonumber
    &+ \left( \sum_{j \in [f+1,2f-f_p]} \norm{\cvavector{\H_p[j]}{k-1} - \cvavector{p}{k-1}} + \sum_{j \in [f_p+1,f]} \norm{\cvavector{\H_q[j]}{k-1}- \cvavector{p}{k-1}} \right) \\\nonumber
    &+  \left (\sum_{j \in F_q} \norm{\cvavector{j}{k-1}- \cvavector{q}{k-1}} + \sum_{j \in [f_q]} \norm{\cvavector{\H_p[j]}{k-1} - \cvavector{q}{k-1}}  \right)\\\nonumber
    &+ \left. \left( \sum_{j \in [f+1,2f-f_q]} \norm{\cvavector{\H_q[j]}{k-1}  - \cvavector{q}{k-1}} + \sum_{j \in [f_q+1,f]} \norm{\cvavector{\H_p[j]}{k-1} - \cvavector{q}{k-1}} \right) \right]
\end{align}
As the right hand side above is a summation over $4f$ terms, we obtain that
\begin{align}\nonumber
    &\norm{\cvavector{p}{k}-\cvavector{q}{k}}^2 \leq \frac{4f}{(n-2f)^2} \left[ \left(\sum_{j \in F_p} \norm{\cvavector{j}{k-1} - \cvavector{p}{k-1}}^2
    +\sum_{j \in [f_p]} \norm{\cvavector{\H_q[j]}{k-1}- \cvavector{p}{k-1}}^2\right) \right. \\ \nonumber
    &+ \left( \sum_{j \in [f+1,2f-f_p]} \norm{\cvavector{\H_p[j]}{k-1} - \cvavector{p}{k-1}}^2
    +\sum_{j \in [f_p+1,f]} \norm{\cvavector{\H_q[j]}{k-1}- \cvavector{p}{k-1}}^2 \right)
     \\ \nonumber
    &+  \left (\sum_{j \in F_q} \norm{\cvavector{j}{k-1}- \cvavector{q}{k-1}}^2 + \sum_{j \in [f_q]} \norm{\cvavector{\H_p[j]}{k-1} - \cvavector{q}{k-1}}^2  \right)\\ \nonumber
    &+ \left. \left( \sum_{j \in [f+1,2f-f_q]} \norm{\cvavector{\H_q[j]}{k-1}  - \cvavector{q}{k-1}}^2
    +\sum_{j \in [f_q+1,f]} \norm{\cvavector{\H_p[j]}{k-1} - \cvavector{q}{k-1}}^2 \right)
    \right]\\ \nonumber
    &\leq \frac{4f}{(n-2f)^2} \left[\sum_{j \in F_p} \norm{\cvavector{j}{k-1} - \cvavector{p}{k-1}}^2
    +\sum_{j \in [f]} \norm{\cvavector{\H_q[j]}{k-1}- \cvavector{p}{k-1}}^2 \right.
     + \sum_{j \in [f+1,2f-f_p]} \norm{\cvavector{\H_p[j]}{k-1} - \cvavector{p}{k-1}}^2\\
    &+ \sum_{j \in F_q} \norm{\cvavector{j}{k-1}- \cvavector{q}{k-1}}^2 + \sum_{j \in [f]} \norm{\cvavector{\H_p[j]}{k-1} - \cvavector{q}{k-1}}^2 
    + \left. \sum_{j \in [f+1,2f-f_q]} \norm{\cvavector{\H_q[j]}{k-1}  - \cvavector{q}{k-1}}^2
    \right].
        \label{eq:distance_bound_total}
\end{align}
Note that $\filter{p}{k}$ contains at least $f_p$ \byzantine{} nodes. Thus there are at most $n-2f-f_p$ correct nodes in $\filter{p}{k}$. This implies that there are at least $f+f_p$ correct nodes that are not selected by node $p$. We define $\H_p'$ to be a subset of $f + f_p$ correct nodes not in $\filter{p}{k}$ that are farthest from $\cvavector{p}{k-1}$. We sort the nodes in $\H_p'$ such that $\norm{\cvavector{p}{k-1}-\cvavector{\H_p'[i]}{k-1}}\leq \norm{\cvavector{p}{k-1}-\cvavector{\H_p'[i+1]}{k-1}}$ for $i = 1, \ldots, f + f_p - 1$. Note that for each \byzantine{} node in $\filter{p}{k}$ there is a correct node in set $\receive{p}{\cvaround} \setminus \filter{p}{k}$. Thus, by definition of $\filter{p}{k}$ in~\eqref{eq:filter} we obtain that 
\begin{equation}
\label{eq:first_distance}
    \sum_{j \in F_p} \norm{\cvavector{j}{k-1} - \cvavector{p}{k-1}}^2 \leq  \sum_{j \in [f+1,f+f_p]} \norm{\cvavector{\H_p'[j]}{k-1} - \cvavector{p}{k-1}}^2
\end{equation}
By definition of $\H_p'$, for each $j \in [f]$, $\norm{\cvavector{p}{k-1}-\cvavector{\H_q[j]}{k-1}}\leq \norm{\cvavector{p}{k-1}-\cvavector{\H_p'[j]}{k-1}}$. Thus,
\begin{equation}
\label{eq:second_distance}
     \sum_{j \in [f]} \norm{\cvavector{\H_q[j]}{k-1} - \cvavector{p}{k-1}}^2 \leq \sum_{j \in [f]} \norm{\cvavector{\H_p'[j]}{k-1} - \cvavector{p}{k-1}}^2.
\end{equation}
From~\eqref{eq:first_distance} and~\eqref{eq:second_distance} we obtain that 
\begin{align*}
    \sum_{j \in F_p} \norm{\cvavector{j}{k-1} - \cvavector{p}{k-1}}^2 + \sum_{j \in [f]} \norm{\cvavector{\H_q[j]}{k-1} - \cvavector{p}{k-1}}^2 \leq \sum_{j \in \H_p'} \norm{\cvavector{j}{k-1} - \cvavector{p}{k-1}}^2.
\end{align*}
Therefore, we have
\begin{align}
\label{eq:whole_distance_p}
    \sum_{j \in F_p} \norm{\cvavector{j}{k-1} - \cvavector{p}{k-1}}^2
    +\sum_{j \in [f]} \norm{\cvavector{\H_q[j]}{k-1}- \cvavector{p}{k-1}}^2 
     + \sum_{j \in [f+1,2f-f_p]} \norm{\cvavector{\H_p[j]}{k-1} - \cvavector{p}{k-1}}^2 \leq \sum_{j \in \H} \norm{\cvavector{j}{k-1} - \cvavector{p}{k-1}}^2.
\end{align}
Similarly,
\begin{align}
\label{eq:whole_distance_q}
    \sum_{j \in F_q} \norm{\cvavector{j}{k-1} - \cvavector{q}{k-1}}^2
    +\sum_{j \in [f]} \norm{\cvavector{\H_p[j]}{k-1}- \cvavector{q}{k-1}}^2 
     + \sum_{j \in [f+1,2f-f_q]} \norm{\cvavector{\H_q[j]}{k-1} - \cvavector{q}{k-1}}^2 \leq \sum_{j \in \H} \norm{\cvavector{j}{k-1} - \cvavector{q}{k-1}}^2.
\end{align}
Substituting from~\eqref{eq:whole_distance_p} and~\eqref{eq:whole_distance_q} in~\eqref{eq:distance_bound_total} we obtain that
\begin{align*}
    \norm{\cvavector{p}{k}-\cvavector{q}{k}}^2 \leq \frac{4f}{(n-2f)^2} \left[\sum_{j \in \H} \norm{\cvavector{j}{k-1} - \cvavector{p}{k-1}}^2 + \sum_{j \in \H} \norm{\cvavector{j}{k-1} - \cvavector{q}{k-1}}^2 \right].
\end{align*}
As the above holds true for an arbitrary pair of correct nodes $p$ and $q$, by averaging over all such possible pairs we obtain that 
\begin{align*}
    \frac{1}{(n-f)^2} \sum_{p,q \in \H} \norm{\cvavector{p}{k}-\cvavector{q}{k}}^2 \leq \frac{8f(n-f)}{(n-2f)^2}\frac{1}{(n-f)^2} \sum_{p,q \in \H} \norm{\cvavector{p}{k-1}-\cvavector{q}{k-1}}^2.
\end{align*}
Recall the notation $\diam{\cdot}{}$. The above implies that
\begin{align*}
    \diam{\cvavector{}{k}}{} \leq \frac{8f(n-f)}{(n-2f)^2} \diam{\cvavector{}{k-1}}{}.
\end{align*}
As $n \geq 11f$, $\frac{(n-f)^2}{(n-2f)^2} \leq \frac{100}{81}$. Using this above proves the lemma, i.e., 
\begin{align*}
    \diam{\cvavector{}{k}}{} \leq \frac{800f}{81(n-f)} \diam{\cvavector{}{k-1}}{} \leq \frac{9.88 f}{n-f}  \diam{\cvavector{}{k-1}}{}.
\end{align*}
\end{proof}

We now present below the proof of Lemma \ref{lemma:cva_bounds}. For convenience, let us recall the lemma below.

\noindent \fcolorbox{black}{white}{
\parbox{0.97\textwidth}{\centering
\begin{replemma}{lemma:cva_bounds}
    Suppose that  $n \geq 11 f$. For any $\numbercvarounds \geq 1$,  the coordination phase of Algorithm~\ref{algo} guarantees $(\alpha,\lambda)$-\reduction{}
  for 
  \begin{equation*}
      \alpha = \left(\frac{9.88 f}{n-f} \right)^K \quad \textnormal{and} \quad \lambda = \frac{9f}{n-f} \cdot \min \left\{ \numbercvarounds , \frac{1}{(1-\sqrt{\alpha})^2} \right\}.
  \end{equation*}
\end{replemma}
}
}

\begin{proof}
The first condition of $(\alpha,\lambda)$-\reduction{} stated in Definition~\ref{def:reduction}, i.e., $\diam{\cvavector{}{\numbercvarounds}}{} \leq \alpha \diam{\cvavector{}{0}}{}$, follows trivially from Lemma \ref{lemma:cva_diameter} for the stated value of $\alpha$. We show below the second condition, i.e., $\norm{\cvaavg{0}-\cvaavg{\numbercvarounds}}^2 \leq \lambda \diam{\cvavector{}{0}}{}$, for the stated $\lambda$.

For doing so, we first consider an arbitrary round $k \in [\numbercvarounds]$. For each correct node $i$, by definition of NNA operator in~\eqref{eqn:app_def_nna} we have that
\begin{align*}
    \cvavector{i}{k} - \cvaavg{k-1} & = \frac{1}{n-2f} \sum_{j\in\filter{i}{k}} \cvavector{j}{k-1} - \frac{1}{n-f} \sum_{j \in \H} \cvavector{j}{k-1} \\
    & = \frac{1}{n-2f} \sum_{j\in\filter{i}{k}} \left( \cvavector{j}{k-1} - \cvavector{i}{k-1}\right) - \frac{1}{n-f} \sum_{j \in \H} \left( \cvavector{j}{k-1} - \cvavector{i}{k-1}\right) .
\end{align*}    
Upon decomposing the right hand side we obtain that
\begin{align*}   
    \cvavector{i}{k} - \cvaavg{k-1} & = \left(\frac{1}{n-2f} - \frac{1}{n-f} \right) \sum_{j\in\filter{i}{k} \cap \H} \left(\cvavector{j}{k-1} -  \cvavector{i}{k-1}\right) + \frac{1}{n-2f} \sum_{j\in\filter{i}{k} \setminus \H} \left( \cvavector{j}{k-1} - \cvavector{i}{k-1} \right) \\
    & - \frac{1}{n-f} \sum_{j\in \H \setminus\filter{i}{k}} \left( \cvavector{j}{k-1} - \cvavector{i}{k-1}\right). 
\end{align*}    
Thus,
\begin{align*}
    \cvavector{i}{k} - \cvaavg{k-1} = \frac{1}{(n-f)(n-2f)} & \left(f \sum_{j\in\filter{i}{k} \cap \H} (\cvavector{j}{k-1}-\cvavector{i}{k-1} ) + (n-f)\sum_{j\in\filter{i}{k} \setminus \H} (\cvavector{j}{k-1}-\cvavector{i}{k-1} ) \right.\\
    & \left.- (n-2f)\sum_{j\in \H \setminus \filter{i}{k}} (\cvavector{j}{k-1}-\cvavector{i}{k-1} ) \right).
\end{align*}
By taking norm on both sides and then applying the triangle inequality we obtain that
\begin{align}\label{eq:average_distance}
    \norm{\cvavector{i}{k} - \cvaavg{k-1}} &\leq \frac{1}{(n-f)(n-2f)} \left( f \sum_{j\in\filter{i}{k} \cap \H} \norm{\cvavector{j}{k-1}-\cvavector{i}{k-1}}  \right.\\
    &+ \left.(n-f)\sum_{j\in\filter{i}{k} \setminus \H} \norm{\cvavector{j}{k-1}-\cvavector{i}{k-1}} +(n-2f)\sum_{j\in \H \setminus \filter{i}{k}} \norm{\cvavector{j}{k-1}-\cvavector{i}{k-1}}\right).\nonumber
\end{align}
Now let $v := \card{\filter{i}{k} \cap \H}$. We then have $v = \card{\filter{i}{k}} + \card{ \H} - \card{\filter{i}{k} \cup \H} \geq n-2f+n-f-n = n -3f$. Also, $\card{\filter{i}{k} \setminus \H} = n-2f-v$ and $\card{\H \setminus \filter{i}{k}} = n -f - v$.
There for the number $A(v)$ of items that are added in \eqref{eq:average_distance} is
\begin{equation}
    A(v) = fv + (n-2f-v)(n-f) + (n-2f)(n-f-v) = 2 (n-2f)(n-f-v),
\end{equation}
which is decreasing in $v$. There the maximum of $A(v)$ is reached for $v = n -3f$ and we have $A(v) \leq 4f(n-2f)$.
Therefore, \eqref{eq:average_distance} yields
\begin{align}\nonumber
    \norm{\cvavector{i}{k} - \cvaavg{k-1}}^2 &\leq \frac{4f(n-2f)}{(n-f)^2(n-2f)^2} \left( f \sum_{j\in\filter{i}{k} \cap \H} \norm{\cvavector{j}{k-1}-\cvavector{i}{k-1}}^2  \right.\\\nonumber
    &+ \left.(n-f)\sum_{j\in\filter{i}{k} \setminus \H} \norm{\cvavector{j}{k-1}-\cvavector{i}{k-1}}^2 +(n-2f)\sum_{j\in \H \setminus \filter{i}{k}} \norm{\cvavector{j}{k-1}-\cvavector{i}{k-1}}^2\right) \\\nonumber
    &\leq \frac{4f(n-2f)}{(n-f)^2(n-2f)^2} \left( f \sum_{j\in \H} \norm{\cvavector{j}{k-1}-\cvavector{i}{k-1}}^2  \right.\\ \nonumber
    &+ \left.(n-f)\sum_{j\in \H} \norm{\cvavector{j}{k-1}-\cvavector{i}{k-1}}^2 +(n-2f)\sum_{j\in \H } \norm{\cvavector{j}{k-1}-\cvavector{i}{k-1}}^2\right)\\ \nonumber
    &\leq \frac{4f(n-2f)(2n-2f)}{(n-f)^2(n-2f)^2}\sum_{j\in \H } \norm{\cvavector{j}{k-1}-\cvavector{i}{k-1}}^2\\
    &= \frac{8f}{(n-f)(n-2f)}\sum_{j\in \H } \norm{\cvavector{j}{k-1}-\cvavector{i}{k-1}}^2. \label{eq:single_distance_average}
\end{align}
But now note that 
\begin{align*}
    \norm{\cvaavg{k}-\cvaavg{k-1}}^2 &= \norm{\frac{1}{n-f}\sum_{i \in \H} \cvavector{i}{k} - \cvaavg{k-1}}^2\\
    &\leq \frac{1}{n-f} \sum_{i \in \H} \norm{\cvavector{i}{k} - \cvaavg{k-1}}^2.
\end{align*}
Combining above with \eqref{eq:single_distance_average} then yields 
\begin{align*}
    \norm{\cvaavg{k}-\cvaavg{k-1}}^2  \leq \frac{8f}{n-2f} \cdot \frac{1}{(n-f)^2}\sum_{i,j \in \H }\norm{\cvavector{j}{k-1}-\cvavector{i}{k-1}}^2. 
\end{align*}
Using the notation $\diam{\cdot}{}$, we then have
\begin{align}\label{eq:single_iteration_noise_bound}
    {\norm{\cvaavg{k}-\cvaavg{k-1}}^2} \leq \frac{8f}{n-2f} \diam{\cvavector{}{k-1}}{} \leq \frac{8f\alpha^{k-1} }{n-2f}\diam{\cvavector{}{0}}{},
\end{align}
where in the second inequality we used Lemma \ref{lemma:cva_diameter}.
Now note that 
\begin{align*}
    {\norm{\cvaavg{\numbercvarounds}-\cvaavg{0}}^2} &= {\norm{\sum_{k \in [\numbercvarounds]}(\cvaavg{k}-\cvaavg{k-1})}^2}\\
    &= {\iprod{\sum_{k \in [\numbercvarounds]}(\cvaavg{k}-\cvaavg{k-1})}{\sum_{k \in [\numbercvarounds]}(\cvaavg{k}-\cvaavg{k-1})}}\\
    &= {\sum_{k,l \in [\numbercvarounds]} \iprod{\cvaavg{k}-\cvaavg{k-1}}{\cvaavg{l}-\cvaavg{l-1}}}.
\end{align*}
By the Cauchy–Schwarz inequality we then have
\begin{align*}
    {\norm{\cvaavg{\numbercvarounds}-\cvaavg{0}}^2} &\leq \sum_{k,l \in [\numbercvarounds]}\sqrt{{\norm{\cvaavg{k}-\cvaavg{k-1}}^2}\cdot{\norm{\cvaavg{l}-\cvaavg{l-1}}^2}}.
\end{align*}
Combining this with \ref{eq:single_iteration_noise_bound}, we obtain that
\begin{align*}
    {\norm{\cvaavg{\numbercvarounds}-\cvaavg{0}}^2} &\leq \frac{8f}{n-2f} \diam{\cvavector{}{0}}{} \sum_{k,l \in [\numbercvarounds]}\sqrt{\alpha^{k-1}\alpha^{l-1}}\\
    &= \frac{8f}{n-2f} \diam{\cvavector{}{0}}{} \sum_{k \in [\numbercvarounds]} \left(\sqrt{\alpha}\right)^{k-1}\sum_{l \in [\numbercvarounds]} \left(\sqrt{\alpha}\right)^{l-1}.
\end{align*}
Now since $\alpha < 1$ we have $\sum_{k \in [\numbercvarounds]} \left(\sqrt{\alpha}\right)^{k-1} \leq \numbercvarounds$, and thus
\begin{align}
\label{eq:noise_first_inequality}
    {\norm{\cvaavg{\numbercvarounds}-\cvaavg{0}}^2} &\leq \frac{8f\numbercvarounds^2}{n-2f} \diam{\cvavector{}{0}}{}.
\end{align}
Moreover, we have 
\begin{align*}
    \sum_{k \in [\numbercvarounds]} \left(\sqrt{\alpha}\right)^{k-1} \leq \sum_{k =1}^\infty \left(\sqrt{\alpha}\right)^{k-1} = \frac{1}{1-\sqrt{\alpha}},
\end{align*}
and thus 
\begin{align}
\label{eq:noise_second_inequality}
    {\norm{\cvaavg{\numbercvarounds}-\cvaavg{0}}^2} &\leq \frac{8f}{n-2f}\cdot\frac{1}{(1-\sqrt{\alpha})^2} \diam{\cvavector{}{0}}{}.
\end{align}
Combining \eqref{eq:noise_first_inequality} and \eqref{eq:noise_second_inequality} and noting that $\frac{8f}{n-2f} \leq \frac{9f}{n-f}$ for $n\geq11f$ proves the lemma.
\end{proof}

\subsection{Proof of Lemma~\ref{lem:drift}}
\label{app:lem_drift}

We recall the lemma below.

\noindent \fcolorbox{black}{white}{
\parbox{0.97\textwidth}{\centering
\begin{replemma}{lem:drift}
Suppose that assumptions~\ref{asp:lip},~\ref{asp:bnd_var}, and \ref{asp:heter} hold true. Consider Algorithm~\ref{algo} with $\gamma \leq \frac{1 - \alpha}{L \sqrt{27 \alpha \left(1 + \alpha \right)}}$, and $\beta>0$. Suppose that the coordination phase satisfies $(\alpha,\lambda)$-\reduction{} for $\alpha < 1$. For each $t \in [T]$, we obtain that
\begin{align*}
    \expect{\diam{\model{}{}}{t}} &\leq \coeffalpha\gamma^2 \left( \sigma^2 \frac{1 - \beta}{1 + \beta} + 3 \heter^2 \right),
\end{align*}
and
\begin{align*}
    \expect{\diam{\mmt{}{}}{t}} \leq3 \sigma^2 \left(\frac{1 - \beta}{1 + \beta} \right) + 9 \heter^2 + 9 L^2 \gamma^2 \coeffalpha \left( \sigma^2 \frac{1 - \beta}{1 + \beta} + 3 \heter^2 \right),
\end{align*}
where 
\begin{align*}
    \coeffalpha:= \frac{18 \alpha (1 + \alpha)}{(1 - \alpha)^2}.
\end{align*}
\end{replemma}
}}
\begin{proof}
Consider an arbitrary step $t \in [T]$. The proof comprises 3 steps. \\

{\bf Step i.} In this step, we analyse the growth of $\expect{\diam{\model{}{}}{t}}$. From Algorithm \ref{algo} recall that for all $i \in \H$, we have $\model{i}{t+\nicefrac{1}{2}} = \model{i}{t} - \gamma \mmt{i}{t}$. As $(x + y)^2 \leq (1 + c) x^2 + (1 + \nicefrac{1}{c}) y^2$ for any $c > 0$, we obtain for all $i, \, j \in \H$ that
\begin{align*}
    \expect{\norm{\model{i}{t+\nicefrac{1}{2}} - \model{j}{t+\nicefrac{1}{2}} }^2} & \leq \expect{\norm{\model{i}{t} - \model{j}{t} - \gamma \left( \mmt{i}{t} - \mmt{j}{t} \right)}^2} \\
    & \leq (1 + c) \expect{\norm{\model{i}{t} - \model{j}{t}}^2} + \left(1 + \frac{1}{c} \right) \gamma^2 \expect{\norm{\mmt{i}{t} - \mmt{j}{t}}^2}.
\end{align*}
Thus, by definition of notation $\diam{*}{t}$ and using Lemma~\ref{lem:diam_equal}, we have
\begin{align}
    \expect{\diam{\model{}{}}{t+\nicefrac{1}{2}}} \leq (1 + c) \expect{\diam{\model{}{}}{t}} + \left(1 + \frac{1}{c} \right) \gamma^2 \expect{\diam{\mmt{}{}}{t}}. \label{eqn:drift_before_alpha}
\end{align}
Recall that, the coordination phase of Algorithm~\ref{algo} satisfies $(\alpha,\lambda)$-reduction. Thus, for all $t$, we have $\diam{\model{}{}}{t+1} \leq \alpha \diam{\model{}{}}{t+\nicefrac{1}{2}}$. Substituting from above we obtain that
\begin{align}
   \expect{\diam{\model{}{}}{t+1}} \leq (1 + c) \alpha  \expect{\diam{\model{}{}}{t}} + \left(1 + \frac{1}{c} \right) \alpha \gamma^2 \expect{\diam{\mmt{}{}}{t}}. \label{eqn:drift_1}
\end{align}

{\bf Step ii.} In this step, we analyse the growth of $\expect{\diam{\mmt{}{}}{t}}$. From the definition of momentum in~\eqref{eqn:mmt_i}, we obtain for all $i \, j \in \H$ that
\begin{align*}
    & \expect{\norm{\mmt{i}{t} - \mmt{j}{t}}^2} = \expect{\norm{ (1 - \beta) \sum_{s = 1}^t \beta^{t - s} \left( \gradient{i}{s} - \gradient{j}{s} \right) }^2} \\
    & = (1 - \beta)^2 \expect{\norm{ \sum_{s = 1}^t \beta^{t - s} \left(\gradient{i}{s} - \localgrad{i}\left( \model{i}{s}\right) + \localgrad{i}\left( \model{i}{s}\right) - \localgrad{j}\left( \model{j}{s}\right) + \localgrad{j}\left( \model{j}{s}\right) - \gradient{j}{s} \right) }^2}.
\end{align*}

Using the fact that $(x + y + z)^2 \leq 3x^2 + 3 y^2 + 3 z^2$, from above we obtain that
\begin{align}
    \expect{\norm{\mmt{i}{t} - \mmt{j}{t}}^2} &\leq  3 (1 - \beta)^2 \expect{\norm{ \sum_{s = 1}^t \beta^{t - s} \left(\gradient{i}{s} - \localgrad{i}\left( \model{i}{s}\right) \right)}^2} \nonumber \\ &+ 3 (1 - \beta)^2 \expect{\norm{ \sum_{s = 1}^t \beta^{t - s} \left(\gradient{j}{s} - \localgrad{j}\left( \model{j}{s}\right) \right)}^2} \nonumber \\
    & + 3 (1 - \beta)^2 \expect{\norm{ \sum_{s = 1}^t \beta^{t - s} \left( \localgrad{i}\left( \model{i}{s}\right) - \localgrad{j}\left( \model{j}{s}\right) \right)}^2}. \label{eqn:before_At_i}
\end{align}

Consider an arbitrary $i \in \H$, and denote
\begin{align}
    A_t \coloneqq \expect{\norm{ \sum_{s = 1}^t \beta^{t - s} \left(\gradient{i}{s} - \localgrad{i}\left( \model{i}{s}\right) \right)}^2}. \label{eqn:define_A_t}
\end{align}
Note that 
\begin{align*}
    A_t &= \expect{\norm{ \sum_{s = 1}^t \beta^{t - s} \left(\gradient{i}{s} - \localgrad{i}\left( \model{i}{s}\right) \right)}^2}\\ & = \expect{\norm{ \sum_{s = 1}^{t - 1} \beta^{t - s} \left(\gradient{i}{s} - \localgrad{i}\left( \model{i}{s}\right) \right) + \left( \gradient{i}{t} - \localgrad{i}\left( \model{i}{t}\right) \right)}^2}. 
\end{align*}
From above we obtain that 
\begin{align*}
    A_t &= \expect{\norm{ \sum_{s = 1}^{t - 1} \beta^{t - s} \left(\gradient{i}{s} - \localgrad{i}\left( \model{i}{s}\right) \right)}^2} + \expect{\norm{\gradient{i}{t} - \localgrad{i}\left( \model{i}{t}\right)}^2} \\ &+ \expect{\iprod{\sum_{s = 1}^{t - 1} \beta^{t - s} \left(\gradient{i}{s} - \localgrad{i}\left( \model{i}{s}\right) \right)}{\gradient{i}{t} - \localgrad{i}\left( \model{i}{t}\right)}}.
\end{align*}
Recall that in the above, $\expect{\cdot} = \condexpect{1}{ \ldots \condexpect{t}{\cdot}}$. Thus, due to Assumption~\ref{asp:bnd_var}, we have $\expect{\norm{\gradient{i}{t} - \localgrad{i} Q\left( \model{i}{t}\right)}^2} \leq \sigma^2$. Using this above we obtain that
\begin{align}
    A_t &\leq \expect{\norm{ \sum_{s = 1}^{t - 1} \beta^{t - s} \left(\gradient{i}{s} - \localgrad{i}\left( \model{i}{s}\right) \right)}^2} + \sigma^2 \nonumber \\ &+ \expect{\iprod{\sum_{s = 1}^{t - 1} \beta^{t - s} \left(\gradient{i}{s} - \localgrad{i}\left( \model{i}{s}\right) \right)}{\gradient{i}{t} - \localgrad{i}\left( \model{i}{t}\right)}}. \label{eqn:At_label_1}
\end{align}
Also,  by tower rule we have
\begin{align*}
    &\expect{\iprod{\sum_{s = 1}^{t - 1} \beta^{t - s} \left(\gradient{i}{s} - \localgrad{i}\left( \model{i}{s}\right) \right)}{\gradient{i}{t} - \localgrad{i}\left( \model{i}{t}\right)}} = \\
    &\condexpect{1}{ \ldots \condexpect{t}{\iprod{\sum_{s = 1}^{t - 1} \beta^{t - s} \left(\gradient{i}{s} - \localgrad{i}\left( \model{i}{s}\right) \right)}{\gradient{i}{t} - \localgrad{i}\left( \model{i}{t}\right)}} }.
\end{align*}
By the definition of conditional expectation $\condexpect{t}{\cdot}$, we have
\begin{align*}
    \condexpect{t}{\iprod{\sum_{s = 1}^{t - 1} \beta^{t - s} \left(\gradient{i}{s} - \localgrad{i}\left( \model{i}{s}\right) \right)}{\gradient{i}{t} - \localgrad{i}\left( \model{i}{t}\right)}} = \\  \iprod{\sum_{s = 1}^{t - 1} \beta^{t - s} \left(\gradient{i}{s} - \localgrad{i}\left( \model{i}{s}\right) \right)}{\condexpect{t}{\gradient{i}{t} - \localgrad{i}\left( \model{i}{t}\right)}}.
\end{align*}
By Assumption~\ref{asp:bnd_var}, we obtain that $\condexpect{t}{\gradient{i}{t} - \localgrad{i}\left( \model{i}{t}\right)} = \localgrad{i}\left( \model{i}{t}\right) - \localgrad{i}\left( \model{i}{t}\right) = 0$. Using this above implies that
\begin{align*}
    \expect{\iprod{\sum_{s = 1}^{t - 1} \beta^{t - s} \left(\gradient{i}{s} - \localgrad{i}\left( \model{i}{s}\right) \right)}{\gradient{i}{t} - \localgrad{i}\left( \model{i}{t}\right)}}  = 0.
\end{align*}
Substituting from above in~\eqref{eqn:At_label_1} we obtain that
\begin{align*}
    A_t  &\leq \expect{\norm{ \sum_{s = 1}^{t - 1} \beta^{t - s} \left(\gradient{i}{s} - \localgrad{i}\left( \model{i}{s}\right) \right)}^2} + \sigma^2 \\ &= \beta^2 \expect{\norm{ \sum_{s = 1}^{t - 1} \beta^{t - 1 - s} \left(\gradient{i}{s} - \localgrad{i}\left( \model{i}{s}\right) \right)}^2} + \sigma^2 \\ &= \beta^2 A_{t-1} + \sigma^2.
\end{align*}
Note from the definition of $A_t$ in~\eqref{eqn:define_A_t} that, under Assumption~\ref{asp:bnd_var}, $A_1 \leq \sigma^2$. Thus, from above we obtain that
\begin{align*}
    A_t \coloneqq \expect{\norm{ \sum_{s = 1}^t \beta^{t - s} \left(\gradient{i}{s} - \localgrad{i}\left( \model{i}{s}\right) \right)}^2} \leq \sigma^2 \sum_{s = 0}^{t-1} \beta^{2s} \leq \frac{\sigma^2}{1 - \beta^2}
\end{align*}
Substituting from above in~\eqref{eqn:before_At_i}, computing the pair-wise average, and using Lemma~\ref{lem:diam_equal}, we obtain that
\begin{align}\nonumber
    &\expect{\diam{\mmt{}{}}{t}} = \frac{1}{2(n-f)^2} \sum_{i,j \in \H} \expect{\norm{\mmt{i}{t} - \mmt{j}{t}}^2} \\ &\leq 3 (1 - \beta)^2 \, \frac{\sigma^2 }{1 - \beta^2}+\frac{3 (1 - \beta)^2 }{2(n-f)^2} \sum_{i,j \in \H} \expect{\norm{ \sum_{s = 1}^t \beta^{t - s} \left(\localgrad{i}\left( \model{i}{s}\right) - \localgrad{j}\left( \model{j}{s}\right) \right)}^2} . \label{eqn:after_At_i}
\end{align}
Let us denote
\begin{align}
    C_t \coloneqq \frac{1}{(n-f)^2} \sum_{i,j \in \H} \expect{\norm{ \sum_{s = 1}^t \beta^{t - s} \left(\localgrad{i}\left( \model{i}{s}\right) - \localgrad{j}\left( \model{j}{s}\right) \right)}^2}. \label{eqn:def_Ct_i}
\end{align}
Now note that
\begin{align*}
    &C_t = \frac{1}{(n-f)^2} \sum_{i,j \in \H} \expect{\norm{ \beta \sum_{s = 1}^{t-1} \beta^{t - 1 - s} \left(\localgrad{i}\left( \model{i}{s}\right) - \localgrad{j}\left( \model{j}{s}\right) \right) + \left( \localgrad{i}\left( \model{i}{t}\right) - \localgrad{j}\left( \model{j}{t} \right) \right)}^2}
\end{align*}
By Jensen's inequality, we obtain that
\begin{align*}
    C_t &\leq \frac{1}{(n-f)^2} \sum_{i,j \in \H} \beta \expect{\norm{\sum_{s = 1}^{t-1} \beta^{t - 1 - s} \left(\localgrad{i}\left( \model{i}{s}\right) - \localgrad{j}\left( \model{j}{s}\right) \right)}^2} \\ &+ \frac{1}{(n-f)^2} \sum_{i,j \in \H} (1 - \beta) \expect{\norm{\frac{1}{1 - \beta} \, \left(\localgrad{i}\left( \model{i}{t}\right) - \localgrad{j}\left( \model{j}{t} \right) \right)}^2} \\
    &= \beta C_{t-1} + \frac{1}{(n-f)^2(1-\beta)} \sum_{i,j \in \H} \expect{\norm{ \localgrad{i}\left( \model{i}{t}\right) - \localgrad{j}\left( \model{j}{t} \right) }^2}.
\end{align*}
Now using the fact that $(x + y + z)^2 \leq 3x^2 + 3 y^2 + 3 z^2$, we obtain that
\begin{align*}
    &\norm{\localgrad{i}\left( \model{i}{t}\right) - \localgrad{j}\left( \model{j}{t} \right) }^2 \\ &= \norm{\localgrad{i}\left( \model{i}{t}\right) - \localgrad{i}\left( \avgmodel{t} \right) + \localgrad{i}\left( \avgmodel{t}\right) - \localgrad{j}\left( \avgmodel{t} \right) + \localgrad{j}\left( \avgmodel{t}\right) - \localgrad{j}\left( \model{j}{t} \right)}^2 \\ 
    &\leq 3  \norm{\localgrad{i}\left( \model{i}{t}\right) - \localgrad{i}\left( \avgmodel{t} \right) }^2 + 3 \norm{\localgrad{i}\left( \avgmodel{t}\right) - \localgrad{j}\left( \avgmodel{t} \right) }^2 + 3 \norm{\localgrad{j}\left( \avgmodel{t}\right) - \localgrad{j}\left( \model{j}{t} \right)}^2.
\end{align*}

By Assumption~\ref{asp:lip}, we have that $\norm{\localgrad{i}\left( \model{i}{t}\right) - \localgrad{i}\left( \avgmodel{t} \right) }^2 \leq L^2 \norm{\model{i}{t} - \avgmodel{t}}^2$ and $\norm{\localgrad{j}\left( \model{j}{t}\right) - \localgrad{j}\left( \avgmodel{t} \right) }^2 \leq L^2 \norm{\model{j}{t} - \avgmodel{t}}^2$. Using this above we obtain that
\begin{align}\nonumber
    C_t &\leq \beta C_{t - 1} +  \frac{3L^2}{(n-f)^2(1-\beta)} \sum_{i,j \in \H} \left(\expect{\norm{\model{i}{t} - \avgmodel{t}}^2} +\expect{\norm{\model{j}{t} - \avgmodel{t}}^2} \right)\\ &+ \frac{3}{(n-f)^2(1-\beta)} \sum_{i,j \in \H} \norm{\localgrad{i}\left( \avgmodel{t}\right) - \localgrad{j}\left( \avgmodel{t} \right) }^2. \label{eq:new_heter_step}
\end{align}
Now by Lemma~\ref{lem:diam_equal} and Assumption~\ref{asp:heter}, we have 
\begin{align*}
    &\frac{1}{(n-f)^2}\sum_{i,j \in \H} \norm{\localgrad{i}\left( \avgmodel{t}\right) - \localgrad{j}\left( \avgmodel{t} \right) }^2 = \frac{2}{n-f} \sum_{i \in \H} \norm{\localgrad{i}\left( \avgmodel{t}\right) - \localgrad{\H}\left( \avgmodel{t} \right)}^2 \leq 2\heter^2.
\end{align*}
Combining above with~\eqref{eq:new_heter_step}, we obtain that
\begin{align}\nonumber
    C_t &\leq \beta C_{t - 1} + \frac{6L^2}{(n-f)(1-\beta)} \sum_{i \in \H} \expect{\norm{\model{i}{t} - \avgmodel{t}}^2} + \frac{12\heter^2}{1-\beta}\\
    &= \beta C_{t - 1} + \frac{6L^2}{(1-\beta)} \expect{\diam{\model{}{}}{t}} + \frac{6\heter^2}{1-\beta}.
    \label{eq:c_tstep}
\end{align}

Now note that by definition, $C_0 = 0$. Thus, from above we obtain that
\begin{align*}
    C_t &\leq \frac{6L^2}{1 - \beta} \sum_{s = 1}^t \beta^{t - s} \diam{\model{}{}}{s} + \frac{6\heter^2}{1-\beta} \sum_{s = 1}^t \beta^{t - s} \\
    &\leq \frac{6L^2}{1 - \beta} \sum_{s = 1}^t \beta^{t - s} \diam{\model{}{}}{s} + \frac{6\heter^2}{1-\beta} \sum_{s = 0}^\infty \beta^{s} \\
    &= \frac{6L^2}{1 - \beta} \sum_{s = 1}^t \beta^{t - s} \diam{\model{}{}}{s} + \frac{6\heter^2}{(1-\beta)^2}.
\end{align*}
Substituting from above in~\eqref{eqn:after_At_i} we obtain that
\begin{align}
    \expect{\diam{\mmt{}{}}{t}} \leq 3 \sigma^2 \left(\frac{1 - \beta}{1 + \beta} \right) + 9 (1 - \beta) L^2 \sum_{s = 1}^t \beta^{t - s} \expect{\diam{\model{}{}}{s}} + 9 \heter^2 .  \label{eqn:drift_2}
\end{align}
Recall from~\eqref{eqn:drift_1} that
\begin{align}
   \expect{\diam{\model{}{}}{t+1}} \leq (1 + c) \alpha  \expect{\diam{\model{}{}}{t}} + \left(1 + \frac{1}{c} \right) \alpha \gamma^2 \expect{\diam{\mmt{}{}}{t}}. \label{eqn:drift_1_1}
\end{align}
In the next and the final step we use the results derived in~\eqref{eqn:drift_2} and~\eqref{eqn:drift_1_1} above to conclude the proof.\\

{\bf Step iii.} 
Now in~\eqref{eqn:drift_1_1} we let
\begin{align}
    c = \frac{1 - \alpha}{ 2 \alpha} > 0.
\end{align}
Substituting this in~\eqref{eqn:drift_1_1} we obtain that
\begin{align*}
   \expect{\diam{\model{}{}}{t+1}} \leq \frac{1 + \alpha}{2} \expect{\diam{\model{}{}}{t}} + \left(\frac{1 + \alpha}{1 - \alpha}\right) \alpha \gamma^2 \expect{\diam{\mmt{}{}}{t}}.
\end{align*}
Substituting from~\eqref{eqn:drift_2} above we obtain that
\begin{align} \nonumber
   \expect{\diam{\model{}{}}{t+1}} & \leq \left(\frac{1 + \alpha}{2}\right) \expect{\diam{\model{}{}}{t}} \\ \nonumber &+ \frac{\alpha (1 + \alpha)}{1 - \alpha} \gamma^2 \left( 3 \sigma^2 \left(\frac{1 - \beta}{1 + \beta} \right) + 9 \heter^2 + 9 (1 - \beta) L^2 \sum_{s = 1}^t \beta^{t - s} \expect{\diam{\model{}{}}{s}}  \right)  \nonumber \\ \nonumber
   & = \left(\frac{1 + \alpha}{2}\right) \expect{\diam{\model{}{}}{t}} + \frac{ \alpha (1 + \alpha)}{1 - \alpha} \left(3 \sigma^2 \frac{1 - \beta}{1 + \beta}  + 9 \heter^2 \right) \\  &+ 9 (1 - \beta) \gamma^2 L^2 \left( \frac{\alpha (1 + \alpha)}{1 - \alpha} \right) \sum_{s = 1}^t \beta^{t - s} \expect{\diam{\model{}{}}{s}} . \label{eqn:before_gamma_L}
\end{align}
\textcolor{black}{As we assume that $\gamma \leq  \frac{1 - \alpha}{L \sqrt{27 \alpha \left(1 + \alpha \right)}}$}, we have
\begin{align*}
    \gamma^2 L^2 \leq \frac{\left( 1 - \alpha\right)^2}{27 \alpha \left( 1 + \alpha \right)}.
\end{align*}
Using the above in~\eqref{eqn:before_gamma_L}, we obtain that
\begin{align*}
   \expect{\diam{\model{}{}}{t+1}} &\leq \left(\frac{1 + \alpha}{2}\right) \expect{\diam{\model{}{}}{t}} + \frac{ \alpha (1 + \alpha)  \gamma^2}{1 - \alpha} \left( \frac{1 - \beta}{1 + \beta} 3\sigma^2 + 9 \heter ^2  \right) \\& + \left(\frac{1 - \alpha}{3} \right)(1 - \beta) \sum_{s = 1}^t \beta^{t - s} \expect{\diam{\model{}{}}{s}} .
\end{align*}
For convenience, we denote
\begin{align*}
    D =  \frac{ \alpha (1 + \alpha) \gamma^2}{1 - \alpha} \left(3 \sigma^2 \frac{1 - \beta}{1 + \beta} + 9\heter^2\right).
\end{align*}
Thus,
\begin{align}
   \expect{\diam{\model{}{}}{t+1}} \leq \left(\frac{1 + \alpha}{2}\right) \expect{\diam{\model{}{}}{t}} + D + \left(\frac{1 - \alpha}{3} \right) (1 - \beta) \sum_{s = 1}^t \beta^{t - s} \expect{\diam{\model{}{}}{s}} . \label{eqn:after_gamma_L}
\end{align}
As~\eqref{eqn:after_gamma_L} above holds true for an arbitrary $t \in [T]$, we reason below by mathematical induction that for all $t$,
\begin{align}
     \expect{\diam{\model{}{}}{t}} \leq \frac{6 D}{1 - \alpha}. \label{eqn:induction_assum_1}
\end{align}
First, note that, as $\model{i}{1} = \model{j}{1}$ for all $i, \, j \in \H$, the above is trivially true for $t = 1$. Second, let us assume that~\eqref{eqn:induction_assum_1} is true for all $t \leq k$. Then, from~\eqref{eqn:after_gamma_L} we obtain that
\begin{align*}
    \expect{\diam{\model{}{}}{k+1}} & \leq \left(\frac{1 + \alpha}{2}\right) \frac{6 D}{1 - \alpha} + D + \left(\frac{1 - \alpha}{3} \right) (1 - \beta) \sum_{s = 1}^k \beta^{k - s} \frac{6 D}{1 - \alpha} \\
    & = \left(\frac{1 + \alpha}{2}\right) \frac{6 D}{1 - \alpha} + D + 2 D =  \frac{6 D}{1 - \alpha}.
\end{align*}
Thus,~\eqref{eqn:induction_assum_1} holds true for $k+1$. Therefore, for all $t \in [T]$, we have
\begin{align*}
    \expect{\diam{\model{}{}}{t}} &\leq  \frac{ \alpha (1 + \alpha) \gamma^2}{(1 - \alpha)^2} \left(18 \sigma^2 \frac{1 - \beta}{1 + \beta} + 54\heter^2\right)
\end{align*}
We now define $\coeffalpha:= \frac{18 \alpha (1 + \alpha)}{(1 - \alpha)^2}$. We then have 
\begin{align*}
     \expect{\diam{\model{}{}}{t}} &\leq \coeffalpha\gamma^2 \left( \sigma^2 \frac{1 - \beta}{1 + \beta} + 3 \heter^2 \right).
\end{align*}

Combining this with \eqref{eqn:drift_2}, we then obtain
\begin{align*}
     \expect{\diam{\mmt{}{}}{t}} &\leq 3 \sigma^2 \left(\frac{1 - \beta}{1 + \beta} \right) + 9 (1 - \beta) L^2 \sum_{s = 1}^t \beta^{t - s} \expect{\diam{\model{}{}}{s}} + 9 \heter^2\\
      &\leq  3 \sigma^2 \left(\frac{1 - \beta}{1 + \beta} \right) + 9 \heter^2 + 9 (1 - \beta) L^2 \sum_{s = 1}^t \beta^{t - s} \left( \coeffalpha\gamma^2 \left( \sigma^2 \frac{1 - \beta}{1 + \beta} + 3 \heter^2 \right) \right) \\
       &\leq  3 \sigma^2 \left(\frac{1 - \beta}{1 + \beta} \right) + 9 \heter^2 + 9 (1 - \beta) L^2 \sum_{s = 0}^\infty \beta^{s} \left( \coeffalpha\gamma^2 \left( \sigma^2 \frac{1 - \beta}{1 + \beta} + 3 \heter^2 \right) \right) \\
      & =  3 \sigma^2 \left(\frac{1 - \beta}{1 + \beta} \right) + 9 \heter^2 + 9 L^2 \gamma^2 \coeffalpha \left( \sigma^2 \frac{1 - \beta}{1 + \beta} + 3 \heter^2 \right) 
\end{align*}
\end{proof}

\subsection{Proof of Lemma \ref{lem:dev}}
\label{app:lem_deviation}

Recall that 
\begin{equation*}
    \AvgGrad{t} \coloneqq \frac{1}{n-f}\sum_{i \in \H} \localgrad{i} (\model{i}{t}),
\end{equation*}
and that
\begin{equation}
\label{eq:deviation_definition}
    \dev{t} \coloneqq \AvgMmt{t} - \AvgGrad{t}.
\end{equation}
Also, we recall the lemma below.

\noindent \fcolorbox{black}{white}{
\parbox{0.97\textwidth}{\centering
\begin{replemma}{lem:dev}
Suppose that assumptions~\ref{asp:lip} and~\ref{asp:bnd_var} hold true. Consider Algorithm~\ref{algo}. For all $t \in [T]$, we obtain that
\begin{align*}
    \expect{\norm{\dev{t+1}}^2} & \leq 
      \beta^2 (1 + 4L \gamma)(1 + \frac{9}{8} L \gamma) \expect{\norm{\dev{t}}^2} + \frac{3}{4} \beta^2 L \gamma (1 + 4L \gamma) \expect{\norm{\nabla \avgloss \left( \avgmodel{t} \right)}^2}\\
    &+9\beta^2L^2 \left( 1 + \frac{1}{4\gamma L }\right) \left(\expect{\diam{\model{}{}}{t+1}} + \expect{\diam{\model{}{}}{t}}+\expect{\norm{\avgmodel{t+1}-\avgmodel{t+1/2}}^2}\right) \\
    &+ \frac{9}{4} \beta^2 L \gamma \left(1 + 4L \gamma \right)  L^2\expect{\diam{\model{}{}}{t}} + \frac{(1-\beta)^2 \sigma^2}{n-f}.
\end{align*}
\end{replemma}
}}

\begin{proof}
   Consider an arbitrary step $t\geq1$. Recall from Algorithm \ref{algo} that
   \begin{align*}
       \AvgMmt{\iteration+1} := \mmtcoefficient \AvgMmt{\iteration} + (1 - \mmtcoefficient) \AvgNoisyGrad{\iteration+1}.
   \end{align*}
   Combining this with \eqref{eq:deviation_definition} we obtain that
   \begin{equation*}
       \dev{t+1} = \beta \AvgMmt{t} + (1-\beta) \AvgNoisyGrad{t+1}  -\AvgGrad{t+1}.
   \end{equation*}
   Adding and subtracting $\beta \AvgGrad{t}$ and $\beta \AvgGrad{t+1}$ on the R.H.S.~we then obtain that
   \begin{align*}
       \dev{t+1} = \beta \left(\AvgMmt{t}- \AvgGrad{t} \right) + \beta \left( \AvgGrad{t}-\AvgGrad{t+1} \right) + (1-\beta) \left(\AvgNoisyGrad{t+1}-\AvgGrad{t+1} \right),
   \end{align*}
   which yields
   \begin{align*}
       \norm{\dev{t+1}}^2 &= \beta^2 \norm{\AvgMmt{t}- \AvgGrad{t}}^2 + \beta^2 \norm{\AvgGrad{t}-\AvgGrad{t+1}}^2 + (1-\beta)^2 \norm{\AvgNoisyGrad{t+1}-\AvgGrad{t+1}}^2\\ &+ 2 \beta^2 \iprod{\AvgMmt{t}- \AvgGrad{t}}{\AvgGrad{t}-\AvgGrad{t+1}}
       + 2 \beta (1-\beta) \iprod{\AvgNoisyGrad{t+1}-\AvgGrad{t+1}}{\AvgMmt{t}- \AvgGrad{t}}\\ &+ 2 \beta (1-\beta) \iprod{\AvgGrad{t}-\AvgGrad{t+1}}{\AvgNoisyGrad{t+1}-\AvgGrad{t+1}}.
   \end{align*}
   Applying the conditional expectation $\condexpect{t+1}{\cdot}$ on both sides, and noting that $\AvgMmt{t}$, $\AvgGrad{t}$, and $\AvgGrad{t+1}$ are deterministic values when given $\P_{t+1}$, we obtain that
   \begin{align*}
       \condexpect{t+1}{\norm{\dev{t+1}}^2} &= \beta^2 \condexpect{t+1}{\norm{\dev{t}}^2} + \beta^2 \norm{\AvgGrad{t}-\AvgGrad{t+1}}^2 + (1-\beta)^2 \condexpect{t+1}{\norm{\AvgNoisyGrad{t+1}-\AvgGrad{t+1}}^2}\\ &+2\beta^2 \iprod{\dev{t}}{\AvgGrad{t}-\AvgGrad{t+1}}
       + 2 \beta (1-\beta) \iprod{\condexpect{t+1}{\AvgNoisyGrad{t+1}-\AvgGrad{t+1}}}{\AvgMmt{t}- \AvgGrad{t}}\\
       &+ 2 \beta (1-\beta) \iprod{\AvgGrad{t}-\AvgGrad{t+1}}{\condexpect{t+1}{\AvgNoisyGrad{t+1}-\AvgGrad{t+1}}}.
   \end{align*}
   Owing to Assumption~\ref{asp:bnd_var}, $\condexpect{t+1}{\gradient{i}{t+1}} = \localgrad{i}\left(\model{i}{t+1}\right)$ for all $i \in \H$. Thus, we have $\condexpect{t+1}{\AvgNoisyGrad{t+1}-\AvgGrad{t+1}} = 0$. Therefore, 
   \begin{align}
   \label{eq:deviation_bound}
       &\condexpect{t+1}{\norm{\dev{t+1}}^2}=  \beta^2 {\norm{\dev{t}}^2} + \beta^2 \norm{\AvgGrad{t}-\AvgGrad{t+1}}^2 + (1-\beta)^2 \condexpect{t+1}{\norm{\AvgNoisyGrad{t+1}-\AvgGrad{t+1}}^2} +2\beta^2 \iprod{\dev{t}}{\AvgGrad{t}-\AvgGrad{t+1}}.
   \end{align}
     Note that
   \begin{align}\nonumber
       (1-\beta)^2 \condexpect{t+1}{\norm{\AvgNoisyGrad{t+1}-\AvgGrad{t+1}}^2} & = (1-\beta)^2 \condexpect{t+1}{\norm{\frac{1}{n-f}\sum_{i \in \H} \left(\gradient{i}{t+1} - \localgrad{i}\left(\model{i}{t+1}\right) \right)}^2} \nonumber \\
       &= \frac{(1-\beta)^2}{(n-f)^2} \sum_{i,j \in \H} \condexpect{t+1}{\iprod{\gradient{i}{t+1} -\localgrad{i}\left(\model{i}{t+1}\right)}{\gradient{j}{t+1} - \localgrad{j}\left(\model{j}{t+1}\right)}} \nonumber \\
       &\stackrel{(a)}{=} \frac{(1-\beta)^2}{(n-f)^2} \sum_{i \in \H} \condexpect{t+1}{\norm{\gradient{i}{t+1} - \localgrad{i}\left(\model{i}{t+1}\right)}^2} \stackrel{(b)}{\leq} \frac{(1-\beta)^2 \sigma^2}{n-f},
       \label{eq:deviation_third_term}
   \end{align}  
   where (a) uses the facts that the gradient estimations are independent and $\condexpect{t+1}{\gradient{i}{t+1}} - \localgrad{i}\left(\model{i}{t+1}\right) = 0$, and (b) is due to Assumption~\ref{asp:bnd_var}.
   Substituting from~\eqref{eq:deviation_third_term} in~\eqref{eq:deviation_bound} we obtain that (upon applying Cauchy-Schwartz inequality)
   \begin{align*}
       \condexpect{t+1}{\norm{\dev{t+1}}^2} &\leq \nonumber \beta^2 {\norm{\dev{t}}^2} + \beta^2 \norm{\AvgGrad{t}-\AvgGrad{t+1}}^2 +2\beta^2 \iprod{\dev{t}}{\AvgGrad{t}-\AvgGrad{t+1}} + \frac{(1-\beta)^2 \sigma^2}{n-f}\\
       &\leq \beta^2 {\norm{\dev{t}}^2} + \beta^2 \norm{\AvgGrad{t}-\AvgGrad{t+1}}^2 +2\beta^2 \norm{\dev{t}}\norm{\AvgGrad{t}-\AvgGrad{t+1}} + \frac{(1-\beta)^2 \sigma^2}{n-f}.
   \end{align*}
Upon taking total expectation on both sides above we obtain that
\begin{align*}
    \expect{\norm{\dev{t+1}}^2} \leq \beta^2 \expect{\norm{\dev{t}}^2} + \beta^2 \expect{\norm{\AvgGrad{t}-\AvgGrad{t+1}}^2} +2\beta^2 \expect{\norm{\dev{t}}\norm{\AvgGrad{t}-\AvgGrad{t+1}}} + \frac{(1-\beta)^2 \sigma^2}{n-f}.
\end{align*}
As $2xy \leq cx^2+\frac{y^2}{c}$ for all $c > 0$, by substituting $c = 4\gamma L$ we obtain that $2 \norm{\dev{t}}\norm{\AvgGrad{t}-\AvgGrad{t+1}} \leq 4\gamma L  \norm{\dev{t}}^2 + \frac{1}{4\gamma L } \norm{\AvgGrad{t}-\AvgGrad{t+1}}^2$. 
Using this above we obtain that
\begin{align}
    \expect{\norm{\dev{t+1}}^2} \leq \beta^2 (1 + 4\gamma L) \expect{\norm{\dev{t}}^2} + \beta^2 \left( 1 + \frac{1}{4\gamma L }\right) \expect{\norm{\AvgGrad{t}-\AvgGrad{t+1}}^2} + \frac{(1-\beta)^2 \sigma^2}{n-f}.
       \label{eq:temp1}
\end{align}
Note that as $\AvgGrad{t} \coloneqq \frac{1}{n-f} \sum_{i \in \H} \localgrad{i}\left(\model{i}{t}\right)$, by Assumption~\ref{asp:lip} we obtain that
\begin{align*}
    \norm{\AvgGrad{t}-\AvgGrad{t+1}} &= \frac{1}{n-f} \sum_{i \in \H} \norm{\localgrad{i}\left(\model{i}{t}\right)-\localgrad{i}\left(\model{i}{t+1}\right)} \leq \frac{L}{n-f} \sum_{i \in \H} \norm{\model{i}{t}-\model{i}{t+1}}.
\end{align*}
This implies that
\begin{align}
    \norm{\AvgGrad{t}-\AvgGrad{t+1}}^2 \leq L^2 \left( \frac{1}{n-f} \sum_{i \in \H} \norm{\model{i}{t}-\model{i}{t+1}} \right)^2 \leq \frac{L^2}{n-f} \sum_{i \in \H} \norm{\model{i}{t}-\model{i}{t+1}}^2. \label{eqn:lip_dev_1}
\end{align}
By triangle inequality we obtain that
\begin{align*}
    \norm{\model{i}{t}-\model{i}{t+1}} \leq  \norm{\model{i}{t+1}-\avgmodel{t+1}}+ \norm{\avgmodel{t+1}-\avgmodel{t+1/2}} + \norm{\avgmodel{t+1/2}-\model{i}{t}}.
\end{align*}
Recall that $\avgmodel{t+1/2} = \avgmodel{t} - \gamma \AvgMmt{t}$. Thus, $\norm{\avgmodel{t+1/2}-\model{i}{t}} \leq \norm{\avgmodel{t} - \model{i}{t}} + \gamma \norm{\AvgMmt{t}}$ and
\begin{align*}
    \norm{\model{i}{t}-\model{i}{t+1}} \leq  \norm{\model{i}{t+1}-\avgmodel{t+1}}+ \norm{\avgmodel{t+1}-\avgmodel{t+1/2}} + \norm{\avgmodel{t} - \model{i}{t}} + \gamma \norm{\AvgMmt{t}}.
\end{align*}
As $(x+y)^2 \leq (1+c)x^2 + (1+\frac{1}{c})y^2$, taking square on both sides for $c = 2$ we obtain that
\begin{align*}
    \norm{\model{i}{t}-\model{i}{t+1}}^2 \leq  3\left( \norm{\model{i}{t+1}-\avgmodel{t+1}} +  \norm{\avgmodel{t+1}-\avgmodel{t+1/2}} + \norm{\avgmodel{t} - \model{i}{t}} \right)^2+ \frac{3}{2} \gamma^2 \norm{\AvgMmt{t}}^2.
\end{align*}
And thus
\begin{align}
    \norm{\model{i}{t}-\model{i}{t+1}}^2 \leq  9 \norm{\model{i}{t+1}-\avgmodel{t+1}}^2 +9  \norm{\avgmodel{t+1}-\avgmodel{t+1/2}}^2 + 9\norm{\avgmodel{t} - \model{i}{t}}^2+ \frac{3}{2} \gamma^2 \norm{\AvgMmt{t}}^2. \label{eqn:para_growth_1}
\end{align}
Note that for any $t$, by definition of $\avgmodel{t}$ we have
\begin{align*}
    \norm{\avgmodel{t} - \model{i}{t}}^2 = \norm{\frac{1}{n-f} \sum_{j \in \H} \left(\model{j}{t} - \model{i}{t} \right)}^2 \leq \left( \frac{1}{n-f} \sum_{j \in \H} \norm{ \model{j}{t} - \model{i}{t} } \right)^2 \leq \frac{1}{n-f} \sum_{j \in \H} \norm{\model{j}{t} - \model{i}{t}}^2.
\end{align*}
Substituting from above in~\eqref{eqn:para_growth_1} we obtain that
\begin{align*}
    \norm{\model{i}{t}-\model{i}{t+1}}^2 \leq  \frac{9}{n-f} \left(\sum_{j \in \H} \norm{\model{j}{t+1} - \model{i}{t+1}}^2 + \sum_{j \in \H} \norm{\model{j}{t} - \model{i}{t}}^2\right) + 9 \norm{\avgmodel{t+1}-\avgmodel{t+1/2}}^2 + \frac{3}{2} \gamma^2 \norm{\AvgMmt{t}}^2. 
\end{align*}
Substituting from above in~\eqref{eqn:lip_dev_1} we obtain that
\begin{align*}
    \norm{\AvgGrad{t}-\AvgGrad{t+1}}^2 & \leq \frac{9 L^2}{(n-f)^2} \left(\sum_{i, \, j \in \H} \norm{\model{j}{t+1} - \model{i}{t+1}}^2 + \sum_{i, \, j \in \H} \norm{\model{j}{t} - \model{i}{t}}^2\right)\\ 
    &+ \frac{9 L^2 }{n-f} \sum_{i \in \H} \norm{\avgmodel{t+1}-\avgmodel{t+1/2}}^2 + \frac{3}{2}\frac{L^2 \gamma^2}{n - f } \sum_{i \in \H} \norm{\AvgMmt{t}}^2.
\end{align*}
Taking total expectation on both sides above, and using the notation $\diam{*}{t}$, we obtain that
\begin{align*}
    \expect{ \norm{\AvgGrad{t}-\AvgGrad{t+1}}^2 } \leq 9 L^2 \left(\expect{\diam{\model{}{}}{t+1}} + \expect{\diam{\model{}{}}{t}}\right) + 9 L^2 \expect{\norm{\avgmodel{t+1}-\avgmodel{t+1/2}}^2} + \frac{3}{2}L^2 \gamma^2 \expect{\norm{\AvgMmt{t}}^2}.
\end{align*}
By definition of $e_t$ we have $\norm{\AvgMmt{t}} \leq \norm{\AvgMmt{t}-\AvgGrad{t}}+\norm{\AvgGrad{t}} = \norm{\dev{t}}+\norm{\AvgGrad{t}}$. Therefore, $\norm{\AvgMmt{t}}^2 \leq 3 \norm{\dev{t}}^2 + \frac{3}{2} \norm{\AvgGrad{t}}^2$. Using this above we obtain that
\begin{align}\nonumber
    \expect{ \norm{\AvgGrad{t}-\AvgGrad{t+1}}^2 } &\leq 9 L^2 \left(\expect{\diam{\model{}{}}{t+1}} + \expect{\diam{\model{}{}}{t}}+\expect{\norm{\avgmodel{t+1}-\avgmodel{t+1/2}}^2}\right) \\
    &+\frac{3}{2}L^2 \gamma^2 \left( 3 \expect{\norm{\dev{t}}^2} + \frac{3}{2} \expect{\norm{\AvgGrad{t}}^2} \right). \label{eqn:lip_dev_2}
\end{align}
Substituting from~\eqref{eqn:lip_dev_2} above in~\eqref{eq:temp1} we obtain that
\begin{align*}
    \expect{\norm{\dev{t+1}}^2} & \leq \beta^2 (1 + 4\gamma L ) \expect{\norm{\dev{t}}^2} + \beta^2 \left( 1 + \frac{1}{4\gamma L }\right) \frac{9}{4}L^2 \gamma^2 \left(  2\expect{\norm{\dev{t}}^2} + \expect{\norm{\AvgGrad{t}}^2} \right)\\
    &+ \beta^2 \left( 1 + \frac{1}{4\gamma L }\right)  9 L^2 \left(\expect{\diam{\model{}{}}{t+1}} + \expect{\diam{\model{}{}}{t}}+\expect{\norm{\avgmodel{t+1}-\avgmodel{t+1/2}}^2}\right) + \frac{(1-\beta)^2 \sigma^2}{n-f} \\
    & = \beta^2 (1 + 4L \gamma)(1 + \frac{9}{8} L \gamma) \expect{\norm{\dev{t}}^2} + \frac{9}{16} \beta^2 L \gamma (1 + 4L \gamma) \expect{\norm{\AvgGrad{t}}^2}\\
    &+ \beta^2 \left( 1 + \frac{1}{4\gamma L }\right)  9 L^2 \left(\expect{\diam{\model{}{}}{t+1}} + \expect{\diam{\model{}{}}{t}}+\expect{\norm{\avgmodel{t+1}-\avgmodel{t+1/2}}^2}\right) + \frac{(1-\beta)^2 \sigma^2}{n-f}.
\end{align*}
Now note also that ${\norm{\AvgGrad{t}}^2} \leq 4 \norm{\AvgGrad{t}-\nabla \avgloss \left( \avgmodel{t} \right)}^2 + \frac{4}{3}\norm{\nabla \avgloss \left( \avgmodel{t} \right)}^2$; thus
\begin{align}\nonumber
    \expect{\norm{\dev{t+1}}^2} & \leq 
      \beta^2 (1 + 4L \gamma)(1 + \frac{9}{8} L \gamma) \expect{\norm{\dev{t}}^2} + \frac{3}{4} \beta^2 L \gamma (1 + 4L \gamma) \expect{\norm{\nabla \avgloss \left( \avgmodel{t} \right)}^2}\\\nonumber
    &+\beta^2 \left( 1 + \frac{1}{4\gamma L }\right)  9 L^2 \left(\expect{\diam{\model{}{}}{t+1}} + \expect{\diam{\model{}{}}{t}}+\expect{\norm{\avgmodel{t+1}-\avgmodel{t+1/2}}^2}\right) + \frac{(1-\beta)^2 \sigma^2}{n-f}\\
    &+ \frac{9}{4} \beta^2 L \gamma \left(1 + 4L \gamma \right) \expect{\norm{\AvgGrad{t}-\nabla \avgloss \left( \avgmodel{t} \right)}^2}.
    \label{eq:dev_bound_final}
\end{align}
But now note that 
\begin{align*}
    \expect{\norm{\AvgGrad{t}-\nabla \avgloss \left( \avgmodel{t} \right)}^2} &= \expect{\norm{\frac{1}{n-f} \sum_{i \in \H} \localgrad{i}(\model{i}{t}) - \frac{1}{n-f} \sum_{i \in \H} \localgrad{i}\left(\avgmodel{t}\right)}^2}\\
    &= \expect{\norm{\frac{1}{n-f} \sum_{i \in \H} \left(\localgrad{i}(\model{i}{t}) - \localgrad{i}\left(\avgmodel{t}\right)\right)}^2} \\
    &\leq \frac{1}{n-f} \sum_{i \in \H} \expect{\norm{\localgrad{i}(\model{i}{t}) - \localgrad{i}\left(\avgmodel{t}\right)}^2}\\
    &\leq \frac{L^2}{n-f} \sum_{i \in \H} \expect{\norm{\model{i}{t} - \avgmodel{t}}^2} \leq  L^2\expect{\diam{\model{}{}}{t}}.
\end{align*}
Combining this with \eqref{eq:dev_bound_final}, we obtain
\begin{align*}
    \expect{\norm{\dev{t+1}}^2} & \leq 
      \beta^2 (1 + 4L \gamma)(1 + \frac{9}{8} L \gamma) \expect{\norm{\dev{t}}^2} + \frac{3}{4} \beta^2 L \gamma (1 + 4L \gamma) \expect{\norm{\nabla \avgloss \left( \avgmodel{t} \right)}^2}\\
    &+9\beta^2L^2 \left( 1 + \frac{1}{4\gamma L }\right) \left(\expect{\diam{\model{}{}}{t+1}} + \expect{\diam{\model{}{}}{t}}+\expect{\norm{\avgmodel{t+1}-\avgmodel{t+1/2}}^2}\right) \\
    &+ \frac{9}{4} \beta^2 L \gamma \left(1 + 4L \gamma \right)  L^2\expect{\diam{\model{}{}}{t}} + \frac{(1-\beta)^2 \sigma^2}{n-f},
\end{align*}
which is the lemma.
\end{proof}

\subsection{Proof of Lemma~\ref{lem:growth}}
\label{app:lem_growth}

We recall the lemma below. Also, recall that $\avgmodel{t} \coloneqq \nicefrac{1}{n-f} \sum_{i \in \H} \model{i}{t}$ and $\avgloss(\model{}{}) = \nicefrac{1}{n-f} \sum_{i \in \H} \localloss{i}(\model{}{}) $.

\noindent \fcolorbox{black}{white}{
\parbox{0.97\textwidth}{\centering
\begin{replemma}{lem:growth}
Suppose that assumptions~\ref{asp:lip} and~\ref{asp:bnd_var} hold true. Consider Algorithm~\ref{algo} with $\gamma \leq 1/L$. For each $t \in [T]$, we obtain that
\begin{align*}
     \expect{\avgloss(\avgmodel{t+1})- \avgloss(\avgmodel{t})} &\leq - \frac{\gamma}{2}\expect{\norm{\nabla \avgloss \left( \avgmodel{t} \right)}^2} + \frac{3\gamma}{2} \expect{\norm{\dev{t}}^2}\\
   &+ \frac{3}{2\gamma} \expect{\norm{\avgmodel{t+\nicefrac{1}{2}}-\avgmodel{t+1}}^2} 
    +   \frac{3\gamma}{2}  L^2\expect{\diam{\model{}{}}{t}}.
\end{align*}
\end{replemma}
}}

\begin{proof}
Consider an arbitrary $t \in [T]$. We define $\effectgrad{t}:=\frac{\avgmodel{t}-\avgmodel{t+1}}{\gamma}$, the step taken by the average of local models at iteration $t$. By the smoothness of the loss function (Assumption~\ref{asp:lip}), we have 
\begin{align*}
    \avgloss(\avgmodel{t+1})- \avgloss(\avgmodel{t}) &\leq \iprod{\avgmodel{t+1} - \avgmodel{t}}{\nabla \avgloss\left( \avgmodel{t} \right)} + \frac{L}{2} \norm{\avgmodel{t+1} - \avgmodel{t}}^2 \nonumber \\
    & = - \gamma \iprod{\effectgrad{t}}{ \nabla \avgloss \left( \avgmodel{t} \right)} + \frac{L\gamma^2}{2} \norm{\effectgrad{t}}^2.
\end{align*}
Using the fact that $\gamma \leq 1 / L$, we obtain that
\begin{align*}
    \avgloss(\avgmodel{t+1})- \avgloss(\avgmodel{t}) &\leq - \gamma \iprod{\effectgrad{t}}{ \nabla \avgloss \left( \avgmodel{t} \right)} + \frac{\gamma}{2} \norm{\effectgrad{t}}^2. 
\end{align*}
Now note that $-\iprod{x}{y}+\frac{\norm{x}^2}{2} = -\frac{\norm{y}^2}{2}+\frac{\norm{x-y}^2}{2}$; thus
\begin{align*}
    \avgloss(\avgmodel{t+1})- \avgloss(\avgmodel{t}) &\leq - \frac{\gamma}{2}\norm{\nabla \avgloss \left( \avgmodel{t} \right)}^2 + \frac{\gamma}{2} \norm{\effectgrad{t}-\nabla \avgloss \left( \avgmodel{t} \right)}^2\\
    &\leq - \frac{\gamma}{2}\norm{\nabla \avgloss \left( \avgmodel{t} \right)}^2 + \frac{\gamma}{2} \norm{\frac{\avgmodel{t}-\avgmodel{t+1}}{\gamma}-\nabla \avgloss \left( \avgmodel{t} \right)}^2\\
    &=- \frac{\gamma}{2}\norm{\nabla \avgloss \left( \avgmodel{t} \right)}^2 + \frac{\gamma}{2} \norm{\frac{\avgmodel{t}-\avgmodel{t+\nicefrac{1}{2}}+\avgmodel{t+\nicefrac{1}{2}}-\avgmodel{t+1}}{\gamma} - \AvgGrad{t} + \AvgGrad{t}-\nabla \avgloss \left( \avgmodel{t} \right)}^2\\
    &= - \frac{\gamma}{2}\norm{\nabla \avgloss \left( \avgmodel{t} \right)}^2 + \frac{\gamma}{2} \norm{\AvgMmt{t}+\frac{\avgmodel{t+\nicefrac{1}{2}}-\avgmodel{t+1}}{\gamma} - \AvgGrad{t} + \AvgGrad{t}-\nabla \avgloss \left( \avgmodel{t} \right)}^2\\
    &\leq - \frac{\gamma}{2}\norm{\nabla \avgloss \left( \avgmodel{t} \right)}^2 + \frac{3\gamma}{2} \norm{\AvgMmt{t}- \AvgGrad{t}}^2 + \frac{3}{2\gamma}\norm{\avgmodel{t+\nicefrac{1}{2}}-\avgmodel{t+1}}^2 +   \frac{3\gamma}{2} \norm{\AvgGrad{t}-\nabla \avgloss \left( \avgmodel{t} \right)}^2,
\end{align*}
where  $\AvgGrad{t} = \frac{1}{n-f} \sum_{i \in \H} \localgrad{i}\left(\model{i}{t}\right)$.
Now recall from Lemma \ref{lem:dev} that we define $\dev{t} = \AvgMmt{t} - \AvgGrad{t}$. 
Thus, taking the expectation from both sides of the above, we obtain that
\begin{align}\nonumber
    \expect{\avgloss(\avgmodel{t+1})- \avgloss(\avgmodel{t})} &\leq - \frac{\gamma}{2}\expect{\norm{\nabla \avgloss \left( \avgmodel{t} \right)}^2} + \frac{3\gamma}{2} \expect{\norm{\dev{t}}^2} + \frac{3}{2\gamma} \expect{\norm{\avgmodel{t+\nicefrac{1}{2}}-\avgmodel{t+1}}^2} \\
    &+   \frac{3\gamma}{2} \expect{ \norm{\AvgGrad{t}-\nabla \avgloss \left( \avgmodel{t} \right)}^2}.
    \label{eq:growth_bound}
\end{align}
Now note that 
\begin{align*}
    \expect{\norm{\AvgGrad{t}-\nabla \avgloss \left( \avgmodel{t} \right)}^2} &= \expect{\norm{\frac{1}{n-f} \sum_{i \in \H} \localgrad{i}(\model{i}{t}) - \frac{1}{n-f} \sum_{i \in \H} \localgrad{i}\left(\avgmodel{t}\right)}^2}\\
    &= \expect{\norm{\frac{1}{n-f} \sum_{i \in \H} \left(\localgrad{i}(\model{i}{t}) - \localgrad{i}\left(\avgmodel{t}\right)\right)}^2} \\
    &\leq \frac{1}{n-f} \sum_{i \in \H} \expect{\norm{\localgrad{i}(\model{i}{t}) - \localgrad{i}\left(\avgmodel{t}\right)}^2}\\
    &\leq \frac{L^2}{n-f} \sum_{i \in \H} \expect{\norm{\model{i}{t} - \avgmodel{t}}^2} \leq  L^2\expect{\diam{\model{}{}}{t}}.
\end{align*}
Combining this with \eqref{eq:growth_bound}, we obtain that
\begin{align*}
   \expect{\avgloss(\avgmodel{t+1})- \avgloss(\avgmodel{t})} &\leq - \frac{\gamma}{2}\expect{\norm{\nabla \avgloss \left( \avgmodel{t} \right)}^2} + \frac{3\gamma}{2} \expect{\norm{\dev{t}}^2}\\
   &+ \frac{3}{2\gamma} \expect{\norm{\avgmodel{t+\nicefrac{1}{2}}-\avgmodel{t+1}}^2} 
    +   \frac{3\gamma}{2}  L^2\expect{\diam{\model{}{}}{t}}.
\end{align*}
which is the lemma.
\end{proof}

\subsection{Proof of Lemma \ref{lemma:cvaboundfivef}}
\label{sec:proofoffivef}
In this section, we prove that if $n > 5f$, then $\numbercvarounds \in \mathcal{O}(\log(n))$ coordination rounds is enough to guarantee $(\alpha,\lambda)$-\reduction{}.
\begin{lemma}
\label{lemma:max_contraction}
Consider the coordination phase of Algorithm \ref{algo}.  Suppose that there exists $\cvabyzbound > 0$ such that $n \geq (5+\cvabyzbound)f$. For any $k\geq 1$ we have
\begin{align*}
    \max_{i,j \in \H}\norm{\cvavector{i}{k}-\cvavector{j}{k}}\leq \left(\frac{3}{n-2f}\right)^k \max_{i,j \in \H}\norm{\cvavector{i}{0}-\cvavector{j}{0}}.
\end{align*}
\end{lemma}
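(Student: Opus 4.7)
I would prove the bound by induction on $k$, reducing it to a single-round contraction of the maximum pairwise distance $M_k := \max_{i,j \in \H} \norm{\cvavector{i}{k} - \cvavector{j}{k}}$. Iterating a one-step estimate of the form $M_k \le \frac{3}{n-2f}\, M_{k-1}$ then yields the claim. The hypothesis $n \ge (5+\cvabyzbound)f$ is used exclusively to guarantee $n - 2f \ge (3+\cvabyzbound)f > 3$, which keeps the single-round factor strictly below one so that the geometric decay is nontrivial.

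Fix correct nodes $p, q \in \H$. Applying the \cva{} identity~\eqref{eqn:app_def_nna} to both $\cvavector{p}{k}$ and $\cvavector{q}{k}$ and cancelling the summands indexed by $\filter{p}{k} \cap \filter{q}{k}$, I obtain
\begin{equation*}
  \cvavector{p}{k} - \cvavector{q}{k} \;=\; \frac{1}{n-2f}\left(\sum_{j \in A} \cvavector{j}{k-1} \;-\; \sum_{j \in B} \cvavector{j}{k-1}\right),
\end{equation*}
where $A := \filter{p}{k} \setminus \filter{q}{k}$ and $B := \filter{q}{k} \setminus \filter{p}{k}$. Lemma~\ref{lemma:set_size} gives $|A| = |B| \le 2f$. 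I then pair $A$ with $B$ via an arbitrary bijection, and for each matched $(a, b)$ telescope through $\cvavector{p}{k-1}$ and $\cvavector{q}{k-1}$:
\begin{equation*}
  \cvavector{a}{k-1} - \cvavector{b}{k-1} \;=\; \bigl(\cvavector{a}{k-1} - \cvavector{p}{k-1}\bigr) + \bigl(\cvavector{p}{k-1} - \cvavector{q}{k-1}\bigr) + \bigl(\cvavector{q}{k-1} - \cvavector{b}{k-1}\bigr).
\end{equation*}

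Each of the three differences has norm at most $M_{k-1}$. The middle one is immediate from the definition of $M_{k-1}$. For the first, since $a \in \filter{p}{k}$ (possibly a Byzantine node), the nearest-neighbor selection rule ensures $\norm{\cvavector{a}{k-1} - \cvavector{p}{k-1}} \le \norm{\cvavector{h}{k-1} - \cvavector{p}{k-1}}$ for any $h$ that $p$ received but rejected, i.e., $h \in \receive{p}{k} \setminus \filter{p}{k}$. Such an $h$ can be taken to be correct by a simple counting argument: $\filter{p}{k}$ contains at most $f$ Byzantine indices and its correct content is therefore at most $n-2f-1$, leaving at least $(n-f) - (n-2f) = f > 0$ correct nodes received-but-rejected by $p$. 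The third term is bounded symmetrically using $q$, $B$, and Byzantine indices of $\filter{q}{k}$. This gives a single-round contraction of order $f/(n-2f)$, which under $n \ge (5+\cvabyzbound)f$ is strictly less than one; iterating concludes the induction and proves geometric decay after $k$ rounds.

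The main obstacle is tracking the constants finely enough to match the announced factor $\tfrac{3}{n-2f}$: the naive triangle-inequality bound produces an extra multiplicative $|A| \le 2f$, and shaving this factor down requires exploiting that $p$ always lies in $\filter{p}{k}$ (and $q$ in $\filter{q}{k}$), so that whenever $p \in A$ or $q \in B$ the corresponding summand contributes zero, together with a carefully chosen pairing between $A$ and $B$ that lets the three pieces of the telescope partially compensate instead of adding up.
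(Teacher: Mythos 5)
Your decomposition of $\cvavector{p}{k}-\cvavector{q}{k}$ over $A=\filter{p}{k}\setminus\filter{q}{k}$ and $B=\filter{q}{k}\setminus\filter{p}{k}$, and the observation that a Byzantine vector selected by $p$ is no farther from $\cvavector{p}{k-1}$ than some rejected correct vector, are exactly the ingredients of the paper's proof. The genuine gap is the constant, and you flag it yourself: pairing $A$ with $B$ by an \emph{arbitrary} bijection and telescoping every pair through both $\cvavector{p}{k-1}$ and $\cvavector{q}{k-1}$ costs $3M_{k-1}$ per pair (with $M_{k-1}$ the maximal correct pairwise distance), hence a per-round factor $6f/(n-2f)$ after dividing by $n-2f$; and the remedies you sketch (the vanishing summand when $p\in A$, an unspecified pairing making the three pieces ``partially compensate'') are not the mechanism that closes this gap. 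The paper instead chooses the bijection according to node types: writing $F_p,F_q$ for the Byzantine members of $A,B$ (of sizes $f_p,f_q$ with $f_p+f_q\le f$, since $F_p$ and $F_q$ are disjoint subsets of the Byzantine nodes), it pairs each element of $F_p$ with a \emph{correct} element of $B$ and each element of $F_q$ with a correct element of $A$, and bounds each such pair by telescoping through a \emph{single} center ($\cvavector{p}{k-1}$, respectively $\cvavector{q}{k-1}$), costing only $2M_{k-1}$; the leftover correct--correct pairs are bounded by $M_{k-1}$ outright, with no telescoping at all. The total is $2(f_p+f_q)+\bigl(\card{A}-f_p-f_q\bigr)\le 2f+(f_p+f_q)\le 3f$, which yields the per-round factor $3f/(n-2f)$.

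A few further points. The factor displayed in the lemma, $3/(n-2f)$, is a typo for $3f/(n-2f)$: that is what the paper's proof derives and what Lemma~\ref{lemma:contraction5f} consumes, so the literal ``$3$'' is not the right target. But even against the intended bound your constant is not cosmetic slack: the $n>5f$ analysis needs the per-round factor to be at most $3/(3+\cvabyzbound)<1$ for every $n\ge(5+\cvabyzbound)f$, whereas your $6f/(n-2f)$ can exceed $1$ in this regime (e.g.\ $n=6f$ gives $3/2$), so the geometric decay you intend to iterate is not even guaranteed. Finally, a small slip in your counting: ``at least $(n-f)-(n-2f)=f$ correct nodes received-but-rejected'' presumes that all $n-f$ correct nodes appear in $p$'s pool, which asynchrony does not guarantee (the pool has only $n-f$ vectors and may contain Byzantine ones in place of correct ones); the statement you actually need, and which does hold, is merely that whenever a Byzantine index is selected by $p$, at least one correct index of $\receive{p}{k}$ is rejected.
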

\begin{proof}
  Consider an arbitrary round $k \in [\numbercvarounds]$. Consider two correct nodes $p,q \in \H$. We then have
\begin{align}\nonumber
    \norm{\cvavector{p}{k}-\cvavector{q}{k}} &= \norm{\frac{1}{n-2f} \sum_{j \in \filter{p}{k}} \cvavector{j}{k-1}-\frac{1}{n-2f} \sum_{j \in \filter{q}{k}} \cvavector{j}{k-1}} \\ 
    &= \frac{1}{n-2f} \norm{ \sum_{j \in \filter{p}{k} \setminus \filter{q}{k}} \cvavector{j}{k-1}- \sum_{j \in \filter{q}{k} \setminus \filter{p}{k}} \cvavector{j}{k-1}}.
    \label{eq:cva_5f_temp}
\end{align}
Now similar to Lemma \ref{lemma:cva_diameter}, we define:
\begin{equation*}
    F_p:= \left\{i: i \notin \H , i \in \filter{p}{k}, i \notin \filter{q}{k} \right\}.
\end{equation*}
\begin{equation*}
    F_q:= \left\{i: i \notin \H , i \in \filter{q}{k}, i \notin \filter{p}{k} \right\}.
\end{equation*}
\begin{equation*}
    \H_p:= \left\{i: i \in \H , i \in \filter{p}{k}, i \notin \filter{q}{k} \right\}.
\end{equation*}
\begin{equation*}
    \H_q:= \left\{i: i \in \H , i \in \filter{q}{k}, i \notin \filter{p}{k} \right\}.
\end{equation*}
We also define $f_p:=\card{F_p}$ and $f_q:= \card{F_q}$. We also order these four sets such that, e.g,. $F_p[i]$ refers to a unique element in set $F_p$. Without loss of generality, we assume $\card{\H_p} \geq f_q$ and $\card{\H_q} \geq f_p$ \footnote{Otherwise we add sufficiently many correct vectors from $\filter{p}{k}\cap \filter{q}{k}$ to both $\H_p$ and $\H_q$ such that $\card{\H_p} \geq f_q$ and $\card{\H_q} \geq f_p$.}. Now from \eqref{eq:cva_5f_temp}, we obtain that 
\begin{align*}
    &(n-2f)\norm{\cvavector{p}{k}-\cvavector{q}{k}} =  \norm{ 
    \sum_{j \in F_p} \cvavector{j}{k-1} 
    +\sum_{j \in \H_p} \cvavector{j}{k-1}
    - \sum_{j \in F_q} \cvavector{j}{k-1} 
    -\sum_{j \in \H_q} \cvavector{j}{k-1}}\\
    &= \norm{ 
    \sum_{j \in [f_p]} \left( \cvavector{F_p[j]}{k-1} - \cvavector{\H_q[j]}{k-1}\right) - \sum_{j \in [f_q]} \left( \cvavector{F_q[j]}{k-1} - \cvavector{\H_p[j]}{k-1}\right)
    +\sum_{j \in [\card{\H_p}-f_q]} \left( \cvavector{\H_p[f_q+j]}{k-1} - \cvavector{\H_q[f_p+j]}{k-1}\right)}.
\end{align*}
By triangle inequality, we then have
\begin{align*}
    (n-2f)\norm{\cvavector{p}{k}-\cvavector{q}{k}} &\leq \sum_{j \in [f_p]} \norm{ \cvavector{F_p[j]}{k-1} - \cvavector{\H_q[j]}{k-1}} + \sum_{j \in [f_q]} \norm{ \cvavector{F_q[j]}{k-1} - \cvavector{\H_p[j]}{k-1}}\\
    &+\sum_{j \in [\card{\H_p}-f_q]} \norm{ \cvavector{\H_p[f_q+j]}{k-1} - \cvavector{\H_q[f_p+j]}{k-1}}\\
    &\leq \sum_{j \in [f_p]} \norm{ \cvavector{F_p[j]}{k-1} - \cvavector{p}{k-1}} + \norm{ \cvavector{p}{k-1} - \cvavector{\H_q[j]}{k-1}}\\
    &+ \sum_{j \in [f_q]} \norm{ \cvavector{F_q[j]}{k-1} - \cvavector{q}{k-1}} + \norm{ \cvavector{q}{k-1} - \cvavector{\H_p[j]}{k-1}}\\
    &+\sum_{j \in [\card{\H_p}-f_q]} \norm{ \cvavector{\H_p[f_q+j]}{k-1} - \cvavector{\H_q[f_p+j]}{k-1}}.
\end{align*}
Now note that for each \byzantine{} node $j^* \in \filter{p}{k}$, there is at least one correct vector received by node $p$ and filtered out by the \cva{} function. Therefore, we must have $\norm{\cvavector{j^*}{k-1}-\cvavector{p}{k-1}} \leq \max_{i,j \in \H}\norm{\cvavector{i}{k-1}-\cvavector{j}{k-1}}$. Thus,
\begin{align*}
    (n-2f)\norm{\cvavector{p}{k}-\cvavector{q}{k}} 
    &\leq \sum_{j \in [f_p]} \max_{i,j \in \H}\norm{\cvavector{i}{k-1}-\cvavector{j}{k-1}} + \max_{i,j \in \H}\norm{\cvavector{i}{k-1}-\cvavector{j}{k-1}}\\
    &+ \sum_{j \in [f_q]} \max_{i,j \in \H}\norm{\cvavector{i}{k-1}-\cvavector{j}{k-1}} + \max_{i,j \in \H}\norm{\cvavector{i}{k-1}-\cvavector{j}{k-1}}\\
    &+\sum_{j \in [\card{\H_p}-f_q]} \max_{i,j \in \H}\norm{\cvavector{i}{k-1}-\cvavector{j}{k-1}}\\
    &\leq \left(2(f_p+f_q)+(\card{\H_p}-f_q)\right) \max_{i,j \in \H}\norm{\cvavector{i}{k-1}-\cvavector{j}{k-1}}\\
    &= \left(2(f_p+f_q)+(\card{\filter{p}{k} \setminus \filter{q}{k}}-f_p-f_q)\right) \max_{i,j \in \H}\norm{\cvavector{i}{k-1}-\cvavector{j}{k-1}}.
\end{align*}
Now recall from Lemma \ref{lemma:set_size} that $\card{\filter{q}{k} \setminus \filter{p}{k}} \leq 2f$. Therefore,
\begin{align*}
    (n-2f)\norm{\cvavector{p}{k}-\cvavector{q}{k}} &\leq (f_p+f_q+2f)  \max_{i,j \in \H}\norm{\cvavector{i}{k-1}-\cvavector{j}{k-1}} \\
    &\leq 3f \max_{i,j \in \H}\norm{\cvavector{i}{k-1}-\cvavector{j}{k-1}}.
\end{align*}
Equivalently,
\begin{align*}
    \norm{\cvavector{p}{k}-\cvavector{q}{k}} &\leq (f_p+f_q+2f)  \max_{i,j \in \H}\norm{\cvavector{i}{k-1}-\cvavector{j}{k-1}} \\
    &\leq \frac{3f}{n-2f} \max_{i,j \in \H}\norm{\cvavector{i}{k-1}-\cvavector{j}{k-1}}.
\end{align*}
As the above inequality holds for any choice of $p$ and $q$, we obtain that
\begin{align*}
    \max_{i,j \in \H}\norm{\cvavector{i}{k}-\cvavector{j}{k}} &\leq \frac{3f}{n-2f} \max_{i,j \in \H}\norm{\cvavector{i}{k-1}-\cvavector{j}{k-1}}.
\end{align*}
As the above holds for any $k\geq 1$, we obtain that
\begin{align*}
    \max_{i,j \in \H}\norm{\cvavector{i}{k}-\cvavector{j}{k}} &\leq \left(\frac{3f}{n-2f}\right)^k \max_{i,j \in \H}\norm{\cvavector{i}{0}-\cvavector{j}{0}}.
\end{align*}
This is the lemma.
\end{proof}
\begin{lemma}
\label{lemma:contraction5f}
Consider the coordination phase of Algorithm \ref{algo}. Assume there exists $\cvabyzbound>0$ such that $n \geq (5+\cvabyzbound)f$. For  $\numbercvarounds = \frac{\log(8(n-f))}{2\log(\frac{3+v}{v})} \in \mathcal{O}(\log(n))$, we have
\begin{align*}
    \diam{\cvavector{}{}}{\numbercvarounds} \leq \frac{2f}{n-f} \diam{\cvavector{}{}}{0}.
\end{align*}
\end{lemma}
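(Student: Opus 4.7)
I would reduce the variance contraction to the maximum-pairwise-distance contraction already provided by Lemma~\ref{lemma:max_contraction}, paying a one-time $\mathcal{O}(n-f)$ factor when converting between variance and diameter, and then letting the $\numbercvarounds \in \mathcal{O}(\log n)$ rounds of geometric contraction absorb this inflation.

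Introduce $M_k := \max_{i,j\in\H}\norm{\cvavector{i}{k}-\cvavector{j}{k}}$. Two elementary bounds relate $\diam{\cvavector{}{}}{k}$ and $M_k$. On one hand, by Lemma~\ref{lem:diam_equal}, $\diam{\cvavector{}{}}{\numbercvarounds} = \frac{1}{2(n-f)^2}\sum_{i,j\in\H}\norm{\cvavector{i}{\numbercvarounds}-\cvavector{j}{\numbercvarounds}}^2 \leq \tfrac{1}{2}M_\numbercvarounds^2$, since the average of squared pairwise distances is at most their maximum. On the other hand, for any $i,j \in \H$, the inequality $\norm{\cvavector{i}{0}-\cvavector{j}{0}}^2 \leq 2\norm{\cvavector{i}{0}-\cvaavg{0}}^2 + 2\norm{\cvavector{j}{0}-\cvaavg{0}}^2$ combined with $\max_{k\in\H}\norm{\cvavector{k}{0}-\cvaavg{0}}^2 \leq \sum_{k\in\H}\norm{\cvavector{k}{0}-\cvaavg{0}}^2 = (n-f)\diam{\cvavector{}{}}{0}$ yields $M_0^2 \leq 4(n-f)\diam{\cvavector{}{}}{0}$.

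Next, I would iterate Lemma~\ref{lemma:max_contraction} to obtain $M_\numbercvarounds \leq r^\numbercvarounds M_0$ with contraction factor $r := \tfrac{3f}{n-2f}$ (using the sharper rate that its proof actually establishes, despite the slightly weaker statement). Under $n \geq (5+\cvabyzbound)f$ we have $r \leq \tfrac{3}{3+\cvabyzbound} < 1$. Chaining the three displays gives
\begin{equation*}
\diam{\cvavector{}{}}{\numbercvarounds} \leq 2(n-f)\,r^{2\numbercvarounds}\,\diam{\cvavector{}{}}{0}.
\end{equation*}

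Finally, I would substitute the prescribed $\numbercvarounds = \tfrac{\log(8(n-f))}{2\log((3+\cvabyzbound)/\cvabyzbound)}$ and verify $2(n-f)r^{2\numbercvarounds} \leq \tfrac{2f}{n-f}$, which after cancellation reduces to $r^{2\numbercvarounds} \leq \tfrac{f}{(n-f)^2}$. The \textbf{main obstacle} is matching the logarithm bases: the direct bound $\log(1/r) \geq \log((3+\cvabyzbound)/3)$ produces a denominator distinct from the $\log((3+\cvabyzbound)/\cvabyzbound)$ appearing in $\numbercvarounds$. I expect to close this gap by either sharpening the per-round contraction of Lemma~\ref{lemma:max_contraction}, where the coefficient $f_p+f_q+2f$ in its proof can be strictly less than $3f$ depending on how the adversary partitions its budget, or by exploiting the geometric-series identity $\sum_{k\geq 0}r^k = \tfrac{1}{1-r} = \tfrac{3+\cvabyzbound}{\cvabyzbound}$ telescoping across rounds; carrying out this algebraic reconciliation will be the technical heart of the argument.
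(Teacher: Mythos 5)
Your setup mirrors the paper's proof exactly — the conversions $\diam{\cvavector{}{}}{\numbercvarounds}\leq M_\numbercvarounds^2$ and $M_0^2\leq 4(n-f)\,\diam{\cvavector{}{}}{0}$, and the geometric contraction $M_\numbercvarounds\leq \left(\tfrac{3f}{n-2f}\right)^{\numbercvarounds}M_0$ from Lemma~\ref{lemma:max_contraction} — but the step you defer as "the technical heart" is precisely the one that makes the lemma true, and neither of your proposed repairs supplies it. The issue is not a mismatch of logarithm bases. With $r:=\tfrac{3f}{n-2f}$ bounded only by $\tfrac{3}{3+\cvabyzbound}$, your target $r^{2\numbercvarounds}\leq \tfrac{f}{(n-f)^2}$ requires roughly $\numbercvarounds \gtrsim \tfrac{2\log(n-f)-\log f}{2\log((3+\cvabyzbound)/3)}$ rounds, which is about twice the prescribed $\numbercvarounds\in\Theta(\log(8(n-f)))$ when $f\ll n$; no constant-factor sharpening can close that gap. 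The paper's trick is to peel off a \emph{single} raw factor of the per-round rate: $r^{2\numbercvarounds}\leq \left(\tfrac{3}{3+\cvabyzbound}\right)^{2\numbercvarounds-1}\cdot \tfrac{3f}{n-2f}\leq \left(\tfrac{3}{3+\cvabyzbound}\right)^{2\numbercvarounds-1}\cdot \tfrac{4f}{n-f}$, using $4(n-2f)\geq 3(n-f)$ which follows from $n>5f$. That one factor supplies the $\tfrac{f}{n-f}$ scaling of the claimed bound, so the remaining $2\numbercvarounds-1$ factors only need to defeat the $8(n-f)$ inflation from the diameter-to-max conversion, i.e.\ $\left(\tfrac{3}{3+\cvabyzbound}\right)^{2\numbercvarounds-1}\leq \tfrac{1}{8(n-f)}$, which $\numbercvarounds\in\mathcal{O}(\log n)$ rounds achieve.

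Your two candidate fixes do not work: the coefficient $f_p+f_q+2f\leq 3f$ in the proof of Lemma~\ref{lemma:max_contraction} is tight in the worst case (the adversary may realize $f_p+f_q=f$), and even a constant improvement would not change the $\Theta(\log n)$ versus $\Theta(\log(n^2/f))$ discrepancy; and the geometric-series identity $\sum_{k\geq 0}\left(\tfrac{3}{3+\cvabyzbound}\right)^k=\tfrac{3+\cvabyzbound}{\cvabyzbound}$ is what generates the $\tfrac{3+\cvabyzbound}{\cvabyzbound}$ factor in the \emph{center-drift} ($\lambda$) bound of Lemma~\ref{lemma:cvaboundfivef}, not in the contraction condition you are proving here. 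Incidentally, the denominator $\log\tfrac{3+\cvabyzbound}{\cvabyzbound}$ in the stated $\numbercvarounds$ is an inconsistency in the paper itself: its own proof ends by setting $\numbercvarounds=\left\lceil \tfrac{\log(8(n-f))}{2\log((3+\cvabyzbound)/3)}\right\rceil+1$, with $\log\tfrac{3+\cvabyzbound}{3}$, which is what the argument actually needs — so you should not have tried to reconcile your bound with $\log\tfrac{3+\cvabyzbound}{\cvabyzbound}$ in the first place.
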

\begin{proof}
For $k = \numbercvarounds$ in Lemma \ref{lemma:max_contraction}, we have
 \begin{align*}
    \max_{i,j \in \H}\norm{\cvavector{i}{\numbercvarounds}-\cvavector{j}{\numbercvarounds}} &\leq \left(\frac{3f}{n-2f}\right)^\numbercvarounds \max_{i,j \in \H}\norm{\cvavector{i}{0}-\cvavector{j}{0}}.
\end{align*}
Now squaring both sides, we obtain that 
\begin{align*}
    {\max_{i,j \in \H}\norm{\cvavector{i}{\numbercvarounds}-\cvavector{j}{\numbercvarounds}}^2} \leq \left(\frac{3f}{n-2f}  \right)^{2\numbercvarounds} {\max_{i,j \in \H}\norm{\cvavector{i}{0}-\cvavector{j}{0}}^2}.
\end{align*}
Now as we assume $n\geq (5+\cvabyzbound)f$, we have
\begin{equation*}
   \frac{3f}{n-2f}\leq\frac{3f}{(3+\cvabyzbound)f}= \frac{3}{3+\cvabyzbound} < 1.
\end{equation*}
We then have
\begin{align*}
    {\max_{i,j \in \H}\norm{\cvavector{i}{\numbercvarounds}-\cvavector{j}{\numbercvarounds}}^2} \leq \left(\frac{3}{3+\cvabyzbound}  \right)^{2\numbercvarounds-1} \left(\frac{3f}{n-2f}  \right) {\max_{i,j \in \H}\norm{\cvavector{i}{0}-\cvavector{j}{0}}^2}.
\end{align*}
Now note that as $n > 5f$, we have $4(n-2f)\geq 3 (n-f)$, thus, 
\begin{align}
    {\max_{i,j \in \H}\norm{\cvavector{i}{\numbercvarounds}-\cvavector{j}{\numbercvarounds}}^2} \leq \left(\frac{3}{3+\cvabyzbound}  \right)^{2\numbercvarounds-1} \left(\frac{4f}{n-f}  \right) {\max_{i,j \in \H}\norm{\cvavector{i}{0}-\cvavector{j}{0}}^2}.
    \label{eq:max_to_max}
\end{align}
Now note that 
\begin{align}\nonumber
     \diam{\cvavector{}{}}{\numbercvarounds} &= \frac{1}{(n-f)^2}\sum_{i, \, j \in \H}{\norm{\cvavector{i}{\numbercvarounds}-\cvavector{j}{\numbercvarounds}}^2}\\
     &\leq \frac{1}{(n-f)^2}\sum_{i, \, j \in \H}{\max_{i,j \in \H}\norm{\cvavector{i}{\numbercvarounds}-\cvavector{j}{\numbercvarounds}}^2} = {\max_{i,j \in \H}\norm{\cvavector{i}{\numbercvarounds}-\cvavector{j}{\numbercvarounds}}^2}.
     \label{eq:diam_to_max}
\end{align}
Note also that 
\begin{align}\nonumber
    {\max_{i,j \in \H}\norm{\cvavector{i}{0}-\cvavector{j}{0}}^2} &\leq 4{\max_{i \in \H}\norm{\cvavector{i}{0}-\cvaavg{0}}^2}\\\nonumber
    &\leq 4{\sum_{i \in \H}\norm{\cvavector{i}{0}-\cvaavg{0}}^2}\\
    &\leq 4(n-f) \frac{1}{n-f} {\sum_{i \in \H}\norm{\cvavector{i}{0}-\cvaavg{0}}^2} = 4(n-f) \diam{\cvavector{}{}}{0}.
    \label{eq:max_to_diam}
\end{align}
Combining \eqref{eq:max_to_max}, \eqref{eq:diam_to_max}, and \eqref{eq:max_to_diam}, we then obtain that
\begin{align*}
    \diam{\cvavector{}{}}{\numbercvarounds} \leq \left(\frac{3}{3+\cvabyzbound}  \right)^{2\numbercvarounds-1} \left(\frac{2f}{n-f}  \right) 8(n-f)\diam{\cvavector{}{}}{0}.
\end{align*}
Setting $\numbercvarounds = \ceil{\frac{\log(8(n-f))}{2\log(\frac{3+\cvabyzbound}{3})}}+1$, we then obtain that
\begin{align*}
    \diam{\cvavector{}{}}{\numbercvarounds} \leq \frac{2f}{n-f} \diam{\cvavector{}{}}{0}.
\end{align*}
This is what we wanted.
\end{proof}
We recall Lemma \ref{lemma:cvaboundfivef} below.

\noindent \fcolorbox{black}{white}{
\parbox{0.97\textwidth}{\centering
\begin{replemma}{lemma:cvaboundfivef}
Suppose that there exists $\cvabyzbound>0$ such that $n \geq (5+\cvabyzbound)f$. For  $\numbercvarounds = \frac{\log(8(n-f))}{2\log(\frac{3+\cvabyzbound}{\cvabyzbound})} \in \mathcal{O}(\log(n))$, the coordination phase of Algorithm~\ref{algo} guarantees $(\alpha,\lambda)$-\reduction{} for 
  \begin{align*}
     \alpha = \frac{2f}{n-f} \leq \frac{1}{2} \quad\text{and}\quad\lambda = \left( \frac{3+\cvabyzbound}{\cvabyzbound} \right)^2 \frac{(8f)^2}{n-f}.
  \end{align*}
\end{replemma}
}}

\begin{proof}
The first inequality is already proved by Lemma \ref{lemma:contraction5f}. Here we prove the second inequality.
  Consider an arbitrary round $k \in [\numbercvarounds]$. For a correct node $i\in\H$ we have
\begin{align*}
    &\norm{\cvavector{i}{k} - \cvaavg{k-1}} = \norm{\frac{1}{n-2f} \sum_{j\in\filter{i}{k}} \cvavector{j}{k-1} - \frac{1}{n-f} \sum_{j \in \H} \cvavector{j}{k-1}} \\
    &= \norm{\left(\frac{1}{n-2f} - \frac{1}{n-f} \right) \sum_{j\in\filter{i}{k} \cap \H} \cvavector{j}{k-1} + \frac{1}{n-2f} \sum_{j\in\filter{i}{k} \setminus \H} \cvavector{j}{k-1} - \frac{1}{n-f} \sum_{j\in \H \setminus\filter{i}{k}} \cvavector{j}{k-1} } \\
    &= \frac{1}{(n-f)(n-2f)} \norm{f \sum_{j\in\filter{i}{k} \cap \H} (\cvavector{j}{k-1}-\cvavector{i}{k-1} ) + (n-f)\sum_{j\in\filter{i}{k} \setminus \H} (\cvavector{j}{k-1}-\cvavector{i}{k-1} ) - (n-2f)\sum_{j\in \H \setminus \filter{i}{k}} (\cvavector{j}{k-1}-\cvavector{i}{k-1} )}.
\end{align*}
By the triangle inequality, we then have
\begin{align*}
    (n-f)(n-2f)\norm{\cvavector{i}{k} - \cvaavg{k-1}} &\leq f \sum_{j\in\filter{i}{k} \cap \H} \norm{\cvavector{j}{k-1}-\cvavector{i}{k-1}} + (n-f)\sum_{j\in\filter{i}{k} \setminus \H} \norm{\cvavector{j}{k-1}-\cvavector{i}{k-1}}\\
    &+ (n-2f)\sum_{j\in \H \setminus \filter{i}{k}} \norm{\cvavector{j}{k-1}-\cvavector{i}{k-1}}.
\end{align*}
Now note that for any \byzantine{} node $j^* \in \filter{i}{k}$ there is at least one correct $i^*$ such that $\norm{\cvavector{j^*}{k-1}-\cvavector{i}{k-1}} \leq \norm{\cvavector{i^*}{k-1}-\cvavector{i}{k-1}}$ (as otherwise $i^*$ would have been selected instead of $j^*$). Therefore, we must have $\norm{\cvavector{j^*}{k-1}-\cvavector{i}{k-1}} \leq \max_{p,q \in \H}\norm{\cvavector{p}{k-1}-\cvavector{q}{k-1}}$. And clearly for any correct node $i^*$ we have $\norm{\cvavector{i^*}{k-1}-\cvavector{i}{k-1}} \leq \max_{p,q \in \H}\norm{\cvavector{p}{k-1}-\cvavector{q}{k-1}}$. Therefore,
\begin{align*}
    \norm{\cvavector{i}{k} - \cvaavg{k-1}} &\leq \frac{f\card{\filter{i}{k} \cap \H}+(n-f)\card{\filter{i}{k} \setminus \H}+(n-2f)\card{\H \setminus \filter{i}{k}}}{(n-f)(n-2f)}\max_{p,q \in \H}\norm{\cvavector{p}{k-1}-\cvavector{q}{k-1}}.
\end{align*}
Now let $v := \card{\filter{i}{k} \cap \H}$. We then have $v = \card{\filter{i}{k}} + \card{ \H} - \card{\filter{i}{k} \cup \H} \geq n-2f+n-f-n = n -3f$. Also, $\card{\filter{i}{k} \setminus \H} = n-2f-v$ and $\card{\H \setminus \filter{i}{k}} = n -f - v$. Now we define
\begin{equation}
    A(v) ;= fv + (n-2f-v)(n-f) + (n-2f)(n-f-v) = 2 (n-2f)(n-f-v),
\end{equation}
which is decreasing in $v$. Then the maximum of $A(v)$ is reached for $v = n -3f$ and we have $A(v) \leq 4f(n-2f)$. Therefore, 
\begin{align*}
    \norm{\cvavector{i}{k} - \cvaavg{k-1}} &\leq \frac{4f}{n-f}\max_{p,q \in \H}\norm{\cvavector{p}{k-1}-\cvavector{q}{k-1}}.
\end{align*}
Also, note that
\begin{align*}
    \norm{\cvaavg{k} - \cvaavg{k-1}} &= \norm{\frac{1}{n-f}\sum_{i \in \H} \cvavector{i}{k} - \cvaavg{k-1}} \leq \frac{1}{n-f} \sum_{i \in \H} \norm{\cvavector{i}{k} - \cvaavg{k-1}}\\
    &\leq \frac{1}{n-f} \sum_{i \in \H} \frac{4f}{n-f}\max_{p,q \in \H}\norm{\cvavector{p}{k-1}-\cvavector{q}{k-1}} =  \frac{4f}{n-f}\max_{p,q \in \H}\norm{\cvavector{p}{k-1}-\cvavector{q}{k-1}}.
\end{align*}
Now applying Lemma \ref{lemma:max_contraction}, we obtain that 
\begin{align*}
    \norm{\cvaavg{k} - \cvaavg{k-1}} &\leq \left(\frac{3f}{n-2f}\right)^{k-1}\frac{4f}{n-f} \max_{i,j \in \H}\norm{\cvavector{i}{0}-\cvavector{j}{0}}
\end{align*}
Now as $n \geq (5+\cvabyzbound)f$, we obtain that
\begin{align*}
    \norm{\cvaavg{k} - \cvaavg{k-1}} &\leq \left(\frac{3}{3+\cvabyzbound}\right)^{k-1}\frac{4f}{n-f} \max_{i,j \in \H}\norm{\cvavector{i}{0}-\cvavector{j}{0}}
\end{align*}
Now note that by triangle inequality we have
\begin{align*}
    \norm{\cvaavg{\numbercvarounds} - \cvaavg{0}} &\leq \sum_{k \in [\numbercvarounds]} \norm{\cvaavg{k} - \cvaavg{k-1}} \leq \sum_{k \in [\numbercvarounds]}  \left(\frac{3}{3+\cvabyzbound}\right)^{k-1} \frac{4f}{n-f}\max_{i,j \in \H}\norm{\cvavector{i}{0}-\cvavector{j}{0}}\\
    &\leq \sum_{k = 1}^{\infty}  \left(\frac{3}{3+\cvabyzbound}\right)^{k-1} \frac{4f}{n-f}\max_{i,j \in \H}\norm{\cvavector{i}{0}-\cvavector{j}{0}} = \frac{1}{1-\frac{3}{3+\cvabyzbound}}  \frac{4f}{n-f}\max_{i,j \in \H}\norm{\cvavector{i}{0}-\cvavector{j}{0}}\\
    &= \frac{3+\cvabyzbound}{\cvabyzbound} \frac{4f}{n-f}\max_{i,j \in \H}\norm{\cvavector{i}{0}-\cvavector{j}{0}}.
\end{align*}
Squaring both sides, we then have
\begin{align}
    {\norm{\cvaavg{\numbercvarounds} - \cvaavg{0}}^2} &\leq \left( \frac{3+\cvabyzbound}{\cvabyzbound} \right)^2 \left(\frac{4f}{n-f}\right)^2 {\max_{i,j \in \H}\norm{\cvavector{i}{0}-\cvavector{j}{0}}^2}.
    \label{eq:dist_avg_max}
\end{align}
Now note that 
\begin{align}\nonumber
    {\max_{i,j \in \H}\norm{\cvavector{i}{0}-\cvavector{j}{0}}^2} &\leq 4{\max_{i \in \H}\norm{\cvavector{i}{0}-\cvaavg{0}}^2} \leq 4{\sum_{i \in \H}\norm{\cvavector{i}{0}-\cvaavg{0}}^2}\\
    &\leq 4(n-f) \frac{1}{n-f} {\sum_{i \in \H}\norm{\cvavector{i}{0}-\cvaavg{0}}^2} = 4(n-f) \diam{\cvavector{}{}}{0}.
\end{align}
Combining above with \eqref{eq:dist_avg_max}, we obtain that
\begin{align*}
    {\norm{\cvaavg{\numbercvarounds} - \cvaavg{0}}^2} &\leq \left( \frac{3+\cvabyzbound}{\cvabyzbound} \right)^2 \frac{(8f)^2}{n-f} \diam{\cvavector{}{}}{0}.
\end{align*}
\end{proof}

\section{D-SGD without Momentum under $(\alpha,\lambda)$-\reduction{}}
\label{app_sec:dsgd_anal}
In this section, we prove Proposition~\ref{prop:dsgd_main} which is convergence guarantee for D-SGD (i.e., setting $\beta = 0$ in Algorithm~\ref{algo}) under  $(\alpha,\lambda)$-\reduction. First let us recall Proposition~\ref{prop:dsgd_main} (with more details). 

\noindent \fcolorbox{black}{white}{
\parbox{0.97\textwidth}{\centering
\begin{repproposition}{prop:dsgd_main}
Consider Algorithm~\ref{algo} with $\beta = 0$. 
Suppose assumptions~\ref{asp:lip},~\ref{asp:bnd_var} and~\ref{asp:heter} hold true and that the coordination phase satisfies $(\alpha,\lambda)$-\reduction{} for $\alpha < 1$. Define
\begin{gather*}
c_0 := 6\left({\avgloss(\avgmodel{0})- Q^*}\right) \text{ and } c_1=  \frac{6\sqrt{1+\alpha}}{1-\alpha}.
\end{gather*}
Set $\gamma = \min\{\frac{1}{2L},\frac{1}{c_1L},\sqrt{\frac{nc_0}{9LT\sigma^2}} \} . $ For any correct node $i \in \H$, Algorithm~\ref{algo} then guarantees 
\begin{align*}
\expect{\norm{\nabla \avgloss \left( \returnedmodel{\honestnode} \right)}^2} &\leq  6\sqrt{\frac{c_0 L \sigma^2}{nT}} + \frac{4c_0 c_1^2 L n \alpha}{9T\sigma^2} \left( 3\sigma^2 +  \heter^2 \right)+\frac{(2+c_1)Lc_0}{T} + 20 c_1^2 \lambda\left( 3\sigma^2 + \heter^2 \right)\\
&\in \mathcal{O}\left( \sqrt{\frac{\sigma^2}{nT}}+ \frac{\lambda}{(1-\alpha)^2}\left( \sigma^2 + \heter^2 \right)\right)
\end{align*}
\end{repproposition}
}
}

Before proving the proposition, we first prove a few useful lemmas.
\
\begin{lemma}
\label{lem:growth_SGD}
Suppose that assumptions~\ref{asp:lip} and~\ref{asp:bnd_var} hold true. Consider Algorithm~\ref{algo} 
with $\beta = 0$ and $\gamma \leq 1/2L$. For each $t \in [T]$, we obtain that
\begin{align*}
     \expect{\avgloss(\avgmodel{t+1})- \avgloss(\avgmodel{t})} &\leq -\frac{\gamma}{4} \expect{ \norm{\nabla \avgloss \left( \avgmodel{t} \right)}^2}  + {L\gamma^2 \frac{\sigma^2}{n-f}} + \frac{\gamma L^2}{2}  \expect{\diam{\model{}{}}{t}}+
     \frac{5\lambda}{\gamma} \expect{\diam{\model{}{}}{t+\nicefrac{1}{2}}}.
\end{align*}
\end{lemma}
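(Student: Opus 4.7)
The plan is to mirror the argument of Lemma~\ref{lem:growth}, simplified by the fact that when $\beta=0$ the local momentum collapses to the raw stochastic gradient, so $\AvgMmt{t} = \AvgNoisyGrad{t}$ and $\avgmodel{t+\nicefrac{1}{2}} = \avgmodel{t} - \gamma \AvgNoisyGrad{t}$. Starting from the $L$-smoothness descent inequality
\begin{align*}
\avgloss(\avgmodel{t+1}) - \avgloss(\avgmodel{t}) \leq \iprod{\nabla \avgloss(\avgmodel{t})}{\avgmodel{t+1} - \avgmodel{t}} + \frac{L}{2}\norm{\avgmodel{t+1} - \avgmodel{t}}^2,
\end{align*}
I would decompose $\avgmodel{t+1} - \avgmodel{t} = -\gamma \AvgNoisyGrad{t} + (\avgmodel{t+1} - \avgmodel{t+\nicefrac{1}{2}})$. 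The key invocation of $(\alpha,\lambda)$-\reduction{} is the second clause of Definition~\ref{def:reduction}, applied to the coordination phase with input $\{\model{i}{t+\nicefrac{1}{2}}\}_{i\in\H}$ and output $\{\model{i}{t+1}\}_{i\in\H}$; it yields $\norm{\avgmodel{t+1} - \avgmodel{t+\nicefrac{1}{2}}}^2 \leq \lambda \diam{\model{}{}}{t+\nicefrac{1}{2}}$.

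Conditionally on $\P_t$, the unbiasedness and independence of the correct stochastic gradients (Assumption~\ref{asp:bnd_var}) imply $\condexpect{t}{\AvgNoisyGrad{t}} = \AvgGrad{t}$ and $\condexpect{t}{\norm{\AvgNoisyGrad{t}}^2} \leq \norm{\AvgGrad{t}}^2 + \sigma^2/(n-f)$. For the linear term I apply the polarization identity
\begin{align*}
-\gamma\iprod{\nabla \avgloss(\avgmodel{t})}{\AvgGrad{t}} = -\frac{\gamma}{2}\norm{\nabla \avgloss(\avgmodel{t})}^2 - \frac{\gamma}{2}\norm{\AvgGrad{t}}^2 + \frac{\gamma}{2}\norm{\nabla \avgloss(\avgmodel{t}) - \AvgGrad{t}}^2,
\end{align*}
and bound the last summand by $\tfrac{\gamma L^2}{2}\diam{\model{}{}}{t}$ via Jensen's inequality and Assumption~\ref{asp:lip}, exactly as in the proof of Lemma~\ref{lem:growth}. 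For the quadratic term I use $\norm{-\gamma \AvgNoisyGrad{t} + (\avgmodel{t+1}-\avgmodel{t+\nicefrac{1}{2}})}^2 \leq 2\gamma^2 \norm{\AvgNoisyGrad{t}}^2 + 2\norm{\avgmodel{t+1}-\avgmodel{t+\nicefrac{1}{2}}}^2$.

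Combining these expansions, the coefficient multiplying $\norm{\AvgGrad{t}}^2$ becomes $\gamma(L\gamma - \tfrac{1}{2})$, which is non-positive under $\gamma \leq 1/(2L)$ and can therefore be discarded, leaving the variance contribution $L\gamma^2\sigma^2/(n-f)$ and the heterogeneity correction $\tfrac{\gamma L^2}{2}\diam{\model{}{}}{t}$. It remains to control the residual inner product $\iprod{\nabla \avgloss(\avgmodel{t})}{\avgmodel{t+1}-\avgmodel{t+\nicefrac{1}{2}}}$, which I handle by Young's inequality $|\iprod{a}{b}| \leq \tfrac{\gamma}{4}\norm{a}^2 + \tfrac{1}{\gamma}\norm{b}^2$. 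This contributes $\tfrac{\gamma}{4}\norm{\nabla \avgloss(\avgmodel{t})}^2$, bringing the $\norm{\nabla \avgloss}^2$ coefficient down to $-\gamma/4$, and a $\norm{\avgmodel{t+1}-\avgmodel{t+\nicefrac{1}{2}}}^2$ term whose total prefactor, once combined with the $L$-smoothness contribution and bounded using $L \leq 1/(2\gamma)$ and the $(\alpha,\lambda)$-\reduction{} bound, stays below $5\lambda/\gamma$. Taking total expectation yields the claim.

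The main obstacle is purely book-keeping: the Young's inequality weights must be calibrated so that the final $\norm{\nabla \avgloss(\avgmodel{t})}^2$ coefficient lands at $-\gamma/4$ while the $\diam{\model{}{}}{t+\nicefrac{1}{2}}$ coefficient does not exceed $5\lambda/\gamma$. No new analytical idea beyond those already used in Lemma~\ref{lem:growth} is required, since with $\beta = 0$ there is no momentum drift to couple with the model drift and the deviation $\dev{t}$ does not appear explicitly.
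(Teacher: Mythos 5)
Your proposal is correct and follows essentially the same route as the paper's proof: a smoothness descent step with the update split into the SGD half-step $-\gamma\AvgNoisyGrad{t}$ and the mixing bias $\avgmodel{t+1}-\avgmodel{t+\nicefrac{1}{2}}$, unbiasedness and the $\nicefrac{\sigma^2}{(n-f)}$ variance bound for the averaged gradient, Young's inequality on the bias cross-term, clause (ii) of $(\alpha,\lambda)$-\reduction{} to bound $\norm{\avgmodel{t+1}-\avgmodel{t+\nicefrac{1}{2}}}^2$ by $\lambda\,\diam{\model{}{}}{t+\nicefrac{1}{2}}$, and Jensen plus Assumption~\ref{asp:lip} for $\norm{\nabla\avgloss(\avgmodel{t})-\AvgGrad{t}}^2 \leq L^2 \diam{\model{}{}}{t}$. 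The only differences are bookkeeping choices (the paper phrases the decomposition via the effective step and bias $\bias{t}$, and uses Young weights yielding a $(4\gamma+L\gamma^2)$ prefactor), and your constants land within the stated $\nicefrac{5\lambda}{\gamma}$ bound as required.
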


\begin{proof}
\newcommand{\bias}[1]{R_{#1}}
Consider an arbitrary $t \in [T]$. We define $\effectgrad{t}:=\frac{\avgmodel{t}-\avgmodel{t+1}}{\gamma}$, the step taken by the average of local models at iteration $t$. Also, define $\bias{t} := \avggrad{t}- \effectgrad{t}$.
By the smoothness of the loss function (Assumption~\ref{asp:lip}), we have 
\begin{align*}
    \avgloss(\avgmodel{t+1})- \avgloss(\avgmodel{t}) &\leq \iprod{\avgmodel{t+1} - \avgmodel{t}}{\nabla \avgloss\left( \avgmodel{t} \right)} + \frac{L}{2} \norm{\avgmodel{t+1} - \avgmodel{t}}^2 \nonumber \\
    & = - \gamma \iprod{\effectgrad{t}}{ \nabla \avgloss \left( \avgmodel{t} \right)} + \frac{L\gamma^2}{2} \norm{\effectgrad{t}}^2\\
    &=  - \gamma \iprod{\avggrad{t} + \bias{t}} { \nabla \avgloss \left( \avgmodel{t} \right)}  + \frac{L\gamma^2}{2} \norm{\avggrad{t} + \bias{t}}^2.
\end{align*}
Now denoting $\AvgGrad{t} = \frac{1}{n-f} \sum_{i \in \H} \localgrad{i}\left(\model{i}{t}\right)$, and taking the conditional expectation, we have
\begin{align*}
    \condexpect{t}{\avgloss(\avgmodel{t+1})}- \avgloss(\avgmodel{t}) &\leq - \gamma \iprod{\condexpect{t}{\avggrad{t} + \bias{t}}} { \nabla \avgloss \left( \avgmodel{t} \right)}  + \frac{L\gamma^2}{2} \condexpect{t}{\norm{\avggrad{t} + \bias{t}}^2} \\
    &\overset{(a)}{\leq} - \gamma \iprod{ \AvgGrad{t} + \condexpect{t}{\bias{t}}} { \nabla \avgloss \left( \avgmodel{t} \right)}  + {L\gamma^2} \condexpect{t}{\norm{\avggrad{t}}^2} + {L\gamma^2} \condexpect{t}{\norm{\bias{t}}^2} \\
    &\overset{(b)}{\leq} - \gamma \iprod{ \AvgGrad{t} + \condexpect{t}{\bias{t}}} { \nabla \avgloss \left( \avgmodel{t} \right)}  + {L\gamma^2} \condexpect{t}{\norm{ \AvgGrad{t}}^2} +  {L\gamma^2 \frac{\sigma^2}{n-f}} + {L\gamma^2} \condexpect{t}{\norm{\bias{t}}^2}
\end{align*}
where  (a) uses Young's inequality and (b) is based on the facts that by Assumption~\ref{asp:bnd_var}, we have  $\condexpect{t}{\avggrad{t}} = \AvgGrad{t}$, and $\condexpect{t}{\norm{\avggrad{t}- \AvgGrad{t}}^2}  \leq \frac{\sigma^2}{n-f}$.
We then obtain that
\begin{align}\nonumber
    \condexpect{t}{\avgloss(\avgmodel{t+1})}- \avgloss(\avgmodel{t}) &\leq - \gamma \iprod{ \AvgGrad{t}} { \nabla \avgloss \left( \avgmodel{t} \right)}  + {L\gamma^2} \condexpect{t}{\norm{ \AvgGrad{t}}^2}\\ &+  {L\gamma^2 \frac{\sigma^2}{n-f}} + {L\gamma^2} \condexpect{t}{\norm{\bias{t}}^2} - \gamma \iprod{ \condexpect{t}{\bias{t}}} { \nabla \avgloss \left( \avgmodel{t} \right)}.\label{eq:gorth_zeroth}
\end{align}
Now note that 
\begin{align}
    -  \iprod{ \condexpect{t}{\bias{t}}} { \nabla \avgloss \left( \avgmodel{t} \right)} \leq 4 \norm{ \condexpect{t}{\bias{t}}}^2 + \frac{1}{4} \norm{ \nabla \avgloss \left( \avgmodel{t} \right)}^2 \leq 4 \condexpect{t}{\norm{{\bias{t}}}^2} + \frac{1}{4} \norm{ \nabla \avgloss \left( \avgmodel{t} \right)}^2,\label{eq:groth_first}
\end{align}
where the second inequality uses Jensen's inequality.
Also, using the fact that $\gamma \leq \frac{1}{2L}$, we have
\begin{align}\nonumber
    - \gamma \iprod{ \AvgGrad{t}} { \nabla \avgloss \left( \avgmodel{t} \right)}  + {L\gamma^2} {\norm{ \AvgGrad{t}}^2}
    &\leq \frac{\gamma}{2} \left(-2  \iprod{ \AvgGrad{t}} { \nabla \avgloss \left( \avgmodel{t} \right)}  +{\norm{ \AvgGrad{t}}^2} \right)\\
    &= \frac{\gamma}{2} \left(  -\norm{\nabla \avgloss \left( \avgmodel{t} \right)}^2 + \norm{\nabla \avgloss \left( \avgmodel{t} \right) -  \AvgGrad{t}}^2 \right)
    \label{eq:groth_middle}
\end{align}
Combining \eqref{eq:gorth_zeroth}, \eqref{eq:groth_first}, and \eqref{eq:groth_middle}, we obtain that
\begin{align}\label{eq:desenet_bound}
     \condexpect{t}{\avgloss(\avgmodel{t+1})}- \avgloss(\avgmodel{t}) &\leq -\frac{\gamma}{4} \norm{\nabla \avgloss \left( \avgmodel{t} \right)}^2 + \frac{\gamma}{2} \norm{\nabla \avgloss \left( \avgmodel{t} \right) -  \AvgGrad{t}}^2 + {L\gamma^2 \frac{\sigma^2}{n-f}} + (4\gamma+L\gamma^2) \condexpect{t}{\norm{\bias{t}}^2}.
\end{align}
Now note that 
\begin{align*}
    \bias{t} = \avggrad{t}- \effectgrad{t} = \avggrad{t}-  \frac{\avgmodel{t}-\avgmodel{t+\nicefrac{1}{2}} + \avgmodel{t+\nicefrac{1}{2}}-\avgmodel{t+1}}{\gamma} = \frac{\avgmodel{t+1}-\avgmodel{t+\nicefrac{1}{2}}}{\gamma}.
\end{align*}
Thus, using the definition of $(\alpha,\lambda)$-\reduction, we obtain that
\begin{align*}
    \condexpect{t}{\norm{\bias{t}}^2} = \frac{1}{\gamma^2}\condexpect{t}{\norm{\avgmodel{t+1}-\avgmodel{t+\nicefrac{1}{2}}}^2} \leq \frac{1}{\gamma^2} \lambda \condexpect{t}{\diam{\model{}{}}{t+\nicefrac{1}{2}}}
\end{align*}
Combining this with~\eqref{eq:desenet_bound} taking total expectation, and using the fact that $\gamma \leq \frac{1}{L}$, we obtain that
\begin{align*}
     \expect{\avgloss(\avgmodel{t+1})- \avgloss(\avgmodel{t})} &\leq -\frac{\gamma}{4} \expect{ \norm{\nabla \avgloss \left( \avgmodel{t} \right)}^2} + \frac{\gamma}{2} \expect{\norm{\nabla \avgloss \left( \avgmodel{t} \right) -  \AvgGrad{t}}^2}\\
     &+ {L\gamma^2 \frac{\sigma^2}{n-f}}+ 
     \frac{5\lambda}{\gamma} \expect{\diam{\model{}{}}{t+\nicefrac{1}{2}}}.
\end{align*}
Now note that 
\begin{align*}
    \expect{\norm{\AvgGrad{t}-\nabla \avgloss \left( \avgmodel{t} \right)}^2} &= \expect{\norm{\frac{1}{n-f} \sum_{i \in \H} \localgrad{i}(\model{i}{t}) - \frac{1}{n-f} \sum_{i \in \H} \localgrad{i}\left(\avgmodel{t}\right)}^2}\\
    &= \expect{\norm{\frac{1}{n-f} \sum_{i \in \H} \left(\localgrad{i}(\model{i}{t}) - \localgrad{i}\left(\avgmodel{t}\right)\right)}^2} \\
    &\leq \frac{1}{n-f} \sum_{i \in \H} \expect{\norm{\localgrad{i}(\model{i}{t}) - \localgrad{i}\left(\avgmodel{t}\right)}^2}\\
    &\leq \frac{L^2}{n-f} \sum_{i \in \H} \expect{\norm{\model{i}{t} - \avgmodel{t}}^2} \leq  L^2\expect{\diam{\model{}{}}{t}}.
\end{align*}
Therefore,
\begin{align*}
     \expect{\avgloss(\avgmodel{t+1})- \avgloss(\avgmodel{t})} &\leq -\frac{\gamma}{4} \expect{ \norm{\nabla \avgloss \left( \avgmodel{t} \right)}^2}  + {L\gamma^2 \frac{\sigma^2}{n-f}} + \frac{\gamma L^2}{2}  \expect{\diam{\model{}{}}{t}}+
     \frac{5\lambda}{\gamma} \expect{\diam{\model{}{}}{t+\nicefrac{1}{2}}}.
\end{align*}
This is the lemma.
\end{proof}

\begin{lemma}
\label{lem:SGD:drift}
    Suppose that assumptions~\ref{asp:lip},~\ref{asp:bnd_var}, and \ref{asp:heter} hold true. Consider Algorithm~\ref{algo} with $\gamma \leq \frac{1}{6L} \frac{1-\alpha}{\sqrt{\alpha(1+\alpha)}}$, and $\beta=0$. Suppose that the coordination phase satisfies $(\alpha,\lambda)$-\reduction{} for $\alpha < 1$. For each $t \in [T]$, we obtain that
\begin{align*}
   \expect{\diam{\model{}{}}{t}} \leq \frac{6\alpha(1+\alpha)}{(1-\alpha)^2} \gamma^2 \left( \left(4+\frac{8}{n-f} \right) \sigma^2 + 4 \heter^2 \right),
\end{align*}
and
\begin{align*}
    \expect{\diam{\model{}{}}{t+\nicefrac{1}{2}}} \leq \frac{6(1+\alpha)}{(1-\alpha)^2} \gamma^2 \left( \left(4+\frac{8}{n-f} \right) \sigma^2 + 4 \heter^2 \right).
\end{align*}
\end{lemma}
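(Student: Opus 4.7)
The plan is to mirror the structure of the proof of Lemma~\ref{lem:drift}, but with the substantial simplification that $\beta = 0$ eliminates the need to track a momentum drift. Since the local update is simply $\model{i}{t+\nicefrac{1}{2}} = \model{i}{t} - \gamma \gradient{i}{t}$, I will first apply Young's inequality pairwise to obtain
\begin{align*}
\expect{\diam{\model{}{}}{t+\nicefrac{1}{2}}} \leq (1+c)\expect{\diam{\model{}{}}{t}} + \left(1 + \tfrac{1}{c}\right)\gamma^2 \expect{\diam{\gradient{}{}}{t}},
\end{align*}
and then use $(\alpha,\lambda)$-\reduction{} to pass from the half-step to the full step, giving
$\expect{\diam{\model{}{}}{t+1}} \leq \alpha(1+c)\expect{\diam{\model{}{}}{t}} + \alpha(1+\tfrac{1}{c})\gamma^2 \expect{\diam{\gradient{}{}}{t}}.$
Choosing $c = (1-\alpha)/(2\alpha)$ makes the coefficient $\alpha(1+c) = (1+\alpha)/2 < 1$ and $\alpha(1+1/c) = \alpha(1+\alpha)/(1-\alpha)$, which is the shape that must appear in the final bound.

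The second step is to control the stochastic gradient diameter $\expect{\diam{\gradient{}{}}{t}}$. I would split $\gradient{i}{t} - \gradient{j}{t}$ into three pieces using the triangle-style inequality $(x+y+z)^2 \leq 3x^2 + 3y^2 + 3z^2$: the noise at node $i$, the noise at node $j$, and the deterministic gradient gap $\localgrad{i}(\model{i}{t}) - \localgrad{j}(\model{j}{t})$. The first two pieces yield $\sigma^2$ contributions by Assumption~\ref{asp:bnd_var} (with an extra factor $1/(n-f)$ from averaging once centered), and the third piece is handled exactly as in~\eqref{eq:new_heter_step}: insert $\avgmodel{t}$, use Lipschitzness (Assumption~\ref{asp:lip}) to get $L^2 \diam{\model{}{}}{t}$ terms, and use heterogeneity (Assumption~\ref{asp:heter}) to get a $\heter^2$ term. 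This produces a bound of the form
$\expect{\diam{\gradient{}{}}{t}} \leq C_1 \sigma^2 + C_2 \heter^2 + C_3 L^2 \expect{\diam{\model{}{}}{t}}$
with explicit constants matching the coefficients inside the stated lemma.

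Substituting back yields a self-referential recursion
\begin{align*}
\expect{\diam{\model{}{}}{t+1}} \leq \frac{1+\alpha}{2}\expect{\diam{\model{}{}}{t}} + \frac{\alpha(1+\alpha)}{1-\alpha}\gamma^2\Bigl(C_1\sigma^2 + C_2\heter^2 + C_3 L^2 \expect{\diam{\model{}{}}{t}}\Bigr).
\end{align*}
The standing hypothesis $\gamma \leq \frac{1-\alpha}{6L\sqrt{\alpha(1+\alpha)}}$ makes the self-referential coefficient $\frac{\alpha(1+\alpha)}{1-\alpha}\gamma^2 L^2 C_3$ absorbable into a constant strictly less than $1$, so the contraction coefficient remains bounded away from $1$.

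The final step is a straightforward induction on $t$ (analogous to~\eqref{eqn:induction_assum_1} in the proof of Lemma~\ref{lem:drift}), using the base case $\model{i}{0} = \theta_0$ so that $\diam{\model{}{}}{0} = 0$. This produces the uniform bound $\expect{\diam{\model{}{}}{t}} \leq \frac{6\alpha(1+\alpha)}{(1-\alpha)^2}\gamma^2\bigl((4+\tfrac{8}{n-f})\sigma^2 + 4\heter^2\bigr)$ for all $t$. The half-step bound then follows by plugging this uniform bound back into the first inequality of the proof; because the $(\alpha,\lambda)$-reduction step has not been applied at the half-step, the extra factor of $\alpha$ in front of $(1+\alpha)/(1-\alpha)^2$ is dropped, yielding the stated $\frac{6(1+\alpha)}{(1-\alpha)^2}$. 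The main obstacle is bookkeeping: getting the constants $C_1, C_2, C_3$ to match exactly the $(4 + \tfrac{8}{n-f})$ and $4$ coefficients in the lemma statement requires careful tracking of the splitting constants and of the $1/(n-f)$ factor that appears only on the i.i.d. noise terms (and not on the heterogeneity or Lipschitz terms).
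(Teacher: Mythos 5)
Your overall skeleton is exactly the paper's: Young's inequality to relate $\diam{\model{}{}}{t+\nicefrac{1}{2}}$ to $\diam{\model{}{}}{t}$ and $\diam{\gradient{}{}}{t}$, the $(\alpha,\lambda)$-reduction step, the choice $c=(1-\alpha)/(2\alpha)$, absorption of the self-referential $L^2\gamma^2$ term using the step-size condition (the paper gets the contraction factor $\tfrac{5+\alpha}{6}$), unrolling from $\diam{\model{}{}}{0}=0$, and recovering the half-step bound by substituting the uniform bound back into the pre-reduction inequality. The one place you deviate is the bound on $\expect{\diam{\gradient{}{}}{t}}$, and that is where your plan does not deliver the stated constants. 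The paper does \emph{not} use the pairwise split you describe; it decomposes each $\gradient{i}{t}-\avggrad{t}$ into four pieces (the noise $\gradient{i}{t}-\localgrad{i}(\model{i}{t})$, the Lipschitz gap $\localgrad{i}(\model{i}{t})-\localgrad{i}(\avgmodel{t})$, the heterogeneity gap $\localgrad{i}(\avgmodel{t})-\nabla\avgloss(\avgmodel{t})$, and the deviation $\nabla\avgloss(\avgmodel{t})-\avggrad{t}$), bounding the last piece by $2L^2\expect{\diam{\model{}{}}{t}}+\tfrac{2\sigma^2}{n-f}$; that is precisely the origin of the coefficients $\bigl(4+\tfrac{8}{n-f}\bigr)\sigma^2+4\heter^2+12L^2\expect{\diam{\model{}{}}{t}}$ appearing in the lemma. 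In your pairwise route the $\tfrac{1}{n-f}$ factor never arises (each node's noise contributes a full $\sigma^2$; your remark that it comes ``from averaging once centered'' does not apply to the decomposition you chose), and the double $3\times$ splitting gives, after the pairwise-to-variance identity, $\expect{\diam{\gradient{}{}}{t}}\leq 3\sigma^2+9\heter^2+9L^2\expect{\diam{\model{}{}}{t}}$, i.e.\ $C_2=9$ rather than $4$. Carried through the recursion this yields a heterogeneity coefficient of order $36\,\tfrac{\alpha(1+\alpha)}{(1-\alpha)^2}\gamma^2$, which exceeds the stated $24\,\tfrac{\alpha(1+\alpha)}{(1-\alpha)^2}\gamma^2$, so your claim that the constants ``match'' is false and the lemma's inequality, as stated, is not obtained.

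The gap is fixable in either of two ways: adopt the paper's four-term centered decomposition, or stay with the pairwise route but exploit that the noise terms are conditionally zero-mean and independent across nodes, so that $\expect{\norm{\gradient{i}{t}-\gradient{j}{t}}^2}\leq 2\sigma^2+\expect{\norm{\localgrad{i}(\model{i}{t})-\localgrad{j}(\model{j}{t})}^2}$ without any $3\times$ penalty on the noise; the latter gives $\expect{\diam{\gradient{}{}}{t}}\leq \sigma^2+3\heter^2+3L^2\expect{\diam{\model{}{}}{t}}$, which is strictly stronger than the paper's bound and hence implies the lemma. With either fix, the remaining steps you outline (step-size absorption, induction/unrolling, and the half-step combination producing $\tfrac{6(1+\alpha)}{(1-\alpha)^2}$, which indeed works out arithmetically because the $\tfrac{1+\alpha}{2\alpha}$ prefactor cancels the $\alpha$ in the full-step bound) go through as in the paper.
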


\begin{proof}
  
 First, we analyze the growth of $\expect{\diam{\model{}{}}{t}}$. From Algorithm \ref{algo} (for $\beta = 0$) recall that for all $i \in \H$, we have $\model{i}{t+\nicefrac{1}{2}} = \model{i}{t} - \gamma \gradient{i}{t}$. As $(x + y)^2 \leq (1 + c) x^2 + (1 + \nicefrac{1}{c}) y^2$ for any $c > 0$, we obtain for all $i, \, j \in \H$ that
\begin{align*}
    \expect{\norm{\model{i}{t+\nicefrac{1}{2}} - \model{j}{t+\nicefrac{1}{2}} }^2} & \leq \expect{\norm{\model{i}{t} - \model{j}{t} - \gamma \left( \gradient{i}{t} - \gradient{j}{t} \right)}^2} \\
    & \leq (1 + c) \expect{\norm{\model{i}{t} - \model{j}{t}}^2} + \left(1 + \frac{1}{c} \right) \gamma^2 \expect{\norm{\gradient{i}{t} - \gradient{j}{t}}^2}.
\end{align*}
Thus, by definition of notation $\diam{*}{t}$ and by Lemma \ref{lem:diam_equal}, we have
\begin{align}
    \expect{\diam{\model{}{}}{t+\nicefrac{1}{2}}} \leq (1 + c) \expect{\diam{\model{}{}}{t}} + \left(1 + \frac{1}{c} \right) \gamma^2 \expect{\diam{\gradient{}{}}{t}}. \label{eqn:drift_before_alpha_sgd}
\end{align}
Recall that,  the coordination phase of Algorithm~\ref{algo} satisfies $(\alpha,\lambda)$-reduction. Thus, for all $t$, we have $\diam{\model{}{}}{t+1} \leq \alpha \diam{\model{}{}}{t+\nicefrac{1}{2}}$. Substituting from above we obtain that
\begin{align*}
   \expect{\diam{\model{}{}}{t+1}} \leq (1 + c) \alpha  \expect{\diam{\model{}{}}{t}} + \left(1 + \frac{1}{c} \right) \alpha \gamma^2 \expect{\diam{\gradient{}{}}{t}}. 
\end{align*}
  For $c = \frac{1-\alpha}{2\alpha}$, we obtain that
  \begin{align}\label{eq:drift_sgd_bound}
   \expect{\diam{\model{}{}}{t+1}} \leq \frac{1+\alpha}{2}\expect{\diam{\model{}{}}{t}} +\frac{\alpha(1+\alpha)}{1-\alpha} \gamma^2 \expect{\diam{\gradient{}{}}{t}}. 
\end{align}
Now note that for any $i \in \H$ we have 
\begin{align*}
    \gradient{i}{t}-\avggrad{t}  &= \gradient{i}{t} - \localgrad{i}\left( \model{i}{t}\right) + \localgrad{i}\left( \model{i}{t}\right) - \localgrad{i}\left( \avgmodel{t}\right) + \localgrad{i}\left( \avgmodel{t}\right) - 
    \nabla \avgloss \left( \avgmodel{t} \right)+
    \nabla \avgloss \left( \avgmodel{t} \right) - \avggrad{t}.
\end{align*}
Thus,
\begin{align*}
    \expect{\norm{
    \gradient{i}{t}-\avggrad{t}}^2} &\leq 4\expect{\norm{\gradient{i}{t} - \localgrad{i}\left( \model{i}{t}\right)}^2}+4\expect{\norm{\localgrad{i}\left( \model{i}{t}\right) - \localgrad{i}\left( \avgmodel{t}\right)}^2}\\
    &+4\expect{\norm{\localgrad{i}\left( \avgmodel{t}\right) - 
    \nabla \avgloss \left( \avgmodel{t} \right)}^2}+4\expect{\norm{\nabla \avgloss \left( \avgmodel{t} \right) - \avggrad{t}}^2}.
\end{align*}
Using assumptions \ref{asp:lip}, and \ref{asp:bnd_var}, we obtain that
\begin{align}\nonumber
    \expect{\norm{
    \gradient{i}{t}-\avggrad{t}}^2} &\leq 4 \sigma^2 + 4L^2\expect{\norm{ \model{i}{t}- \avgmodel{t}}^2}\\
    &+4\expect{\norm{\localgrad{i}\left( \avgmodel{t}\right) - 
    \nabla \avgloss \left( \avgmodel{t} \right)}^2}+4\expect{\norm{\nabla \avgloss \left( \avgmodel{t} \right) - \avggrad{t}}^2}.\label{eq:sgd_decompone}
\end{align}
Now note that 
\begin{align*}
    \expect{\norm{\nabla \avgloss \left( \avgmodel{t} \right) - \avggrad{t}}^2} &= \frac{1}{(n-f)^2} \expect{\norm{\sum_{i \in \H} \left(\localgrad{i}\left( \avgmodel{t}\right) - \gradient{i}{t} \right) } ^2} \\
    &\leq \frac{2}{(n-f)^2} \expect{\norm{\sum_{i \in \H} \left(\localgrad{i}\left( \avgmodel{t}\right) - \localgrad{i}\left( \model{i}{t}\right) \right) } ^2}\\&+\frac{2}{(n-f)^2} \expect{\norm{\sum_{i \in \H} \left(\localgrad{i}\left( \model{i}{t}\right) - \gradient{i}{t} \right) } ^2}\\
    &\leq \frac{2L^2}{n-f} \expect{\sum_{i \in \H} \norm{  \avgmodel{t} - \model{i}{t}}^2}+  \frac{2}{n-f} \sigma^2\\
    &=2L^2 \expect{\diam{\model{}{}}{t}} + \frac{2}{n-f} \sigma^2.
\end{align*}
Combining this with~\eqref{eq:sgd_decompone}, we obtain that
\begin{align*}
    \expect{\norm{
    \gradient{i}{t}-\avggrad{t}}^2} &\leq 4 \sigma^2 + 4L^2\expect{\norm{ \model{i}{t}- \avgmodel{t}}^2}
    +4\expect{\norm{\localgrad{i}\left( \avgmodel{t}\right) - 
    \nabla \avgloss \left( \avgmodel{t} \right)}^2}+ 8L^2 \expect{\diam{\model{}{}}{t}} + \frac{8}{n-f} \sigma^2
\end{align*}
As $i$ above is an arbitrary node in $\H$, the above holds true for all $i \in \H$. Averaging over all $i \in \H$ on both sides yields
\begin{align*}
    \frac{1}{\card{\H}} \sum_{i \in \H} \expect{\norm{
    \gradient{i}{t}-\avggrad{t}}^2} \leq & 4 \sigma^2 + 4L^2 \frac{1}{\card{\H}} \sum_{i \in \H} \expect{\norm{ \model{i}{t}- \avgmodel{t}}^2}
    +4 \frac{1}{\card{\H}} \sum_{i \in \H} \expect{\norm{\localgrad{i}\left( \avgmodel{t}\right) - \nabla \avgloss \left( \avgmodel{t} \right)}^2} \\
    & + 8L^2 \expect{\diam{\model{}{}}{t}} + \frac{8}{n-f} \sigma^2.
\end{align*}
Recall, from Section~\ref{app_sec:monna_proof}, the notation $\diam{*}{t}$, i.e., $\diam{*}{t}=\frac{1}{\card{C}}\sum_{i \in \H} \norm{*^{(i)}_{t} - \bar{*}_{t}}^2$. Using this above, we get 
\begin{align*}
   \expect{\diam{\gradient{}{}}{t}} \leq & 4 \sigma^2 + 4L^2 \expect{\diam{\model{}{}}{t}} 
    +4 \frac{1}{\card{\H}} \sum_{i \in \H} \expect{\norm{\localgrad{i}\left( \avgmodel{t}\right) - 
    \nabla \avgloss \left( \avgmodel{t} \right)}^2} + 8L^2 \expect{\diam{\model{}{}}{t}} + \frac{8}{n-f} \sigma^2.
\end{align*}
From Assumption~\ref{asp:heter}, we have $\frac{1}{\card{\H}} \sum_{i \in \H} \expect{\norm{\localgrad{i}\left( \avgmodel{t}\right) - 
    \nabla \avgloss \left( \avgmodel{t} \right)}^2} \leq \zeta^2$. Using this above, we obtain that
\begin{align}\label{eq:bound_gt}
    \expect{\diam{\gradient{}{}}{t}} \leq \left(4+\frac{8}{n-f} \right) \sigma^2 + 12L^2 \expect{\diam{\model{}{}}{t}} + 4 \heter^2.
\end{align}

Combining this with~\eqref{eq:drift_sgd_bound}, we obtain that
\begin{align*}
   \expect{\diam{\model{}{}}{t+1}} \leq \left( \frac{1+\alpha}{2} + 12 \frac{\alpha(1+\alpha)}{1-\alpha} \gamma^2 L^2 \right) \expect{\diam{\model{}{}}{t}} +\frac{\alpha(1+\alpha)}{1-\alpha} \gamma^2 \left( \left(4+\frac{8}{n-f} \right) \sigma^2 + 4 \heter^2 \right).
\end{align*}
For $\gamma \leq \frac{1}{6L} \frac{1-\alpha}{\sqrt{\alpha(1+\alpha)}}$, we have 

\begin{align*}
   \expect{\diam{\model{}{}}{t+1}} \leq \frac{5+\alpha}{6}  \expect{\diam{\model{}{}}{t}} +\frac{\alpha(1+\alpha)}{1-\alpha} \gamma^2 \left( \left(4+\frac{8}{n-f} \right) \sigma^2 + 4 \heter^2 \right).
\end{align*}
Unrolling the recursion, we obtain that
\begin{align}\nonumber
   \expect{\diam{\model{}{}}{t}} &\leq \frac{\alpha(1+\alpha)}{1-\alpha} \gamma^2 \left( \left(4+\frac{8}{n-f} \right) \sigma^2 + 4 \heter^2 \right) \sum_{s=0}^t \left(\frac{5+\alpha}{6} \right)^s \\ \nonumber
   &\leq \frac{\alpha(1+\alpha)}{1-\alpha} \gamma^2 \left( \left(4+\frac{8}{n-f} \right) \sigma^2 + 4 \heter^2 \right) \sum_{s=0}^\infty \left(\frac{5+\alpha}{6} \right)^s \\ &= \frac{6\alpha(1+\alpha)}{(1-\alpha)^2} \gamma^2 \left( \left(4+\frac{8}{n-f} \right) \sigma^2 + 4 \heter^2 \right).\label{eq:sgd_final_theta}
\end{align}
Combining~\eqref{eqn:drift_before_alpha_sgd}, \eqref{eq:bound_gt}, and \eqref{eq:sgd_final_theta}, we also obtain that
\begin{align*}
    \expect{\diam{\model{}{}}{t+\nicefrac{1}{2}}} \leq \frac{6(1+\alpha)}{(1-\alpha)^2} \gamma^2 \left( \left(4+\frac{8}{n-f} \right) \sigma^2 + 4 \heter^2 \right).
\end{align*}
This is the desired result.
\end{proof}

\begin{lemma}
\label{lem:dsgd}
Consider Algorithm~\ref{algo} with $\beta = 0$. Define \begin{gather*}
c_0 := 6\left({\avgloss(\avgmodel{0})- Q^*}\right) \text{ and } c_1=  \frac{6\sqrt{1+\alpha}}{1-\alpha}.
\end{gather*}
Suppose that assumptions~\ref{asp:lip},~\ref{asp:bnd_var} and~\ref{asp:heter} hold true, and that the coordination phase satisfies $(\alpha,\lambda)$-\reduction{} for $\alpha < 1$. Suppose also that $\gamma \leq \min\{\frac{1}{2L},\frac{1}{c_1L}\}$. 
Then, for any correct node $i$, and any $T\geq 1$, we have
\begin{align*}
     \expect{\norm{\nabla \avgloss \left( \returnedmodel{\honestnode} \right)}^2} \leq \frac{c_0}{\gamma T}  + {9L\gamma \frac{\sigma^2}{n}} + 4 c_1^2 n \alpha L^2 \gamma^2 \left( 3\sigma^2 +  \heter^2 \right) +20 c_1^2 \lambda\left( 3\sigma^2 + \heter^2 \right).
\end{align*}

\end{lemma}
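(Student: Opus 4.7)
\textbf{Proof plan for Lemma~\ref{lem:dsgd}.}

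The plan is to combine the one-step descent inequality from Lemma~\ref{lem:growth_SGD} with the uniform drift bounds from Lemma~\ref{lem:SGD:drift}, telescope over $t=0,\dots,T-1$, and finally transfer the bound from the average parameter $\avgmodel{t}$ to the local parameter $\model{\honestnode}{t}$ of an arbitrary correct node.

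First, I would start from Lemma~\ref{lem:growth_SGD}, rearranging it to isolate $\tfrac{\gamma}{4}\expect{\norm{\nabla \avgloss(\avgmodel{t})}^2}$ on the left-hand side, so that
\[
\tfrac{\gamma}{4}\expect{\norm{\nabla \avgloss(\avgmodel{t})}^2} \leq \expect{\avgloss(\avgmodel{t})-\avgloss(\avgmodel{t+1})} + L\gamma^2\tfrac{\sigma^2}{n-f} + \tfrac{\gamma L^2}{2}\expect{\diam{\model{}{}}{t}} + \tfrac{5\lambda}{\gamma}\expect{\diam{\model{}{}}{t+\nicefrac12}}.
\]
Next, I would substitute the two drift bounds from Lemma~\ref{lem:SGD:drift}; both are uniform in $t$, so they can be pulled out of the sum. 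Noting that $c_1 = 6\sqrt{1+\alpha}/(1-\alpha)$ was chosen precisely so that $\tfrac{6\alpha(1+\alpha)}{(1-\alpha)^2} = c_1^2 \alpha/(1+\alpha) \cdot (1+\alpha) = \tfrac{c_1^2 \alpha}{6}$ and $\tfrac{6(1+\alpha)}{(1-\alpha)^2} = \tfrac{c_1^2}{6}$, I can simplify the two drift contributions into clean terms proportional to $\gamma^2 L^2 (\sigma^2 + \heter^2)$ and $\lambda\gamma (\sigma^2+\heter^2)$, respectively, absorbing the factor $4+8/(n-f)$ into a constant (at most $12$ when $n\geq 2f$) to obtain the $3\sigma^2+\heter^2$ form in the statement.

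Then I would sum the resulting inequality over $t=0,\dots,T-1$. The loss differences telescope to $\avgloss(\avgmodel{0}) - \avgloss(\avgmodel{T}) \leq \avgloss(\avgmodel{0}) - Q^*$, and dividing by $\gamma T/4$ gives a bound on $\tfrac{1}{T}\sum_t \expect{\norm{\nabla \avgloss(\avgmodel{t})}^2}$ in which the first term produces $\tfrac{c_0}{\gamma T}$ (after introducing the factor $6$ hidden in $c_0$). The remaining terms become $\tfrac{9L\gamma\sigma^2}{n}$ (absorbing $n-f$ into $n$ via the standard bound for $f\le n/2$), together with the drift contributions.

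Finally, the hard accounting step is transferring the gradient bound from $\avgmodel{t}$ to $\model{\honestnode}{t}$ for an arbitrary $i \in \H$. I would use $\avgloss$'s $L$-smoothness together with
\[
\norm{\model{\honestnode}{t}-\avgmodel{t}}^2 \leq (n-f)\cdot \diam{\model{}{}}{t},
\]
to write
\[
\expect{\norm{\nabla\avgloss(\model{\honestnode}{t})}^2} \leq 2\expect{\norm{\nabla\avgloss(\avgmodel{t})}^2} + 2L^2(n-f)\expect{\diam{\model{}{}}{t}}.
\]
Plugging the drift bound of Lemma~\ref{lem:SGD:drift} into the second term yields a contribution scaling like $c_1^2 \, n\, \alpha\, L^2\gamma^2(\sigma^2+\heter^2)$, which is exactly the third term in the stated bound and explains the otherwise mysterious factor of $n$. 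Averaging over $t$, using that $\returnedmodel{\honestnode} \sim \mathcal{U}\{\model{\honestnode}{0},\dots,\model{\honestnode}{T-1}\}$, gives $\expect{\norm{\nabla\avgloss(\returnedmodel{\honestnode})}^2} = \tfrac{1}{T}\sum_t\expect{\norm{\nabla\avgloss(\model{\honestnode}{t})}^2}$, and collecting constants yields the claim. The main obstacle is simply bookkeeping: matching the exact constants $c_1, 9, 4, 20$ in the target bound requires carefully using the conditions $\gamma \leq 1/(2L)$ and $\gamma \leq 1/(c_1 L)$ (the latter being equivalent to the step-size restriction in Lemma~\ref{lem:SGD:drift}) to absorb lower-order cross terms into the coefficients shown.
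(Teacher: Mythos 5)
Your plan follows essentially the same route as the paper's proof: rearrange Lemma~\ref{lem:growth_SGD}, plug in the uniform drift bounds of Lemma~\ref{lem:SGD:drift}, telescope over $t$, transfer from $\avgmodel{t}$ to $\model{\honestnode}{t}$ via smoothness and $\norm{\model{\honestnode}{t}-\avgmodel{t}}^2\le (n-f)\diam{\model{}{}}{t}$ (this is indeed where the factor $n$ comes from), and finish with the uniform sampling of $\returnedmodel{\honestnode}$. The one place where your write-up, taken literally, would not land on the stated constants is the transfer step: with the symmetric split $\norm{\nabla\avgloss(\model{\honestnode}{t})}^2\le 2\norm{\nabla\avgloss(\avgmodel{t})}^2+2L^2\norm{\model{\honestnode}{t}-\avgmodel{t}}^2$ the telescoped term becomes $\tfrac{8}{\gamma T}\bigl(\avgloss(\avgmodel{0})-Q^*\bigr)=\tfrac{4c_0}{3\gamma T}$ and the variance term becomes $8L\gamma\tfrac{\sigma^2}{n-f}$, which exceed $\tfrac{c_0}{\gamma T}$ and $9L\gamma\tfrac{\sigma^2}{n}$ respectively; no choice of step size repairs this since these terms share the same $\gamma$-scaling as the target. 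The paper instead uses the asymmetric Young split $\norm{a}^2\le\tfrac32\norm{b}^2+3\norm{a-b}^2$, giving the factors $\tfrac32\cdot 4=6$ and $\tfrac32\cdot 6=9$ (together with $n>3f$ to absorb $n-f$ into $n$), which is exactly what produces $c_0/(\gamma T)$, $9L\gamma\sigma^2/n$, and the coefficients $4c_1^2 n\alpha L^2\gamma^2$ and $20c_1^2\lambda$. With that single adjustment your argument coincides with the paper's.
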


\begin{proof}
    Combining Lemma~\ref{lem:growth_SGD} and Lemma~\ref{lem:SGD:drift}, we have
    \begin{align*}
        \expect{ \norm{\nabla \avgloss \left( \avgmodel{t} \right)}^2} &\leq -\frac{4}{\gamma}\expect{\avgloss(\avgmodel{t+1})- \avgloss(\avgmodel{t})}   + {4L\gamma \frac{\sigma^2}{n-f}} + 2 L^2 \expect{\diam{\model{}{}}{t}}+
     \frac{20 \lambda}{\gamma^2} \expect{\diam{\model{}{}}{t+\nicefrac{1}{2}}}\\
     &\leq -\frac{4}{\gamma}\expect{\avgloss(\avgmodel{t+1})- \avgloss(\avgmodel{t})}   + {4L\gamma \frac{\sigma^2}{n-f}} \\&+ (2 L^2 \alpha \gamma^2+20 \lambda) \frac{6(1+\alpha)}{(1-\alpha)^2}  \left( \left(4+\frac{8}{n-f} \right) \sigma^2 + 4 \heter^2 \right)\\
     &\leq-\frac{4}{\gamma}\expect{\avgloss(\avgmodel{t+1})- \avgloss(\avgmodel{t})}   + {4L\gamma \frac{\sigma^2}{n-f}} + (2L^2 \gamma^2 \alpha +20 \lambda) \frac{6(1+\alpha)}{(1-\alpha)^2}  \left( 12\sigma^2 + 4 \heter^2 \right),
    \end{align*}
    where in the last inequality, we used the fact that $n-f  \geq 1$. 
    Averaging over $t=0,\ldots,T-1$, we obtain that
    \begin{align}\nonumber
        \frac{1}{T}\sum_{t=0}^{T-1} \expect{ \norm{\nabla \avgloss \left( \avgmodel{t} \right)}^2}  &\leq \frac{4}{\gamma T}\expect{\avgloss(\avgmodel{0})- \avgloss(\avgmodel{T})}   + {4L\gamma \frac{\sigma^2}{n-f}} + (2L^2 \gamma^2 \alpha +20 \lambda) \frac{6(1+\alpha)}{(1-\alpha)^2}  \left( 12\sigma^2 + 4 \heter^2 \right) \\
        &\leq \frac{4}{\gamma T}\left({\avgloss(\avgmodel{0})- Q^*} \right) + {6L\gamma \frac{\sigma^2}{n}} + (2L^2 \gamma^2 \alpha +20 \lambda) \frac{6(1+\alpha)}{(1-\alpha)^2}  \left( 12\sigma^2 + 4 \heter^2 \right), \label{eq:SGD_rate_begin}
    \end{align}
    where we used the fact that $\avgloss(\avgmodel{T}) \geq Q^*$, and $n\geq 5f$.
    Now note that for any   correct node $i \in \H$, we have
\begin{align*}
    \expect{\norm{\nabla \avgloss \left( \model{i}{t} \right)}^2} &\leq \frac{3}{2}\expect{\norm{\nabla \avgloss \left( \avgmodel{t} \right)}^2} +  3\expect{\norm{\nabla \avgloss \left( \avgmodel{t} \right)-\nabla \avgloss \left( \model{i}{t} \right)}^2}\\
    &\leq \frac{3}{2}\expect{\norm{\nabla \avgloss \left( \avgmodel{t} \right)}^2} +  3L^2\expect{\norm{ \avgmodel{t} - \model{i}{t}}^2},
\end{align*}
where the second inequality follows from Assumption~\ref{asp:lip}.
Combining this with~\eqref{eq:SGD_rate_begin} and 
using the bound on $\expect{\diam{\model{}{t}}{}}$ from Lemma \ref{lem:SGD:drift}, we obtain that
\begin{align*}
    \frac{1}{T}\sum_{t=0}^{T-1} \expect{\norm{\nabla \avgloss \left( \model{i}{t} \right)}^2}
    &\leq   \frac{6}{\gamma T}\left({\avgloss(\avgmodel{0})- Q^*}\right)   + {9L\gamma \frac{\sigma^2}{n}} + (4L^2 \gamma^2 n \alpha  +20 \lambda) \frac{9(1+\alpha)}{(1-\alpha)^2}  \left( 12\sigma^2 + 4 \heter^2 \right),
\end{align*}
where we used the fact that $n \geq 1$.
Defining $c_0 := 6\left({\avgloss(\avgmodel{0})- Q^*}\right)$, $c_1^2=  \frac{36(1+\alpha)}{(1-\alpha)^2}$, we have
\begin{align*}
    \frac{1}{T}\sum_{t=0}^{T-1} \expect{\norm{\nabla \avgloss \left( \model{i}{t} \right)}^2}
    &\leq   \frac{c_0}{\gamma T}  + {9L\gamma \frac{\sigma^2}{n}} + 4 c_1^2 n \alpha L^2 \gamma^2 \left( 3\sigma^2 +  \heter^2 \right) +20 c_1^2 \lambda\left( 3\sigma^2 + \heter^2 \right).
\end{align*}
As $ \returnedmodel{\honestnode} \sim \mathcal{U}\{ \model{i}{0}, \dots, \model{i}{T-1} \}$, we have
\begin{align*}
    \expect{\norm{\nabla \avgloss \left( \returnedmodel{\honestnode} \right)}^2} = \frac{1}{T}\sum_{t=0}^{T-1}\expect{\norm{\nabla \avgloss \left( \model{i}{t} \right)}^2}.
\end{align*}
This proves the desired result.
\end{proof}

Proof of Proposition~\ref{prop:dsgd_main} then straightforwardly follows.
\begin{proof}[Proof of Proposition~\ref{prop:dsgd_main}]
Recall from Lemma~\ref{lem:dsgd} that
$$\expect{\norm{\nabla \avgloss \left( \returnedmodel{\honestnode} \right)}^2} \leq   \frac{c_0}{\gamma T}  + {9L\gamma \frac{\sigma^2}{n}} + 4 c_1^2 n \alpha L^2 \gamma^2 \left( 3\sigma^2 +  \heter^2 \right) +20 c_1^2 \lambda\left( 3\sigma^2 + \heter^2 \right).$$
As $\gamma = \min\{\frac{1}{2L},\frac{1}{c_1L},\sqrt{\frac{nc_0}{9LT\sigma^2}} \}$, we have
$$\frac{1}{\gamma} \leq \max \{2L, c_1L,\sqrt{\frac{9LT\sigma^2}{nc_0}} \} \leq  2L+c_1L + \sqrt{\frac{9LT\sigma^2}{nc_0}}   .$$
Therefore,
\begin{align*}
   \expect{\norm{\nabla \avgloss \left( \returnedmodel{\honestnode} \right)}^2}  \leq 6\sqrt{\frac{c_0 L \sigma^2}{nT}} + \frac{4c_0 c_1^2 L n \alpha}{9T\sigma^2} \left( 3\sigma^2 +  \heter^2 \right)+\frac{(2+c_1)Lc_0}{T} + 20 c_1^2 \lambda\left( 3\sigma^2 + \heter^2 \right).
\end{align*}
Now ignoring the non-dominant $\frac{1}{T}$ terms and noting $c_0 \in \mathcal{O}(1)$ and $c_1 \in \mathcal{O}\left( \frac{1}{1-\alpha}\right)$, we obtain that
\begin{align*}
  \expect{\norm{\nabla \avgloss \left( \returnedmodel{\honestnode} \right)}^2} \in \mathcal{O}\left( \sqrt{\frac{\sigma^2}{nT}}+ \frac{\lambda}{(1-\alpha)^2}\left( \sigma^2 + \heter^2 \right)\right)
\end{align*}
which is the desired result.
\end{proof}

\color{black}

\section{Signed Echo Broadcast (SEB)}
\label{app:cb}
SEB is composed of four rounds of communication. First, in round $\mathsf{SEND}$, each sender node $i$ sends its message $m_i$ to all the other nodes $j$, which contains an identifier and a signature by node $i$. In the context of \newalgorithm{}, the message is a vector $x^{(i)}_{k-1}$, with an identifier of the form $(i,t,k)$, where $t$ is the SGD iteration and $k$ is the coordination round. Second, in round $\mathsf{ECHO}$, upon receiving $m_i$, each recipient node $j$ verifies that $m_i$ is the first message with a valid signature from node $i$ and with identifier $(i,t,k)$. If so, node $j$ signs $m_i$ with its private key $pk_j$, thereby obtaining a {\em signature} $s_{ij}$ of message $m_i$ which node $j$ sends to node $i$. Otherwise, $m_i$ is ignored. Third, in round $\mathsf{FINAL}$, upon receiving at least $\frac{n+f}{2} -1$ valid signatures $s_{ij}$ , the sender node $i$ sends the set $S_i$ of received signatures $s_{ij}$ to all other nodes. Fourth and finally, in round $\mathsf{ACCEPT}$, upon receiving the set $S_i$, each recipient node $j$ verifies the signatures of the set, and if they are valid, node $j$ accepts $m_i$ and terminates the protocol. For $n > 3f$, SEB guarantees {\em validity}, even under asynchrony, i.e., any correct node's message is eventually delivered to any other correct nodes. It also guarantees {\em consistency}, i.e., two different correct nodes cannot deliver different messages from a \byzantine{} node ~\citep[Section 3.10.4]{cachin2011introduction}. The message complexity of this protocol is linear in the total number of nodes, i.e., $\mathcal{O}(n)$. Therefore, it does not affect the communication complexity of \newalgorithm{}.

\section{Additional Information on the Experimental Setup}\label{app:exp_setup}

\subsection{Dataset Pre-Processing}\label{app:pre_process}
MNIST receives an input image normalization of mean $0.1307$ and standard deviation $0.3081$. Furthermore, the images of CIFAR-10 are horizontally flipped, and per channel normalization is also applied with means 0.4914, 0.4822, 0.4465 and standard deviations 0.2023, 0.1994, 0.2010.

\subsection{Model Architecture and Detailed Experimental Setup}\label{app:model_arch}
In order to present the detailed architecture of the models used, we adopt the following compact notation introduced as done, e.g., in \cite{distributed-momentum}.\\

\parbox{0.98\textwidth}{
L(\#outputs) represents a \textbf{fully-connected linear layer}, R stands for \textbf{ReLU activation}, S stands for \textbf{log-softmax}, C(\#channels) represents a \textit{fully-connected 2D-convolutional layer} (kernel size 5, padding 0, stride 1), M stands for \textbf{2D-maxpool} (kernel size 2), B stands for \textbf{batch-normalization}, and D represents \textbf{dropout} (with fixed probability 0.25).
}.\\\\
The architecture of the models, as well as other details on the experimental setup, are presented in Table~\ref{table_exp}. Note that CNN stands for convolutional neural network, and NLL refers to the negative log likelihood loss.
\begin{table}[h!]
\centering
\def\arraystretch{1.5}
\begin{tabular}{|c||c|c|} 
 \hline
 \textit{Dataset} & MNIST & CIFAR-10 \\ [0.5ex] 
 \hline
 \textit{Data heterogeneity} & $\alpha \in \{0.5, 1, 5\} $ & $\alpha = 5$\\ 
 \hline
 \textit{Model type} & CNN & CNN\\ 
 \hline
 \textit{Model architecture} & C(20)-R-M-C(20)-R-M-L(500)-R-L(10)-S & (3,32×32)-C(64)-R-B-C(64)-R-B-M-D-C(128)-\\
 & & R-B-C(128)-R-B-M-D-L(128)-R-D-L(10)-S\\
 \hline
 \textit{Loss} & NLL & NLL\\
  \hline
  \textit{$\ell_2$-regularization} & $10^{-4}$ & $10^{-2}$\\
  \hline
 \textit{Learning rate} & $\gamma = 0.75$ & $\gamma = 0.5$\\
  \hline
  \textit{Batch size} & $b = 25$ & $b = 50$\\
  \hline
  \textit{Momentum} & $\beta = 0.99$ (except for SCC: $\beta = 0.9$) & $\beta = 0.99$ (except for SCC: $\beta = 0.9$)\\
  \hline
 \textit{Number of nodes} & $n=26$ & $n=16$\\
 \hline
 \textit{Number of faults} & $f= 5$ & $f = 3$\\
 \hline
 \textit{Number of Iterations} & $T= 600$ & $T = 2000$\\
 \hline
\end{tabular}
\caption{Detailed experimental setting of Section~\ref{sec:experiments}}
\label{table_exp}
\end{table}

\subsection{Data Heterogeneity}\label{app_exp_setup_distribution}
\begin{figure*}[ht!]
    \centering
    \includegraphics[width=0.33\textwidth]{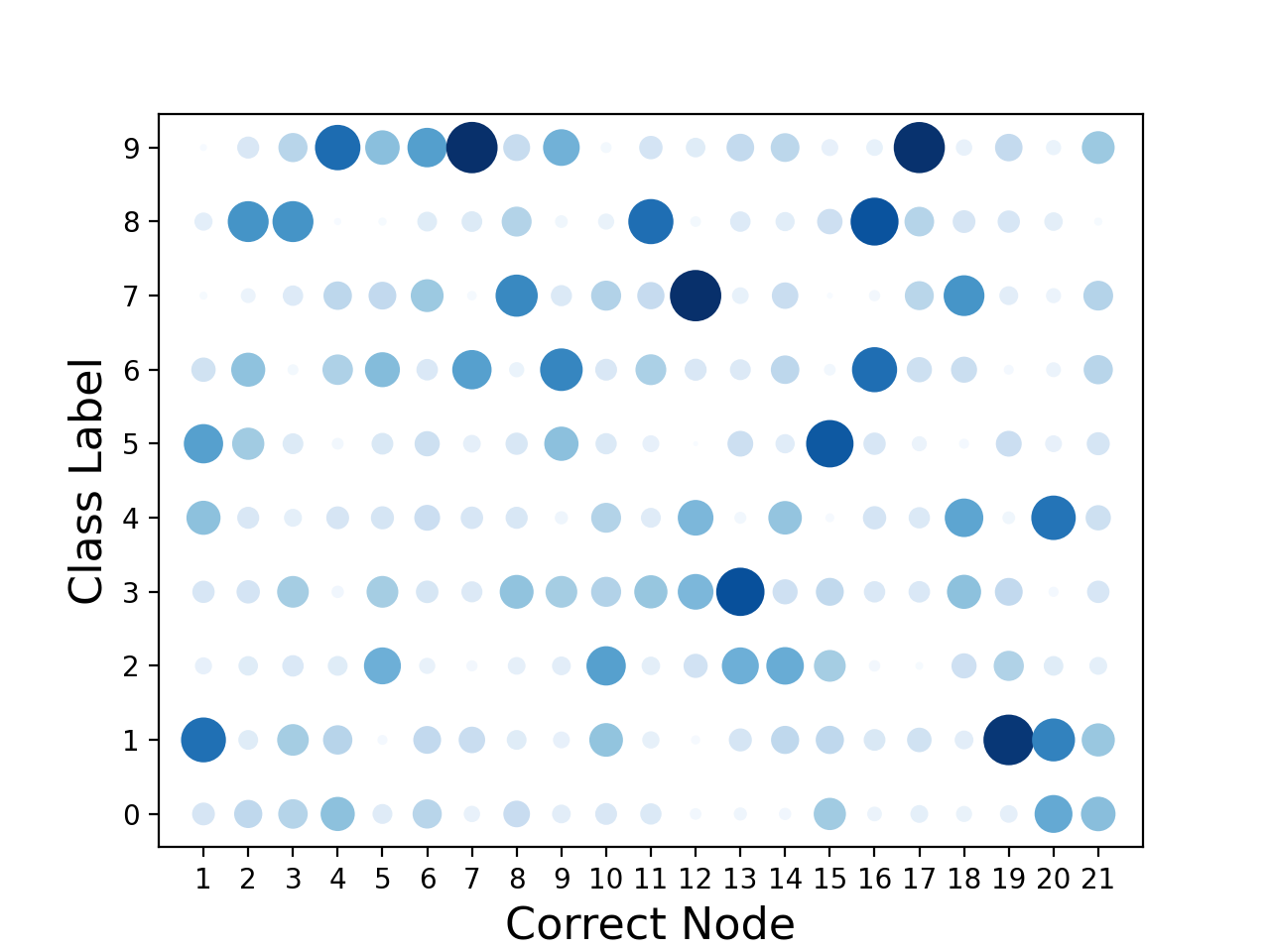}%
    \includegraphics[width=0.33\textwidth]{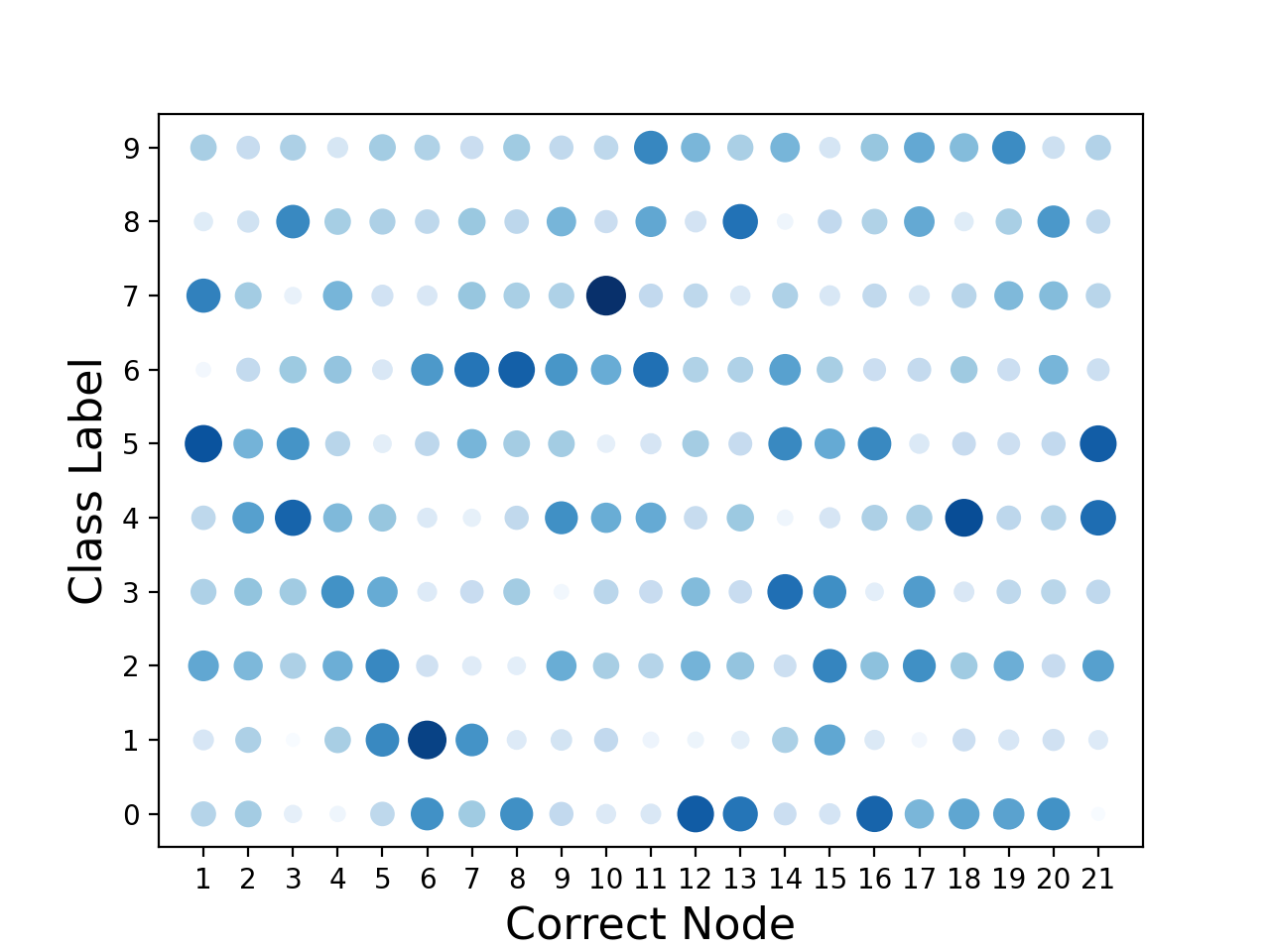}%
    \includegraphics[width=0.33\textwidth]{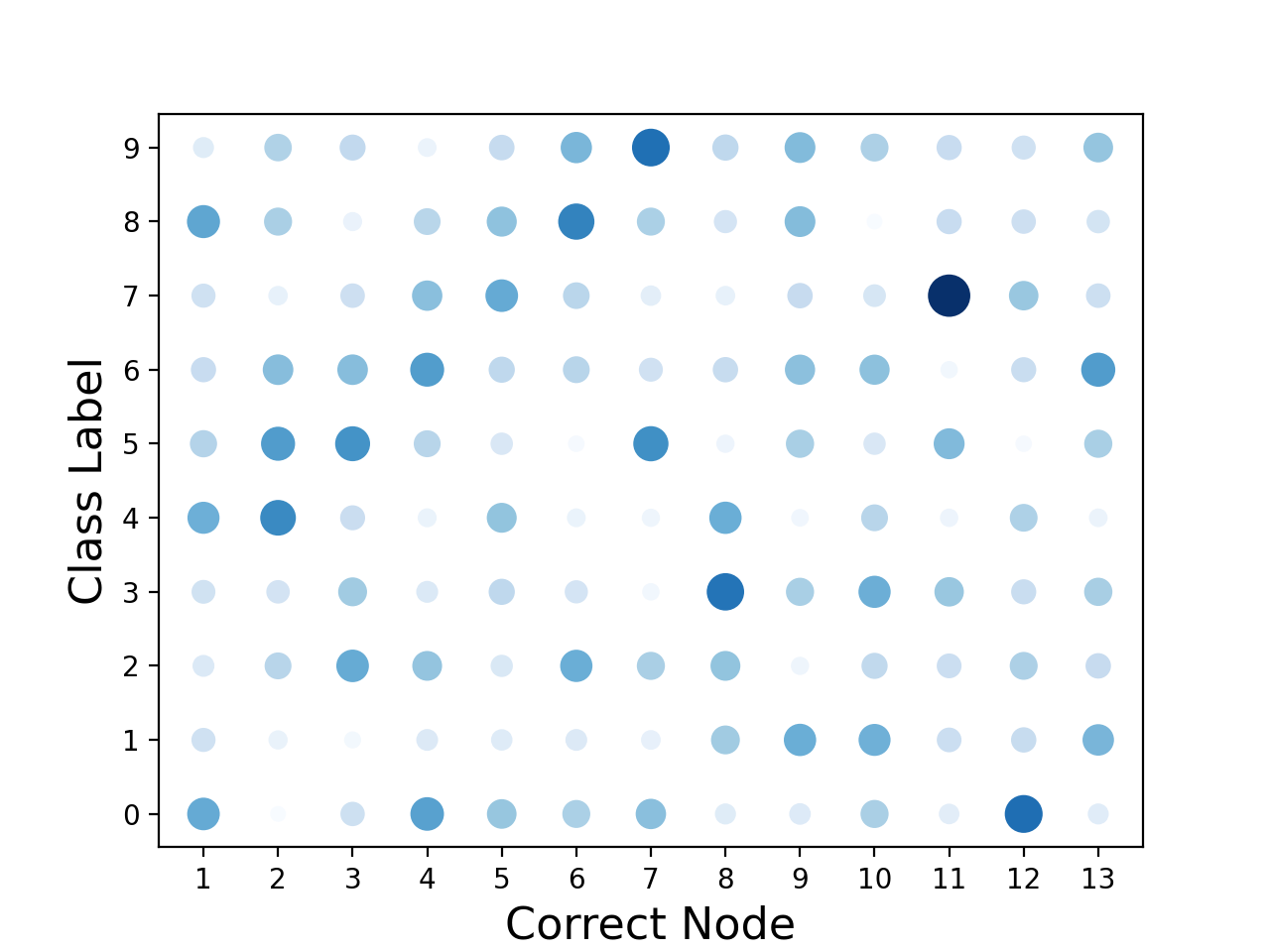}%
    \caption{Distribution of class labels across correct nodes when sampling from a Dirichlet distribution of parameter $\alpha$.\\\textit{Left}: MNIST with $\alpha = 1$, \textit{Middle}: MNIST with $\alpha = 5$, \textit{Right}: CIFAR-10 with $\alpha = 5$.}
\label{fig:distribution}
\end{figure*}

We simulate data heterogeneity in the correct nodes' datasets by making the nodes sample from MNIST and CIFAR-10 using a \textbf{Dirichlet} distribution of parameter $\alpha > 0$. A smaller $\alpha$ implies a more heterogeneous setting (i.e., the more probable it is for nodes to sample datapoints from only one class). For MNIST, we choose $\alpha \in \{1, 5\}$, while we set $\alpha = 5$ on the more difficult task CIFAR-10.
The corresponding distributions of class labels across correct nodes are shown in Figure~\ref{fig:distribution}.

\subsection{Attacks}\label{app:attacks}
We use four state-of-the-art \textit{gradient-based} attacks, namely \textit{fall of empires (FOE)}~\cite{empire}, \textit{a little is enough (ALIE)}~\cite{little}, \textit{sign-flipping (SF)}~\cite{allen2020byzantine}, and \textit{label-flipping (LF)}~\cite{allen2020byzantine}. Since these attacks are originally designed on gradients, we first modify them to be executed on parameter vectors. The first three adapted attacks (FoE, ALIE, and SF) rely on the following key notion. Let $a_t$ be the attack vector in iteration $t$ and let $\zeta_t$ be a fixed non-negative real number. In every iteration $t$, all \byzantine{} nodes broadcast the same vector $\overline{\theta_t} + \zeta_t a_t$ to all other nodes, where $\overline{\theta_t}$ is the average of the parameter vectors of the correct nodes in iteration $t$. Each attack among the first three follows the general scheme we just described, with the following particularities.
\begin{enumerate}[label=(\alph*), leftmargin=*]
    \item \textbf{ALIE.} In this attack, $a_t = - \sigma_t$, where $\sigma_t$ is the opposite vector of the coordinate-wise standard deviation of $\overline{\theta_t}$. In our experiments on ALIE, $\zeta_t$ is chosen through an extensive grid search. Essentially, in each iteration $t$, we choose the value that results in the worst \byzantine{} vector, i.e, the vector for which the  distance to $\overline{\theta_t}$ is the largest.
    \item \textbf{FOE.} In this attack, $a_t = - \overline{\theta_t}$. All \byzantine{} nodes thus send $(1 - \zeta_t) \overline{\theta_t}$ in iteration $t$. Similar to \textit{ALIE}, $\zeta_t$ for \textit{FoE} is also estimated through grid searching.
    \item \textbf{SF.} In this attack, $a_t = - \overline{\theta_t}$ and $\zeta_t = 2$. All \byzantine{} nodes thus send $-\overline{\theta_t}$ in iteration $t$.
    \item \textbf{LF.} Under the LF attack, all \byzantine{} nodes send the same vector $\hat{\theta_t}$ in iteration $t$, where $\hat{\theta_t}$ is the average of the correct parameter vectors but computed on \textit{flipped} labels. In order to do so, in each iteration $t$, the \byzantine{} nodes compute the gradients of the correct nodes on flipped labels. Since the labels for MNIST and CIFAR-10 are in $\{0, 1, ..., 9\}$, the labels are flipped such that $l' = 9 - l$ for every training datapoint, where $l$ is the original label and $l'$ is the flipped label. Each \byzantine{} node then averages all \textit{flipped} parameter vectors to get $\hat{\theta_t}$.
\end{enumerate}

\subsection{Computing Infrastructure}
\subsubsection{Software Dependencies:}
Python 3.8.10 has been used to run our scripts. Besides the standard libraries associated with Python 3.8.10, our scripts use the following libraries:
\small
\begin{center}
\begin{tabular}{||c | c||} 
 \hline
 Library & Version \\ [0.5ex] 
 \hline\hline
 numpy & 1.19.1  \\
 \hline
 torch & 1.6.0 \\
 \hline
 torchvision &  0.7.0\\
  \hline
 pandas      & 1.1.0 \\
 \hline
 matplotlib & 3.0.2\\
 \hline
 PIL & 7.2.0 \\
 \hline
 requests    & 2.21.0  \\
 \hline
 urllib3     & 1.24.1 \\
 \hline
 chardet & 3.0.4  \\
 \hline
 certifi & 2018.08.24\\
 \hline
 idna & 2.6\\
 \hline
 six & 1.15.0 \\
 \hline
 pytz  & 2020.1 \\
 \hline
 dateutil & 2.6.1\\
 \hline
 pyparsing & 2.2.0\\
 \hline
 cycler & 0.10.0\\
 \hline
 kiwisolver & 1.0.1 \\
 \hline
 cffi & 1.13.2\\
 \hline
\end{tabular}
\end{center}
\normalsize

Some dependencies are essential, while others are optional (e.g., only used to process the results and produce the plots).Furthermore, our code has been tested on the following OS: Ubuntu 20.04.4 LTS (GNU/Linux 5.4.0-121-generic x86\_64).

\subsubsection{Hardware Dependencies:} We list below the hardware components used:
\begin{itemize}
    \item 1 Intel(R) Core(TM) i7-8700K CPU @ 3.70GHz
    \item 2 Nvidia GeForce GTX 1080 Ti
    \item 64 GB of RAM
\end{itemize}
\section{Additional Experimental Results}\label{app:exp_results}

\subsection{Remaining Plots on MNIST}\label{app_results_mnist}
\begin{figure*}[ht]
    \centering
    \includegraphics[width=71mm]{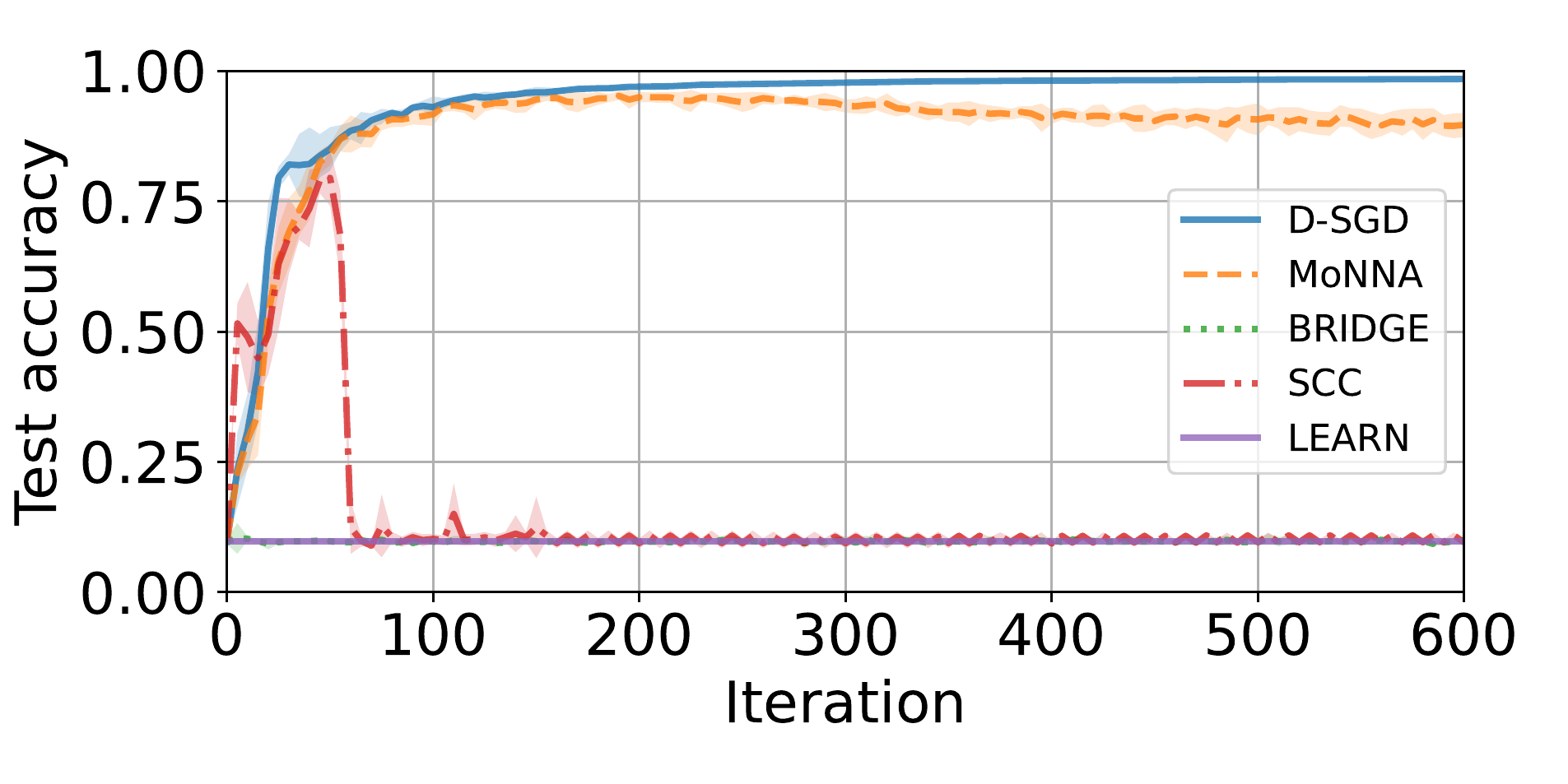}%
    \includegraphics[width=71mm]{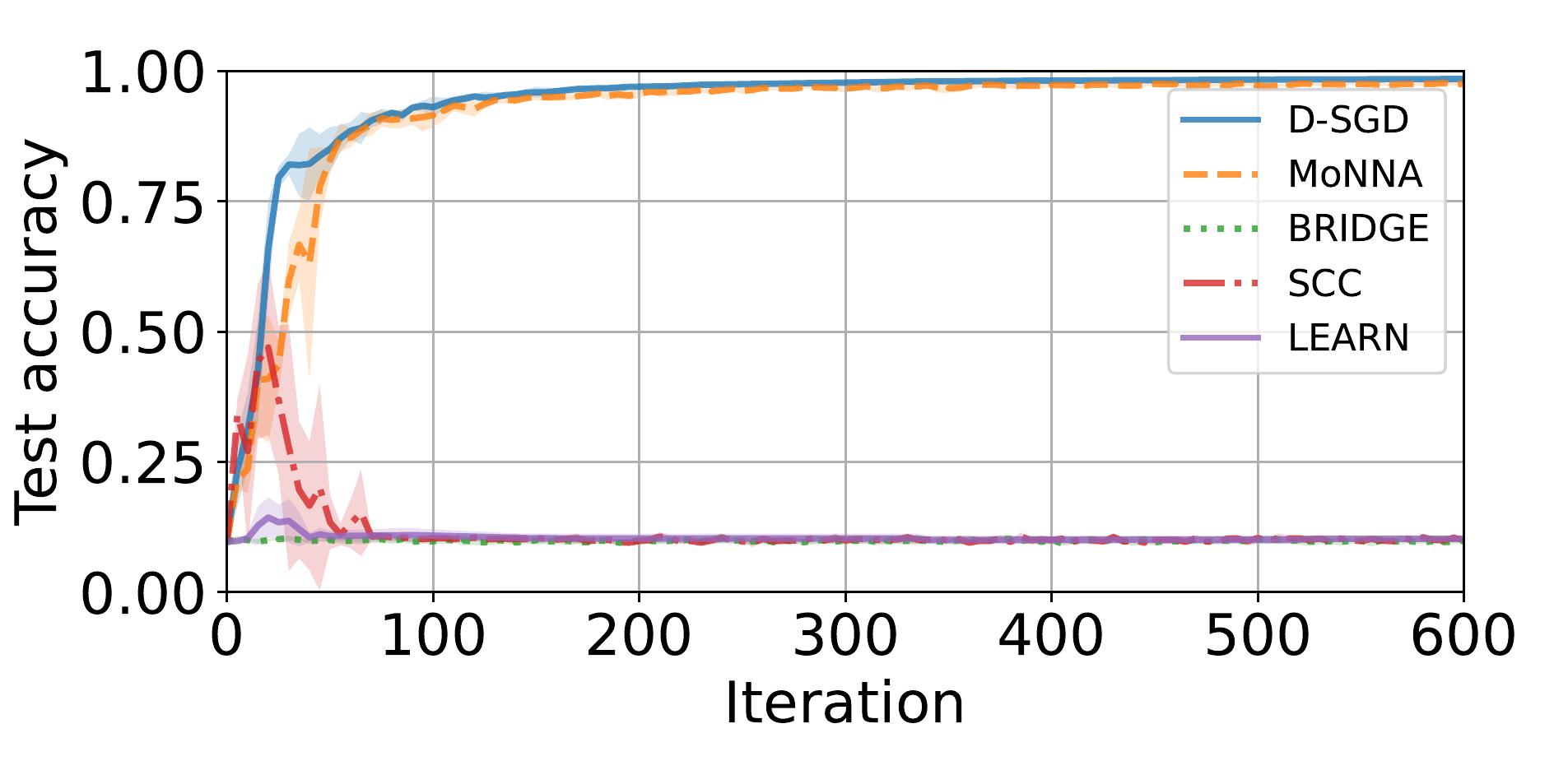}\\
    \includegraphics[width=71mm]{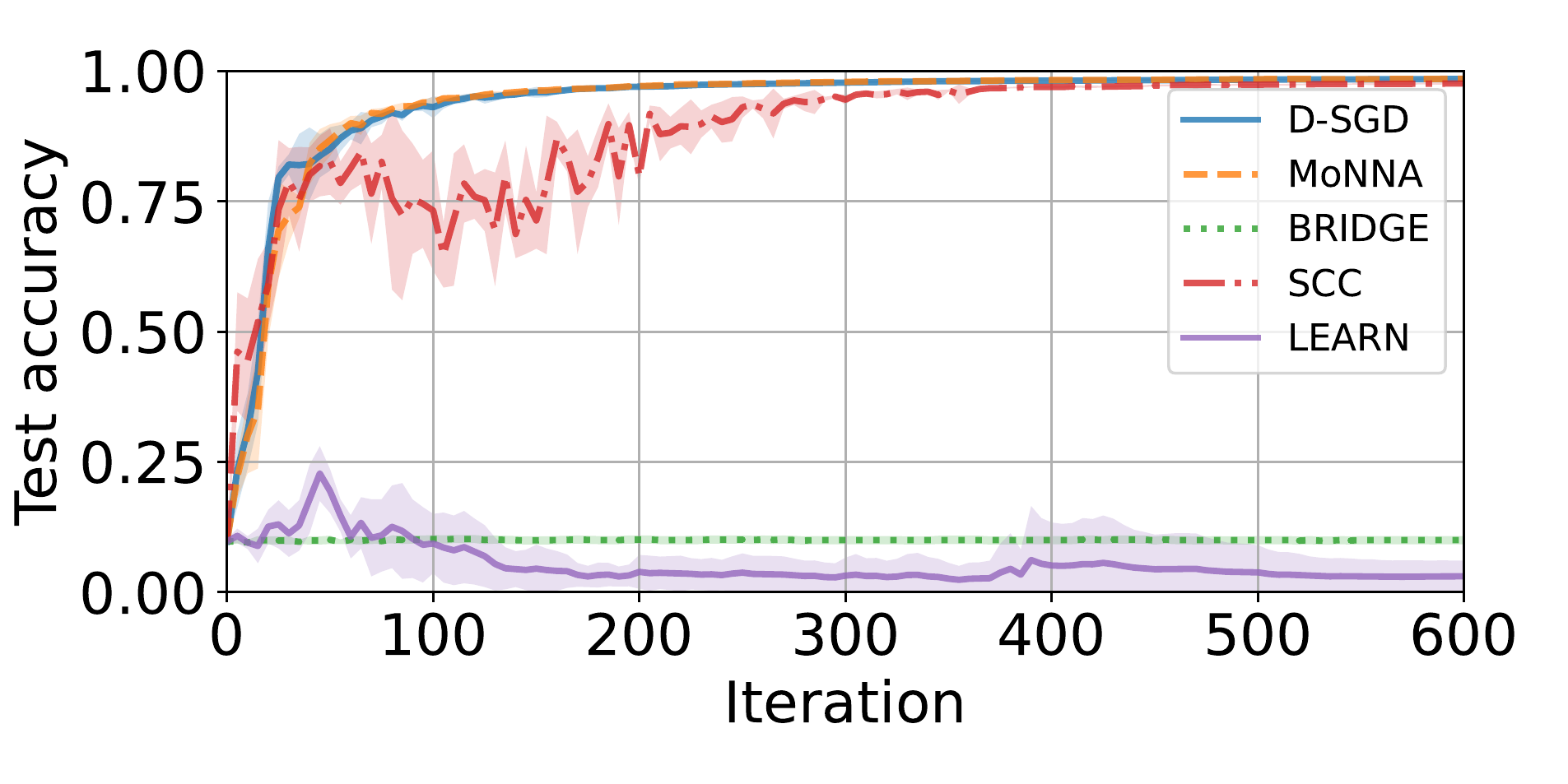}%
    \includegraphics[width=71mm]{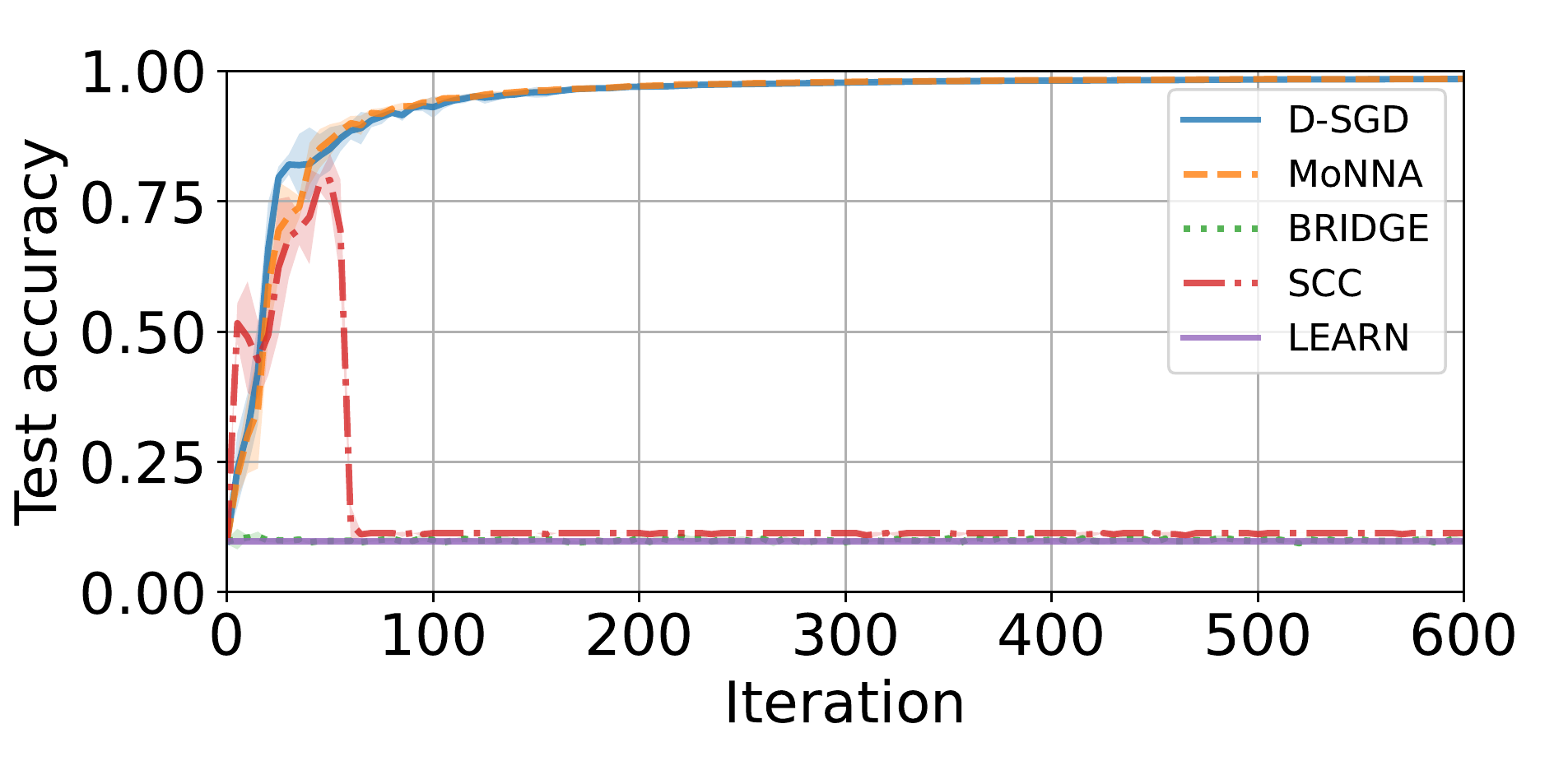}
    \caption{Learning accuracies achieved on MNIST with $\alpha = 1$ by D-SGD, \newalgorithm{}, BRIDGE, SCC, and LEARN. There are $n = 26$ nodes among which $f = 5$ are \byzantine{}. The \byzantine{} nodes execute the \textit{FOE} (row 1, left), \textit{ALIE} (row 1, right), \textit{LF} (row 2, left), and \textit{SF} (row 2, right) attacks. All algorithms except LEARN compute 15,000 gradients, while LEARN computes 180,300 gradients.}
    \label{fig:experiments-2}
\end{figure*}

\begin{figure*}[ht]
    \centering
    \includegraphics[width=71mm]{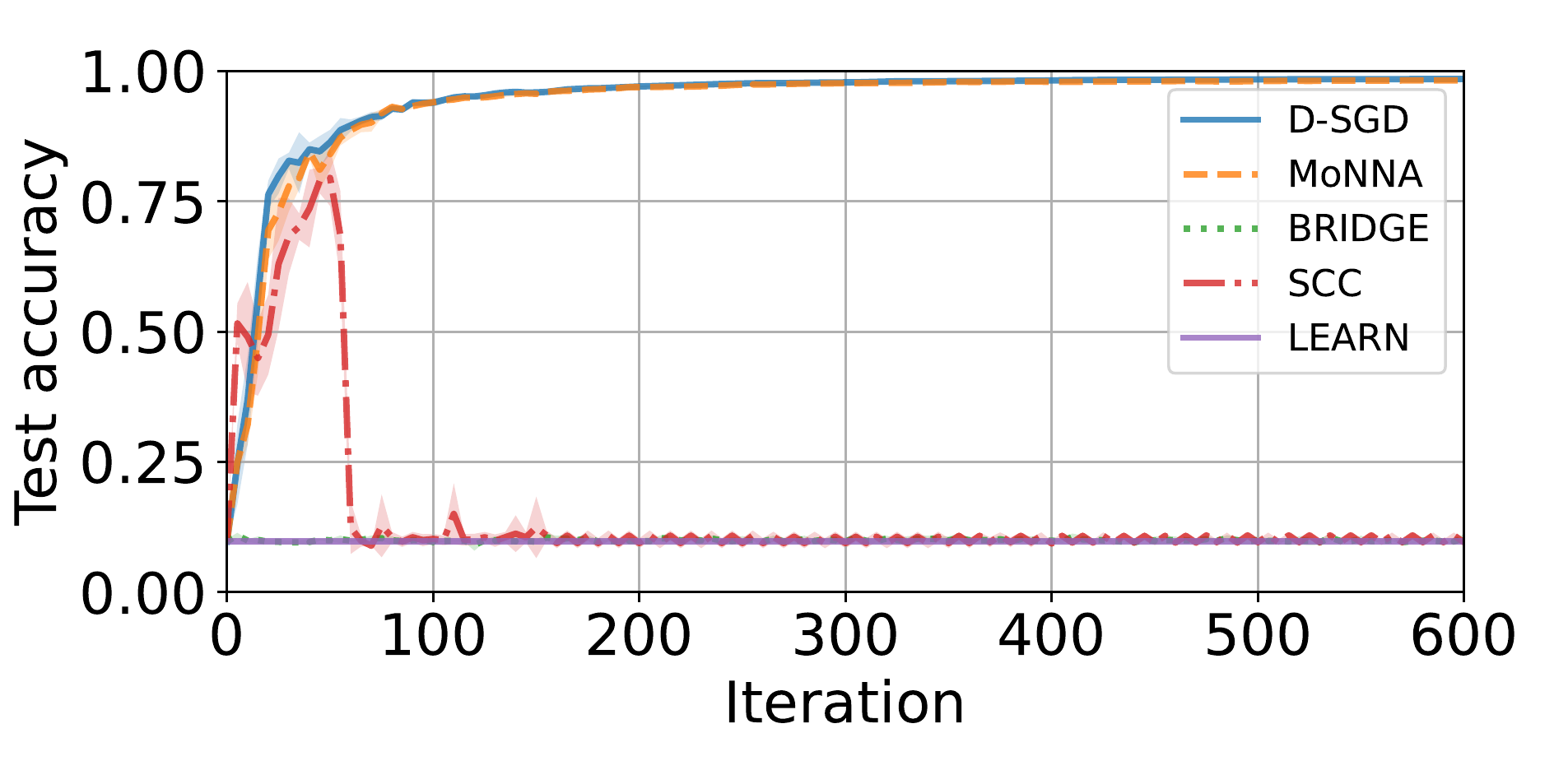}%
    \includegraphics[width=71mm]{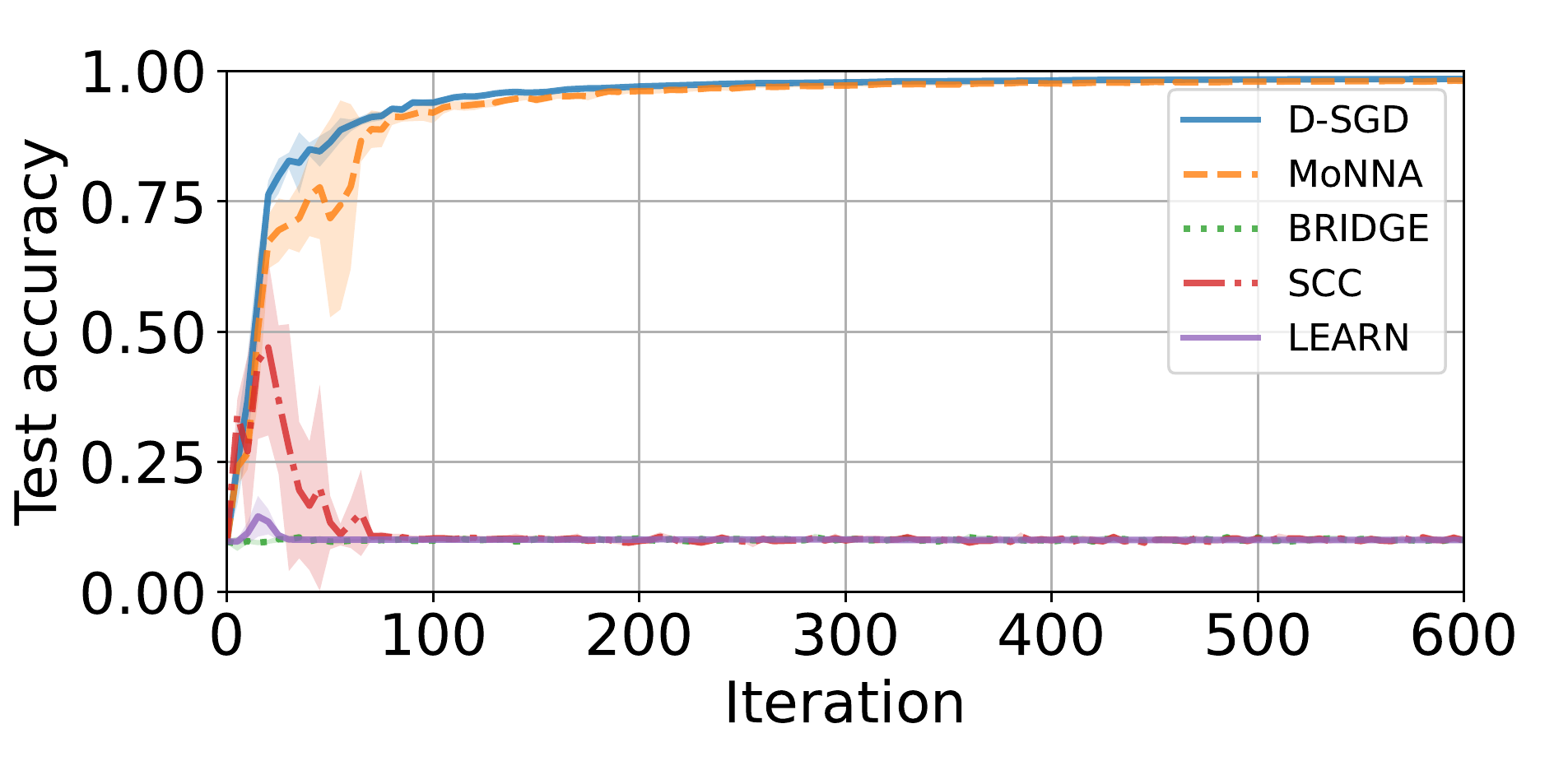}\\
    \includegraphics[width=71mm]{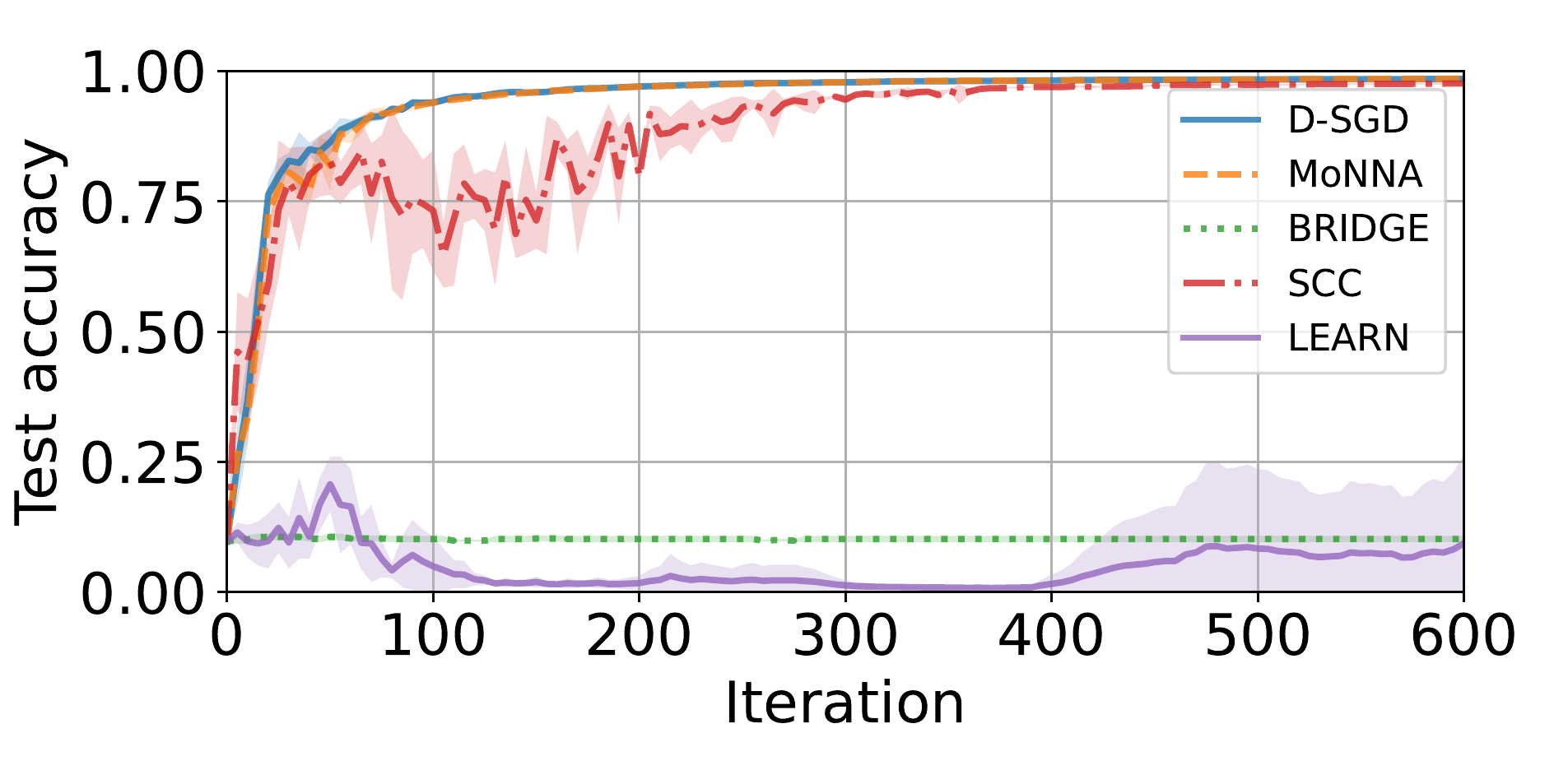}%
    \includegraphics[width=71mm]{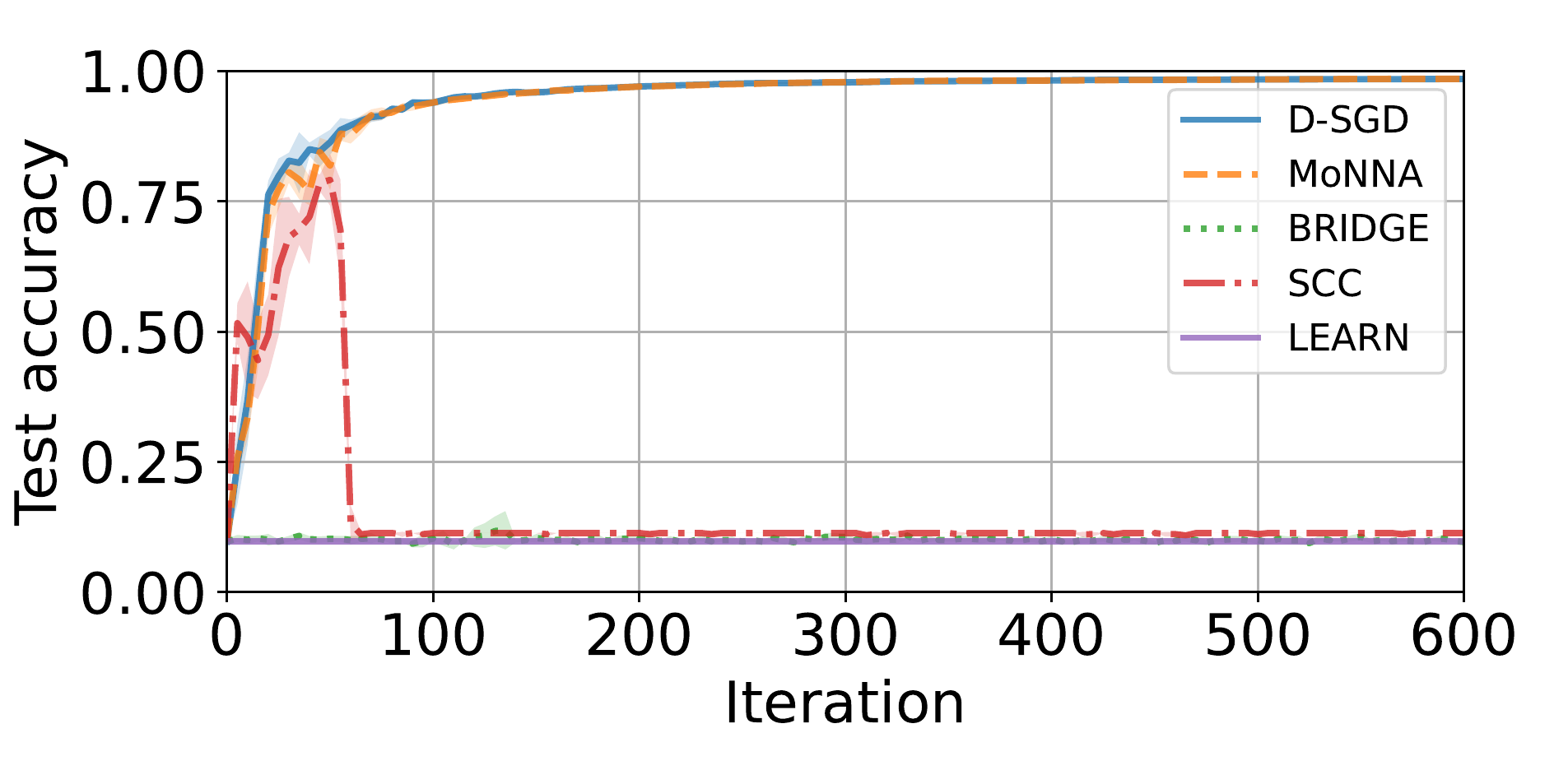}
    \caption{Learning accuracies achieved on MNIST with $\alpha = 5$ by D-SGD, \newalgorithm{}, BRIDGE, SCC, and LEARN. There are $n = 26$ nodes among which $f = 5$ are \byzantine{}. The \byzantine{} nodes execute the \textit{FOE} (row 1, left), \textit{ALIE} (row 1, right), \textit{LF} (row 2, left), and \textit{SF} (row 2, right) attacks. All algorithms except LEARN compute 15,000 gradients, while LEARN computes 180,300 gradients.}
    \label{fig:experiments-3}
\end{figure*}

We complete the missing results from Figure~\ref{fig:experiments-1} in the main paper by presenting in Figures~\ref{fig:experiments-2} and~\ref{fig:experiments-3} the performance of \newalgorithm{} on MNIST with $\alpha = 1$ and 5, respectively.

As observed in Section~\ref{sec:experiment-results}, \newalgorithm{} is the only considered algorithm that provides consistently good performances when tested on MNIST in two heterogeneity regimes and in the presence of \byzantine{} nodes. Indeed, under all attacks, \newalgorithm{} almost matches the performance of D-SGD in terms of learning accuracy (as well as computational workload per node). While SCC showcases satisfactory results under LF, the FOE, ALIE, and SF attacks prevent the model from learning. Similar observations hold for BRIDGE and LEARN which are completely unable to learn, with their final accuracies stagnating at around 10\%.

\subsection{Necessity of Momentum and NNA}\label{app_results_ablation_study}
Our theoretical results indicate that the two key ingredients of \newalgorithm{}, namely Polyak's momentum and NNA, are sufficient to guarantee convergence in adversarial settings. In this section, we empirically measure their necessity by comparing our algorithm to momentum-less solutions as well as other algorithms that do not use \cva{}. In particular, in addition to D-SGD, \newalgorithm{}, and SCC, we run our experiments on two prominent aggregation algorithms from the literature, 
namely geometric median (GM)~\cite{chen2017distributed} and coordinate-wise trimmed mean (CWTM)~\cite{yin2018byzantine}. We also execute both GM and CWTM with momentum $\beta = 0.99$ (referred to as MoGM and MoCWTM, respectively). Additionally, we also compare \newalgorithm{} to its momentum-less variant, namely NNA. We report on these results on the CIFAR-10 and MNIST datasets in Figures~\ref{fig:experiments-4} and~\ref{fig:experiments-5}, respectively.

\begin{figure*}[ht]
    \centering
    \includegraphics[width=71mm]{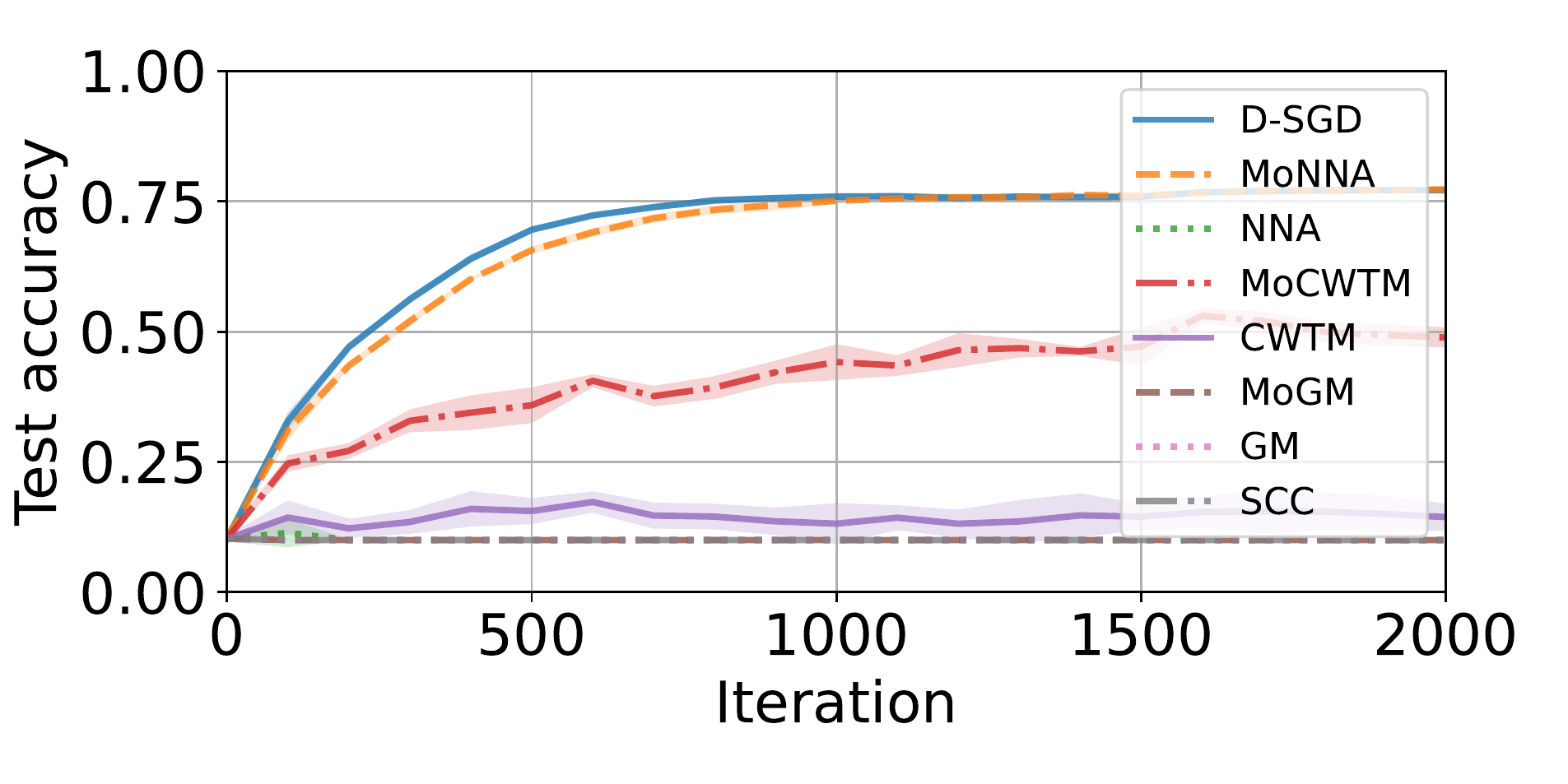}%
    \includegraphics[width=71mm]{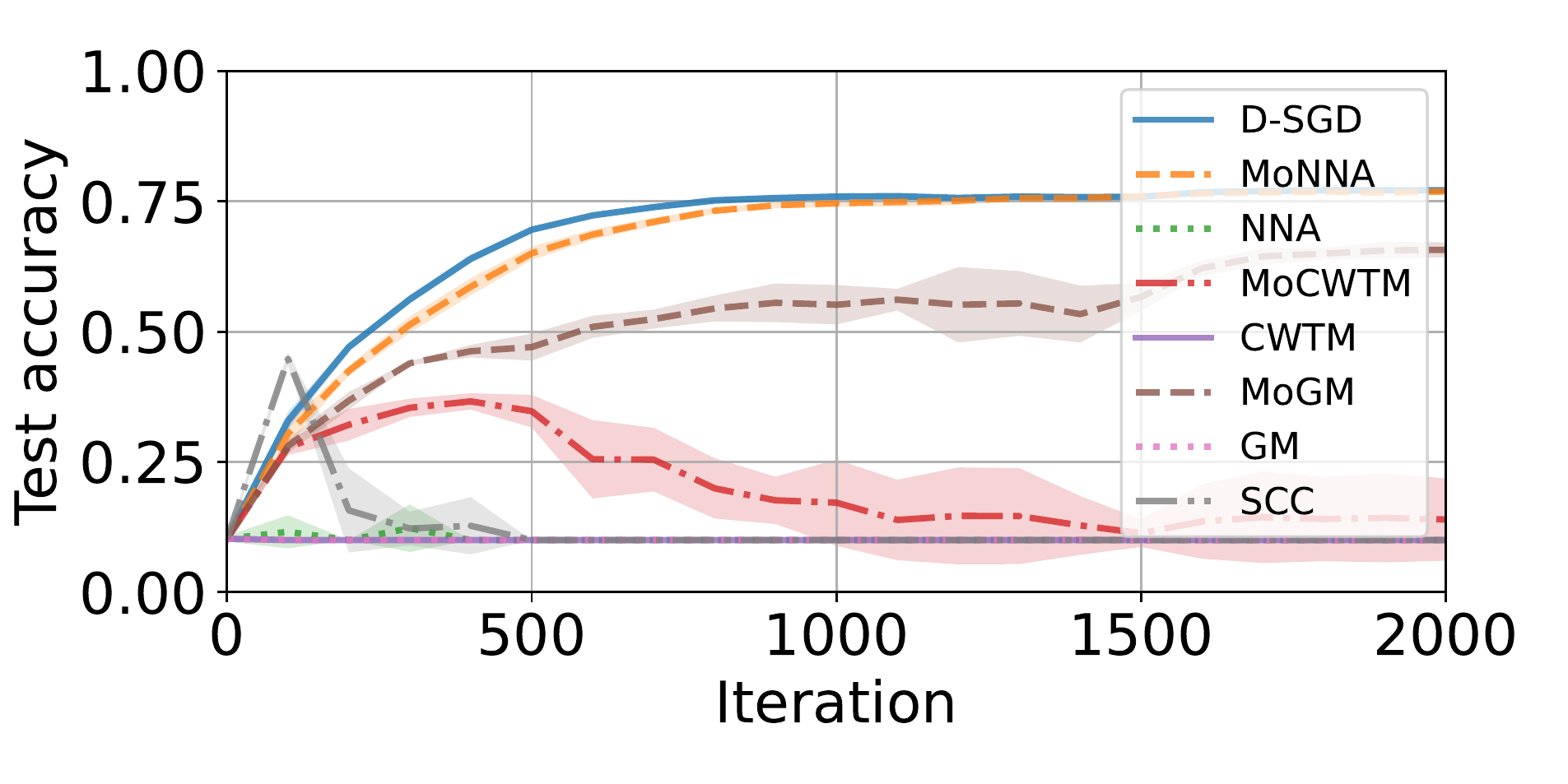}\\
    \includegraphics[width=71mm]{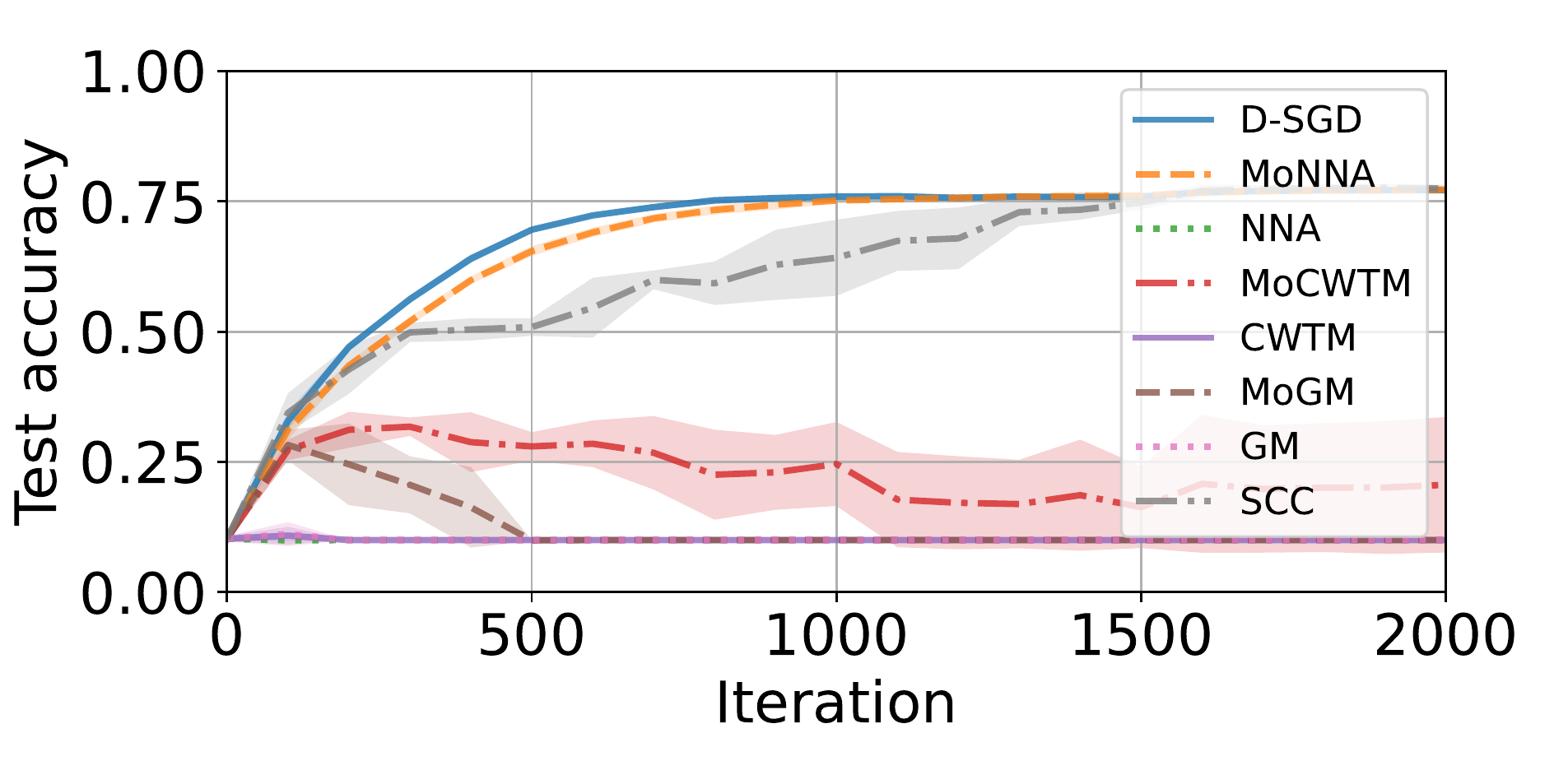}%
    \includegraphics[width=71mm]{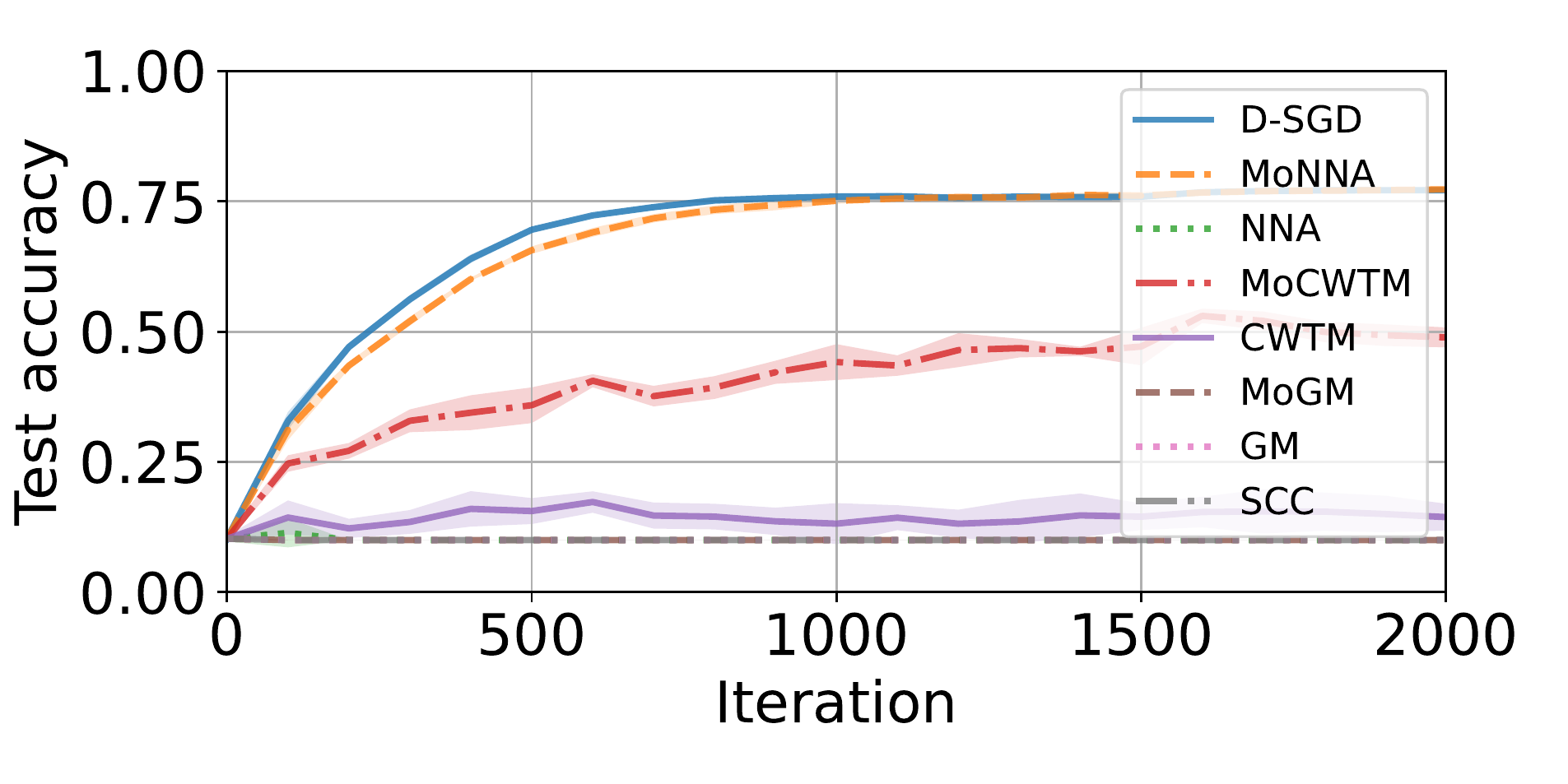}
    \caption{Comparison of the learning accuracies achieved by various aggregation algorithms on CIFAR-10 with $\alpha = 5$, notably including D-SGD, \newalgorithm{}, GM, MoGM (i.e., GM with $\beta = 0.99$), CWTM, MoCWTM (i.e., CWTM with $\beta = 0.99$), and SCC. There are $n = 16$ nodes among which $f = 3$ are \byzantine{}. The \byzantine{} nodes execute the \textit{FOE} (row 1, left), \textit{ALIE} (row 1, right), \textit{LF} (row 2, left), and \textit{SF} (row 2, right) attacks.}
    \label{fig:experiments-4}
\end{figure*}

\begin{figure*}[ht]
    \centering
    \includegraphics[width=71mm]{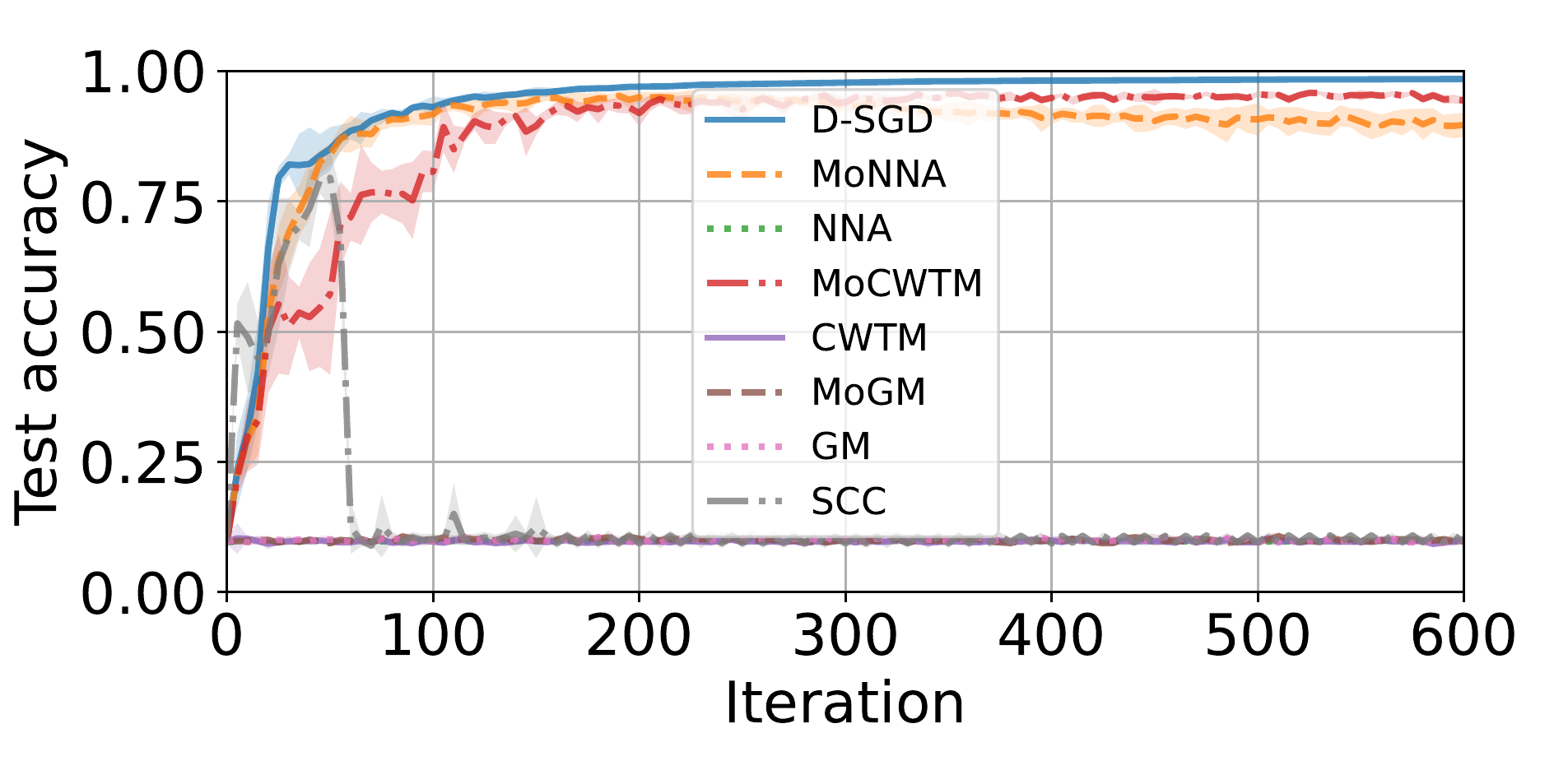}%
    \includegraphics[width=71mm]{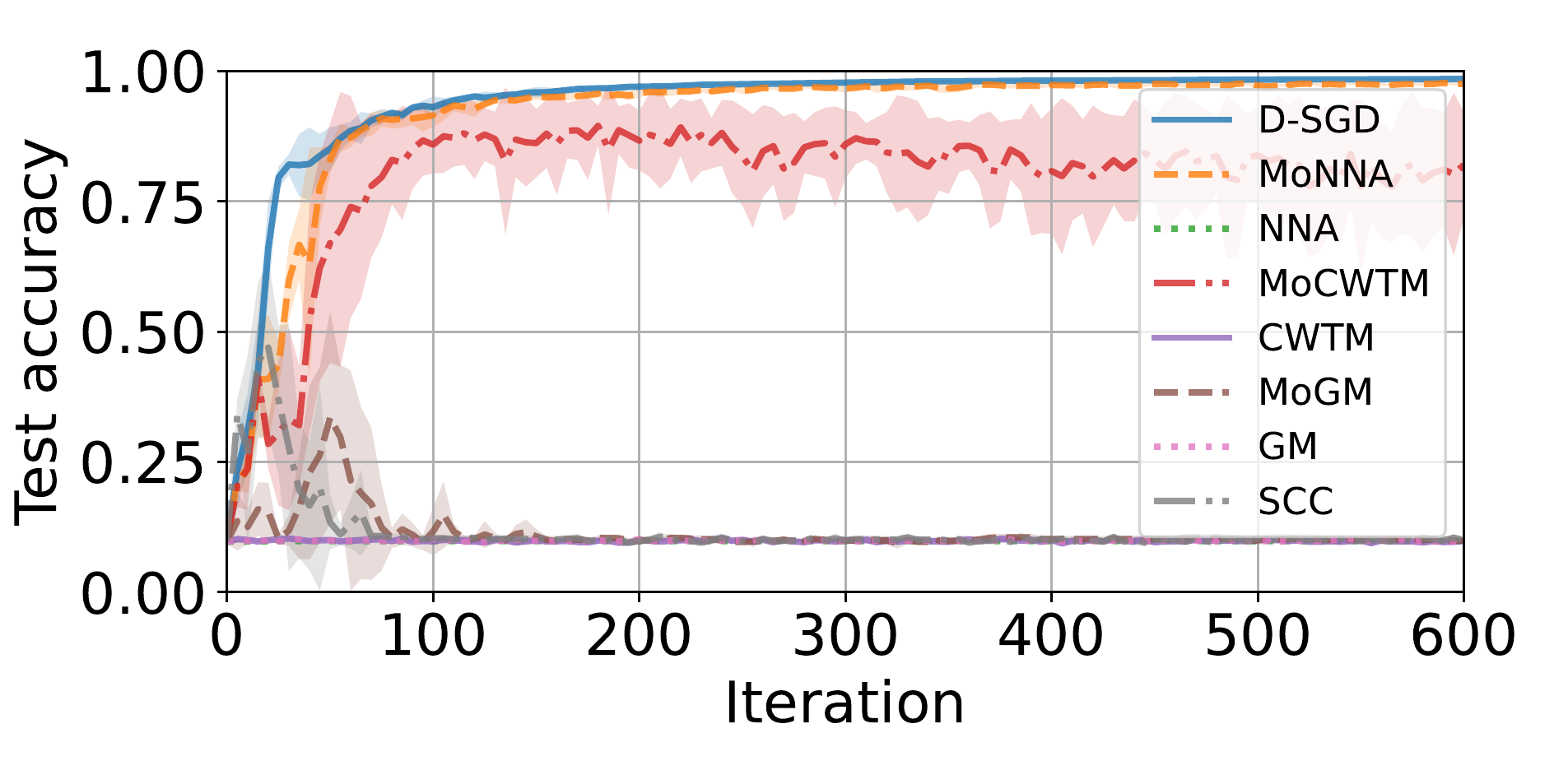}\\
    \includegraphics[width=71mm]{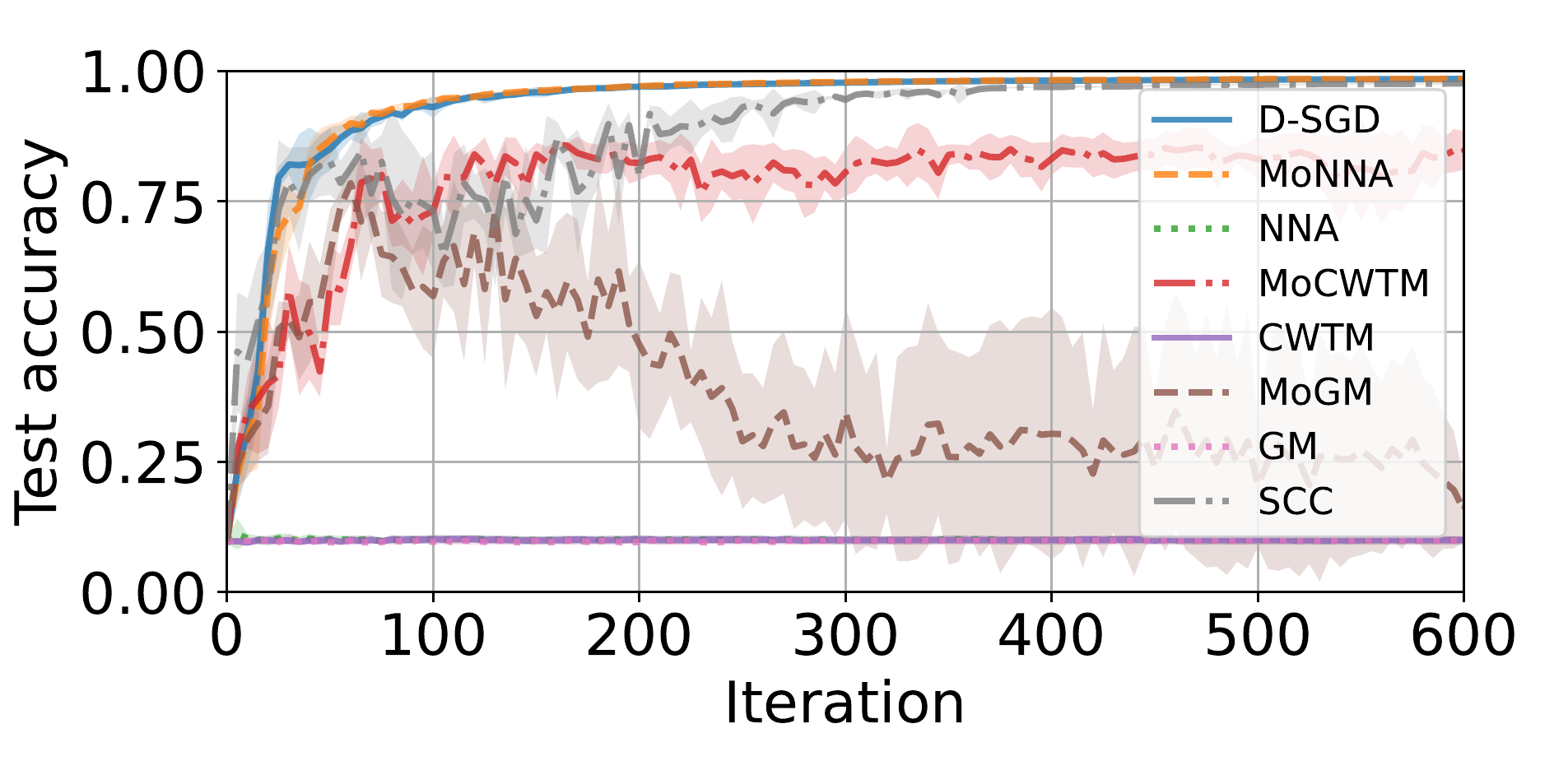}%
    \includegraphics[width=71mm]{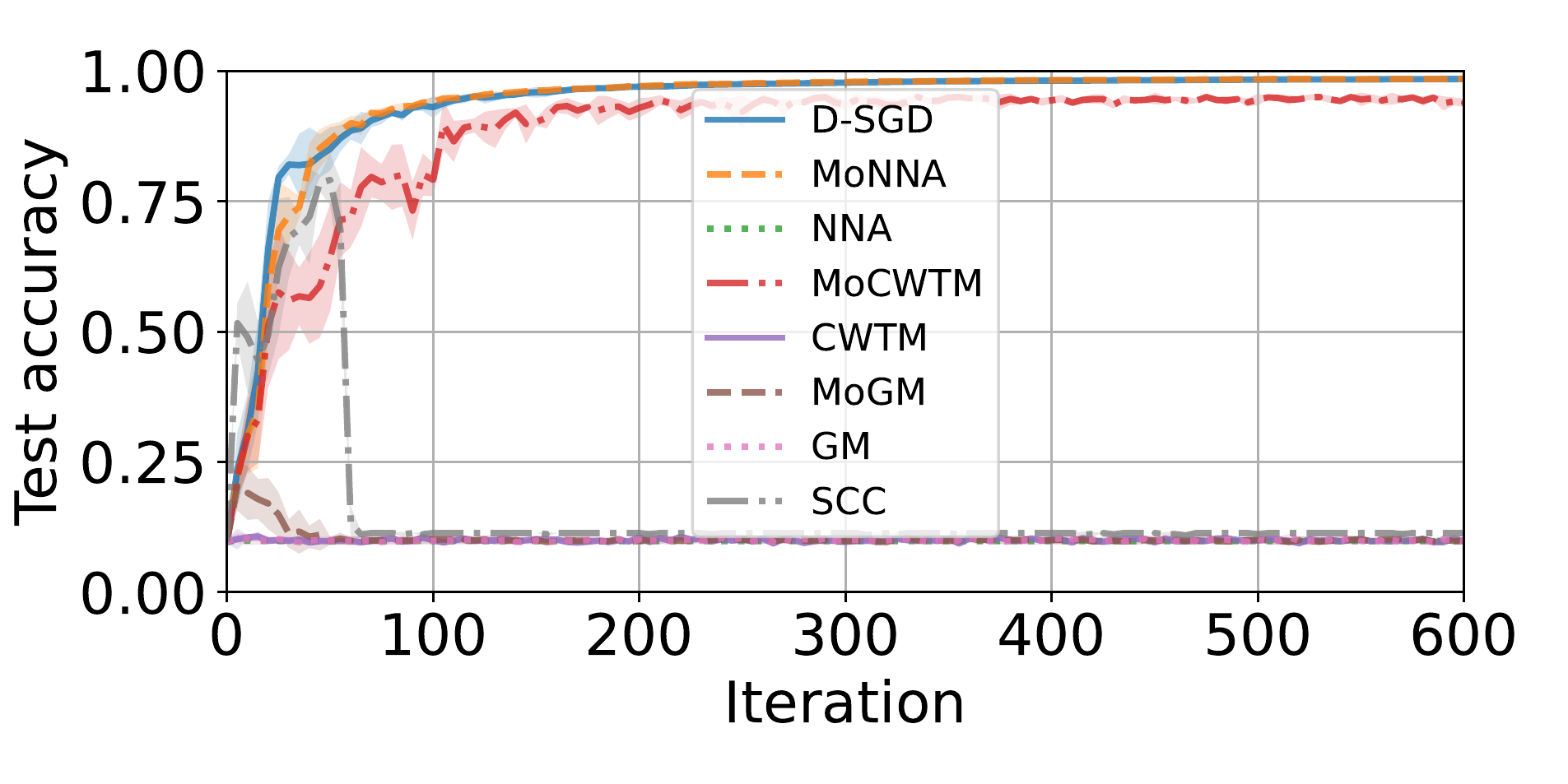}
    \caption{Comparison of the learning accuracies achieved by various algorithms on MNIST with $\alpha = 1$, notably including D-SGD, \newalgorithm{}, GM, MoGM (i.e., GM with $\beta = 0.99$), CWTM, MoCWTM (i.e., CWTM with $\beta = 0.99$), and SCC. There are $n = 26$ nodes, among which $f = 5$ are \byzantine{}. The \byzantine{} nodes execute \textit{FOE} (row 1, left), \textit{ALIE} (row 1, right), \textit{LF} (row 2, left), and \textit{SF} (row 2, right).}
    \label{fig:experiments-5}
\end{figure*}

Our observations are twofold. First, it is clear from Figures~\ref{fig:experiments-4} and~\ref{fig:experiments-5} that momentum plays a crucial role in ensuring the robustness of \newalgorithm{}. Indeed, momentum-less \newalgorithm{} (i.e., simply \cva{}) is completely unable to learn under all attacks, showcasing a very low accuracy constant at 10\% throughout the entire learning. However, as previously mentioned, \newalgorithm{} drastically mitigates these attacks. Indeed, the model steadily increases in accuracy to finally reach 95\% on MNIST and 75\% on CIFAR-10 under all four attacks. Moreover, the importance of momentum is also further corroborated by the equally poor performances of CWTM and GM.

Second, we show the critical importance of the NNA scheme when defending against \byzantine{} nodes. Although much more resilient than its momentum-less counterpart (especially on MNIST), MoCWTM remains largely vulnerable to attacks which are able to completely hinder its learning on CIFAR-10. Indeed, even though the accuracy increases under FOE and SF, it plateaus at 50\%, which is 25\% less than the accuracy obtained with \newalgorithm{} on CIFAR-10. Additionally, ALIE completely annihilates the performance of MoCWTM, with a final accuracy close to 10\%. The same observation holds for LF. Furthermore, while SCC and MoGM showcase good results under LF and ALIE respectively, the other attacks completely degrade their performances on both CIFAR-10 and MNIST. The \textbf{worst case performances} of \newalgorithm{}'s rivals are thus very poor (unlike \newalgorithm{} which performs well in all cases). We argue that one should carefully examine this fundamental metric when evaluating the robustness of aggregation techniques, as the same algorithm can simultaneously greatly defend against some attacks but perform poorly against others.

This entire analysis demonstrates the superiority of our solution and suggests that momentum and \cva{} might be two necessary components in practice to ensure the robustness of distributed asynchronous systems.

\end{document}